\documentclass{article} % For LaTeX2e
\usepackage{iclr2026_conference,times}
\PassOptionsToPackage{semicolon, round, sort}{natbib}

\usepackage{graphicx}
\usepackage[utf8]{inputenc} % allow utf-8 input
\usepackage[T1]{fontenc}    % use 8-bit T1 fonts
\usepackage{hyperref}       % hyperlinks
\usepackage{url}            % simple URL typesetting
\usepackage{booktabs}       % professional-quality tables
\usepackage{amsfonts}       % blackboard math symbols
\usepackage{nicefrac}       % compact symbols for 1/2, etc.
\usepackage{microtype}      % microtypography
\usepackage{xcolor}         % colors
\definecolor{darkblue}{rgb}{0.0,0.0,0.65}
\definecolor{darkred}{rgb}{0.65,0.0,0.0}
\definecolor{darkgreen}{rgb}{0.0,0.65,0.0}
\hypersetup{
	colorlinks = true,
	citecolor  = darkblue,
	linkcolor  = darkred,
	filecolor  = darkblue,
	urlcolor   = darkblue,
}
\usepackage{subcaption}
\usepackage{adjustbox}
\usepackage{nicefrac}

\usepackage{amsthm}
\usepackage{latexsym}
\usepackage{amsmath}
\usepackage{amssymb}
\usepackage{bm}
\usepackage{mathrsfs}
\usepackage{url}
\usepackage{bbm}
\usepackage{enumitem}
\usepackage{nicefrac}
\usepackage{multirow}
\usepackage{thmtools}

\usepackage{multicol}
\usepackage{mathtools}
\newcommand{\squeeze}{\!\!\!\!\!\!\!\!\!\!}

\newcommand{\mycomment}[1]{}
\newcommand{\GG}[1]{}

\def\eqref#1{(\ref{#1})}

\def\va{{\bm{a}}}
\def\vb{{\bm{b}}}

\def\ve{{\bm{e}}}

\def\vu{{\bm{u}}}
\def\vv{{\bm{v}}}
\def\vw{{\bm{w}}}
\def\vx{{\bm{x}}}
\def\vy{{\bm{y}}}
\def\vz{{\bm{z}}}

\def\mA{{\bm{A}}}
\def\mB{{\bm{B}}}
\def\mC{{\bm{C}}}
\def\mD{{\bm{D}}}
\def\mE{{\bm{E}}}
\def\mF{{\bm{F}}}

\def\mI{{\bm{I}}}
\def\mJ{{\bm{J}}}
\def\mK{{\bm{K}}}
\def\mL{{\bm{L}}}
\def\mM{{\bm{M}}}

\def\mO{{\bm{O}}}
\def\mP{{\bm{P}}}
\def\mQ{{\bm{Q}}}
\def\mR{{\bm{R}}}
\def\mS{{\bm{S}}}
\def\mT{{\bm{T}}}
\def\mU{{\bm{U}}}
\def\mV{{\bm{V}}}
\def\mW{{\bm{W}}}
\def\mX{{\bm{X}}}
\def\mY{{\bm{Y}}}

\DeclareMathAlphabet{\mathsfit}{\encodingdefault}{\sfdefault}{m}{sl}
\SetMathAlphabet{\mathsfit}{bold}{\encodingdefault}{\sfdefault}{bx}{n}

\def\gA{{\mathcal{A}}}

\def\gG{{\mathcal{G}}}

\def\gL{{\mathcal{L}}}
\def\gM{{\mathcal{M}}}
\def\gN{{\mathcal{N}}}

\def\gS{{\mathcal{S}}}

\DeclareMathOperator{\Tr}{Tr}

\newtheorem{theorem}{Theorem}[section]
\newtheorem{lemma}{Lemma}[section]
\newtheorem{corollary}[theorem]{Corollary}

\newtheorem{proposition}[theorem]{Proposition}
\newtheorem{definition}{Definition}

\theoremstyle{definition}

\title{Implicit Bias and Loss of Plasticity in Matrix Completion: Depth Promotes Low-Rankness}

\iclrfinalcopy

\author{Baekrok Shin,  Chulhee Yun \\
Kim Jaechul Graduate School of AI, KAIST \\
\texttt{\{br.shin,chulhee.yun\}@kaist.ac.kr} \\
}

\begin{document}

\maketitle

\begin{abstract}
We study matrix completion via deep matrix factorization (a.k.a.\ deep linear neural networks) as a simplified testbed to examine how network depth influences training dynamics.
Despite the simplicity and importance of the problem, prior theory largely focuses on shallow (depth-2) models and does not fully explain the implicit low-rank bias observed in deeper networks. We identify \emph{coupled dynamics} as a key mechanism behind this bias and show that it intensifies with increasing depth. Focusing on gradient flow under block-diagonal observations, we prove: (a)~networks of depth $\geq 3$ exhibit coupling unless initialized diagonally, and (b)~convergence to rank-1 occurs if and only if the dynamics is coupled---resolving an open question by \citet{menon2024geometry} for a family of initializations. We also revisit the \emph{loss of plasticity} phenomenon in matrix completion~\citep{kleinman2024critical}, where pre-training on few observations and resuming with more degrades performance. We show that deep models avoid plasticity loss due to their low-rank bias, whereas depth-2 networks pre-trained under decoupled dynamics fail to converge to low-rank, even when resumed training (with additional data) satisfies the coupling condition---shedding light on the mechanism behind this phenomenon.
\end{abstract}

\section{Introduction}

Overparameterized neural networks have the capacity to perfectly memorize the training data, even when they are given random labels~\citep{zhang2017understanding}. Despite their large capacity, neural networks often generalize well to unseen data without any explicit regularization techniques, which challenges conventional statistical wisdom. Recent studies attribute this phenomenon to the implicit bias of neural networks, arguing that among the many possible global minima, first-order algorithms such as (stochastic) gradient descent favor solutions that generalize well~\citep{neyshabur2014search, DBLP:journals/corr/NeyshaburTSS17, huh2021low,timor2023implicit, frei2023implicit, kou2023implicit, galanti2024sgd,  jacot2022implicit}.

Matrix completion, a task with practical applications in areas like recommender systems and image restoration, provides a key framework for investigating these implicit biases, particularly the tendency towards low-rank solutions. While matrix completion can be viewed as a special case of the broader matrix sensing framework~\citep{jin2023understanding, soltanolkotabi2023implicit, ma2023global, stoger2021small, li2018algorithmic}, which offers general tools for understanding recovery from limited data, specific challenges can emerge when applying these general theories directly. Notably, common theoretical assumptions prevalent in matrix sensing analyses, such as the Restricted Isometry Property (RIP)~\citep{1542412}, often prove too stringent or may not adequately capture the nuances of many practical matrix completion tasks.
For instance, even when completing the $2 \times 2$ matrix $\mM_{\rm C}$ (introduced in Figure~\ref{fig: intro graph}), which can successfully converge to a low-rank solution, the RIP condition cannot be satisfied. Therefore, researchers have investigated implicit bias phenomena specifically within matrix completion, without assuming the RIP condition~\citep{menon2024geometry, baiconnectivity, razin2020implicit, ma2024convergence, kim2023rank}. 

The goal of the matrix completion task is to recover a low-rank ground truth matrix $\mW^*$ using only a subset of its entries. A common strategy for matrix completion involves matrix factorization, which can also be viewed as linear neural networks. These networks reparameterize the target matrix $\mX$ as a product of factor matrices, $\mX = \mW_L \mW_{L-1} \cdots \mW_1$, and optimize the factors $\{\mW_l\}_{l \in [L]}$ by minimizing the mean squared error over the observed entries using gradient descent. The observed entries constitute the training set, while the unobserved entries act as the test set.

The problem of predicting $\mW^*$ is underdetermined, as infinitely many completions are possible. Nevertheless, both theory and experiments indicate that training even a simple two-layer factorization ($L=2$) with gradient descent, without explicit rank constraints, typically yields a low-rank solution under reasonable assumptions~\citep{razin2020implicit, baiconnectivity, ma2024convergence}.

A recent work by \citet{baiconnectivity} formalizes this phenomenon using the concept of \textit{data connectivity}. They demonstrate that if the observed entries form a connected bipartite graph (meaning any observed entry can be reached from any other via shared rows or columns), a depth-2 factorization initialized at an infinitesimally small scale converges to a low-rank solution. Conversely, the network may converge to a higher-rank matrix if the observations are disconnected (see Definition~\ref{def: connectivity} and Figure~\ref{fig: intro graph}).

However, the situation changes significantly for deeper ($L \geq 3$) networks, as empirically demonstrated in Figure~\ref{fig: intro full}. Consider the task of completing the $2 \times 2$ matrix
\begin{equation}
\mM_{\rm D} =
\begin{pmatrix}
w_{11}^* & ? \\
? & w_{22}^*
\end{pmatrix}
\label{eqn: intro: menon}
\end{equation}
where only the diagonal entries are observed. This observation pattern forms a disconnected graph as illustrated in Figure~\ref{fig: intro graph}. Consistent with the theory for disconnected graphs, $L=2$ models fail to find a low-rank solution, empirically converging to rank-2 regardless of initialization scale. In contrast, deeper models ($L \geq 3$) with small initialization tend to converge to a rank-1 solution, as shown in Figure~\ref{fig: intro disconnected}. This specific example highlights that the implicit low-rank bias appears to be strengthened by depth, in a way that \textit{cannot be explained solely by the data connectivity framework} developed for $L=2$ models. Furthermore, considering connected cases as well, Figure~\ref{fig: intro connected} demonstrates that this strong low-rank bias is generally robust, tending to strengthen further as depth increases.

\begin{figure}[t]
  \centering
  \begin{subfigure}{0.32\textwidth}
    \centering
    \raisebox{60pt}{
      \adjustbox{valign=c}{%
        \includegraphics[width=\linewidth]{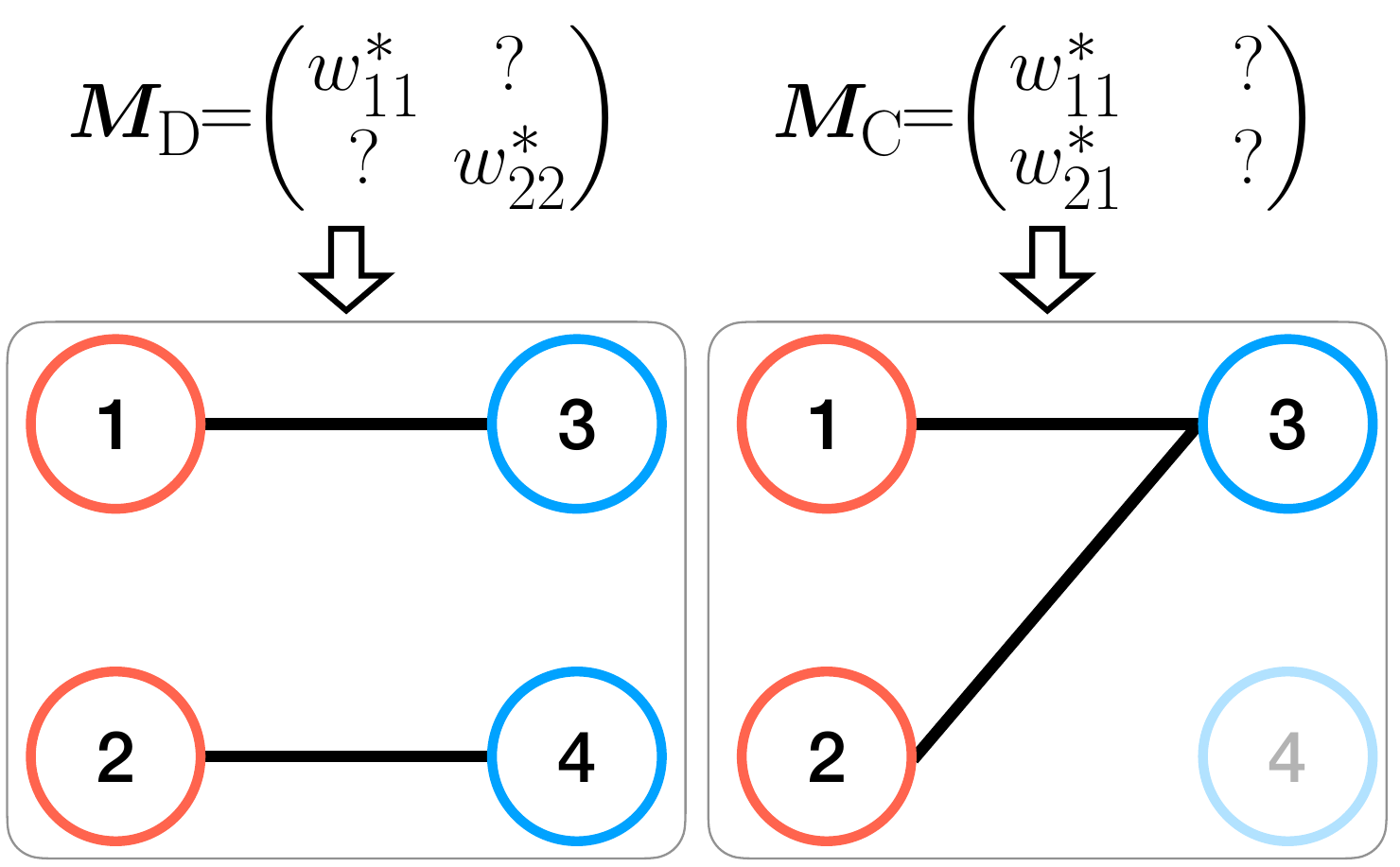}%
      }%
    }
    \caption{Bipartite graph of $\mM_{\rm D}$ \& $\mM_{\rm C}$}
    \label{fig: intro graph}
  \end{subfigure}%
  \hfill
  \begin{subfigure}{0.33\textwidth}
    \centering
    \adjustbox{valign=c}{%
      \includegraphics[width=\linewidth]{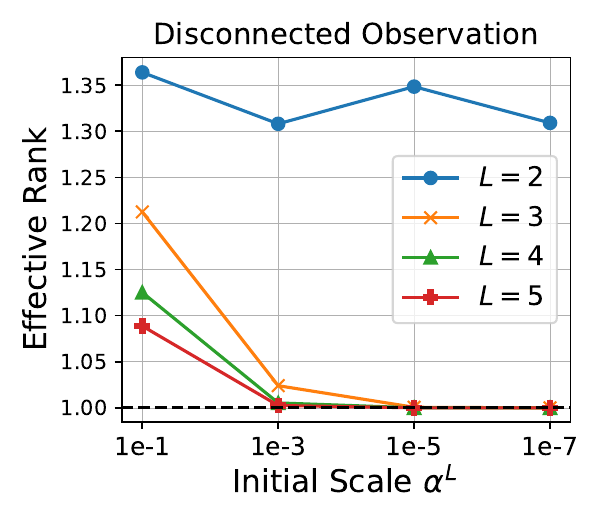}%
    }
    \caption{Effective rank trained w/ $\mM_{\rm D}$}
    \label{fig: intro disconnected}
  \end{subfigure}
  \hfill
  \begin{subfigure}{0.33\textwidth}
    \centering
    \adjustbox{valign=c}{%
      \includegraphics[width=\linewidth]{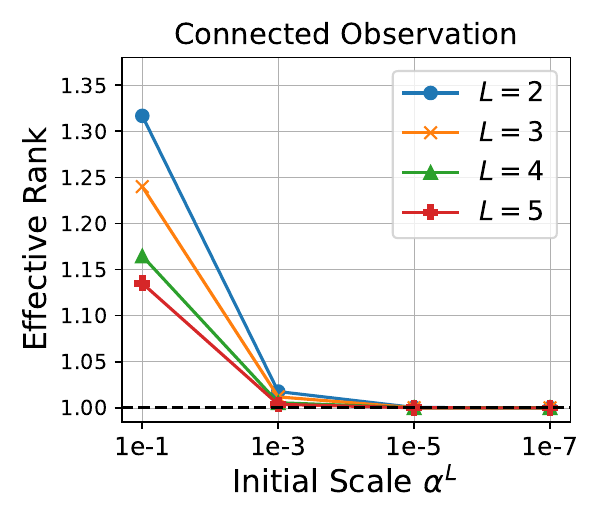}%
    }
    \caption{Effective rank trained w/ $\mM_{\rm C}$}
    \label{fig: intro connected}
  \end{subfigure}
  \caption{(\textbf{a}) Examples of bipartite graphs corresponding to observation patterns of $\mM_{\rm D}$ (disconnected) and $\mM_{\rm C}$ (connected). (\textbf{b-c}) Training results showing effective rank (\textit{cf}. \citet{roy2007effective}) for completing rank-1 matrices $\mM_{\rm D}$ and $\mM_{\rm C}$, respectively. The rank-1 ground truth matrices were generated via $\vu\vv^\top$, where $\vu, \vv \in \mathbb{R}^2$ with entries sampled i.i.d.\ from a standard normal distribution. We initialized each layer's entries by sampling from a Gaussian distribution with mean zero and standard deviation $\alpha$, chosen to ensure the initial scale of the product matrix $\mW_{L:1}(0)$ is approximately invariant to depth $L$. Each result shows an average of 300 independent random trials.}
  \label{fig: intro full}
\end{figure}

However, a theoretical understanding of this depth-induced bias remains elusive, largely due to the complex, coupled dynamics during training. While \citet{arora2019implicit} offer insights, their claim that the gap between two arbitrary singular values widens with depth is not fully formal. It stems largely from their analysis assuming stabilized singular vectors, which limits its scope. Indeed, \citet{menon2024geometry} notes that even for a simple case like~\eqref{eqn: intro: menon} with $w_{11}^* = w_{22}^* = 1$, proving that gradient descent with a deep factorization converges to a low-rank solution is still an open problem. Motivated by this gap in understanding, we theoretically analyze such settings, including the example~\eqref{eqn: intro: menon}.

Investigating the implicit low-rank bias in matrix completion can also shed light on the phenomenon of ``\textit{loss of plasticity}'', a challenge widely observed in general neural network training~\citep{shin2024dash, ash2020warm, achille2018critical, berariu2021study}. The term loss of plasticity describes the tendency of neural networks, particularly after initial training, to lose their adaptability to new information, hindering their generalization capabilities. A recent work by \citet{kleinman2024critical} empirically reports this phenomenon even in matrix completion. They observe that models trained with insufficient data often yield high-rank solutions. If these models then warm-start using augmented data, they frequently struggle to achieve low-rank solutions. To provide a theoretical explanation for why this loss of plasticity occurs, this paper elucidates the phenomenon.

To summarize, here are the main research questions that we address throughout the paper:
\begin{itemize}
    \item \textit{What is the fundamental difference between deep ($L \geq 3$) and shallow ($L=2$) factorizations regarding their implicit low-rank bias, particularly for disconnected observations?}
    \item \textit{Can we theoretically establish that deeper models (i.e., with larger $L \geq 3$) exhibit a stronger implicit bias toward low-rank solutions?}
    \item \textit{What is the underlying cause of the loss of plasticity phenomenon, and how does depth interplay with it?}
\end{itemize}

In Section~\ref{sec: warm-up}, we begin by examining the depth-2 case to elucidate the key mechanism of connectivity. We find that \textit{coupled training dynamics} induces a low-rank bias, a phenomenon generalizable to deeper networks. Section~\ref{sec: coupled dynamics depth geq 3} further investigates this for all $L \geq 2$ using the block-diagonal observation case. Our analysis reveals that, for deep models, this bias distinctively promotes low-rank solutions compared to depth-2 models, strengthening with depth. Finally, Section~\ref{sec: LoP} explores the loss of plasticity phenomenon in matrix completion. We observe that deep models typically avoid this phenomenon due to their low-rank bias. In contrast, we empirically observe and prove that depth-2 networks pre-trained with limited observations (yielding decoupled dynamics) and subsequently trained with augmented observations (yielding coupled dynamics) fail to find a low-rank solution. Please refer to Appendix~\ref{appendix: related work} for further discussion of related work.

\vspace{-8pt}
\section{Problem Setting}
\vspace{-5pt}

We consider the problem of estimating a ground truth matrix $\mW^* \in \mathbb{R}^{d \times d}$ based on observations of its entries $\{w^*_{ij}\}_{(i,j) \in \Omega}$, where $\Omega \subseteq [d] \times [d]$ is the set of observed indices. We model the estimate as a linear network $\mW_{L:1} \triangleq \mW_L \mW_{L-1} \cdots \mW_1$, where $\mW_l \in \mathbb{R}^{d_l \times d_{l-1}}$ with $d_0 = d_L = d$. We denote the $(i,j)$-th entry of the matrix $\mW_{L:1}$ as $w_{ij}$. The factor matrices $\{\mW_l\}_{l=1}^L$ are trained by minimizing an objective function $\phi$, defined as the mean squared error $\ell$ over the observed entries in $\Omega$:
\begin{equation}
     \phi(\mW_1, \ldots, \mW_L; \Omega) \triangleq \ell(\mW_{L:1}; \Omega) = \frac{1}{2} \sum_{(i,j) \in \Omega} \left( w_{ij} - w^*_{ij} \right)^2.
    \label{eqn: preliminaries loss function} 
\end{equation}
We study the overparameterized regime where the intermediate dimensions satisfy $d_l \geq d$ for all $l \in [L-1]$, imposing no explicit rank constraints on the product model $\mW_{L:1}$. Consistent with prior works, our analysis focuses on \textit{gradient flow} dynamics (gradient descent with an infinitesimal step size) for a given objective function $\phi$. The dynamics for each layer $\mW_l(t)$ evolve according to:
\begin{equation}
    \dot{\mW_l}(t) \triangleq \frac{d}{dt}\mW_l(t) = - \frac{\partial}{\partial \mW_l} \phi(\mW_1(t), \mW_2(t), \ldots, \mW_L(t); \Omega), \quad l \in [L], \; t \geq 0.
\label{eqn: preliminaries gradient flow}
\end{equation}
For depth-2 networks ($L=2$), the product of factor matrices $\mA \in \mathbb{R}^{d \times d_1}$ (representing $\mW_2$) and $\mB \in \mathbb{R}^{d_1 \times d}$ (representing $\mW_1$), we denote $\mW_{\mA,\mB} \triangleq \mA \mB$. We define the stable rank of a matrix $\mW$ as ${\rm srank}(\mW) \triangleq \|\mW\|_F^2 / \|\mW\|_2^2$. We also denote by $\mI_d$ the $d \times d$ identity matrix and by $\mJ_d$ the $d \times d$ all-ones matrix.

\citet{baiconnectivity} introduce the concept of data connectivity for an incomplete matrix $\mM$. Connectivity is characterized by its set of observed indices $\Omega \subseteq [d] \times [d]$ and the corresponding observation matrix $\mP$ (where $P_{ij}=1$ if $(i,j) \in \Omega$, and $0$ otherwise). The formal definition is as follows:

\begin{definition}[Connectivity from \citet{baiconnectivity}]
\label{def: connectivity}
An incomplete matrix $\mM$ is \textbf{connected} if the bipartite graph $\gG_\mM$, constructed from its observation matrix $\mP$ using the adjacency matrix $\begin{bmatrix} \bm{0} & \mP^\top \\ \mP & \bm{0} \end{bmatrix}$, is connected after removing isolated vertices. Otherwise, $\mM$ is \textbf{disconnected}.
\end{definition}

\vspace{-8pt}
\section{Implicit Bias of Depth Induced by Coupled Training Dynamics}
\vspace{-5pt}
\label{sec: implicit bias}
In this section, we extend the connectivity argument of \citet{baiconnectivity} to general depth factorizations. We first demonstrate how the \textit{coupling of training dynamics} serves as the key mechanism explaining data connectivity's role in depth-2 models, through the completion of two previously introduced $2 \times 2$ matrices, $\mM_{\rm D}$ and $\mM_{\rm C}$, as illustrative examples. Building on the insights derived from these depth-2 model analyses, we hypothesize that deep networks exhibit an intrinsic low-rank bias because they maintain a high degree of coupled training dynamics, irrespective of observation patterns. This hypothesis is further corroborated by the block-diagonal observation results presented in Section~\ref{sec: coupled dynamics depth geq 3}.

\vspace{-4pt}
\subsection{Warm-up: Coupled Dynamics vs. Decoupled Dynamics in Depth-2 Networks}
\label{sec: warm-up}
\vspace{-3pt}
We focus on the simple $2 \times 2$ matrix completion of $\mM_{\rm D}$ and $\mM_{\rm C}$, using depth-2 models $\mW_{\mA, \mB}(t) = \mA(t)\mB(t)$. For brevity, let $\va_i(t) \in \mathbb{R}^{d_1}$ be the transpose of the $i$-th row of $\mA(t)$, and let $\vb_j(t) \in \mathbb{R}^{d_1}$ be the $j$-th column of $\mB(t)$. Our aim is to see how training dynamics affect the \textit{alignment} of the rows of $\mA(t)$ or the columns of $\mB(t)$, as such alignment leads to a rank-1 product matrix $\mW_{\mA, \mB}(t)$.

\vspace{-1pt}
\paragraph{Decoupled Dynamics.} In the $\mM_{\rm D}$ case (disconnected observations $w_{11}^*, w_{22}^*$), the gradient flow using the objective defined in~\eqref{eqn: preliminaries loss function}, results in independent dynamics for the pairs $(\va_1, \vb_1)$ and $(\va_2, \vb_2)$:
\begin{align*}
\dot{\va_i}(t) &= \left(w^*_{ii} - \va_i(t)^\top \vb_i(t)\right) \vb_i(t), \quad \dot{\vb_i}(t) = \left(w_{ii}^* - \va_i(t) ^\top\vb_i(t)\right) \va_i(t) \quad \text{for } i=1,2.
\end{align*}
Note that while the dynamics couple $\va_1(t)$ with $\vb_1(t)$ and $\va_2(t)$ with $\vb_2(t)$ within each pair, the two pairs $(\va_1, \vb_1)$ and $(\va_2, \vb_2)$ are decoupled from each other. This decoupling means the overall system's dynamics separate into two independent systems. Consequently, there is no compelling reason to align vectors from different pairs, typically leading to high-rank solutions with generic initializations (Figure~\ref{fig: intro disconnected}).
Indeed, we can obtain closed-form solutions solely dependent on initialization (see Proposition~\ref{pro: LoP pre-train solution}). 
For instance, with $\mA(0)= \mB(0) = \alpha \mI_2$, we have $\mW_{\mA, \mB}(\infty)={\rm diag}(w_{11}^*, w_{22}^*)$.

\vspace{-1pt}
\paragraph{Coupled Dynamics.} In contrast, for the $\mM_{\rm C}$ case (connected observations $w_{11}^*, w_{21}^*$), the gradient flow on the objective~\eqref{eqn: preliminaries loss function} yields coupled dynamics that do not decompose into independent pairs:
\begin{equation}
\begin{aligned}
    \dot{\va_1}(t) &= \left(w_{11}^* - \va_1(t)^\top \vb_1(t)\right) \vb_1(t), \quad \dot{\va_2}(t) = \left(w_{21}^* -  \va_2(t)^\top \vb_1(t)\right) \vb_1(t), \\
    \dot{\vb_1}(t) &= \left(w_{11}^* - \va_1(t)^\top \vb_1(t)\right)\va_1(t) + \left(w_{21}^* -  \va_2(t)^\top \vb_1(t)\right)\va_2(t).
\end{aligned}
\label{eqn: coupled dynamics}
\end{equation}
An important observation from~\eqref{eqn: coupled dynamics} is that $\mA(0) = \mathbf{0}$ ensures rank-1 $\mW_{\mA,\mB}(t)$ due to persistent alignment of $\va_1(t), \va_2(t)$ and $\vb_1(t)$. Although non-zero initialization leads to more complex behavior arising from coupled training dynamics, the following theorem demonstrates that sufficiently small initial norms in $\mA(0)$ also result in the alignment of $\va_1(t)$ and $\va_2(t)$ with $\vb_1(t)$.

\begin{restatable}{theorem}{coupleddynamics}
For the product model $ \mW_{\mA, \mB}(t) = \mA(t) \mB(t) \in \mathbb{R}^{2 \times 2}$, we consider the gradient flow dynamics~\eqref{eqn: coupled dynamics}, where the observations are $w_{11}^*(\neq 0)$ and $w_{21}^*(\neq 0)$. We assume convergence to the zero-loss solution (i.e., $w_{11}(\infty) = w_{11}^*, w_{21}(\infty) = w_{21}^*$). Defining $\vu^* = \frac{\vb_1(\infty)}{\lVert \vb_1(\infty) \rVert_2}$ and the orthogonal component ${\va_i}_{\perp}(\infty) = \va_i(\infty) - (\va_i(\infty)^\top \vu^*) \vu^*$, we have:
\begin{align*}
    \frac{\lVert {\va_i}_\perp(\infty) \rVert_2^2}{\lVert \va_i(\infty) \rVert_2^2} &\leq \frac{ \lVert \mA(0) \rVert_F^2 \left(\sqrt{ \lVert \vb_1(0) \rVert_2^4 + 4{w_{11}^*}^2 + 4{w_{21}^*}^2} + \lVert \vb_1(0) \rVert_2^2 \right)}{2{w_{i1}^*}^2}, \; \text{ for } i=1,2.
\end{align*}
\label{thm: warmup coupled dynamics}
\end{restatable}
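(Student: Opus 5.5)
The plan is to use the conservation law of gradient flow for two-layer factorizations and to control the numerator and the denominator of the ratio separately. From \eqref{eqn: coupled dynamics}, $\vb_2$ receives no gradient, so only $\vb_1$ and $\mA$ move. Writing $\mA^\top\mA=\va_1\va_1^\top+\va_2\va_2^\top$, I will check by direct differentiation that
\begin{equation*}
\frac{d}{dt}\bigl(\mA^\top\mA\bigr)=\sum_{i=1,2} r_i\bigl(\vb_1\va_i^\top+\va_i\vb_1^\top\bigr)=\frac{d}{dt}\bigl(\vb_1\vb_1^\top\bigr),
\end{equation*}
where $r_i=w_{i1}^*-\va_i^\top\vb_1$ is the residual. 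Hence $\mC\triangleq\mA(t)^\top\mA(t)-\vb_1(t)\vb_1(t)^\top$ is constant in time. In particular $\mC=\mA(0)^\top\mA(0)-\vb_1(0)\vb_1(0)^\top$, and this identity persists at $t=\infty$ under the assumed convergence.

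\textbf{Numerator.} Let $\mP=\mI-\vu^*{\vu^*}^\top$. Since $\mP\vb_1(\infty)=\bm 0$, evaluating the conserved quantity at $t=\infty$ and sandwiching by $\mP$ gives $\mP\mA(\infty)^\top\mA(\infty)\mP=\mP\mC\mP$. Taking traces,
\begin{equation*}
\sum_i\|{\va_i}_\perp(\infty)\|_2^2=\Tr(\mP\mC\mP)\le\Tr\bigl(\mP\mA(0)^\top\mA(0)\mP\bigr)\le\|\mA(0)\|_F^2 .
\end{equation*}
The first inequality holds because we drop the negative semidefinite term $-\mP\vb_1(0)\vb_1(0)^\top\mP$. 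The second holds because $\mP$ is an orthogonal projection. So each $\|{\va_i}_\perp(\infty)\|_2^2\le\|\mA(0)\|_F^2$.

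\textbf{Denominator.} By zero loss, $w_{i1}^*=\va_i(\infty)^\top\vb_1(\infty)$. Cauchy--Schwarz then gives $\|\va_i(\infty)\|_2^2\ge {w_{i1}^*}^2/x$, where $x\triangleq\|\vb_1(\infty)\|_2^2$; note $x>0$ since $w_{i1}^*\neq0$. It remains to upper bound $x$.

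\textbf{Bounding $x$.} Apply the conservation law in the direction $\vu^*$:
\begin{equation*}
{\vu^*}^\top\mC\vu^*=\|\mA(\infty)\vu^*\|_2^2-x .
\end{equation*}
We have $\|\mA(\infty)\vu^*\|_2^2=\sum_i(\va_i^\top\vb_1)^2/x=({w_{11}^*}^2+{w_{21}^*}^2)/x$. We also have $-{\vu^*}^\top\mC\vu^*\le({\vu^*}^\top\vb_1(0))^2\le\|\vb_1(0)\|_2^2$. Combining these yields the quadratic inequality
\begin{equation*}
x^2-\|\vb_1(0)\|_2^2\,x-({w_{11}^*}^2+{w_{21}^*}^2)\le0,
\end{equation*}
so $x\le\tfrac12\bigl(\|\vb_1(0)\|_2^2+\sqrt{\|\vb_1(0)\|_2^4+4{w_{11}^*}^2+4{w_{21}^*}^2}\bigr)$.

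Dividing the numerator bound by ${w_{i1}^*}^2/x$ and inserting this bound on $x$ gives exactly the claimed inequality.

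The main step needing care is the conservation law, in particular confirming that the untouched column $\vb_2$ does not enter it. Beyond that, the delicate point is choosing to project along $\vu^*$ in both the numerator and the $x$-bound, so that the unknown limiting geometry cancels.
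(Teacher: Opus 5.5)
Your proposal is correct and follows essentially the paper's argument. The paper's scalar identities $\dot\beta_{\vb_1}=\dot\beta_{\va_1}+\dot\beta_{\va_2}$ and $\tfrac{d}{dt}\alpha_{\vb_1}^2=\tfrac{d}{dt}(\alpha_{\va_1}^2+\alpha_{\va_2}^2)$ are exactly the perpendicular-trace and $\vu^*$-direction components of your conserved matrix $\mA^\top\mA-\vb_1\vb_1^\top$. Its closed-form solve for $\alpha_{\vb_1}^2(\infty)$, followed by the bound $\alpha_0\ge-\lVert\vb_1(0)\rVert_2^2$ and monotonicity of $\sqrt{x^2+C}-x$, is the same computation as your quadratic inequality for $x=\lVert\vb_1(\infty)\rVert_2^2$, just packaged differently.
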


\vspace{-5pt}

The theorem shows that small initial norms for $\mA(0)$ lead to the alignment of $\va_1(\infty)$ and $\va_2(\infty)$ with $\vb_1(\infty)$, implying a near rank-1 product matrix $\mW_{\mA, \mB}(\infty)$. This suggests that for depth-2 networks, coupled training dynamics (resulting from connected observations) facilitate the emergence of low-rank solutions under such small initialization, in contrast to the decoupled dynamics of disconnected observations, where no such bias exists regardless of initialization scale. This connection between observation connectivity and the coupling of training dynamics in depth-2 models motivates our investigation into how coupled dynamics manifest and induce low-rank bias in deeper networks, irrespective of connectivity patterns, as explored in the subsequent sections. 

\vspace{-5pt}
\noindent\paragraph{Remark.} Analyzing these dynamics is challenging because the time evolutions of $\va_1, \va_2$, and $\vb_1$ are mutually dependent. We note that Theorem~\ref{thm: warmup coupled dynamics} is not a direct corollary of Theorem 3 in \citet{baiconnectivity}. 
We explicitly characterize the degree of misalignment as a function of the initialization scale, unlike their assumption of an infinitesimal initialization scale with additional conditions.

\subsection{Coupled Dynamics in Deep Networks Induce Implicit Bias Towards Low Rank}
\vspace{-3pt}
\label{sec: coupled dynamics depth geq 3}

Section~\ref{sec: warm-up} illustrated the importance of coupled training dynamics, driven by data connectivity, for achieving low-rank solutions in simple two-layer factorizations ($L=2$). Building on this understanding, we now extend our analysis to deep networks ($L \geq 3$). 
For illustrative purposes, consider a depth-3 network $\mW_{3:1}$. An arbitrary observed entry $w_{ij}$ from this matrix is given by:
\begin{equation}
    w_{ij} = \sum\nolimits_{k=1}^{d_2} \sum\nolimits_{l=1}^{d_1} (\mW_3)_{ik} (\mW_2)_{kl} (\mW_1)_{lj}. \label{eq:depth3_product_example}
\end{equation}
Crucially, because all elements of the intermediate matrix $\mW_2$ contribute to the computation of $w_{ij}$ regardless of $(i,j)$, gradients of different observed entries will propagate through and update these shared elements in $\mW_2$. This inherently couples their training dynamics, a structural feature distinct from the depth-2 case, where coupling is primarily determined by the observation pattern. 
Such inherent coupling, in turn, implies a potential intrinsic bias towards low-rank solutions for deep models. To formalize this notion, we introduce the following definition of coupled dynamics.

\begin{definition}[Coupled/Decoupled Dynamics]
Consider the matrix completion setup with the model $\mW_{L:1}(t) = \mW_L(t) \cdots \mW_1(t) \in \mathbb{R}^{d \times d}$. Let $\bm{\theta}(t)$ be the vector of all trainable parameters evolving according to the gradient flow dynamics (defined in~\eqref{eqn: preliminaries gradient flow}). 
The gradient flow dynamics are \textbf{decoupled} if there exists a partition of $\Omega$ into non-empty, disjoint subsets $\Omega_1, \ldots, \Omega_K$ ($K \ge 2$) such that $\bigcup_{k=1}^K \Omega_k = \Omega$ and the following condition holds for any $(i,j) \in \Omega_k$ and $(p,q) \in \Omega_l$ with $k \neq l$:
\begin{equation} \label{eq:decoupling_condition}
\langle \nabla_{\bm{\theta}} w_{ij}(t), \nabla_{\bm{\theta}} w_{pq}(t) \rangle = 0, \quad \forall t \ge 0.
\end{equation}
The gradient flow dynamics are \textbf{coupled} if they are not decoupled.
\label{def: coupled/decoupled dynamics}
\end{definition}

While \citet{baiconnectivity} introduce similar terminology in Definition A.5, their definition is restricted to depth-2 networks. We extend this notion to networks of arbitrary depth.
For depth-2 matrices, it is straightforward to verify that coupled and decoupled dynamics typically correspond to connected and disconnected graphs, respectively, based on Definitions~\ref{def: connectivity} and~\ref{def: coupled/decoupled dynamics}.
For depth $\geq 3$ matrices, any initialization with an absolutely continuous distribution (e.g., Gaussian, uniform) yields gradient flow dynamics that are coupled with probability one, irrespective of the observation pattern (see Proposition~\ref{prop: coupled under generic initialization} in Appendix~\ref{appendix: coupled/decoupled}). However, special cases exist where training dynamics are decoupled even for $L \geq 3$.
Refer to Appendix~\ref{appendix: coupled/decoupled} for further discussion.

\vspace{-3pt}
\subsubsection{Implicit Bias of Depth Under Block-Diagonal Observations}
\vspace{-3pt}
To gain deeper theoretical insight into \textit{how coupled dynamics induce low-rank bias as depth increases}, we study the block-diagonal observation setting. Specifically, we consider a ground-truth matrix $\mW^* \in \mathbb{R}^{d \times d}$ with the block-diagonal observation set
\begin{equation*}
    \Omega_{\rm block}^{(s,n)} \triangleq \bigcup_{b \in [n]} \{ (i,j) \mid i, j \in \{(b-1)s+1, \dots, bs \}\},
\end{equation*}
where $s,n \in \mathbb{N}$ satisfy $d = sn$. Here, $s$ denotes the block size and $n$ the number of blocks. 
We consider $\mW^*$ with positive and identical observations $w^* \triangleq w_{ij}^*> 0$ for $(i,j) \in \Omega_{\rm block}^{(s,n)}$.
In this setting, the observed entries are confined to disjoint square blocks along the diagonal, forming a disconnected observation pattern. 
Note that this formulation recovers the diagonal observation setting as the special case $s=1$, and therefore strictly generalizes the diagonal case.
As highlighted in the $2 \times 2$ example with $s=1$ (\textit{cf}. Figure~\ref{fig: intro disconnected}), this setting reveals a stark difference between shallow and deep networks despite the lack of connectivity. 

We consider a depth-$L$ factorization of the model, $\mW_{L:1}(t) = \mW_{L}(t)\mW_{L-1}(t)\cdots \mW_1(t)$ where $\mW_l \in \mathbb{R}^{d \times d}$ for all $l \in [L]$.
To investigate how dynamic coupling affects the low-rank bias, we consider a family of initializations where, for parameters $\alpha > 0$ and $m > 1$, each factor matrix $\mW_l(0)$ is initialized as follows:
\begin{equation}
    \renewcommand{\arraystretch}{0.58}
    \mW_l(0) = \begin{pmatrix}
        \alpha & \alpha /m & \cdots & \alpha /m \\
        \alpha/m & \alpha & \cdots & \alpha /m \\
        \vdots & \vdots & \ddots & \vdots \\
        \alpha/m & \alpha/m & \cdots & \alpha
    \end{pmatrix} \in \mathbb{R}^{d \times d}, \quad \forall l \in [L].
    \label{eqn: alpha alpha m init}
\end{equation}

\vspace{-2pt}
\paragraph{Remark.} Random Gaussian initialization allows coupling but introduces $Ld^2$ degrees of freedom, making it almost impossible to analyze individual training trajectories. For this reason, prior work often adopts deterministic initializations such as $\alpha \mI_d$~\citep{gunasekar2017implicit,arora2019implicit}. We follow this approach but adopt a more general deterministic family that is adequate for establishing our theoretical claims. Our initialization interpolates between $\alpha \mJ_d$ (as $m \to 1$) and $\alpha \mI_d$ (as $m \to \infty$), and the parameter $m$ allows direct control over the initial numerical rank.

Using this initialization scheme with diagonal observations, the following proposition specifies how parameters $m$ and network depth $L$ determine if training dynamics are coupled or decoupled:
\begin{proposition}\label{prop:coupling_conditions_by_L_m} 
    Consider a depth-$L$ model, where each factor $\mW_l(0) \in \mathbb{R}^{d \times d}$ is initialized with \eqref{eqn: alpha alpha m init} trained with $\Omega_{\rm block}^{(s,n)}$. Then, by Definition~\ref{def: coupled/decoupled dynamics}, the following hold for any $s \geq 1$ and $n \geq 2$:
    \begin{itemize}
        \item For depth $L=2$, the training dynamics are \textbf{decoupled} for all $m > 1$.
        \vspace{-2pt}
        \item For depth $L \ge 3$: 
        \vspace{-2pt}
        \begin{itemize}
            \item The training dynamics are \textbf{coupled} if $1 < m < \infty$.
            \item The training dynamics are \textbf{decoupled} if $m = \infty$ (i.e., initialization with $\alpha\mI_d$).
        \end{itemize}
    \end{itemize}
\end{proposition}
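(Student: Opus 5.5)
The plan is to compute the gradient inner products directly. For the product $\mW_{L:1}$, the gradient of an entry $w_{ij}$ with respect to layer $l$ is
\[
\nabla_{\mW_l} w_{ij} = (\mW_{L:l+1})^\top \ve_i \ve_j^\top (\mW_{l-1:1})^\top .
\]
Hence
\[
\langle \nabla_{\bm\theta} w_{ij}, \nabla_{\bm\theta} w_{pq}\rangle = \sum_{l=1}^L (\mW_{L:l+1}\mW_{L:l+1}^\top)_{ip}\,(\mW_{l-1:1}^\top \mW_{l-1:1})_{qj},
\]
with empty products equal to $\mI_d$. The proof rests on two facts. First, the dynamics preserve structure: a symmetry argument shows each $\mW_l(t)$ stays in a structured family. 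Second, at $t=0$ this structured form lets us evaluate the inner products explicitly.

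\textbf{Invariance.} All layers start equal and symmetric. The loss is invariant under simultaneous conjugation of every layer by a permutation $\mP$ that permutes rows and columns within each block, or permutes the blocks themselves. It is also invariant under the transpose-reversal map $(\mW_1,\dots,\mW_L)\mapsto(\mW_L^\top,\dots,\mW_1^\top)$, since $\Omega^{(s,n)}_{\rm block}$ and $\mW^*$ on it are symmetric. The initialization is a fixed point of all these symmetries, and the gradient flow ODE has unique solutions. So for all $t$, each $\mW_l(t)$ is permutation-invariant, and $\mW_l=\mW_{L+1-l}^\top$.

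Permutation invariance forces each $\mW_l(t)$ to have the form
\[
a_l\mI + b_l(\mI_n\otimes \mJ_s) + c_l\mJ_d
\]
(diagonal, within-block off-diagonal, and off-block values). Matrices of this form are symmetric and commute, so every layer is symmetric, and so are all partial products. This algebra is closed under products, and its off-block coefficient $c$ can be tracked.

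\textbf{Case $L=2$.} The inner product is
\[
(\mW_2\mW_2^\top)_{ip}\,\delta_{qj} + \delta_{ip}\,(\mW_1^\top\mW_1)_{qj}.
\]
Take $(i,j)$ and $(p,q)$ in different blocks, and partition $\Omega$ by blocks. Then $i\ne p$ and $j\ne q$, so both Kronecker deltas vanish and the inner product is $0$ for all $t$. The dynamics are therefore decoupled for every $m$. No invariance argument is needed here.

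\textbf{Case $m=\infty$.} Starting from $\alpha\mI_d$, the flow stays diagonal. Under the invariance above, $c_l$ and $b_l$ stay $0$: the gradient $\mW_{L:l+1}^\top \mE\, \mW_{l-1:1}^\top$ with diagonal factors is supported on $\Omega$, which leaves the diagonal-plus-block family closed. So all partial products are block-diagonal. Take entries in different blocks: then $i$ and $p$ lie in different blocks, so $(\mW_{L:l+1}\mW_{L:l+1}^\top)_{ip}=0$ for each $l$, since this product is block-diagonal (for $l=L$ it is $\delta_{ip}=0$). Each term of the sum vanishes, and the block partition witnesses decoupling.

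\textbf{Case $L\ge3$, $1<m<\infty$.} Coupled means that no partition into $K\ge2$ classes works. Equivalently, the graph on $\Omega$ joining two entries whenever their gradient inner product is nonzero at some $t$ must be connected. It suffices to exhibit this at $t=0$.

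At $t=0$ every layer is $\mM_0=\alpha(1-1/m)\mI+(\alpha/m)\mJ_d$, which is entrywise positive. All partial products are entrywise positive too, except that the empty product is $\mI$. For $2\le l\le L-1$, both $\mW_{L:l+1}$ and $\mW_{l-1:1}$ are nonempty positive products; these middle layers exist precisely because $L\ge3$. So those terms are strictly positive, and the terms for $l=1$ and $l=L$ are nonnegative. Hence every pair of observed entries has a strictly positive inner product at $t=0$. The graph is then complete, no valid partition exists, and the dynamics are coupled.

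The main obstacle is the $m=\infty$ invariance, namely proving that the flow from $\alpha\mI_d$ stays block-diagonal for $s>1$. I would handle it directly: the block-diagonal matrices form a subspace closed under products and transposes, and the gradient maps it into itself because the residual is supported on $\Omega$, which is block-diagonal. Uniqueness of ODE solutions then forces the trajectory to stay in this invariant set. This avoids needing the full permutation-symmetry argument.
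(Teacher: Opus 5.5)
Your proof is correct and follows the paper's argument in all three cases. For $L=2$ you use disjoint gradient supports, which your Kronecker deltas express; for $L\ge 3$ with finite $m$, strict entrywise positivity of the middle-layer terms at $t=0$; and for $m=\infty$, invariance of a block-diagonal family under the flow. Your final-paragraph version, using all block-diagonal matrices, is slightly simpler and more general than the paper's family $\gM_0$, and the permutation-symmetry digression is not needed anywhere.

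One slip to fix: for $s>1$ the flow from $\alpha\mI_d$ does \emph{not} stay diagonal, and $b_l$ does \emph{not} stay $0$. The within-block off-diagonal entries are observed with target $w^*>0$, so at $t=0$ each $\dot{\mW}_l$ equals $\alpha^{L-1}w^*$ at those positions. Only $c_l=0$, i.e., block-diagonality, is preserved, and that is exactly what your decoupling step uses.
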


\vspace{-2pt}
By Proposition~\ref{prop: loss convergence} in Appendix~\ref{appendix: loss convergence}, the loss converges to zero under the gradient flow dynamics~\eqref{eqn: preliminaries gradient flow}. Building on this result, our objective is to determine the rank of solutions found by gradient flow depending on the coupling of dynamics. The theorem below presents an equation of each singular value of the converged matrix $\mW_{L:1}(\infty)$, for all $L \ge 2$.

\begin{theorem}
\label{thm: block-diagonal}
    Consider the product matrix $\mW_{L:1}$ whose factor matrices $\mW_l \in \mathbb{R}^{d \times d}$ are initialized according to~\eqref{eqn: alpha alpha m init}. Under the gradient flow dynamics~\eqref{eqn: preliminaries gradient flow}, we have $\ell(\mW_{L:1}(\infty);\Omega_{\rm block}^{(s,n)}) = 0$ (Proposition~\ref{prop: loss convergence}, Appendix~\ref{appendix: loss convergence}).
    For all parameters $\alpha > 0, m > 1, n \geq 2, s \geq 1$, and $L \geq 2$, the singular values $\sigma_1 \geq \sigma_2 \geq \cdots \geq \sigma_d$ of $\mW_{L:1}(\infty)$ satisfy $\sigma_j = 0$ for all $j>n$.
    The principal singular value $\sigma_1$ and the secondary singular values $\sigma_i$ (for $i \in \{2, \dots, n\}$) are determined as follows:

    - If $L=2$ (\textbf{decoupled dynamics}): The singular values are given in closed form by
        \begin{align*}
            \sigma_1 &= \frac{w^* d (m+d-1)^2}{(m+d-1)^2 + (n-1)(m-1)^2}, \quad
            \sigma_i = \frac{w^* d(m-1)^2}{(m+d-1)^2 + (n-1)(m-1)^2}.
        \end{align*}

    - If $L \ge 3$ and $1 < m < \infty$ (\textbf{coupled dynamics}): The singular values satisfy the implicit equations:
        \begin{align}
            \sigma_1^{\frac{2-L}{L}} - \left( \frac{w^*d - \sigma_1}{n-1}\right)^\frac{2-L}{L} &= C_{\alpha, m, L, d}, \label{eqn: implicit eq1} \\
            \left(w^*d - (n-1)\sigma_i\right)^{\frac{2-L}{L}} - \sigma_i^\frac{2-L}{L} &= C_{\alpha, m, L, d}, \label{eqn: implicit eq2} 
        \end{align}
        where $C_{\alpha, m, L, d} \triangleq \left(\frac{\alpha}{m} \right)^{2-L}\left(\left(m+d-1\right)^{2-L} -\left( m-1\right)^{2-L} \right)$. 
        
    - If $L \geq 3$ and $m=\infty$ (\textbf{decoupled dynamics}): The singular values converge to:
        \begin{align*}
            \sigma_1 = \sigma_i = sw^*.
        \end{align*}
\end{theorem}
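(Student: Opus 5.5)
The plan is to reduce the whole gradient flow to three scalar ODEs by exploiting symmetry. Let $\mP = \mI_n \otimes \mJ_s$ be the observation mask, so the loss gradient with respect to the product is $\mG \triangleq \mP \circ (\mW_{L:1} - w^*\mJ_d)$. The matrices $\mI_d, \mP, \mJ_d$ span a commutative algebra $\mathcal{A}$ of symmetric matrices, since $\mP^2 = s\mP$, $\mP\mJ_d = \mJ_d\mP = s\mJ_d$ and $\mJ_d^2 = d\mJ_d$. Moreover $\mathcal{A}$ is closed under masking: $\mP \circ (a\mI_d + b\mP + c\mJ_d) = a\mI_d + (b+c)\mP$.

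All factors start equal and in $\mathcal{A}$, namely $\mW_l(0) = \frac{\alpha}{m}((m-1)\mI_d + \mJ_d)$. Consider the ansatz $\mW_l(t) = \mW(t) \in \mathcal{A}$ for all $l$. Then
\[
-\partial\phi/\partial \mW_l = -(\mW^{L-l})^\top \mG (\mW^{l-1})^\top = -\mW^{L-1}\mG \in \mathcal{A},
\]
so this gradient is the same for every $l$ and lies in $\mathcal{A}$. Hence the reduced ODE $\dot{\mW} = -\mW^{L-1}\mG$ on $\mathcal{A}$ produces a solution of the full gradient flow, and by uniqueness of solutions to the (locally Lipschitz) ODE it is \emph{the} solution. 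Making this invariance argument airtight is the main technical step, but it is routine once the algebra closure is checked.

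Next I diagonalize simultaneously. The common eigenspaces of $\mathcal{A}$ are:
\begin{itemize}
\item $\mathrm{span}(\mathbf{1})$, with eigenvalue $\lambda_0 = a + bs + cd$;
\item the $(n-1)$-dimensional space of block-constant vectors orthogonal to $\mathbf{1}$, with eigenvalue $\lambda_1 = a + bs$;
\item the $n(s-1)$-dimensional space of within-block zero-sum vectors, with eigenvalue $\lambda_2 = a$.
\end{itemize}
Initially $\lambda_0(0) = \frac{\alpha}{m}(m+d-1)$ and $\lambda_1(0) = \lambda_2(0) = \frac{\alpha}{m}(m-1)$. Write $\mW^L = A\mI_d + B\mP + C\mJ_d$, so that $A = \lambda_2^L$, $Bs = \lambda_1^L - \lambda_2^L$ and $Cd = \lambda_0^L - \lambda_1^L$. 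The eigenvalues of $\mG$ are then $\lambda_2^L$ on the within-block space, and on both other spaces the common value
\[
g = \lambda_1^L + \tfrac{1}{n}(\lambda_0^L - \lambda_1^L) - w^*s .
\]
This gives three scalar ODEs. The first is $\dot\lambda_2 = -L\lambda_2^{2L-1}$, whose solution is positive and tends to $0$. The other two are $\dot\lambda_k = -L\lambda_k^{L-1} g$ for $k=0,1$, which share the same $g$. Positivity of $\lambda_0,\lambda_1$ is preserved, since $0$ is an equilibrium of each of these ODEs.

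The key conserved quantity follows from the shared $g$. For $L \ge 3$, $\frac{d}{dt}\lambda_k^{2-L} = -(2-L)Lg$ is the same for $k=0,1$. Therefore
\[
\lambda_0^{2-L} - \lambda_1^{2-L} \equiv C_{\alpha,m,L,d},
\]
evaluated at $t=0$. For $L=2$ the analogous statement is that $\log\lambda_0 - \log\lambda_1$ is constant, equal to $\log\frac{m+d-1}{m-1}$.

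At convergence the singular values of $\mW_{L:1}(\infty)$ are as follows:
\begin{itemize}
\item the within-block ones are $\lambda_2^L \to 0$, which gives $\sigma_j = 0$ for $j > n$;
\item $\sigma_1 = \lambda_0^L$;
\item $\sigma_i = \lambda_1^L$ with multiplicity $n-1$.
\end{itemize}
The ordering $\sigma_1 \ge \sigma_i$ holds because $C \le 0$ for $L \ge 3$, which forces $\lambda_0 \ge \lambda_1$. Zero loss (Proposition~\ref{prop: loss convergence}) means $g \to 0$, i.e.\ $\sigma_1 + (n-1)\sigma_i = w^*sn = w^*d$. Substituting $\lambda^{2-L} = \sigma^{(2-L)/L}$ into the invariant, and eliminating either $\sigma_i$ or $\sigma_1$ via this linear constraint, yields \eqref{eqn: implicit eq1} and \eqref{eqn: implicit eq2}.

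The remaining cases follow from the same two relations. For $L=2$, $\sigma_1/\sigma_i = \left(\frac{m+d-1}{m-1}\right)^2$; combined with the linear constraint this gives the stated closed forms. For $m = \infty$, the initialization is $\alpha\mI_d$ and $\lambda_0(0) = \lambda_1(0)$, so the conserved difference is $0$; equivalently $C_{\alpha,m,L,d} \to 0$ as $m \to \infty$. Thus $\sigma_1 = \sigma_i = w^*s$.

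A minor point is that the limits of $\lambda_0,\lambda_1$ exist. This follows from the invariant together with $g \to 0$: these pin the pair $(\lambda_0, \lambda_1)$ down to a unique positive solution, since the invariant curve intersects the constraint curve exactly once by monotonicity.
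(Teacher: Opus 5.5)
Your proposal is correct and follows essentially the same route as the paper. Your algebra $\mathcal{A}=\mathrm{span}\{\mI_d,\mP,\mJ_d\}$ is exactly the paper's family $\gM$ of matrices $\mM(a,b,c)$, and the paper likewise shows that all layers stay equal and inside this family, simultaneously diagonalizes to get three scalar eigenvalue ODEs sharing a common factor (your $g$), and combines the conserved quantity $\lambda^{2-L}$-difference (the ratio when $L=2$) with the zero-loss constraint $\sigma_1+(n-1)\sigma_i=w^*d$. The one slip is a spurious factor $L$ in your scalar ODEs: from $\dot{\mW}=-\mW^{L-1}\nabla\ell(\mW^L)$ one gets $\dot\lambda=-\lambda^{L-1}\cdot(\text{eigenvalue of }\nabla\ell)$. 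This only rescales time and changes none of the invariants or limits; otherwise, your handling of positivity and of the existence of the limits is, if anything, more careful than the paper's.
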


The proof of the theorem is provided in Appendix~\ref{appendix: proof for theorem 3.3}.
The theorem details the converged singular values of $\mW_{L:1}(\infty)$ for our initialization scheme~\eqref{eqn: alpha alpha m init}. Crucially, it reveals distinct outcomes based on the nature of the training dynamics. For decoupled dynamics---specifically, when $L=2$ (for sufficiently large $m>1$), or when $L \ge 3$ and $m = \infty$---singular values from $\sigma_1$ to $\sigma_n$ approach $sw^*$ and are independent of the scale $\alpha$. This implies convergence to a rank-$n$ solution. In contrast, for coupled dynamics ($L \ge 3$ with finite $m$), the outcome becomes $\alpha$-dependent. To illustrate the implications of these implicit equations, we present the following corollary.

\begin{restatable}{corollary}{srank}
    \label{cor: implicit equation}
    Let $1<m<\infty $, $n \ge 2$, $s \geq 1$, $w^* > 0$, and $L \ge 3$ be fixed. Then, as $\alpha \to 0$, the stable rank of the limit product matrix $\mW_{L:1}(\infty)$ converges to one; that is,
    \begin{equation*}
        {\rm srank}\big(\mW_{L:1}(\infty)\big) \to 1.
    \end{equation*}
\end{restatable}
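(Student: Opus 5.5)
We derive the corollary directly from the implicit equations \eqref{eqn: implicit eq1}--\eqref{eqn: implicit eq2} in Theorem~\ref{thm: block-diagonal}. Set $p \triangleq (L-2)/L \in (0,1)$, so that $x^{\frac{2-L}{L}} = x^{-p}$.

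First, determine the sign and size of the constant. Write
\[
C_{\alpha,m,L,d} = \left(\tfrac{\alpha}{m}\right)^{2-L}\bigl((m+d-1)^{2-L}-(m-1)^{2-L}\bigr).
\]
Since $m+d-1>m-1>0$ and $2-L<0$, the bracket is a fixed negative number. Because $(\alpha/m)^{2-L}\to\infty$ as $\alpha\to 0$, we get $C_{\alpha,m,L,d}\to-\infty$, and we write $C=-K$ with $K\to+\infty$.

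Next, use \eqref{eqn: implicit eq1}. It reads $\bigl(\tfrac{w^*d-\sigma_1}{n-1}\bigr)^{-p} = \sigma_1^{-p} + K$. For these powers to be defined we need $0<\sigma_1<w^*d$. Since $\sigma_1^{-p}>0$, the right-hand side is at least $K$, so
\[
\frac{w^*d-\sigma_1}{n-1}\le K^{-1/p}\to 0,
\]
and hence $\sigma_1\to w^*d$.

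Similarly, \eqref{eqn: implicit eq2} reads $\sigma_i^{-p} = (w^*d-(n-1)\sigma_i)^{-p} + K \ge K$, which gives $0<\sigma_i\le K^{-1/p}\to 0$ for each $i\in\{2,\dots,n\}$.

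Finally, the theorem gives $\sigma_j=0$ for $j>n$, so
\[
{\rm srank}(\mW_{L:1}(\infty)) = 1 + \frac{\sum_{i=2}^n\sigma_i^2}{\sigma_1^2} \le 1 + \frac{(n-1)K^{-2/p}}{\sigma_1^2} \to 1,
\]
because $\sigma_1\to w^*d>0$.

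The only delicate point is positivity: the argument requires $\sigma_1<w^*d$ and $\sigma_i < w^*d/(n-1)$, so that the fractional powers in \eqref{eqn: implicit eq1}--\eqref{eqn: implicit eq2} are well defined and positive. These bounds should be implicit in how the theorem's equations were derived, plausibly from a trace-type conservation $\sigma_1+(n-1)\sigma_i=w^*d$ together with positivity of the singular values along the flow. If they are not stated explicitly, I would record them from the proof of Theorem~\ref{thm: block-diagonal}. The rest is elementary.
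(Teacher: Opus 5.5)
Your argument is correct and follows the paper's proof in substance. You show $C_{\alpha,m,L,d}\to-\infty$, deduce $\sigma_i\to 0$ and $\sigma_1\to w^*d$, and substitute into ${\rm srank}=1+(n-1)(\sigma_i/\sigma_1)^2$. The one difference is that you drop a positive term in each implicit equation, which gives the explicit rate $\sigma_i\le |C_{\alpha,m,L,d}|^{-L/(L-2)}$. The paper instead uses the strict monotonicity of $f_2$ from Proposition~\ref{prop: unique solution} and then the linear constraint \eqref{eqn: block eigenvalue conserve1} to get $\sigma_1$.

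The positivity facts you flag are exactly the ones the paper records. The identity $\sigma_1+(n-1)\sigma_i=w^*d$ is \eqref{eqn: block eigenvalue conserve1}, i.e.\ the trace at convergence. Positivity $\sigma_1,\sigma_i>0$ is taken in the proof of Proposition~\ref{prop: unique solution}.
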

The proof of the corollary is provided in Appendix~\ref{appendix: proof for corollary 3.4}. Note that, according to Theorem~\ref{thm: block-diagonal}, the stable rank of the depth-2 network satisfies ${\rm srank}\big(\mW_{2:1}(\infty)\big) = \frac{(m+d-1)^4 + (m-1)^4(n-1)}{(m+d-1)^4}$, which is independent of the initialization scale $\alpha$, and is approximately $n$ when $m$ is large. In contrast, for any depth $L \ge 3$ with finite $m$, Corollary~\ref{cor: implicit equation} implies that as $\alpha \to 0$, then ${\rm srank}\big(\mW_{L:1}(\infty)\big) \to 1,$ so the depth-$L$ network converges to a nearly rank-1 solution.

\begin{figure}[t]
    \centering
    \includegraphics[width=\linewidth]{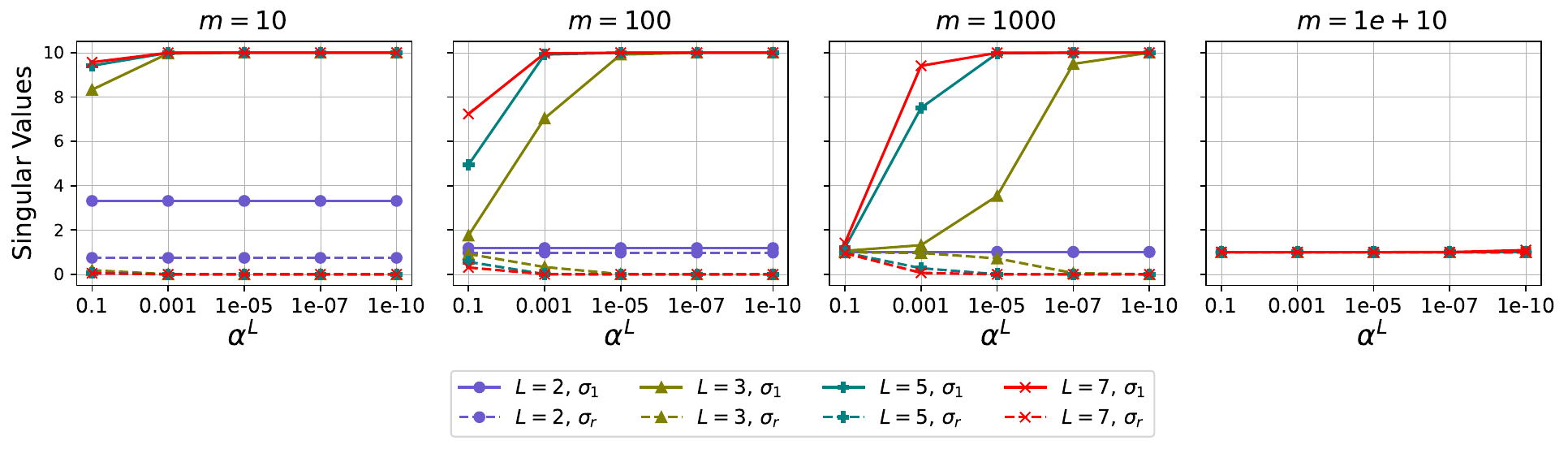}
\caption{Singular values of $\mW_{L:1}(\infty)$ (numerically obtained from Theorem~\ref{thm: block-diagonal}) against initialization scale $\alpha^L$, for the diagonal observation task where $s=1$. Solid lines represent the largest singular value $\sigma_1$; dashed lines denote the other (identical) singular values $\sigma_r$ for $r \ge 2$. For finite $m$, these results illustrate that both greater depth $L$ and a smaller initial scale $\alpha$ strengthen the low-rank bias, in contrast to the $L=2$ case. Conversely, a very large $m$ ($m=10^{10}$), approximating an $\alpha\mI_d$ (rank-$d$) initialization, leads to decoupled dynamics and a full-rank solution, independent of both $L$ and $\alpha$.
\label{fig: implicit bias 10x10}}
\vspace{-5pt}
\end{figure}

\vspace{-5pt}
\noindent \paragraph{Remark.}
Readers may wonder why, even under decoupled dynamics, the solution converges to a rank-$n$ rather than a rank-$d$ matrix. Although the dynamics are decoupled at the level of the full matrix (Definition~\ref{def: coupled/decoupled dynamics}), applying the same notion of coupling to each diagonal block separately reveals that the dynamics are coupled within each block. Since all rows (or columns) inside a block share the same observation pattern, the row (or column) space is spanned by at most $n$ block-wise patterns, yielding a rank at most $n$.  In particular, when $s=1$ (diagonal observations) and hence $n=d$, the solution becomes full rank (see Figure~\ref{fig: implicit bias 10x10}). This block-diagonal example thus highlights how coupled versus decoupled dynamics govern the strength of the low-rank bias.

We consider diagonal observations ($s=1$) with $w^*=1$ and $d=10$, and numerically solve the implicit equations~\eqref{eqn: implicit eq1} and~\eqref{eqn: implicit eq2} to examine how the depth $L$ and initialization parameters $(\alpha,m)$ influence the singular value distribution. Note that both equations admit unique solutions (Proposition~\ref{prop: unique solution} in Appendix~\ref{appendix: Uniqueness of The Limiting Singular Values}). To ensure a fair comparison across depths, we set the initialization scale so that the scale of the $\mathbf{W}_{L:1}(0)$ is comparable across depths; concretely, we match the scale of $\alpha^L$ across different values of $L$. The results in Figure~\ref{fig: implicit bias 10x10} confirm that these coupled dynamics in models with $L \geq 3$ and finite $m$ indeed induce a low-rank bias, contrasting with the full-rank outcomes of the decoupled cases. 
Moreover, this bias becomes more pronounced as $L$ increases, evidenced by a wider gap between $\sigma_1$ and $\sigma_r$ for $r \geq 2$.

Additional numerical evidences are provided in Figures~\ref{fig: wstar=1_d=10_n=2}--\ref{fig: wstar1_d3} (Appendix~\ref{appendix: implicit bias experiments}). Moreover, Figure~\ref{fig: wstar1_d3_gd} in Appendix~\ref{appendix: implicit bias experiments} shows that these numerical results agree with the outcomes of a gradient descent with a sufficiently small learning rate.  Moreover, although we do not provide a theoretical proof, we conduct experiments under noisy diagonal observations (Figure~\ref{fig: noisy diagonal}), non-equal diagonal observations (Figure~\ref{fig: nonequal diagonal}), and with various optimizers including SGD, GD with momentum, Adam, RMSProp, and Adagrad (Figures~\ref{fig: MC_SGD}--\ref{fig: MC_adagrad}). Across all settings, we consistently observe the emergence of a depth-induced low-rank bias.
We further train practical neural networks to examine whether increased depth indeed leads to a low-rank bias.
The results shown in Figures~\ref{fig: resnet-cifar10}--\ref{fig: vgg-cifar100} (SGD with momentum), \ref{fig: resnet-cifar10-adam}--\ref{fig: vgg-cifar100-adam} (Adam), and \ref{fig: resnet-cifar10-rmsprop}--\ref{fig: vgg-cifar100-rmsprop} (RMSProp) in Appendix~\ref{appendix: implicit bias experiments in NN} indicate that as depth increases (e.g., ResNet-18 to 101 and VGG-11 to 19), the average effective rank decreases, highlighting the emergence of low-rank bias in practical neural networks across these optimizers.

\vspace{-5pt}
\noindent \paragraph{Remark.}
Our analysis of low-rank bias for a specific family of deterministic initializations resolves the challenging open problem~\eqref{eqn: intro: menon} highlighted in Section~14.1 of \citet{menon2024geometry}. Figure~\ref{fig: m vs gaussian} in Appendix~\ref{appendix: implicit bias experiments} further demonstrates that our proposed deterministic initialization exhibits qualitative trends similar to Gaussian initialization. We therefore argue that our results provide foundational insights into low-rank bias applicable to more general random initializations.
\vspace{-5pt}
\section{Understanding Loss of Plasticity in Depth-2 Matrix Completion}
\label{sec: LoP}
\vspace{-5pt}
Studying the inherent tendency towards low-rank solutions in matrix completion can offer further insights into the loss of plasticity phenomenon. \citet{kleinman2024critical} report the emergence of this phenomenon in matrix completion: models pre-trained on limited observations struggle to adapt when training continues on augmented observations.

We conduct an experiment using a rank-5 ground-truth matrix $\mW^* \in \mathbb{R}^{100 \times 100}$, where pre-training is performed on a sparse observation set and post-training continues with additional observations. In Figure~\ref{fig: LoP main}, we compare two post-training strategies: \emph{warm-start} training, initialized from the pre-trained model, and \emph{cold-start} training, initialized from scratch on the augmented observations, across different depths in terms of effective rank and reconstruction error. We observe that, even when pre-trained on sparse observations, deeper models increasingly favor low-rank solutions as depth increases. This supports our argument (Section~\ref{sec: coupled dynamics depth geq 3}) that deeper networks inherently converge toward low-rank solutions even from limited and disconnected initial data. Consequently, further training on augmented data does not substantially increase the rank compared to training from scratch on the augmented observations. Based on our observations, we conclude that the low-rank bias of deep models helps them mitigate the loss of plasticity, while the phenomenon is more pronounced in depth-2 models. To theoretically understand the underlying cause of this phenomenon itself, we henceforth focus our analysis on depth-2 models.

In Section~\ref{sec: LoP pre-train}, we study pre-training on diagonal-only observations. We then consider post-training on $2\times2$ (Section~\ref{sec: LoP 2 by 2}) and $d\times d$ (Section~\ref{sec: LoP d by d}) matrices. For the $2\times2$ case, we set $\Omega_{\rm pre}^{(2)}\triangleq \{(1,1),(2,2)\}$ and obtain the post-training set $\Omega_{\rm post}^{(2)}$ by adding a single off-diagonal entry to ensure connectivity, i.e. $\Omega_{\rm post}^{(2)} \triangleq \{(1,1), (1,2), (2,2)\}$. Likewise, for the $d\times d$ case, $\Omega_{\rm pre}^{(d)} \triangleq \{(i,i)\}_{i\in[d]}$, and $\Omega_{\rm post}^{(d)}$ is formed by adding additional (off-diagonal) observations; see Section~\ref{sec: LoP d by d} for details.

\vspace{-6pt}
\noindent \paragraph{Remark.} \citet{kleinman2024critical} observe that loss of plasticity is further intensified with increasing network depth, a conclusion they reached by measuring a ``relative reconstruction loss'' when compared to models trained from scratch on the augmented dataset. In their setup, training is run for a fixed number of iterations without waiting for convergence, whereas in our experiments we terminate each training phase once the loss falls below a fixed threshold. 

\begin{figure}[t]
    \centering
    \includegraphics[width=0.8\linewidth]{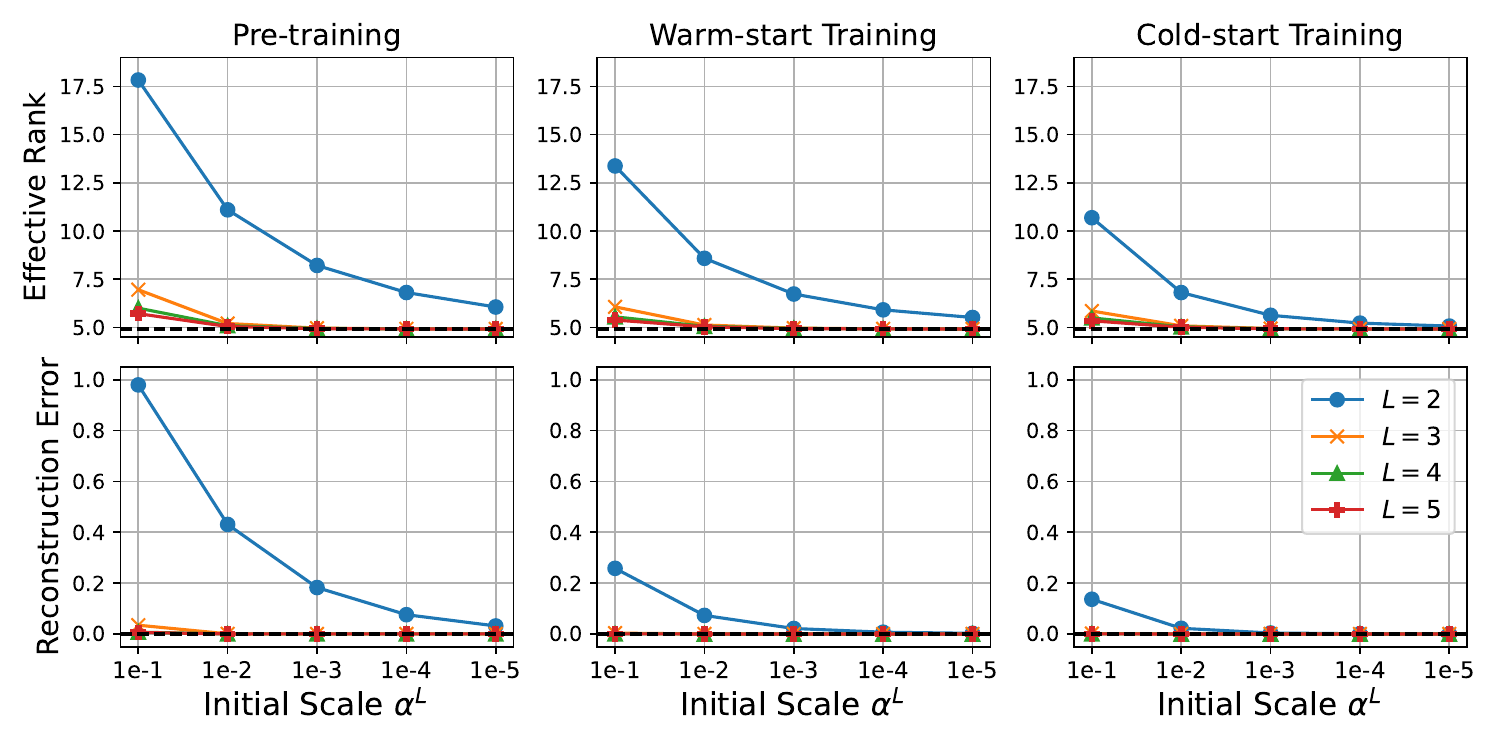}
\caption{Experiments use a $100 \times 100$ rank-5 ground-truth matrix.
Pre-training utilizes $2000$ randomly sampled entries ($\Omega_{\mathrm{pre}}$; $\lvert \Omega_{\mathrm{pre}} \rvert = 2000$), while post-training adds $1000$ more, forming $\Omega_{\mathrm{post}}$ ($\Omega_{\mathrm{pre}} \subset \Omega_{\mathrm{post}}$; $\lvert \Omega_{\mathrm{post}} \rvert = 3000$).
The top row of panels displays effective rank, and the bottom row shows reconstruction error, both measured at convergence. The leftmost panels depict training on $\Omega_{\mathrm{pre}}$, and the rightmost on $\Omega_{\mathrm{post}}$, both starting from random Gaussian initialization. The middle panels show warm-start training on $\Omega_{\mathrm{post}}$, initialized from converged pre-trained models with $\Omega_{\mathrm{pre}}$.
}
    \label{fig: LoP main}
\vspace{-8pt}
\end{figure}
\vspace{-3pt}
\subsection{Pre-training with Diagonal Observations}
\label{sec: LoP pre-train}
\vspace{-5pt}
To clearly observe loss of plasticity in a setting consistent with Section~\ref{sec: coupled dynamics depth geq 3}, we pre-train using only diagonal entries, yielding a disconnected pattern. We consider decoupled-to-coupled scenarios, where additional data is introduced to induce coupled training dynamics. For depth-2 models, they correspond to a disconnected-to-connected observation pattern. For the pre-training, closed-form solutions that depend \textit{solely} on the network's initialization can be found in the following proposition:

\begin{proposition}
\label{pro: LoP pre-train solution}
Consider a ground truth matrix $\mW^* \in \mathbb{R}^{d \times d}$ with diagonal observations $\Omega_{\rm pre}^{(d)}$. The model is factorized as $\mW_{\mA, \mB}(t) = \mA(t)\mB(t)$, where $\mA(t), \mB(t) \in \mathbb{R}^{d \times d}$. For each observation $(i, i) \in \Omega_{\mathrm{pre}}^{(d)}$, define the constants $P_{i}$ and $Q_{i}$ based on the initial values:
\begin{align*}
    P_{i} &\triangleq \sum_{k=1}^d a_{ik}(0) b_{ki}(0) \quad \text{and} \quad Q_{i} \triangleq \sum_{k=1}^d \left(a_{ik}(0)^2 + b_{ki}(0)^2\right).
\end{align*}
Furthermore, for each diagonal observation, let the parameter $\bar{r}_{i}$ be determined from the ground truth entry $w_{ii}^*$ and the constants defined above, $\bar{r}_{i} \triangleq \frac{1}{2} \log \left( \frac{P_{i} + \frac{Q_{i}}{2}}{w_{ii}^* + \sqrt{{w_{ii}^*}^2 - P_i^2 + \left(\frac{Q_{i}}{2}\right)^2} } \right)$. Then, assuming convergence to a zero-loss solution of the loss $\ell(\mW_{\mA, \mB};\Omega_{\rm pre}^{(d)})$, any entry $a_{pq}(\infty)$ of the converged matrix $\mA(\infty)$ and any entry $b_{pq}(\infty)$ of the converged matrix $\mB(\infty)$ (for any $p,q \in [d]$) are given by:
\begin{align*}
    a_{pq}(\infty) &= a_{pq}(0) \cosh\left(\bar{r}_{p} \right) - b_{qp}(0) \sinh \left( \bar{r}_{p} \right), \\
    b_{pq}(\infty) &= b_{pq}(0) \cosh\left(\bar{r}_{q} \right) - a_{qp}(0) \sinh \left( \bar{r}_{q} \right).
\end{align*}
\end{proposition}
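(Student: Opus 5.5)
The plan is to exploit the decoupling of the depth-2 dynamics under diagonal observations (exactly as in the warm-up discussion of $\mM_{\rm D}$). Each decoupled subsystem is then a \emph{linear} ODE in a reparameterized time, so it can be solved explicitly. The zero-loss condition will then pin down the total elapsed "time".

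\textbf{Step 1 (decoupling).} Let $\va_p\in\mathbb{R}^d$ denote the transpose of the $p$-th row of $\mA$, with entries $a_{pk}$, and let $\vb_p$ be the $p$-th column of $\mB$, with entries $b_{kp}$. Since $\Omega_{\rm pre}^{(d)}=\{(i,i)\}$, the gradient flow reads
\[
\dot{\va}_p=r_p(t)\,\vb_p,\qquad \dot{\vb}_p=r_p(t)\,\va_p,\qquad r_p(t)\triangleq w_{pp}^*-\va_p(t)^\top\vb_p(t).
\]
Every entry of $\mA$ and $\mB$ belongs to exactly one pair $(\va_p,\vb_p)$: $a_{pq}$ sits in pair $p$, and $b_{pq}$ sits in pair $q$. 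This explains why $a_{pq}(\infty)$ involves $\bar r_p$ and $b_{qp}(0)$, while $b_{pq}(\infty)$ involves $\bar r_q$ and $a_{qp}(0)$.

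\textbf{Step 2 (explicit solution).} Define $\tau_p(t)\triangleq\int_0^t r_p(s)\,ds$. In the variable $\tau_p$, the pair solves $\frac{d}{d\tau}(\va,\vb)=(\vb,\va)$. Hence
\[
\va_p(t)=\va_p(0)\cosh\tau_p(t)+\vb_p(0)\sinh\tau_p(t),\qquad \vb_p(t)=\vb_p(0)\cosh\tau_p(t)+\va_p(0)\sinh\tau_p(t).
\]
This can be verified directly by differentiation and uniqueness of ODE solutions. Substituting, the $p$-th diagonal entry of the product becomes
\[
\va_p^\top\vb_p=P_p\cosh 2\tau_p+\tfrac{Q_p}{2}\sinh 2\tau_p,
\]
using $\cosh^2+\sinh^2=\cosh 2\tau$ and $2\cosh\sinh=\sinh 2\tau$.

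\textbf{Step 3 (pinning $\tau_p(\infty)$).} Under the zero-loss assumption, $\va_p^\top\vb_p\to w_{pp}^*$. The map $\tau\mapsto P_p\cosh2\tau+\frac{Q_p}{2}\sinh2\tau$ is continuous and, as computed below, hits $w_{pp}^*$ at a unique point. From this I will argue that $\tau_p(t)$ converges to that point, using monotonicity of $\tau_p$ near the limit, since $r_p$ has a sign determined by which side of the root we are on.

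To find the root, set $x=e^{2\tau}$. The equation becomes
\[
(P_p+\tfrac{Q_p}{2})x^2-2w_{pp}^*x+(P_p-\tfrac{Q_p}{2})=0.
\]
Note that $P_p+\frac{Q_p}{2}=\frac12\|\va_p(0)+\vb_p(0)\|^2\ge0$ and $P_p-\frac{Q_p}{2}=-\frac12\|\va_p(0)-\vb_p(0)\|^2\le0$. So the product of the roots is nonpositive, and the unique positive root is
\[
x=\frac{w_{pp}^*+\sqrt{{w_{pp}^*}^2-P_p^2+(Q_p/2)^2}}{P_p+Q_p/2}.
\]
The discriminant is nonnegative because $(Q_p/2)^2\ge P_p^2$ by AM--GM / Cauchy--Schwarz. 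Therefore $-\tau_p(\infty)=\frac12\log(1/x)=\bar r_p$. Plugging $\tau_p(\infty)=-\bar r_p$ into Step 2, and using oddness of $\sinh$, gives the stated formulas for $a_{pq}(\infty)$ and $b_{pq}(\infty)$.

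\textbf{Main obstacle.} The delicate part is Step 3: justifying that $\tau_p(t)$ has a finite limit equal to the root, rather than merely that the product converges. I would handle this by noting that the scalar $\va_p^\top\vb_p$ is a strictly monotone function of $\tau_p$. This holds provided $\va_p(0)\neq\pm\vb_p(0)$, because its derivative is $2(P_p\sinh2\tau+\frac{Q_p}{2}\cosh2\tau)$ and $Q_p/2>|P_p|$. Convergence of the product then transfers to convergence of $\tau_p$ through the continuous inverse. The degenerate case $P_p+Q_p/2=0$, in which the logarithm is undefined, must be excluded or treated as an implicit nondegeneracy assumption.
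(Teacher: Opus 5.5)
Your proof is correct and takes essentially the same route as the paper. The paper writes the flow as a linear system whose time-dependence enters only through scalar residuals multiplying a fixed block matrix, exponentiates the integrated residual to obtain the $\cosh$/$\sinh$ form, and then solves the same quadratic in $e^{2\bar r}$ using $P_i - Q_i/2 \le 0$; your $\tau_p(\infty)$ is simply $-\bar r_p$, and your extra argument that $\tau_p$ converges (via the continuous inverse of the strictly monotone map $\tau \mapsto P_p\cosh 2\tau + \tfrac{Q_p}{2}\sinh 2\tau$), together with your caveat about the degenerate case $P_p + Q_p/2 = 0$, covers points the paper takes for granted.
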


\paragraph{Remark.} The proposition covers \emph{arbitrary} initializations with \emph{distinct} $w_{ii}^*$, which goes beyond Theorem~\ref{thm: block-diagonal} in the $s=1, L=2$ setting. While the above analysis focuses on (block-)diagonal observation cases, it can be generalized to any fully disconnected case (i.e., a single observation per row and column). This yields distinct solutions for various types of observation sets, as detailed in Appendix~\ref{appendix: proof section 4.1}.

We analyze the scenario where training resumes from a state obtained through pre-training. Let the pre-training phase conclude at a sufficiently large timestep $T_1$. For simplicity, we assume that the solution $\mW_{\mA, \mB}(T_1)$ has perfectly converged with respect to the pre-training objective, neglecting any residual error due to the finite duration of this phase. Our subsequent analysis demonstrates that, starting from $\mW_{\mA, \mB}(T_1)$, the model $\mW_{\mA, \mB}(t)$ cannot converge to a low-rank solution.

\subsection{\texorpdfstring{Post-training: $2 \times 2$ Matrix Example}{Post-training: 2 by 2 Matrix Example}}
\label{sec: LoP 2 by 2}
We aim to analyze scenarios where training is resumed under coupled dynamics, building upon solutions obtained from an initial decoupled pre-training phase (Proposition~\ref{pro: LoP pre-train solution}). To this end, we first define the specific pre-training setup for an illustrative $2 \times 2$ case: We observe diagonal entries ($\Omega_{\rm pre}^{(2)}$), which are identical and positive, i.e., $w^* \triangleq w^*_{11} = w^*_{22} > 0$.  To make loss of plasticity particularly pronounced during the pre-training, we initialize the model with $\alpha \mI_2$ (for $\alpha > 0$), which is the $m = \infty$ setting of our initialization scheme in~\eqref{eqn: alpha alpha m init}. Then, from Proposition~\ref{pro: LoP pre-train solution}, it follows that:
\begin{align}
    \mA(T_1) = \mB(T_1) = \begin{pmatrix}
        \sqrt{w^*} & 0 \\
        0 & \sqrt{w^*}
    \end{pmatrix}.
    \label{eqn: LoP 2x2 pre-train solution}
\end{align}
For the subsequent post-training phase, an additional off-diagonal observation is introduced to establish connectivity. Without loss of generality, we assume $w^*_{12} > 0$ is revealed, while the diagonal entries $w^*_{11}$ and $w^*_{22}$ from the pre-training phase remain observed. Thus, the updated set of observed entries becomes $\Omega_{\mathrm{post}}^{(2)} = \{(1,1), (1,2), (2,2)\}$. The ground-truth matrix is assumed to be rank-1, ensuring the setting is non-trivial, and the task is thus to predict the remaining entry $w^*_{21} = {{w^*}^2}/{w^*_{12}} > 0$. The following theorem, however, reveals a contrasting outcome for this entry.

\begin{restatable}{theorem}{LoP}
     Let $\mA(T_1), \mB(T_1)$ be the factor matrices obtained from the pre-training phase, as specified by~\eqref{eqn: LoP 2x2 pre-train solution}. Then, running gradient flow during the subsequent post-training phase (for $t \ge T_1$), starting from $\mA(T_1)$ and $\mB(T_1)$, results in exponential decay of the loss:
    \begin{equation*}
    \ell(\mW_{\mA, \mB}(t); \Omega_{\rm post}^{(2)}) \leq \frac{1}{2} {w_{12}^*}^2  e^{-2w^* (t-T_1)}.
    \end{equation*}
    Consequently, a lower bound for the stable rank of the converged matrix $\mW_{\mA, \mB}(\infty)$ is given by:
    \begin{align*}
    {\rm srank}\big(\mW_{\mA, \mB}(\infty) \big) \geq 1 + \exp \left( -8 \frac{w_{12}^*}{w^*}\right).
    \end{align*}
    Furthermore, for all $t > T_1$, $w_{21}(t)$ of the evolving matrix $\mW_{\mA, \mB}(t)$ satisfies $w_{21}(t) < 0$. 
\label{thm: LoP full theorem}
\end{restatable}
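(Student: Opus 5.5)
The plan is to work directly with the residual form of gradient flow. Write $\va_1,\va_2$ for the rows of $\mA$ and $\vb_1,\vb_2$ for the columns of $\mB$. Set $r_{11}=w^*-w_{11}$, $r_{12}=w_{12}^*-w_{12}$, $r_{22}=w^*-w_{22}$, and let $\mR=\begin{pmatrix} r_{11} & r_{12}\\ 0 & r_{22}\end{pmatrix}$. Then $\dot{\mA}=\mR\mB^\top$ and $\dot{\mB}=\mA^\top\mR$. In vector form, $\dot{\va}_1=r_{11}\vb_1+r_{12}\vb_2$, $\dot{\va}_2=r_{22}\vb_2$, $\dot{\vb}_1=r_{11}\va_1$, and $\dot{\vb}_2=r_{12}\va_1+r_{22}\va_2$.

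The first step is to exhibit a symmetry of the post-training flow started from \eqref{eqn: LoP 2x2 pre-train solution}. Consider the involution that swaps $\va_1\leftrightarrow \mS\vb_2$ and $\va_2\leftrightarrow \mS\vb_1$, where $\mS$ swaps the two coordinates. This map preserves the vector field, fixes the initial point $\mA=\mB=\sqrt{w^*}\mI_2$, and exchanges $w_{11}$ with $w_{22}$. By uniqueness of solutions it holds for all $t\ge T_1$, so $r_{11}(t)=r_{22}(t)\triangleq r(t)$. Two further conserved quantities are also available: the balancedness $\mA^\top\mA-\mB\mB^\top=\bm{0}$, and the initial orthogonality structure. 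These reduce the system to a few scalars, namely $r$, $r_{12}$, $\langle\va_1,\va_2\rangle$ and $\langle\vb_1,\vb_2\rangle$.

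For the loss decay, I would differentiate $\ell=\tfrac12(2r^2+r_{12}^2)$ and use $\dot\ell=-\|\mR\mB^\top\|_F^2-\|\mA^\top\mR\|_F^2$. The goal is $\dot\ell\le -2w^*\ell$, which follows if $\sigma_{\min}(\mA)^2+\sigma_{\min}(\mB)^2\ge w^*$ along the trajectory. This lower bound is the step I expect to be the main obstacle, since a naive Grönwall bound only gives $\sigma_{\min}$ slightly below $\sqrt{w^*}$. I would first try to show, in the reduced scalar system, that $r\le 0$ and that $\det\mA=\det\mB$ is nondecreasing. The derivative $\frac{d}{dt}\det\mA=\det\mA\,\Tr(\mA^{-1}\mR\mB^\top)$ can be evaluated explicitly with the symmetry in hand. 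Combined with balancedness and $\|\mA\|_F^2=\|\mB\|_F^2$, this should keep the Gram matrix's smaller eigenvalue above the threshold needed. If this route fails, the fallback is a bootstrap argument: assume the bound on $[T_1,t]$, derive the exponential decay there, and close the argument by continuity.

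Given the decay, the stable-rank bound follows from a Grönwall argument. We have $\|\mR(t)\|_2\le\sqrt{2\ell}\le w_{12}^*e^{-w^*(t-T_1)}$, so $\int_{T_1}^\infty\|\mR\|_2\,dt\le c\triangleq w_{12}^*/w^*$. Since $\frac{d}{dt}\|\mA\vx\|$ and the analogous quantities for $\mB$ are controlled by $\|\mR\|_2$ times the other factor, the singular values of both factors stay within $[\sqrt{w^*}e^{-c},\sqrt{w^*}e^{c}]$. Therefore $\sigma_2(\mW_{\mA,\mB}(\infty))\ge w^*e^{-2c}$ and $\sigma_1\le w^*e^{2c}$, which gives ${\rm srank}\ge 1+(\sigma_2/\sigma_1)^2\ge 1+e^{-8c}$. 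Finally, for the sign of $w_{21}=\langle\va_2,\vb_1\rangle$, note that $\dot w_{21}=r\bigl(\langle\vb_2,\vb_1\rangle+\langle\va_2,\va_1\rangle\bigr)$ with $w_{21}(T_1)=0$. At $T_1$ the inner products vanish and $r=0$, so I would expand to leading order: $r_{12}>0$ drives $\va_1$ toward $\vb_2$ and $\vb_2$ toward $\va_1$, which makes both inner products positive while $w_{11}$ overshoots and makes $r<0$. I would then show by an invariant-region argument in the reduced system that $r<0$ and both inner products stay positive for all $t>T_1$. This forces $\dot w_{21}<0$ and hence $w_{21}(t)<0$.
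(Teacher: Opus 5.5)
Your involution argument for $w_{11}\equiv w_{22}$ is correct, and it is a cleaner replacement for the paper's analyticity argument. Your Grönwall step for the stable rank is also fine once the decay is known: by balancedness $\|\mB^\top\vx\|=\|\mA\vx\|$, so the estimate is multiplicative.

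The gap is the decay step itself. You reduce $\dot\ell\le-2w^*\ell$ to $\sigma_{\min}(\mA)^2+\sigma_{\min}(\mB)^2\ge w^*$, i.e.\ to $\sigma_{\min}(\mW_{\mA,\mB})\ge w^*/2$. That reduction discards too much, because this inequality fails once $w_{12}^*/w^*$ is large. Your symmetry gives $\mB=\mS\mA^\top\mS$, so balancedness forces $\mA^\top\mA$ to commute with $\mS$. Hence $(1,-1)^\top$ is an eigenvector, and
\[
\sigma_{\min}(\mW_{\mA,\mB})\le\tfrac12\bigl[(a_{11}-a_{12})^2+(a_{22}-a_{21})^2\bigr],
\]
which equals $w^*$ at $T_1$. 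Along the flow,
\[
\tfrac{d}{dt}(a_{11}-a_{12})=(w^*-w_{11})(a_{22}-a_{21})-(w_{12}^*-w_{12})(a_{11}-a_{12}),\qquad \tfrac{d}{dt}(a_{22}-a_{21})=(w^*-w_{11})(a_{11}-a_{12}).
\]
For $w_{12}^*\gg w^*$, $a_{11}-a_{12}$ is damped at rate about $w_{12}^*$ essentially to zero. Meanwhile the overshoot $w_{11}>w^*$ pushes $a_{22}-a_{21}$ strictly below $\sqrt{w^*}$, and this loss dominates, so $2\sigma_{\min}(\mW_{\mA,\mB})<w^*$ at late times.

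Neither of your routes repairs this. A bound $\det\mA\ge w^*$ only gives $\sigma_{\min}(\mA)^2\ge w^{*2}/\sigma_{\max}(\mW_{\mA,\mB})$, and $\sigma_{\max}\ge w_{12}\to w_{12}^*$. Moreover, $\det\mA$ is not monotone: it decreases near convergence. Your bootstrap closes only when $e^{-2w_{12}^*/w^*}\ge\frac12$. The stable-rank bound inherits the gap, since it integrates this decay.

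The missing idea is to exploit the zero $(2,1)$ entry of $\mR$ instead of a uniform $\sigma_{\min}$ bound. Write $-\dot\ell=\Tr(\mR\mB^\top\mB\mR^\top)+\Tr(\mR^\top\mA\mA^\top\mR)$ as quadratic forms in the rows, respectively columns, of $\mR$. Let $g=\langle\va_1,\va_2\rangle$; your two inner products coincide by the involution. Absorb the cross terms $2r_{11}r_{12}g$ and $2r_{12}r_{22}g$ using $\mB^\top\mB-\mu\ve_2\ve_2^\top\succeq 0$ and $\mA\mA^\top\succeq 0$. This gives
\[
-\dot\ell\;\ge\;(r_{11}^2+r_{22}^2)(a_{11}^2+a_{12}^2)+\mu\,r_{12}^2,\qquad \mu\le\frac{\det\mW_{\mA,\mB}}{a_{21}^2+a_{22}^2}.
\]
It then suffices to have $\det\mW_{\mA,\mB}\ge w^{*2}$ and $a_{21}^2+a_{22}^2\le w^*$. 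These allow $\mu=w^*$. Since $\|\mA\|_F^2\ge 2\sqrt{\det\mW_{\mA,\mB}}\ge 2w^*$, they also give $a_{11}^2+a_{12}^2\ge w^*$.

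Both conditions follow from the sign invariants $w_{11}=w_{22}\ge w^*$, $0\le w_{12}\le w_{12}^*$, $w_{21}\le 0$, together with $\frac{d}{dt}(a_{21}^2+a_{22}^2)=2w_{11}(w^*-w_{11})\le 0$. So the invariant-region argument you defer to the end has to come first. It must also include $w_{12}\le w_{12}^*$:
\begin{itemize}
\item $g\ge 0$ is preserved only while $w_{12}\le w_{12}^*$;
\item $w_{11}\ge w^*$ and $w_{12}\le w_{12}^*$ are preserved only while $g\ge 0$.
\end{itemize}
Only the three conditions together are invariant.
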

The theorem indicates that the loss decreases exponentially fast, particularly when starting from large-norm solutions (at a rate governed by $w^*$). Therefore, since the model converged to high-rank solutions during pre-training, its singular values remain largely unchanged from this initial state, as long as $w_{12}^*$ has a small magnitude compared to $w^*$. Furthermore, the unobserved entry $w_{21}(t)$ converges to a negative value, which contradicts the positive $w_{21}^*$ expected for the true rank-1 solution.

\vspace{-4pt}
\subsection{\texorpdfstring{Post-training: $d \times d$ Matrix under Lazy Training Regime}{Post-training: d by d Matrix under Lazy Training Regime}}
\label{sec: LoP d by d}
\vspace{-3pt}
We attribute Theorem~\ref{thm: LoP full theorem} primarily to the model's ``\textit{lazy training}'' \citep{chizat2019lazy} as large-norm initializations lead to faster loss decay, causing the model to converge to a nearby global minimum that may not be a low-rank solution. Drawing on this concept, we extend the preceding analysis of loss of plasticity to the more general case of $d\times d$ ground-truth matrices. The following theorem states that when the model is initialized with a sufficiently small loss, resulting from warm-starting that perfectly fits all previously observed data, the model exhibits lazy training. This, in turn, prevents further learning that would reduce the rank and instead steers the model towards a nearby minimum.

\begin{theorem}
    For factor matrices $\mA, \mB \in \mathbb{R}^{d \times d}$, suppose $\mA$ and $\mB$ are balanced at $t=0$, i.e., $\mA(0)^\top \mA(0) = \mB(0)\mB(0)^\top$. Let $f(\mA, \mB)$ be the function that maps $(\mA, \mB)$ to the vector of model predictions for a given set of observed entries $\Omega_{\rm post}^{(d)}$.
    % with respect to the vectorized entries of $\mA$ and $\mB$. 
    We then define $\sigma_{\max}$ and $\sigma_{\min}$ as the maximum and minimum singular values, respectively, of the Jacobian of the function $f$ evaluated at the pre-trained state (at $t=T_1$). If the loss at time $T_1$ satisfies $\ell \left(\mW_{\mA,\mB}(T_1); \Omega_{\rm post}^{(d)} \right) \leq \frac{\sigma_{\min}^6}{1152 d \sigma_{\max}^2}$,
    this results in exponential decay of the loss:
    \begin{equation*}
        \ell \left(\mW_{\mA,\mB}(t); \Omega_{\rm post}^{(d)} \right) \leq \ell \left(\mW_{\mA,\mB}(T_1); \Omega_{\rm post}^{(d)} \right) \exp\left(-\frac{1}{2} \sigma_{\min}^2 (t-T_1) \right).
    \end{equation*}
    Consequently, the stable rank of $\mA(t)$ (which is equal to that of $\mB(t)$) remains bounded below by
    \begin{align*}
        {\rm srank}\big( \mA(t) \big) \geq \left( \frac{ \lVert \mA(T_1) \rVert_F - \frac{\sigma_{\min}}{4\sqrt{2d}}}{ \lVert \mA(T_1) \rVert_2 + \frac{\sigma_{\min}}{4\sqrt{2d}}} \right)^2.
    \end{align*}
    \label{thm: LoP dxd}
\end{theorem}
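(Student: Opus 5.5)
This is a lazy-training argument in the style of Du et al.\ and Chizat et al.\ (Polyak--Łojasiewicz along a short trajectory). Write $\bm{\theta}=(\mathrm{vec}\,\mA,\mathrm{vec}\,\mB)$ and $\vy^*$ for the observed targets. Let $\vr(t)=f(\bm{\theta}(t))-\vy^*$, so that $\ell=\frac12\lVert\vr\rVert_2^2$, and let $\mJ(t)$ be the Jacobian of $f$ at time $t$. Gradient flow gives $\dot{\bm\theta}=-\mJ^\top\vr$ and $\dot{\vr}=-\mJ\mJ^\top\vr$. Hence, as long as $\sigma_{\min}(\mJ(t))\ge\sigma_{\min}/2$, we get $\dot\ell\le-\frac{1}{4}\sigma_{\min}^2\cdot 2\ell$, which integrates to the claimed rate $\exp(-\frac12\sigma_{\min}^2(t-T_1))$. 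The whole proof therefore reduces to showing that the parameters never leave a ball $\mathcal{B}_R$ around $\bm\theta(T_1)$ on which this Jacobian lower bound holds.

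\textbf{Step 1 (Jacobian stability).} Since $f$ is bilinear, the Jacobian is $\mJ(\bm\theta)[\Delta\mA,\Delta\mB]=P_\Omega(\Delta\mA\,\mB+\mA\,\Delta\mB)$. It follows that $\lVert\mJ(\bm\theta)-\mJ(\bm\theta')\rVert\le \lVert\mA-\mA'\rVert_F+\lVert\mB-\mB'\rVert_F\le\sqrt2\lVert\bm\theta-\bm\theta'\rVert$, so $\mJ$ is Lipschitz with constant of order one (I may pick up a $\sqrt d$ factor, depending on how the norms are bounded, which would explain the $d$ in the constants). Choosing $R=\sigma_{\min}/(4\sqrt{2d})$, or something of that order, gives $\sigma_{\min}(\mJ)\ge\sigma_{\min}/2$ and $\sigma_{\max}(\mJ)\le2\sigma_{\max}$ on $\mathcal{B}_R$. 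The specific value $\frac{\sigma_{\min}}{4\sqrt{2d}}$ appearing in the final stable-rank bound suggests that this $R$ is the intended radius.

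\textbf{Step 2 (trajectory stays in the ball).} Let $\tau$ be the first exit time from $\mathcal{B}_R$. For $t<\tau$,
\[
\lVert\bm\theta(t)-\bm\theta(T_1)\rVert\le\int_{T_1}^t\lVert\mJ^\top\vr\rVert\,ds\le 2\sigma_{\max}\sqrt{2\ell(T_1)}\int_0^\infty e^{-\sigma_{\min}^2 s/4}\,ds=\frac{8\sqrt2\,\sigma_{\max}\sqrt{\ell(T_1)}}{\sigma_{\min}^2}.
\]
Requiring the right-hand side to be less than $R$ is exactly a condition of the form $\ell(T_1)\lesssim\sigma_{\min}^6/(d\,\sigma_{\max}^2)$. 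I expect the constant $1152$ to come out of this bookkeeping, and I would tune $R$ and the Lipschitz constant to match it. A standard continuity argument then shows $\tau=\infty$, which yields the exponential decay for all $t\ge T_1$.

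\textbf{Step 3 (stable rank).} Gradient flow preserves $\mA^\top\mA-\mB\mB^\top$, so balancedness at initialization persists. Balancedness makes $\mA$ and $\mB$ share their singular values, which gives equality of their stable ranks. Since $\lVert\mA(t)-\mA(T_1)\rVert_F\le R$, the triangle inequality gives $\lVert\mA(t)\rVert_F\ge\lVert\mA(T_1)\rVert_F-R$, and since $\lVert\cdot\rVert_2\le\lVert\cdot\rVert_F$, also $\lVert\mA(t)\rVert_2\le\lVert\mA(T_1)\rVert_2+R$. Dividing and squaring gives the stated bound.

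The main obstacle is the constant-level bookkeeping in Steps 1--2, specifically which norm conversion produces the $\sqrt d$, so that the threshold comes out exactly as stated. One subtlety is that the balancedness hypothesis is stated at $t=0$ but is used at $T_1$; this is harmless precisely because the imbalance $\mA^\top\mA-\mB\mB^\top$ is conserved along both training phases.
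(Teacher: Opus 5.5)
Your outline is the paper's argument: a Lipschitz Jacobian, control of $\sigma_{\min}(J)$ and $\sigma_{\max}(J)$ on a ball, PL-type decay, a path-length bound, exit time $=\infty$, then perturbation of the norms of $\mA$. Where you differ is in how you produce the final slack $\frac{\sigma_{\min}}{4\sqrt{2d}}$, and with the constants you actually wrote, Step~2 does not close. The hypothesis gives $\sqrt{\ell(T_1)}\le\sigma_{\min}^3/(24\sqrt{2d}\,\sigma_{\max})$. Your path-length bound $8\sqrt2\,\sigma_{\max}\sqrt{\ell(T_1)}/\sigma_{\min}^2$ is then only $\le\sigma_{\min}/(3\sqrt d)$, which exceeds your radius $R=\sigma_{\min}/(4\sqrt{2d})$. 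With $\sigma_{\min}(J)\ge\sigma_{\min}/2$ and $\sigma_{\max}(J)\le 2\sigma_{\max}$ you would need $\ell(T_1)\le\sigma_{\min}^6/(4096\,d\,\sigma_{\max}^2)$, so $1152$ will not come out of your bookkeeping.

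In the paper, the trapping radius is $\sigma_{\min}/(2\sqrt d)$. Its Lipschitz constant is the crude $\sqrt d$, obtained by bounding $\lVert J(\mW-\mV)\rVert$ by its Frobenius norm over all $d^2$ entries; that is where the $d$ comes from. This gives $\sigma_{\min}(J)\ge\sigma_{\min}/2$, $\sigma_{\max}(J)\le\frac32\sigma_{\max}$, and path length $\le\frac{6\sqrt2\,\sigma_{\max}}{\sigma_{\min}^2}\sqrt{\ell(T_1)}$. The value $1152=2\cdot 24^2$ is exactly what makes this $\le\sigma_{\min}/(4\sqrt d)$.

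The remaining factor $1/\sqrt2$ comes from balancedness, which you use only to equate the stable ranks of $\mA$ and $\mB$. For $\bm{\Theta}=[\mA;\mB^\top]$ one has $\bm{\Theta}^\top\bm{\Theta}=\mA^\top\mA+\mB\mB^\top=2\mA^\top\mA$. Hence $\lVert\bm{\Theta}\rVert_F=\sqrt2\lVert\mA\rVert_F$ and $\lVert\bm{\Theta}\rVert_2=\sqrt2\lVert\mA\rVert_2$ at both times. The triangle inequality and Weyl applied to $\bm{\Theta}$ with slack $\sigma_{\min}/(4\sqrt d)$ then give slack $\sigma_{\min}/(4\sqrt{2d})$ on the norms of $\mA$.

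Your direct route can also be repaired without that trick, because your Lipschitz constant ($\sqrt2$, even $1$ by Cauchy--Schwarz) is far better than $\sqrt d$. On the ball of radius $R=\sigma_{\min}/(4\sqrt{2d})$ you actually have $\sigma_{\min}(J)\ge\sigma_{\min}(1-\frac{1}{4\sqrt d})\ge\frac34\sigma_{\min}$ and $\sigma_{\max}(J)\le\frac54\sigma_{\max}$. The loss then decays like $e^{-\frac98\sigma_{\min}^2(t-T_1)}$, which is stronger than claimed. The path length is at most $\frac{20\sqrt2}{9}\,\sigma_{\max}\sqrt{\ell(T_1)}/\sigma_{\min}^2\le\frac{5}{54}\,\sigma_{\min}/\sqrt d<R$, and your Step~3 then gives the stated bound verbatim.

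Compared with the paper, this repaired version shows that the lower bound on ${\rm srank}(\mA)$ needs no balancedness at all; balancedness is needed only for ${\rm srank}(\mA)={\rm srank}(\mB)$. It also shows the threshold $1152$ has slack. The paper's version instead reuses the standard constants of the lazy-training lemma so that $1152$ falls out exactly, at the price of invoking balancedness to recover the $\sqrt2$.
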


\vspace{-7pt}
The theorem states that if a model has little remaining to learn (achieved via pre-training), it undergoes lazy training regime. In this regime, the loss converges rapidly, while its stable rank remains largely unchanged from the initial state. Thus, once a model has converged to a high-rank state, it struggles to recover a low-rank structure even when new observations are introduced to form connectivity. 
The proof of Theorem~\ref{thm: LoP dxd} is provided in Appendix~\ref{appendix: Formal Statement and Proof of Theorem 4.3}.

\vspace{-4pt}
\paragraph{Example.} As an illustrative example, consider a rank-1 ground-truth matrix $\mW^* \in \mathbb{R}^{d\times d}$,
\begin{equation*}
    \mW^* = \begin{pmatrix}
        w^* & cw^* & \cdots & c^{d-1}w^* \\
        c^{-1}w^* & w^* & \cdots & c^{d-2}w^* \\
        \vdots & \vdots &\ddots & \vdots \\
        c^{1-d}w^* & c^{2-d}w^* & \cdots & w^*
    \end{pmatrix}, \quad c=O\left(\frac1 d\right).
\end{equation*}
We pre-train only on the identical diagonal observations $w^*$ using $\Omega_{\mathrm{pre}}^{(d)}$, with initialization $\mA(0)=\mB(0)=\alpha\mI_d$ up to time $T_1$ (see Proposition~\ref{pro: LoP pre-train solution} for the pre-training solution). We then reveal the full upper-triangular set $\Omega_{\mathrm{post}}^{(d)}=\{(i,j):1\le i\le j\le d\}$ to form connectivity and continue training. By Theorem~\ref{thm: LoP dxd}, for every $t\ge T_1$, the stable rank of $\mA(t)$ is uniformly lower-bounded by $\Omega(d)$:
\begin{align*}
    {\rm srank} \big( \mA(t) \big)  \geq \left(\frac{4d-1}{4\sqrt{d}+1}\right)^2.
\end{align*}
\vspace{-13pt}
\section{Conclusion}
\vspace{-5pt}
We demonstrate that in matrix completion, deeper networks ($L \ge 3$) inherently exhibit a stronger low-rank bias than shallow networks, primarily due to their coupled training dynamics that manifest regardless of observation patterns. For tractable analysis, we consider gradient flow starting at a family of deterministic initializations, showing in the block-diagonal observation setting that depth amplifies the low-rank bias. Furthermore, our theoretical analysis of warm-starting scenarios details the loss of plasticity phenomenon, revealing how large-norm, high-rank initial states hinder convergence to low-rank solutions. 
We believe the theoretical results from matrix completion provide broader insight into how depth shapes implicit bias and explains the loss of plasticity in practical deep networks.

\section*{Ethics Statement}
This work is purely theoretical and involves no human subjects, personal data, or new dataset collection. We foresee no safety, fairness, or privacy risks and confirm that we are in accordance with the ICLR Code of Ethics.

\section*{Reproducibility statement}
The proofs of all theorems and propositions in the main text appear in the corresponding appendices: Theorem~\ref{thm: warmup coupled dynamics} in Appendix~\ref{appendix: proof for theorem 3.1}, Proposition~\ref{prop:coupling_conditions_by_L_m} in Appendix~\ref{appendix: proof for proposition 3.2}, Theorem~\ref{thm: block-diagonal} in Appendix~\ref{appendix: proof for theorem 3.3}, Proposition~\ref{pro: LoP pre-train solution} in Appendix~\ref{appendix: proof section 4.1}, and Theorems~\ref{thm: LoP full theorem} and \ref{thm: LoP dxd} in Appendices~\ref{appendix: proof section 4.2} and \ref{appendix: Formal Statement and Proof of Theorem 4.3}, respectively.

\section*{Acknowledgement}
This work was supported by two Institute of Information \& communications Technology Planning \& Evaluation (IITP) grants (No.\ RS-2022-II220184, Development and Study of AI Technologies to Inexpensively Conform to Evolving Policy on Ethics; No.\ RS-2019-II190075, Artificial Intelligence Graduate School Program (KAIST)) funded by the Korean government (MSIT). The work was also partly supported by a National Research Foundation of Korea (NRF) grant (No.\ RS-2024-00421203) funded by the Korean government (MSIT).

\bibliography{references}

@article{razin2020implicit,
  title={Implicit regularization in deep learning may not be explainable by norms},
  author={Razin, Noam and Cohen, Nadav},
  journal={Advances in neural information processing systems},
  volume={33},
  pages={21174--21187},
  year={2020}
}

@article{arora2019implicit,
  title={Implicit regularization in deep matrix factorization},
  author={Arora, Sanjeev and Cohen, Nadav and Hu, Wei and Luo, Yuping},
  journal={Advances in Neural Information Processing Systems},
  volume={32},
  year={2019}
}

@article{ma2023global,
  title={Global convergence of sub-gradient method for robust matrix recovery: Small initialization, noisy measurements, and over-parameterization},
  author={Ma, Jianhao and Fattahi, Salar},
  journal={Journal of Machine Learning Research},
  volume={24},
  number={96},
  pages={1--84},
  year={2023}
}

@inproceedings{baiconnectivity,
  title={Connectivity Shapes Implicit Regularization in Matrix Factorization Models for Matrix Completion},
  year={2024},
  author={Bai, Zhiwei and Zhao, Jiajie and Zhang, Yaoyu},
  booktitle={The Thirty-eighth Annual Conference on Neural Information Processing Systems}
}

@inproceedings{soltanolkotabi2023implicit,
  title={Implicit balancing and regularization: Generalization and convergence guarantees for overparameterized asymmetric matrix sensing},
  author={Soltanolkotabi, Mahdi and St{\"o}ger, Dominik and Xie, Changzhi},
  booktitle={The Thirty Sixth Annual Conference on Learning Theory},
  pages={5140--5142},
  year={2023},
  organization={PMLR}
}

@inproceedings{jin2023understanding,
  title={Understanding incremental learning of gradient descent: A fine-grained analysis of matrix sensing},
  author={Jin, Jikai and Li, Zhiyuan and Lyu, Kaifeng and Du, Simon Shaolei and Lee, Jason D},
  booktitle={International Conference on Machine Learning},
  pages={15200--15238},
  year={2023},
  organization={PMLR}
}

@inproceedings{
frei2023implicit,
title={Implicit Bias in Leaky Re{LU} Networks Trained on High-Dimensional Data },
author={Spencer Frei and Gal Vardi and Peter Bartlett and Nathan Srebro and Wei Hu},
booktitle={The Eleventh International Conference on Learning Representations },
year={2023},
url={https://openreview.net/forum?id=JpbLyEI5EwW}
}

@inproceedings{li2018algorithmic,
  title={Algorithmic regularization in over-parameterized matrix sensing and neural networks with quadratic activations},
  author={Li, Yuanzhi and Ma, Tengyu and Zhang, Hongyang},
  booktitle={Conference On Learning Theory},
  pages={2--47},
  year={2018},
  organization={PMLR}
}

@inproceedings{roy2007effective,
  title={The effective rank: A measure of effective dimensionality},
  author={Roy, Olivier and Vetterli, Martin},
  booktitle={2007 15th European signal processing conference},
  pages={606--610},
  year={2007},
  organization={IEEE}
}

@article{menon2024geometry,
  title={The geometry of the deep linear network},
  author={Menon, Govind},
  journal={arXiv preprint arXiv:2411.09004},
  year={2024}
}

@inproceedings{
ji2018gradient,
title={Gradient descent aligns the layers of deep linear networks},
author={Ziwei Ji and Matus Telgarsky},
booktitle={International Conference on Learning Representations},
year={2019},
url={https://openreview.net/forum?id=HJflg30qKX},
}

@article{kim2023rank,
  title={Rank-1 matrix completion with gradient descent and small random initialization},
  author={Kim, Daesung and Chung, Hye Won},
  journal={Advances in Neural Information Processing Systems},
  volume={36},
  pages={10530--10566},
  year={2023}
}

@article{stoger2021small,
  title={Small random initialization is akin to spectral learning: Optimization and generalization guarantees for overparameterized low-rank matrix reconstruction},
  author={St{\"o}ger, Dominik and Soltanolkotabi, Mahdi},
  journal={Advances in Neural Information Processing Systems},
  volume={34},
  pages={23831--23843},
  year={2021}
}

@article{neyshabur2014search,
  title={In search of the real inductive bias: On the role of implicit regularization in deep learning},
  author={Neyshabur, Behnam and Tomioka, Ryota and Srebro, Nathan},
  journal={arXiv preprint arXiv:1412.6614},
  year={2014}
}

@article{DBLP:journals/corr/NeyshaburTSS17,
  publtype={informal},
  author={Behnam Neyshabur and Ryota Tomioka and Ruslan Salakhutdinov and Nathan Srebro},
  title={Geometry of Optimization and Implicit Regularization in Deep Learning.},
  year={2017},
  cdate={1483228800000},
  journal={CoRR},
  volume={abs/1705.03071},
  url={http://arxiv.org/abs/1705.03071}
}

@inproceedings{
zhang2017understanding,
title={Understanding deep learning requires rethinking generalization},
author={Chiyuan Zhang and Samy Bengio and Moritz Hardt and Benjamin Recht and Oriol Vinyals},
booktitle={International Conference on Learning Representations},
year={2017},
url={https://openreview.net/forum?id=Sy8gdB9xx}
}

@inproceedings{
li2021towards,
title={Towards Resolving the Implicit Bias of Gradient Descent for Matrix Factorization: Greedy Low-Rank Learning},
author={Zhiyuan Li and Yuping Luo and Kaifeng Lyu},
booktitle={International Conference on Learning Representations},
year={2021},
url={https://openreview.net/forum?id=AHOs7Sm5H7R}
}

@article{gunasekar2017implicit,
  title={Implicit regularization in matrix factorization},
  author={Gunasekar, Suriya and Woodworth, Blake E and Bhojanapalli, Srinadh and Neyshabur, Behnam and Srebro, Nati},
  journal={Advances in neural information processing systems},
  volume={30},
  year={2017}
}

@inproceedings{ma2024convergence,
  title={Convergence of gradient descent with small initialization for unregularized matrix completion},
  author={Ma, Jianhao and Fattahi, Salar},
  booktitle={The Thirty Seventh Annual Conference on Learning Theory},
  pages={3683--3742},
  year={2024},
  organization={PMLR}
}

@inproceedings{woodworth2020kernel,
  title={Kernel and rich regimes in overparametrized models},
  author={Woodworth, Blake and Gunasekar, Suriya and Lee, Jason D and Moroshko, Edward and Savarese, Pedro and Golan, Itay and Soudry, Daniel and Srebro, Nathan},
  booktitle={Conference on Learning Theory},
  pages={3635--3673},
  year={2020},
  organization={PMLR}
}

@article{soudry2018implicit,
  title={The implicit bias of gradient descent on separable data},
  author={Soudry, Daniel and Hoffer, Elad and Nacson, Mor Shpigel and Gunasekar, Suriya and Srebro, Nathan},
  journal={Journal of Machine Learning Research},
  volume={19},
  number={70},
  pages={1--57},
  year={2018}
}

@article{chizat2019lazy,
  title={On lazy training in differentiable programming},
  author={Chizat, Lenaic and Oyallon, Edouard and Bach, Francis},
  journal={Advances in neural information processing systems},
  volume={32},
  year={2019}
}

@inproceedings{
gissin2020the,
title={The Implicit Bias of Depth: How Incremental Learning Drives Generalization},
author={Daniel Gissin and Shai Shalev-Shwartz and Amit Daniely},
booktitle={International Conference on Learning Representations},
year={2020},
url={https://openreview.net/forum?id=H1lj0nNFwB}
}

@inproceedings{
yun2021a,
title={A unifying view on implicit bias in training linear neural networks},
author={Chulhee Yun and Shankar Krishnan and Hossein Mobahi},
booktitle={International Conference on Learning Representations},
year={2021},
url={https://openreview.net/forum?id=ZsZM-4iMQkH}
}

@article{telgarsky2021deep,
  title={Deep learning theory lecture notes},
  author={Telgarsky, Matus},
  journal={Lecture Notes v0. 0-e7150f2d (alpha), Univ. Illinois Urbana-Champaign, Champaign, IL, USA},
  year={2021}
}

@inproceedings{
zhang2024the,
title={The Implicit Bias of Adam on Separable Data},
author={Chenyang Zhang and Difan Zou and Yuan Cao},
booktitle={The Thirty-eighth Annual Conference on Neural Information Processing Systems},
year={2024},
url={https://openreview.net/forum?id=xRQxan3WkM}
}

@inproceedings{timor2023implicit,
  title={Implicit regularization towards rank minimization in relu networks},
  author={Timor, Nadav and Vardi, Gal and Shamir, Ohad},
  booktitle={International Conference on Algorithmic Learning Theory},
  pages={1429--1459},
  year={2023},
  organization={PMLR}
}

@article{andriushchenko2023sharpness,
  title={Sharpness-aware minimization leads to low-rank features},
  author={Andriushchenko, Maksym and Bahri, Dara and Mobahi, Hossein and Flammarion, Nicolas},
  journal={Advances in Neural Information Processing Systems},
  volume={36},
  pages={47032--47051},
  year={2023}
}

@article{huh2021low,
  title={The low-rank simplicity bias in deep networks},
  author={Huh, Minyoung and Mobahi, Hossein and Zhang, Richard and Cheung, Brian and Agrawal, Pulkit and Isola, Phillip},
  journal={arXiv preprint arXiv:2103.10427},
  year={2021}
}

@article{kou2023implicit,
  title={Implicit bias of gradient descent for two-layer relu and leaky relu networks on nearly-orthogonal data},
  author={Kou, Yiwen and Chen, Zixiang and Gu, Quanquan},
  journal={Advances in Neural Information Processing Systems},
  volume={36},
  pages={30167--30221},
  year={2023}
}

@article{jacot2022implicit,
  title={Implicit bias of large depth networks: a notion of rank for nonlinear functions},
  author={Jacot, Arthur},
  journal={arXiv preprint arXiv:2209.15055},
  year={2022}
}

@inproceedings{ji2019implicit,
  title={The implicit bias of gradient descent on nonseparable data},
  author={Ji, Ziwei and Telgarsky, Matus},
  booktitle={Conference on learning theory},
  pages={1772--1798},
  year={2019},
  organization={PMLR}
}

@inproceedings{
galanti2024sgd,
title={{SGD} and Weight Decay Secretly Minimize the Rank of Your Neural Network},
author={Tomer Galanti and Zachary S Siegel and Aparna Gupte and Tomaso A Poggio},
booktitle={NeurIPS 2024 Workshop on Mathematics of Modern Machine Learning},
year={2024},
url={https://openreview.net/forum?id=xhW2WyPhRP}
}

@article{shin2024dash,
  title={Dash: Warm-starting neural network training in stationary settings without loss of plasticity},
  author={Shin, Baekrok and Oh, Junsoo and Cho, Hanseul and Yun, Chulhee},
  journal={Advances in Neural Information Processing Systems},
  volume={37},
  pages={43300--43340},
  year={2024}
}

@inproceedings{
kleinman2024critical,
title={Critical Learning Periods Emerge Even in Deep Linear Networks},
author={Michael Kleinman and Alessandro Achille and Stefano Soatto},
booktitle={The Twelfth International Conference on Learning Representations},
year={2024},
url={https://openreview.net/forum?id=Aq35gl2c1k}
}

@article{ash2020warm,
  title={On warm-starting neural network training},
  author={Ash, Jordan and Adams, Ryan P},
  journal={Advances in neural information processing systems},
  volume={33},
  pages={3884--3894},
  year={2020}
}

@inproceedings{achille2018critical,
  title={Critical learning periods in deep networks},
  author={Achille, Alessandro and Rovere, Matteo and Soatto, Stefano},
  booktitle={International Conference on Learning Representations},
  year={2018}
}

@inproceedings{lyle2023understanding,
  title={Understanding plasticity in neural networks},
  author={Lyle, Clare and Zheng, Zeyu and Nikishin, Evgenii and Pires, Bernardo Avila and Pascanu, Razvan and Dabney, Will},
  booktitle={International Conference on Machine Learning},
  pages={23190--23211},
  year={2023},
  organization={PMLR}
}

@article{berariu2021study,
  title={A study on the plasticity of neural networks},
  author={Berariu, Tudor and Czarnecki, Wojciech and De, Soham and Bornschein, Jorg and Smith, Samuel and Pascanu, Razvan and Clopath, Claudia},
  journal={arXiv preprint arXiv:2106.00042},
  year={2021}
}

@article{dohare2021continual,
  title={Continual backprop: Stochastic gradient descent with persistent randomness},
  author={Dohare, Shibhansh and Sutton, Richard S and Mahmood, A Rupam},
  journal={arXiv preprint arXiv:2108.06325},
  year={2021}
}

@inproceedings{nikishin2022primacy,
  title={The primacy bias in deep reinforcement learning},
  author={Nikishin, Evgenii and Schwarzer, Max and D’Oro, Pierluca and Bacon, Pierre-Luc and Courville, Aaron},
  booktitle={International conference on machine learning},
  pages={16828--16847},
  year={2022},
  organization={PMLR}
}

@article{igl2020transient,
  title={Transient non-stationarity and generalisation in deep reinforcement learning},
  author={Igl, Maximilian and Farquhar, Gregory and Luketina, Jelena and Boehmer, Wendelin and Whiteson, Shimon},
  journal={arXiv preprint arXiv:2006.05826},
  year={2020}
}

@InProceedings{pmlr-v274-kumar25a,
  title = 	 {Maintaining Plasticity in Continual Learning via Regenerative Regularization},
  author =       {Kumar, Saurabh and Marklund, Henrik and Roy, Benjamin Van},
  booktitle = 	 {Proceedings of The 3rd Conference on Lifelong Learning Agents},
  pages = 	 {410--430},
  year = 	 {2025},
  editor = 	 {Lomonaco, Vincenzo and Melacci, Stefano and Tuytelaars, Tinne and Chandar, Sarath and Pascanu, Razvan},
  volume = 	 {274},
  series = 	 {Proceedings of Machine Learning Research},
  month = 	 {29 Jul--01 Aug},
  publisher =    {PMLR},
  url = 	 {https://proceedings.mlr.press/v274/kumar25a.html}
}

@article{chen2023improving,
  title={Improving language plasticity via pretraining with active forgetting},
  author={Chen, Yihong and Marchisio, Kelly and Raileanu, Roberta and Adelani, David and Saito Stenetorp, Pontus Lars Erik and Riedel, Sebastian and Artetxe, Mikel},
  journal={Advances in Neural Information Processing Systems},
  volume={36},
  pages={31543--31557},
  year={2023}
}

@article{park2025activation,
  title={Activation by Interval-wise Dropout: A Simple Way to Prevent Neural Networks from Plasticity Loss},
  author={Park, Sangyeon and Han, Isaac and Oh, Seungwon and Kim, Kyung-Joong},
  journal={arXiv preprint arXiv:2502.01342},
  year={2025}
}

@InProceedings{pmlr-v80-arora18a,
  title = 	 {On the Optimization of Deep Networks: Implicit Acceleration by Overparameterization},
  author =       {Arora, Sanjeev and Cohen, Nadav and Hazan, Elad},
  booktitle = 	 {Proceedings of the 35th International Conference on Machine Learning},
  pages = 	 {244--253},
  year = 	 {2018},
  editor = 	 {Dy, Jennifer and Krause, Andreas},
  volume = 	 {80},
  series = 	 {Proceedings of Machine Learning Research},
  month = 	 {10--15 Jul},
  publisher =    {PMLR},
  url = 	 {https://proceedings.mlr.press/v80/arora18a.html}
}

@inproceedings{
jung2025convergence,
title={Convergence and Implicit Bias of Gradient Descent on Continual Linear Classification},
author={Hyunji Jung and Hanseul Cho and Chulhee Yun},
booktitle={The Thirteenth International Conference on Learning Representations},
year={2025},
url={https://openreview.net/forum?id=DTqx3iqjkz}
}

@inproceedings{LeeCKKMCL24,
  author={Hojoon Lee and Hyeonseo Cho and Hyunseung Kim and Donghu Kim and Dugki Min and Jaegul Choo and Clare Lyle},
  title={Slow and Steady Wins the Race: Maintaining Plasticity with Hare and Tortoise Networks},
  year={2024},
  cdate={1704067200000},
  url={https://openreview.net/forum?id=VF177x7Syw},
  booktitle={ICML}
}

@inproceedings{
lee2025simba,
title={SimBa: Simplicity Bias for Scaling Up Parameters in Deep Reinforcement Learning},
author={Hojoon Lee and Dongyoon Hwang and Donghu Kim and Hyunseung Kim and Jun Jet Tai and Kaushik Subramanian and Peter R. Wurman and Jaegul Choo and Peter Stone and Takuma Seno},
booktitle={The Thirteenth International Conference on Learning Representations},
year={2025},
url={https://openreview.net/forum?id=jXLiDKsuDo}
}

@article{hernandez2025reinitializing,
  title={Reinitializing weights vs units for maintaining plasticity in neural networks},
  author={Hernandez-Garcia, J Fernando and Dohare, Shibhansh and Luo, Jun and Sutton, Rich S},
  journal={arXiv preprint arXiv:2508.00212},
  year={2025}
}

@article{lyle2025can,
  title={What Can Grokking Teach Us About Learning Under Nonstationarity?},
  author={Lyle, Clare and Sokar, Gharda and Pascanu, Razvan and Gyorgy, Andras},
  journal={arXiv preprint arXiv:2507.20057},
  year={2025}
}

@article{rohani2025preserving,
  title={Preserving Plasticity in Continual Learning with Adaptive Linearity Injection},
  author={Rohani, Seyed Roozbeh Razavi and Khajavi, Khashayar and Chung, Wesley and Chen, Mo and Vaswani, Sharan},
  journal={arXiv preprint arXiv:2505.09486},
  year={2025}
}

@ARTICLE{1542412,
  author={Candes, E.J. and Tao, T.},
  journal={IEEE Transactions on Information Theory}, 
  title={Decoding by linear programming}, 
  year={2005},
  volume={51},
  number={12},
  pages={4203-4215}
}

@inproceedings{razin2021implicit,
  title={Implicit regularization in tensor factorization},
  author={Razin, Noam and Maman, Asaf and Cohen, Nadav},
  booktitle={International Conference on Machine Learning},
  pages={8913--8924},
  year={2021},
  organization={PMLR}
}

@article{vock2025critical,
  title={Critical dynamics governs deep learning},
  author={Vock, Simon and Meisel, Christian},
  journal={arXiv preprint arXiv:2507.08527},
  year={2025}
}

@article{HUI2025131367,
title = {The implicit regularization of gradient flow on separable datasets in ReLU networks},
journal = {Neurocomputing},
pages = {131367},
year = {2025},
issn = {0925-2312},
doi = {https://doi.org/10.1016/j.neucom.2025.131367},
url = {https://www.sciencedirect.com/science/article/pii/S0925231225020399},
author = {Xiangyun Hui and Xiaoxuan Ma and Yixuan Yang and Song Li},
}

@inproceedings{
springer2025overtrained,
title={Overtrained Language Models Are Harder to Fine-Tune},
author={Jacob Mitchell Springer and Sachin Goyal and Kaiyue Wen and Tanishq Kumar and Xiang Yue and Sadhika Malladi and Graham Neubig and Aditi Raghunathan},
booktitle={Forty-second International Conference on Machine Learning},
year={2025},
url={https://openreview.net/forum?id=YW6edSufht}
}

@inproceedings{
kim2025knowledge,
title={Knowledge Entropy Decay during Language Model Pretraining Hinders New Knowledge Acquisition},
author={Jiyeon Kim and Hyunji Lee and Hyowon Cho and Joel Jang and Hyeonbin Hwang and Seungpil Won and Youbin Ahn and Dohaeng Lee and Minjoon Seo},
booktitle={The Thirteenth International Conference on Learning Representations},
year={2025},
url={https://openreview.net/forum?id=eHehzSDUFp}
}

@misc{
galanti2023sgd,
title={{SGD} and Weight Decay Provably Induce a Low-Rank Bias in Neural Networks},
author={Tomer Galanti and Zachary S Siegel and Aparna Gupte and Tomaso Poggio},
year={2023},
url={https://openreview.net/forum?id=N7Tv4aZ4Cyx}
}

@inproceedings{
zhao2022combining,
title={Combining Implicit and Explicit Regularization for Efficient Learning in Deep Networks},
author={Dan Zhao},
booktitle={Advances in Neural Information Processing Systems},
editor={Alice H. Oh and Alekh Agarwal and Danielle Belgrave and Kyunghyun Cho},
year={2022},
url={https://openreview.net/forum?id=sADLRl2STMe}
}

@InProceedings{pmlr-v235-cattaneo24a,
  title = 	 {On the Implicit Bias of {A}dam},
  author =       {Cattaneo, Matias D. and Klusowski, Jason Matthew and Shigida, Boris},
  booktitle = 	 {Proceedings of the 41st International Conference on Machine Learning},
  pages = 	 {5862--5906},
  year = 	 {2024},
  editor = 	 {Salakhutdinov, Ruslan and Kolter, Zico and Heller, Katherine and Weller, Adrian and Oliver, Nuria and Scarlett, Jonathan and Berkenkamp, Felix},
  volume = 	 {235},
  series = 	 {Proceedings of Machine Learning Research},
  month = 	 {21--27 Jul},
  publisher =    {PMLR},
  url = 	 {https://proceedings.mlr.press/v235/cattaneo24a.html},
}

@InProceedings{pmlr-v139-wang21q,
  title = 	 {The Implicit Bias for Adaptive Optimization Algorithms on Homogeneous Neural Networks},
  author =       {Wang, Bohan and Meng, Qi and Chen, Wei and Liu, Tie-Yan},
  booktitle = 	 {Proceedings of the 38th International Conference on Machine Learning},
  pages = 	 {10849--10858},
  year = 	 {2021},
  editor = 	 {Meila, Marina and Zhang, Tong},
  volume = 	 {139},
  series = 	 {Proceedings of Machine Learning Research},
  month = 	 {18--24 Jul},
  publisher =    {PMLR},
  url = 	 {https://proceedings.mlr.press/v139/wang21q.html},
}

@InProceedings{pmlr-v89-nacson19a,
  title = 	 {Stochastic Gradient Descent on Separable Data: Exact Convergence with a Fixed Learning Rate},
  author =       {Nacson, Mor Shpigel and Srebro, Nathan and Soudry, Daniel},
  booktitle = 	 {Proceedings of the Twenty-Second International Conference on Artificial Intelligence and Statistics},
  pages = 	 {3051--3059},
  year = 	 {2019},
  editor = 	 {Chaudhuri, Kamalika and Sugiyama, Masashi},
  volume = 	 {89},
  series = 	 {Proceedings of Machine Learning Research},
  month = 	 {16--18 Apr},
  publisher =    {PMLR},
  url = 	 {https://proceedings.mlr.press/v89/nacson19a.html}
}

@InProceedings{pmlr-v89-nacson19b,
  title = 	 {Convergence of Gradient Descent on Separable Data},
  author =       {Nacson, Mor Shpigel and Lee, Jason and Gunasekar, Suriya and Savarese, Pedro Henrique Pamplona and Srebro, Nathan and Soudry, Daniel},
  booktitle = 	 {Proceedings of the Twenty-Second International Conference on Artificial Intelligence and Statistics},
  pages = 	 {3420--3428},
  year = 	 {2019},
  editor = 	 {Chaudhuri, Kamalika and Sugiyama, Masashi},
  volume = 	 {89},
  series = 	 {Proceedings of Machine Learning Research},
  month = 	 {16--18 Apr},
  publisher =    {PMLR},
  url = 	 {https://proceedings.mlr.press/v89/nacson19b.html}
}

@inproceedings{NEURIPS2020_c76e4b2f,
 author = {Ji, Ziwei and Telgarsky, Matus},
 booktitle = {Advances in Neural Information Processing Systems},
 editor = {H. Larochelle and M. Ranzato and R. Hadsell and M.F. Balcan and H. Lin},
 pages = {17176--17186},
 publisher = {Curran Associates, Inc.},
 title = {Directional convergence and alignment in deep learning},
 url = {https://proceedings.neurips.cc/paper_files/paper/2020/file/c76e4b2fa54f8506719a5c0dc14c2eb9-Paper.pdf},
 volume = {33},
 year = {2020}
}

@inproceedings{
Lyu2020Gradient,
title={Gradient Descent Maximizes the Margin of Homogeneous Neural Networks},
author={Kaifeng Lyu and Jian Li},
booktitle={International Conference on Learning Representations},
year={2020},
url={https://openreview.net/forum?id=SJeLIgBKPS}
}

@inproceedings{
wang2022does,
title={Does Momentum Change the Implicit Regularization on Separable Data?},
author={Bohan Wang and Qi Meng and Huishuai Zhang and Ruoyu Sun and Wei Chen and Zhi-Ming Ma and Tie-Yan Liu},
booktitle={Advances in Neural Information Processing Systems},
editor={Alice H. Oh and Alekh Agarwal and Danielle Belgrave and Kyunghyun Cho},
year={2022},
url={https://openreview.net/forum?id=i-8uqlurj1f}
}

@inproceedings{
moon2026minor,
title={Minor First, Major Last: A Depth-Induced Implicit Bias of Sharpness-Aware Minimization},
author={Chaewon Moon and Dongkuk Si and Chulhee Yun},
booktitle={The Fourteenth International Conference on Learning Representations},
year={2026},
url={https://openreview.net/forum?id=ErnnE2UNI2}
}

@inproceedings{
han2026fire,
title={{FIRE}: Frobenius-Isometry Reinitialization for Balancing the Stability{\textendash}Plasticity Tradeoff},
author={Isaac Han and Sangyeon Park and Seungwon Oh and Donghu Kim and Hojoon Lee and KyungJoong Kim},
booktitle={The Fourteenth International Conference on Learning Representations},
year={2026},
url={https://openreview.net/forum?id=CfZLxT3zIZ}
}
\bibliographystyle{iclr2026_conference}

\appendix
\setcounter{tocdepth}{2}
\newpage
\tableofcontents

\newpage
\section{Further Related Works}
\label{appendix: related work}
\subsection{Implicit Regularization in Neural Networks}

A substantial body of work investigates the \emph{implicit regularization} of gradient-based training in overparameterized models~\citep{gunasekar2017implicit,ji2018gradient,ji2019implicit,woodworth2020kernel,yun2021a,razin2021implicit,andriushchenko2023sharpness,frei2023implicit,jung2025convergence,HUI2025131367, moon2026minor}. For linearly separable classification trained with (S)GD, \citet{soudry2018implicit} show that gradient descent on the logistic loss converges in direction to the $\ell_2$ max-margin classifier. Building on this result, \citet{pmlr-v89-nacson19a} establish analogous directional convergence guarantees for SGD, and \citet{pmlr-v89-nacson19b} extend the theory to a broader family of loss functions. For homogeneous neural networks, gradient descent likewise exhibits directional convergence, and the limit direction coincides with a KKT point of an appropriate margin-maximization problem \citep{NEURIPS2020_c76e4b2f,Lyu2020Gradient}.

For adaptive methods in linearly separable classification, \citet{wang2022does} analyze (S)GD with momentum and deterministic Adam and show that these methods also converge in direction to the max-margin solution. This analysis is further extended to homogeneous models by \citet{pmlr-v139-wang21q}. More recently, \citet{zhang2024the} demonstrate that when the stability constant is negligible, Adam exhibits a qualitatively different implicit bias and converges to the maximum $\ell_{\infty}$ margin rather than the $\ell_{2}$ max-margin direction selected by (S)GD. Along a related line, \citet{pmlr-v235-cattaneo24a} use backward error analysis to study RMSProp and Adam and show that their implicit regularization depends sensitively on hyperparameters and the training stage. Closely related to our setting, \citet{zhao2022combining} examine matrix completion and show that Adam, when combined with an explicit spectral ratio penalty, induces a strong low-rank bias even in depth-1 linear networks. However, their analysis focuses on deriving the flow of Adam and does not characterize the limiting solution.

Several works investigate how depth promotes low-rank solutions~\citep{arora2019implicit, gissin2020the, li2021towards, huh2021low, jacot2022implicit, timor2023implicit}. \citet{huh2021low} provide empirical evidence that deeper networks (both linear and nonlinear) tend to find solutions with lower effective-rank embeddings. Complementing this, \citet{timor2023implicit} show theoretically that ReLU networks trained with squared loss exhibit a bias toward low-rank solutions under the assumption that gradient flow converges to the solution minimizing the $\ell_2$ norm. 

Turning to deep linear networks, \citet{gissin2020the} and \citet{li2021towards} study depth-induced bias as a function of initialization scale. They report that, as depth increases, the dependence on initialization can become weaker, and incremental learning can emerge. However, their analyses consider a matrix factorization task, which they frame as matrix completion with full observations. Therefore, in their setting, convergence to a low-rank solution is guaranteed if the model converges to zero-loss, which does not hold in our matrix completion task settings.

While \citet{arora2019implicit} investigate the matrix completion task in deep linear networks, offering insights from derived singular value dynamics, they cannot fully track these dynamics to prove low-rank convergence as network depth increases. Their analysis is primarily restricted to the regime where $t \geq t_0$, after which singular vectors are assumed to have stabilized. For $t \geq t_0$, they find that one singular value can be expressed as a function of another, involving a constant term that emerges from the state at $t_0$ (which can be the dominant component). Based on this derivation, they demonstrate that the gap between these singular values widens with increasing depth. In contrast, our Theorem~\ref{thm: block-diagonal}, by precisely tracking the converged values of singular values, rigorously establishes their ultimate behavior and the resulting low-rank bias.

Closely related to our setting, \citet{razin2020implicit} study a depth $L \geq 2$ matrix completion problem in a $2 \times 2$ example with three observations (one diagonal and two off diagonal entries). Their Theorems 1 and 2 show that, as the loss converges, the effective rank converges to its infimum. However, their analysis does not distinguish between the depth $L = 2$ and $L \geq 3$ regimes, and therefore does not identify a depth dependent low-rank bias or an underlying mechanism that explains it. In addition, their guarantees are independent of the initialization scale, so they do not capture the empirically observed phenomenon that low-rank bias becomes stronger as the initialization scale decreases. In contrast, our results explicitly separate the $L = 2$ and $L \geq 3$ cases, characterize the limiting singular values, and show how depth and initialization scale jointly control the emergence of low rank solutions in matrix completion.

For depth-2 matrix completion tasks, \citet{baiconnectivity} introduce the connectivity argument. They prove that if the observations construct a connected bipartite graph, the model can converge to a low-rank solution when the initialization scale is infinitesimally small, subject to certain technical assumptions. Conversely, if the observations form a disconnected graph, the model generally cannot converge to a low-rank solution. However, a special case occurs if this disconnected graph is composed of complete bipartite components: here, the model converges to the minimum nuclear norm solution, again under specific technical assumptions. This characterization of implicit bias does not readily generalize to matrices with deeper matrices, as depicted in Figure~\ref{fig: intro full}.

\subsection{Loss of Plasticity}
Loss of plasticity describes a widely observed phenomenon where a model's ability to adapt to new information diminishes over time~\citep{achille2018critical, ash2020warm, dohare2021continual, nikishin2022primacy,  LeeCKKMCL24, shin2024dash, lee2025simba,  lyle2025can, springer2025overtrained, kim2025knowledge, han2026fire}. The phenomenon is frequently observed in scenarios with gradually changing datasets, such as those encountered in reinforcement learning~\citep{igl2020transient, nikishin2022primacy, lyle2023understanding} or continual learning~\citep{dohare2021continual, pmlr-v274-kumar25a, chen2023improving,  park2025activation, hernandez2025reinitializing, rohani2025preserving}, where the model may struggle to adapt to new environments.

Although loss of plasticity is typically studied in non-stationary settings, a similar effect arises in stationary regimes where the dataset grows incrementally while the underlying distribution remains fixed~\citep{ash2020warm,berariu2021study,shin2024dash}. In such cases, a model is first trained to convergence on an initial i.i.d. subset (e.g., a subset of CIFAR-10/100) and then warm-started for continued training on an expanded sample from the same distribution (e.g., the full CIFAR-10/100). Perhaps counterintuitively, these warm-started models often generalize worse, yielding lower test accuracy than models trained from scratch on the combined dataset.

While this phenomenon is problematic in many real-world applications where new data is continuously added, theoretical studies on it remain scarce. \citet{shin2024dash}, for instance, offer a theoretical explanation using an artificial framework. Within this framework, they demonstrate that such behavior occurs because warm-started models often complete training by memorizing data-dependent noise, which is not useful for generalization. However, the analytical framework they employ is considered artificial and limited in its ability to accurately characterize the optimization processes of typical deep learning models.

Recently, \citet{kleinman2024critical} observed loss of plasticity in deep linear networks, identifying ``critical learning periods'': an initial phase of effective learning followed by a significantly reduced capacity to learn later~\citep{achille2018critical, vock2025critical}. They employ a matrix completion framework to further observe this behavior. When observations from matrix completion tasks are treated as training samples in neural network training, they observed that a model initially trained on a sparse set of observations and subsequently retrained (i.e., warm-started) on an expanded dataset typically exhibits a larger performance gap (in terms of reconstruction error) compared to a model trained from scratch on the entire expanded dataset. However, their work does not offer theoretical guarantees to account for these observations. Motivated by this, in Section~\ref{sec: LoP}, we attempt to explain this behavior within the specific context of depth-2 matrix completion settings.

\newpage
\section{Coupled and Decoupled Training Dynamics}
\label{appendix: coupled/decoupled}
This section introduces coupled and decoupled training dynamics (Definition~\ref{def: coupled/decoupled dynamics}) and illustrates them with concrete examples. Before that, we present Proposition~\ref{prop: coupled under generic initialization}, which shows that for deep models ($L \geq 3$), generic (absolutely continuous) initialization yields coupled dynamics almost surely.

\begin{lemma}
\label{lem: coupled dynamics condition}
    Define $\mW_{b:a} \triangleq \mW_b \mW_{b-1} \cdots \mW_a$, and $\mW_{a:b} \triangleq \mI_d$ where $b \geq a$. For $w_{ij}(t) \triangleq \ve_i^\top \mW_{L:1}(t)\ve_j$, 
    \begin{equation*}
        \nabla_{\mW_l}w_{ij}(t) = \left( \mW_{L:l+1}(t)^\top \ve_i\right)\left(\mW_{l-1:1}(t) \ve_j \right)^\top \in \mathbb{R}^{d \times d}.
    \end{equation*}
Hence, for any $(i,j)$ and $(p, q)$,
\begin{equation*}
    \langle \nabla_{\bm \theta}w_{ij}(t), \nabla_{\bm \theta}w_{pq}(t) \rangle = \sum_{l=1}^L\left(\ve_i^\top \mT_l(t)\ve_p\right)\left(\ve_j^\top \mS_l(t) \ve_q\right),
\end{equation*}
where $\mT_l(t) \triangleq \mW_{L:l+1}(t) \mW_{L:l+1}(t)^\top$ and $\mS_l(t) \triangleq \mW_{l-1:1}(t)^\top \mW_{l-1:1}(t)$ are symmetric positive semidefinite matrix.
\end{lemma}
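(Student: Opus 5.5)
The first claim I would prove by writing $w_{ij}$ as a scalar function of a single factor $\mW_l$ with all other factors held fixed. By associativity,
\begin{equation*}
w_{ij} = \ve_i^\top \mW_{L:l+1}\, \mW_l\, \mW_{l-1:1} \ve_j = \vu^\top \mW_l \vv,
\end{equation*}
where $\vu \triangleq \mW_{L:l+1}^\top \ve_i$ and $\vv \triangleq \mW_{l-1:1}\ve_j$. The conventions $\mW_{L:L+1} = \mI_d$ and $\mW_{0:1} = \mI_d$ from the statement cover the boundary layers $l = L$ and $l = 1$.

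The map $\mW_l \mapsto \vu^\top \mW_l \vv = \sum_{a,b} u_a (\mW_l)_{ab} v_b$ is linear. Its partial derivative with respect to $(\mW_l)_{ab}$ is $u_a v_b$, so its gradient is the rank-one matrix $\vu\vv^\top$. This is exactly the claimed formula, and it holds pointwise at every $t$.

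For the second claim, I would use that $\bm\theta$ is the concatenation of the vectorized factors $\mW_1, \dots, \mW_L$. The Euclidean inner product of the full gradients therefore splits as a sum of Frobenius inner products:
\begin{equation*}
\langle \nabla_{\bm\theta} w_{ij}, \nabla_{\bm\theta} w_{pq}\rangle = \sum_{l=1}^L \langle \nabla_{\mW_l} w_{ij}, \nabla_{\mW_l} w_{pq}\rangle_F .
\end{equation*}
For rank-one matrices, $\langle \vu\vv^\top, \vu'\vv'^\top\rangle_F = \Tr(\vv \vu^\top \vu' \vv'^\top) = (\vu^\top\vu')(\vv^\top\vv')$. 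Substituting $\vu = \mW_{L:l+1}^\top \ve_i$ and $\vu' = \mW_{L:l+1}^\top \ve_p$ gives $\vu^\top\vu' = \ve_i^\top \mW_{L:l+1}\mW_{L:l+1}^\top \ve_p = \ve_i^\top \mT_l \ve_p$. Likewise, $\vv = \mW_{l-1:1}\ve_j$ and $\vv' = \mW_{l-1:1}\ve_q$ give $\vv^\top\vv' = \ve_j^\top \mS_l \ve_q$.

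Finally, $\mT_l$ and $\mS_l$ are Gram matrices, of the forms $\mM\mM^\top$ and $\mM^\top\mM$. Hence they are symmetric and satisfy $\vx^\top \mM\mM^\top \vx = \|\mM^\top \vx\|_2^2 \ge 0$, so both are positive semidefinite.

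There is no substantive obstacle here. The only point that needs care is the bookkeeping: the empty-product conventions at the boundary layers, and the transposes when converting the rank-one Frobenius inner product into the quadratic forms in $\mT_l$ and $\mS_l$.
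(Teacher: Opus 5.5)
Your proof is correct and matches the paper's argument: the paper also writes $w_{ij} = {\va_l^{(i)}}^\top \mW_l \vb_l^{(j)}$ with $\va_l^{(i)} = \mW_{L:l+1}^\top \ve_i$ and $\vb_l^{(j)} = \mW_{l-1:1}\ve_j$, reads off the rank-one gradient $\va_l^{(i)}{\vb_l^{(j)}}^\top$, and splits the parameter inner product into per-layer Frobenius inner products that factor into the quadratic forms in $\mT_l$ and $\mS_l$. Your Gram-matrix justification of positive semidefiniteness is a point the paper only asserts without proof.
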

\begin{proof}
    Define $\va_l^{(i)}(t) \triangleq \mW_{L:l+1}(t)^\top \ve_i$ and $\vb_l^{(j)}(t) \triangleq \mW_{l-1:1}(t)\ve_j$. By $$w_{ij}(t) = \ve_i^\top \mW_{L:l+1}(t)\mW_l(t)\mW_{l-1:1}(t)\ve_j = {\va_l^{(i)}(t)}^\top\mW_l(t)\vb_l^{(j)}(t),$$ we have $\nabla_{\mW_l} w_{ij}(t) = \va_l^{(i)}(t) {\vb_l^{(j)}}(t)^\top$. Furthermore, 
    \begin{align*}
        \left\langle \nabla_{\bm \theta} w_{ij}(t), \nabla_{\bm \theta} w_{pq}(t) \right \rangle &= \sum_{l=1}^L \left\langle \nabla_{\mW_l}w_{ij}(t), \nabla_{\mW_l}w_{pq}(t)\right\rangle_F \\
        &= \sum_{l=1}^L \left\langle \va_l^{(i)}(t) {\vb_l^{(j)}}(t)^\top, \va_l^{(p)}(t){\vb_l^{(q)}}(t)^\top\right\rangle_F \\
        &= \sum_{i=1}^L \left({\va_l^{(i)}(t)}^\top \va_l^{(p)}(t)\right)\left({\vb_l^{(j)}(t)}^\top \vb_l^{(q)}(t)\right) \\
        &= \sum_{i=1}^L \left(\ve_i^\top \mT_l(t)\ve_p\right)\left(\ve_j^\top \mS_l(t) \ve_q\right),
    \end{align*}
    which concludes the proof.
\end{proof}

\begin{proposition}
    \label{prop: coupled under generic initialization}
    Let $L \geq 3$ and initialize $\{\mW_l(0)\}_{l=1}^L$ with i.i.d. entries from any absolutely continuous distribution. For any observation set $\Omega \subseteq [d] \times [d]$ where $\lvert \Omega \rvert\geq 2$, with probability 1,
    \begin{equation*}
        \left \langle \nabla_{\bm \theta}w_{ij}(0), \nabla_{\bm \theta}w_{pq}(0) \right\rangle \neq 0 
    \end{equation*}
    holds for all distinct $(i,j), (p, q) \in \Omega$. Consequently, no nontrivial partition $\Omega = \bigcup_{k=1}^K \Omega_k$ with $K \geq 2$ can satisfy the decoupling condition~\eqref{eq:decoupling_condition} at $t=0$. Hence, by Definition~\ref{def: coupled/decoupled dynamics}, the gradient flow dynamics are coupled with probability 1 irrespective of the observation pattern.
\end{proposition}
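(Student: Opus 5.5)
The plan is to treat the inner product at $t=0$, for fixed distinct $(i,j),(p,q)\in\Omega$, as a polynomial in the entries of $\mW_1(0),\dots,\mW_L(0)$, show it is not identically zero, and use that a nonzero polynomial vanishes only on a Lebesgue-null set. By Lemma~\ref{lem: coupled dynamics condition},
\begin{equation*}
g_{(i,j),(p,q)}(\bm\theta) \triangleq \langle \nabla_{\bm\theta} w_{ij}, \nabla_{\bm\theta} w_{pq}\rangle = \sum_{l=1}^L \left(\ve_i^\top \mT_l \ve_p\right)\left(\ve_j^\top \mS_l \ve_q\right),
\end{equation*}
which is polynomial in $\bm\theta$. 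The i.i.d. absolutely continuous entries give a joint law absolutely continuous with respect to Lebesgue measure on $\mathbb{R}^{Ld^2}$. Hence $\Pr[g=0]=0$ once $g\not\equiv 0$. Since $\Omega$ is finite, a union bound over the finitely many pairs gives the almost-sure statement for all pairs simultaneously.

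Any nontrivial partition then places some pair in different blocks, and that pair has nonzero inner product at $t=0$. So condition~\eqref{eq:decoupling_condition} already fails at $t=0$, and the dynamics are coupled.

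The main step is exhibiting one parameter point with $g\neq 0$ for each distinct pair. Since $(i,j)\neq(p,q)$, I would choose an explicit configuration, using $L\ge 3$ so that there is a layer with factors on both sides. The candidate is $\mW_l = \mJ_d$ for all $l$. Then $\mT_l = d^{2(L-l)-1}\mJ_d$ for $l<L$ and $\mT_L=\mI_d$. Similarly $\mS_l = d^{2(l-1)-1}\mJ_d$ for $l>1$ and $\mS_1 = \mI_d$. The terms $l=2,\dots,L-1$ (nonempty because $L\ge3$) each contribute a strictly positive quantity $d^{2L-4}$, independent of indices. The boundary terms $l=1$ and $l=L$ contribute nonnegative quantities, namely $d^{2L-3}\,\mathbb{1}[j=q]$ and $d^{2L-3}\,\mathbb{1}[i=p]$. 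So $g(\mJ)>0$, hence $g\not\equiv 0$.

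The only subtlety is confirming that these diagonal-entry computations are correct and that all terms are nonnegative at this point, so that no cancellation occurs. This is what fails for $L=2$: there are no middle layers, and for $i\ne p$, $j\ne q$ one gets $g\equiv 0$. That contrast is the content of the proposition, and it is where I would double-check the details.
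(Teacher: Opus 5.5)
Your proof is correct and has the same skeleton as the paper's. By Lemma~\ref{lem: coupled dynamics condition} the cross inner product is a polynomial in the initial weights, a polynomial that is not identically zero vanishes only on a Lebesgue-null set, a finite union over pairs in $\Omega$ gives the simultaneous almost-sure statement, and any nontrivial partition then fails at $t=0$.

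Where you differ is in the witness showing the polynomial is not identically zero. The paper splits into cases:
\begin{itemize}
\item For $i\neq p$, $j\neq q$, it sets every layer except $\mW_1,\mW_3$ to $\mI_d$, so that only the $l=2$ summand survives.
\item For $i=p$ (or $j=q$), it argues that the $l=L$ (or $l=1$) summand can be made nonzero.
\end{itemize}
Your single witness $\mW_l=\mJ_d$ handles all pairs at once. Every summand is nonnegative there and the $L-2$ middle summands equal $d^{2L-4}>0$, so no cancellation can occur. This makes the paper's $i=p$ case rigorous. That case exhibits one nonzero summand but does not rule out cancellation against the other summands $(\ve_i^\top\mT_l\ve_i)(\ve_j^\top\mS_l\ve_q)$, $2\le l\le L-1$, whose second factor is an off-diagonal Gram entry of either sign.

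Your computations check out: $\mT_l=d^{2(L-l)-1}\mJ_d$ and $\mS_l=d^{2(l-1)-1}\mJ_d$, giving the total $(L-2)d^{2L-4}+d^{2L-3}(\delta_{ip}+\delta_{jq})$. The same positivity mechanism is what the paper uses for the $1<m<\infty$ case of Proposition~\ref{prop:coupling_conditions_by_L_m}, so any entrywise-positive configuration would serve as a witness. If the hidden widths $d_l$ exceed $d$, take all-ones matrices of the appropriate shapes; the argument is unchanged.
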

\begin{proof}
    By Lemma~\ref{lem: coupled dynamics condition}, at $t=0$ we have
    \begin{equation*}
        \varphi_{ij,pq}(\mW_1,\ldots,\mW_L)
        \triangleq
        \left\langle \nabla_{\bm\theta} w_{ij},\, \nabla_{\bm\theta} w_{pq}\right\rangle
        =
        \sum_{l=1}^L \left( \ve_i^\top \mT_l \ve_p \right)\left( \ve_j^\top \mS_l \ve_q \right),
    \end{equation*}
    which is a polynomial in the entries of $\{\mW_l\}_{l=1}^L$. For any $(i,j)\neq(p,q)$, we now show that $\varphi_{ij,pq}$ is not the zero polynomial.
    
    If $i=p$, the $l=L$ term reduces to $\ve_j^\top \mS_L \ve_q$. By choosing $\mW_{1:L}$ so that $\mS_L$ has a nonzero $(j,q)$ entry, this term evaluates to a nonzero value; hence $\varphi_{ij,pq}$ is not identically zero. By symmetry, the same argument applies when $j=q$.
    
    If $i\neq p$ and $j\neq q$, consider $l=2$. Setting all other layers to $\mI_d$, choose $\mW_3$ so that $(\ve_i^\top \mT_2 \ve_p)\neq 0$ and choose $\mW_1$ so that $(\ve_j^\top \mS_2 \ve_q)\neq 0$. Then $\varphi_{ij,pq} = (\ve_i^\top \mT_2 \ve_p)(\ve_j^\top \mS_2 \ve_q) \neq 0$. Consequently, in all cases $\varphi_{ij,pq}$ is not identically zero.

    Since $\varphi_{ij,pq}$ is a nonzero polynomial in the entries of $\{\mW_l\}_{l=1}^L$, its zero set $Z_{ij,pq}\triangleq \{(\mW_1,\ldots,\mW_L): \varphi_{ij,pq}(\mW_1, \ldots, \mW_L)=0\}$ is a proper algebraic set in $\mathbb{R}^{Ld^2}$ and hence has Lebesgue measure zero. 

    Let the initialization distribution of $(\mW_1(0),\ldots,\mW_L(0))$ be absolutely continuous with respect to Lebesgue measure. Then
    \begin{equation*}
    \Pr\!\big[(\mW_1(0),\ldots,\mW_L(0))\in Z_{ij,pq}\big]=0,
    \end{equation*}
    so for this fixed pair $(i,j)\ne(p,q)$ we have $\varphi_{ij,pq}\big(\mW_1(0),\ldots,\mW_L(0)\big)\neq 0$ almost surely. 
    There are only finitely many distinct pairs in $\Omega$. A finite union of measure-zero sets still has measure zero; hence, with probability one,
    \begin{equation}
    \label{eqn: coupled dynamics proof temp}
    \varphi_{ij,pq} \neq 0 \;\; \text{ for all distinct } (i,j), (p,q) \in \Omega.
    \end{equation}
    
    By Definition~\ref{def: coupled/decoupled dynamics}, a decomposition $\Omega=\bigcup_{k=1}^K \Omega_k$ ($K\ge2$) yields decoupled dynamics only if 
    $$\langle \nabla_{\bm\theta} w_{ij}(t), \nabla_{\bm\theta} w_{pq}(t) \rangle = 0$$ 
    for all $(i,j) \in \Omega_k$, $(p,q)\in\Omega_l$ with $k\neq l$ and for all $t\geq 0$.
    
    However, this already fails at $t=0$, since every cross-pair inner product is nonzero by \eqref{eqn: coupled dynamics proof temp}. Thus, no such partition exists. Consequently, for $L\ge3$ and any observation set $\Omega$, the gradient flow dynamics are \emph{coupled almost surely} under any absolutely continuous initialization.
\end{proof}

\subsection{Coupled Dynamics Example}
\subsubsection{Depth-2 Model}
For shallow ($L=2$) matrices, coupled dynamics typically correspond to connected observations under generic initialization, in accordance with Definitions~\ref{def: connectivity} and \ref{def: coupled/decoupled dynamics} (the specific case of initialization, such as zero matrices, which leads to decoupled dynamics, will be further detailed in a later subsection). We illustrate this principle with an example where the observed entries form the first column of a $2 \times 2$ matrix.

Consider a $2 \times 2$ matrix, denoted $\mM_{\mathrm{C}}$, which is to be completed using its first column as observations:
\begin{equation*}
    \mM_{\mathrm{C}} \triangleq \begin{bmatrix}
        w_{11}^* & ? \\
        w_{21}^* & ?
    \end{bmatrix}.
\end{equation*}
The corresponding observation pattern matrix $\mP_{\mathrm{C}}$ is:
\begin{equation*}
    \mP_{\mathrm{C}} = \begin{bmatrix}
        1 & 0 \\
        1 & 0
    \end{bmatrix}.
\end{equation*}
The associated adjacency matrix $\mathcal{A}_{\mathrm{C}}$ for the bipartite graph is constructed as:
\begin{equation*}
    \mathcal{A}_{\mathrm{C}} = \begin{bmatrix}
        \bm{0}_{2,2} & \mP_{\mathrm{C}}^\top \\
        \mP_{\mathrm{C}} & \bm{0}_{2,2}
    \end{bmatrix} = 
    \begin{bmatrix} 
        0 & 0 & 1 & 1 \\
        0 & 0 & 0 & 0 \\
        1 & 0 & 0 & 0 \\
        1 & 0 & 0 & 0
    \end{bmatrix},
\end{equation*}
which forms a connected graph as illustrated in Figure~\ref{fig: intro graph}. This setup leads to coupled training dynamics under non-zero initialization. The coupling arises because parameters used to construct $w_{11}$ and $w_{21}$ overlap. Specifically, elements from the first column of matrix $\mB$ (i.e., $b_{11}, b_{21}$) are common to the computation of both $w_{11}$ and $w_{21}$. This shared dependency links the dynamics. The below illustration highlights these shared (\textcolor{teal}{teal}) and distinct (\textcolor{red}{red}/\textcolor{blue}{blue}) parameters involved in forming the observed entries $\textcolor{red}{w_{11}}$ and $\textcolor{blue}{w_{21}}$:
\begin{align*}
    \begin{bmatrix}
        \textcolor{red}{w_{11}} & w_{12} \\
        \textcolor{blue}{w_{21}} & w_{22}
    \end{bmatrix}
    &= \begin{bmatrix}
        \textcolor{red}{a_{11}} & \textcolor{red}{a_{12}} \\
        \textcolor{blue}{a_{21}} & \textcolor{blue}{a_{22}}
    \end{bmatrix}
    \begin{bmatrix}
        \textcolor{teal}{b_{11}} & b_{12} \\
        \textcolor{teal}{b_{21}} & b_{22}
    \end{bmatrix} \\
    \textcolor{red}{w_{11}} &= \textcolor{red}{a_{11}}\textcolor{teal}{b_{11}} + \textcolor{red}{a_{12}}\textcolor{teal}{b_{21}} \\
    \textcolor{blue}{w_{21}} &= \textcolor{blue}{a_{21}}\textcolor{teal}{b_{11}} + \textcolor{blue}{a_{22}}\textcolor{teal}{b_{21}}
\end{align*}
The shared use of $\textcolor{teal}{b_{11}}$ and $\textcolor{teal}{b_{21}}$ in reconstructing both observed entries is what couples their learning dynamics.

\subsubsection{Depth$\geq 3$ Model}
For deeper matrices ($L \geq 3$), training dynamics are typically coupled, irrespective of the observation pattern (See Proposition~\ref{prop: coupled under generic initialization}). Consider, for instance, predicting entries from the disconnected matrix $\mM_{\mathrm{D}}$ where only diagonal elements are observed:
\begin{equation*}
    \mM_{\mathrm{D}} \triangleq \begin{bmatrix}
        w_{11}^* & ? \\
        ? & w_{22}^*
    \end{bmatrix}.
\end{equation*}
Even with such observations, for $L \ge 3$, coupling arises because parameters in intermediate layers are involved in computing multiple observed entries. This is illustrated in the following depth-3 example ($ \mW_{3:1} = \mW_1 \mW_2 \mW_3 $). Elements of the intermediate matrix $\mW_2$ (colored \textcolor{teal}{teal}) contribute to both the computation of $\textcolor{red}{w_{11}}$ and $\textcolor{blue}{w_{22}}$:
\begin{align*}
    \begin{bmatrix}
        \textcolor{red}{w_{11}} & w_{12} \\
        w_{21} & \textcolor{blue}{w_{22}}
    \end{bmatrix}
    &= \begin{bmatrix}
        \textcolor{red}{(w_1)_{11}} & \textcolor{red}{(w_1)_{12}} \\
        \textcolor{blue}{(w_1)_{21}} & \textcolor{blue}{(w_1)_{22}}
    \end{bmatrix}
    \begin{bmatrix}
        \textcolor{teal}{(w_2)_{11}} & \textcolor{teal}{(w_2)_{12}} \\
        \textcolor{teal}{(w_2)_{21}} & \textcolor{teal}{(w_2)_{22}}
    \end{bmatrix}
    \begin{bmatrix}
        \textcolor{red}{(w_3)_{11}} & \textcolor{blue}{(w_3)_{12}} \\
        \textcolor{red}{(w_3)_{21}} & \textcolor{blue}{(w_3)_{22}}
    \end{bmatrix}.
\end{align*}
Specifically, the observed entries are formed as:
\begin{align*}
    \textcolor{red}{w_{11}} &= \Big(\textcolor{red}{(w_1)_{11}}\textcolor{teal}{(w_2)_{11}} + \textcolor{red}{(w_1)_{12}}\textcolor{teal}{(w_2)_{21}}\Big) \textcolor{red}{(w_3)_{11}} \\
    &\quad + \Big(\textcolor{red}{(w_1)_{11}}\textcolor{teal}{(w_2)_{12}} + \textcolor{red}{(w_1)_{12}}\textcolor{teal}{(w_2)_{22}}\Big) \textcolor{red}{(w_3)_{21}}, \\
    \textcolor{blue}{w_{22}} &= \Big(\textcolor{blue}{(w_1)_{21}}\textcolor{teal}{(w_2)_{11}} + \textcolor{blue}{(w_1)_{22}}\textcolor{teal}{(w_2)_{21}}\Big) \textcolor{blue}{(w_3)_{12}} \\
    &\quad + \Big(\textcolor{blue}{(w_1)_{21}}\textcolor{teal}{(w_2)_{12}} + \textcolor{blue}{(w_1)_{22}}\textcolor{teal}{(w_2)_{22}}\Big) \textcolor{blue}{(w_3)_{22}}.
\end{align*}
The shared involvement of all elements from $\mW_2$ (the \textcolor{teal}{teal} matrix) in forming both $\textcolor{red}{w_{11}}$ and $\textcolor{blue}{w_{22}}$ leads to coupled dynamics, provided these elements are non-zero. (Conversely, if some elements were to become zero, this could potentially lead to decoupled dynamics, as illustrated in the subsequent subsection.)

\subsection{Decoupled Dynamics Example}
\subsubsection{Depth-2 Model}
For depth-2 models, decoupled dynamics coincide with disconnected observation patterns. Indeed, by Lemma~\ref{lem: coupled dynamics condition},
\begin{align*}
    \langle \nabla_{\bm \theta}w_{ij}, \nabla_{\bm \theta}w_{pq} \rangle &= \sum_{l=1}^2\left(\ve_i^\top \mT_l\ve_p\right)\left(\ve_j^\top \mS_l \ve_q\right) \\
    &= \left(\ve_1^\top \mW_2 \mW_2^\top \ve_p\right) \delta_{jq} + \delta_{ip}\left( \ve_j^\top \mW_1^\top \mW_1 \ve_q\right),
\end{align*}
where $\delta_{ab} =1$ if $a=b$ and 0 otherwise. Hence, if $i\neq p$ and $j \neq p$, the inner product is identically zero for all weights, which explains the decoupling for the depth-2 matrix when the observations are disconnected.

To illustrate the disconnected case, consider the $2 \times 2$ incomplete matrix example $\mM_{\mathrm{D}}$, to be completed from diagonal-only observations.
\begin{equation*}
    \mM_{\rm D} \triangleq \begin{bmatrix}
        w_{11}^* & ? \\
        ? & w_{22}^*
    \end{bmatrix}.
\end{equation*}
Then the observation matrix $\mP_{\rm D}$ can be constructed as:
\begin{equation*}
    \mP_{\rm D} = \begin{bmatrix}
        1 & 0 \\
        0 & 1
    \end{bmatrix},
\end{equation*}
and the adjacency matrix $\gA_{\rm D}$ can be constructed as:
\begin{equation*}
    \gA_{\rm D} = \begin{bmatrix}
        \bm{0}_{2,2} & \mP_{\rm D}^\top \\
        \mP_{\rm D} & \bm{0}_{2,2}
    \end{bmatrix} = 
    \begin{bmatrix} 
        0 & 0 & 1 & 0 \\
        0 & 0 & 0 & 1 \\
        1 & 0 & 0 & 0 \\
        0 & 1 & 0 & 0
    \end{bmatrix},
\end{equation*}
which forms the disconnected graph as illustrated in Figure~\ref{fig: intro graph}. This setup inherently leads to decoupled training dynamics. The decoupling can be visually understood by examining how distinct sets of elements in the factor matrices $\mA$ and $\mB$ contribute to the observed entries $w_{11}$ and $w_{22}$. Specifically, as illustrated below, red-colored entries are exclusively involved in predicting $w_{11}$, while blue-colored entries are exclusively involved in predicting $w_{22}$. These two sets of entries are disjoint, confirming the decoupled nature of the dynamics:
\begin{align*}
    \begin{bmatrix}
        \textcolor{red}{w_{11}} & w_{12} \\
        w_{21} & \textcolor{blue}{w_{22}}
    \end{bmatrix}
    &= \begin{bmatrix}
        \textcolor{red}{a_{11}} & \textcolor{red}{a_{12}} \\
        \textcolor{blue}{a_{21}} & \textcolor{blue}{a_{22}}
    \end{bmatrix}
    \begin{bmatrix}
        \textcolor{red}{b_{11}} & \textcolor{blue}{b_{12}} \\
        \textcolor{red}{b_{21}} & \textcolor{blue}{b_{22}}
    \end{bmatrix},\\
    \textcolor{red}{w_{11}} &= \textcolor{red}{a_{11}}\textcolor{red}{b_{11}} + \textcolor{red}{a_{12}}\textcolor{red}{b_{21}}, \\
    \textcolor{blue}{w_{22}} &= \textcolor{blue}{a_{21}}\textcolor{blue}{b_{12}} + \textcolor{blue}{a_{22}}\textcolor{blue}{b_{22}}.
\end{align*}

\subsubsection{Depth$\geq 3$ Model}
For deep ($L \geq 3$) matrices, decoupled training dynamics are observed in at least two key scenarios. First, as detailed in Appendix~\ref{appendix: proposition 3.2 L geq 3 m=infty}, an $\alpha \mI_d$ initialization combined with block-diagonal observations leads to decoupled dynamics for any depth-factorized matrix. 

To illustrate this for a deeper case, we revisit the $\mM_{\mathrm{D}}$ observation pattern in a depth-3 context. Appendix~\ref{appendix: proposition 3.2 L geq 3 m=infty} states that with such an initialization and observing only diagonal entries (which corresponds to $s=1$ case), all off-diagonal elements of the factor matrices $\mW_l(t)$ remain zero throughout training. Consequently, the factor matrices $\mW_1, \mW_2, \mW_3$ are diagonal. The product matrix $\mW_{L:1}(t)$ is thus formed as:
\begin{align*}
    \begin{bmatrix}
        \textcolor{red}{w_{11}} & w_{12} \\
        w_{21} & \textcolor{blue}{w_{22}}
    \end{bmatrix}
    &= \begin{bmatrix}
        \textcolor{red}{(w_1)_{11}} & 0 \\
        0 & \textcolor{blue}{(w_1)_{22}}
    \end{bmatrix}
    \begin{bmatrix}
        \textcolor{red}{(w_2)_{11}} & 0 \\
        0 & \textcolor{blue}{(w_2)_{22}}
    \end{bmatrix}
    \begin{bmatrix}
        \textcolor{red}{(w_3)_{11}} & 0 \\
        0 & \textcolor{blue}{(w_3)_{22}}
    \end{bmatrix}.
\end{align*}
The observed entries are therefore computed as products of the respective diagonal elements:
\begin{align*}
    \textcolor{red}{w_{11}} &= \textcolor{red}{(w_1)_{11}}\textcolor{red}{(w_2)_{11}}\textcolor{red}{(w_3)_{11}}, \\
    \textcolor{blue}{w_{22}} &= \textcolor{blue}{(w_1)_{22}}\textcolor{blue}{(w_2)_{22}}\textcolor{blue}{(w_3)_{22}}.
\end{align*}
Since $w_{11}$ depends only on the set of parameters $\{(\mW_k)_{11}\}_{k=1}^3$ and $w_{22}$ depends only on the entirely disjoint set of parameters $\{(\mW_k)_{22}\}_{k=1}^3$, their training dynamics are decoupled.

Second, the training dynamics are also decoupled when all factor matrices are initialized as $d \times d$ zero matrices, $\bm{0}_{d \times d}$. To see this, note that by the chain rule, we have
\begin{align}
    \frac{\partial w_{pq}(t)}{\partial (w_l(t))_{ij}} = \left(\mW_L(t) \mW_{L-1}(t) \cdots \mW_{l+1}(t) \right)_{pi} \left(\mW_{l-1}(t) \mW_{l-2}(t) \cdots \mW_1(t) \right)_{jq},
    \label{eq:zero-init-derivative}
\end{align}
where we define the $(i,j)$-th entry of the factor matrix $\mW_l(t)$ as $(w_l(t))_{ij}$. If at some time $t$ all factor matrices satisfy $\mW_k(t) = \bm{0}$, then the right-hand side of \eqref{eq:zero-init-derivative} is the zero matrix, and thus
\begin{equation*}
    \frac{\partial w_{pq}(t)}{\partial (w_l(t))_{ij}} = 0 \quad \text{for all } p,q.
\end{equation*}
Therefore,
\begin{equation*}
    \frac{\partial \phi}{\partial (w_l(t))_{ij}} = \sum_{(p,q)\in\Omega} \left(w_{pq}(t)-w^*_{pq}\right) \frac{\partial w_{pq}(t)}{\partial (W_l(t))_{ij}} = 0,
\end{equation*}

which implies
\begin{equation*}
\dot{(w_l(t))}_{ij} = - \frac{\partial \phi}{\partial (w_l(t))_{ij}} = 0.
\end{equation*}
Since the initial condition is $(w_l(0))_{ij}=0$, uniqueness of ODE solutions guarantees that
$(w_l(t))_{ij} \equiv 0$ for all $t \geq 0$. As this holds for arbitrary $l,i,j$, we conclude that $\mW_l(t)\equiv \bm{0}$ for all $l$ and all $t \geq 0$.

Finally, because $\nabla_{\bm{\theta}(t)} w_{pq}(t)=\bm{0}$ for all $p,q$ and $t \geq 0$, the inner product condition
\begin{equation*}
\langle \nabla_{\bm{\theta}(t)} w_{ij}(t), \, \nabla_{\bm{\theta}(t)} w_{pq}(t) \rangle = 0
\end{equation*}
is satisfied for all $(i,j), (p, q) \in \Omega$ and for all $t \geq 0$. Hence, the dynamics are (trivially) decoupled.

\newpage
\section{Additional Experiments}
\label{appendix: experiments}
This section provides additional experiments omitted from the main text.

\subsection{Implicit Bias Experiments}
\label{appendix: implicit bias experiments}

\paragraph{Connected vs Disconnected Observation Patterns.}
In Figure~\ref{fig: intro full}, we present experiments with specific choices of $\mM_{\rm C}$ and $\mM_{\rm D}$, which are $2 \times 2$ rank-1 ground-truth matrices illustrating connected and disconnected examples, respectively. To generalize these observations, we extended our experiments to a $3 \times 3$ rank-1 ground truth matrix, considering all possible connected and disconnected observation patterns. After accounting for symmetries to eliminate duplicates, this results in a total of 23 unique observation patterns, which are categorized into 17 connected and 6 disconnected cases.

For each of these 23 observation patterns, the $3 \times 3$ rank-1 ground truth matrix was generated using constituent vectors whose entries were sampled from a standard normal distribution. Each factor matrix was then initialized by sampling its entries from a Gaussian distribution with a mean of zero and a standard deviation of $\alpha$. We performed 10 independent trials for each pattern.

Figure~\ref{fig: disconnected/connected} illustrates that, consistent with the findings in Figure~\ref{fig: intro full}, a significant discrepancy exists between the behavior of depth-2 matrices and that of deeper matrices. This discrepancy becomes notably more pronounced for the disconnected observation patterns.
\begin{figure}[ht]
    \centering
    \includegraphics[width=0.9\linewidth]{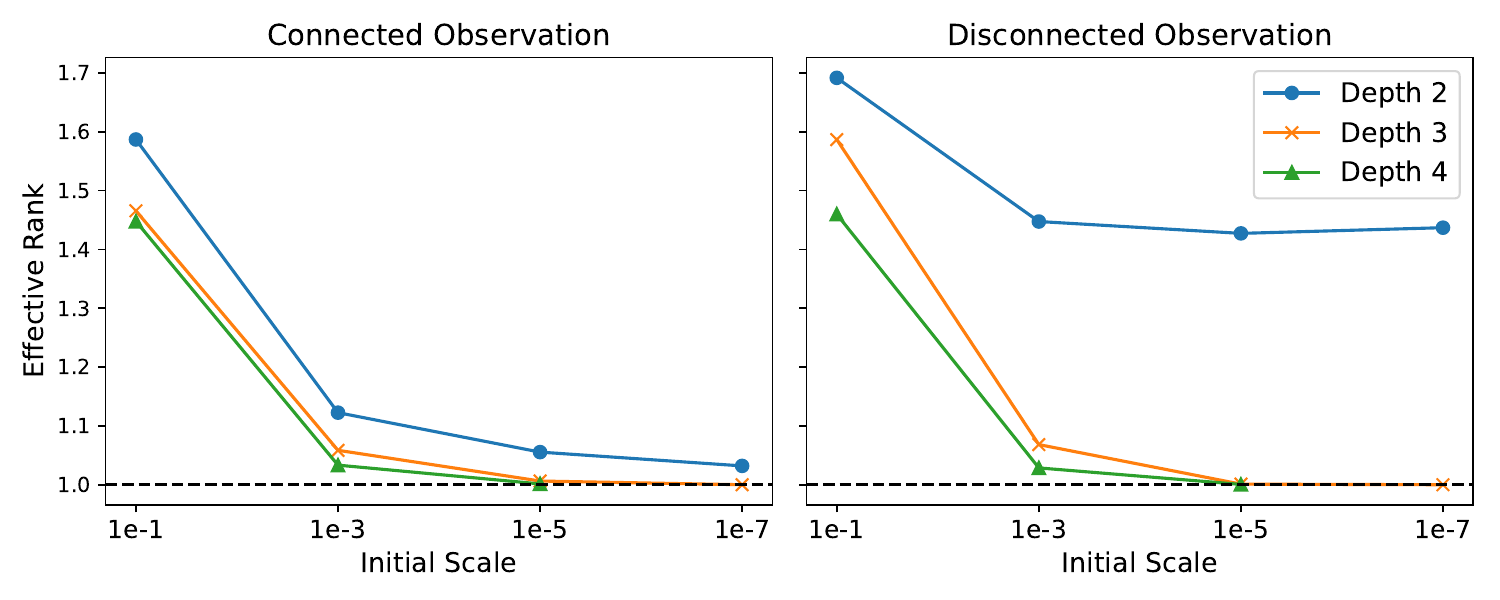}
    \caption{The left panel shows the averaged effective rank of all possible connected patterns as a function of the initial scale $\alpha^L$. The right panel displays the averaged effective rank of all possible disconnected patterns.}
    \label{fig: disconnected/connected}
\end{figure}

\vspace{-5pt}
\paragraph{Numerical Solutions of the Implicit Equations.}
We next provide a theoretical validation of our main claim: coupled dynamics induce a low-rank bias, whereas decoupled dynamics do not. This validation builds on Theorem~\ref{thm: block-diagonal}, under various conditions, by numerically solving the equations while varying the ground truth value $w^*$, the number of blocks $n$, and the block size $s$. The results shown in Figure~\ref{fig: wstar=1_d=10_n=2} (for $w^*=1$, $n=5$, $s=2$), Figure~\ref{fig: wstar10_d10} (for $w^*=10, n=10, s=1$), Figure~\ref{fig: wstar0.1_d10} (for $w^*=0.1, n=10, s=1$), and Figure~\ref{fig: wstar1_d3} (for $w^*=1, n=3, s=1$) provide strong supporting evidence for the claim.

\paragraph{Gradient Descent Validation.}
Furthermore, we ran gradient descent with a sufficiently small step size to validate our derived equations. For the results shown in Figure~\ref{fig: wstar1_d3_gd}, we replicated the setup of Figure~\ref{fig: wstar1_d3} ($w^*=1, n=3, s=1$), excluding the $\alpha=10^{-10}$ case due to prohibitive computation time. The observed values closely match the theoretical predictions from Theorem~\ref{thm: block-diagonal}, as illustrated in Figure~\ref{fig: wstar1_d3}.

\begin{figure}[t]
    \centering
    \includegraphics[width=1\linewidth]{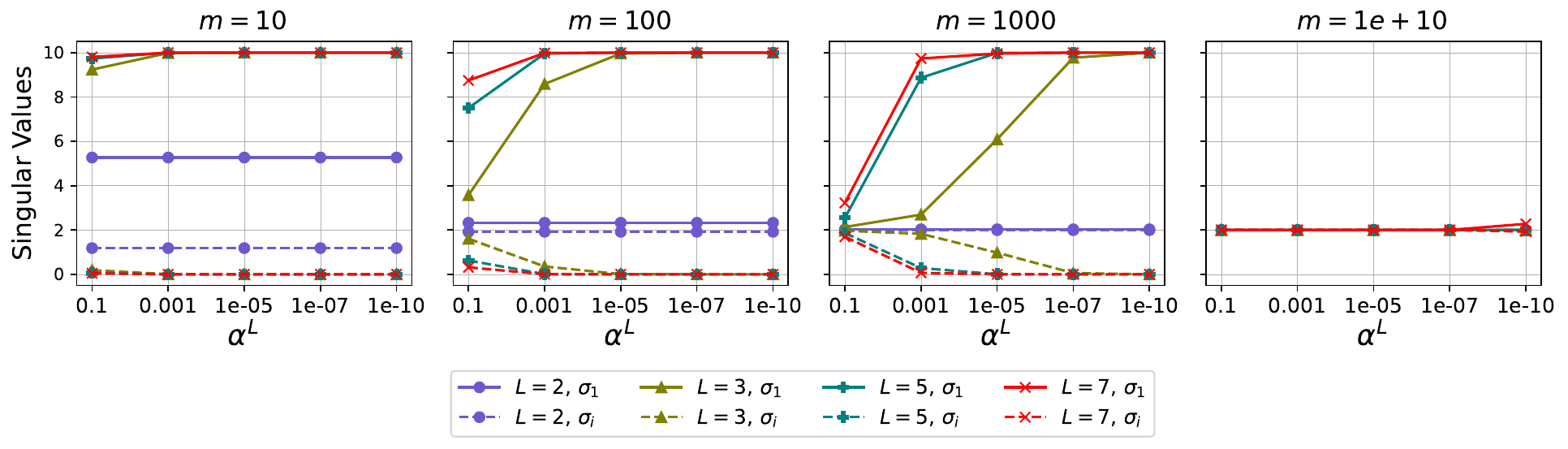}
    \caption{Singular values of $\mW_{L:1}(\infty)$ (numerically obtained from Theorem~\ref{thm: block-diagonal}) against initialization scale $\alpha^L$ for the block-diagonal observation task. Solid lines represent the largest singular value $\sigma_1$; dashed lines denote the identical singular values $\sigma_i$ for $i \in \{2, \dots, n\}$. Note that $\sigma_j$ for $j \in \{n+1, \dots, d\}$ are all zero. For finite $m$, these results show that both greater depth $L$ and a smaller initial scale $\alpha$ strengthen the low-rank bias, in contrast to the $L=2$ case. Conversely, when $m$ is extremely large (e.g., $m = 10^{10}$), approximating an $\alpha \mI_d$ rank $d$ initialization, the dynamics decouple and cannot achieve the minimal low-rank solution, regardless of $L$ or $\alpha$.}
    \label{fig: wstar=1_d=10_n=2}
\end{figure}

\begin{figure}[h!]
    \centering
    \includegraphics[width=0.99\linewidth]{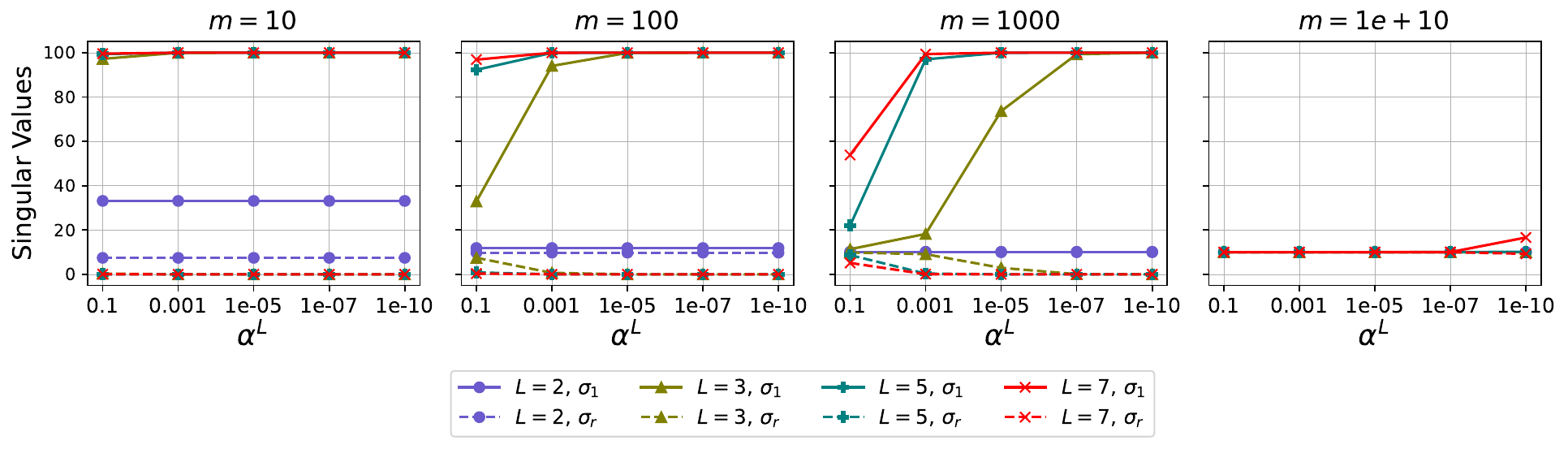}
    \caption{Numerical conditions identical to those in Figure~\ref{fig: implicit bias 10x10}, except with ground truth value $w^*=10$ and dimension $d=10$ where the block size is $s=1$.}
    \label{fig: wstar10_d10}
\end{figure}

\begin{figure}[h!]
    \centering
    \includegraphics[width=0.99\linewidth]{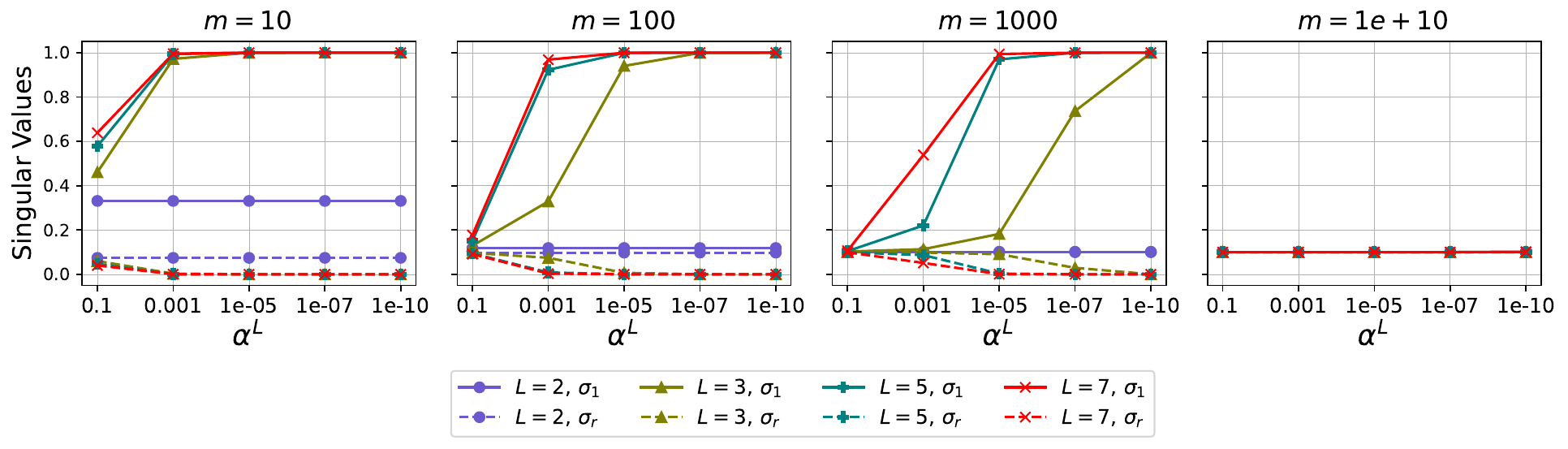}
    \caption{Numerical conditions identical to those in Figure~\ref{fig: implicit bias 10x10}, except with ground truth value $w^*=0.1$ and dimension $d=10$ where the block size is $s=1$.}
    \label{fig: wstar0.1_d10}
\end{figure}

\begin{figure}[h!]
    \centering
    \includegraphics[width=0.99\linewidth]{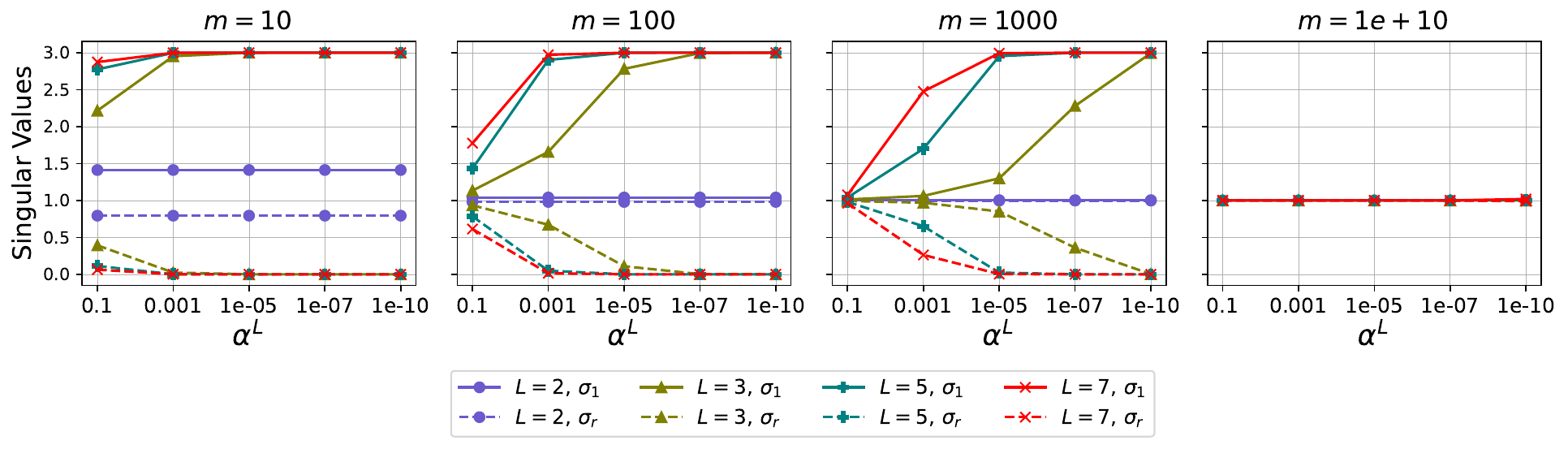}
    \caption{Numerical conditions identical to those in Figure~\ref{fig: implicit bias 10x10}, except with ground truth value $w^*=1$ and dimension $d=3$ where the block size is $s=1$.}
    \label{fig: wstar1_d3}
\end{figure}

\begin{figure}[h!]
    \centering
    \includegraphics[width=0.99\linewidth]{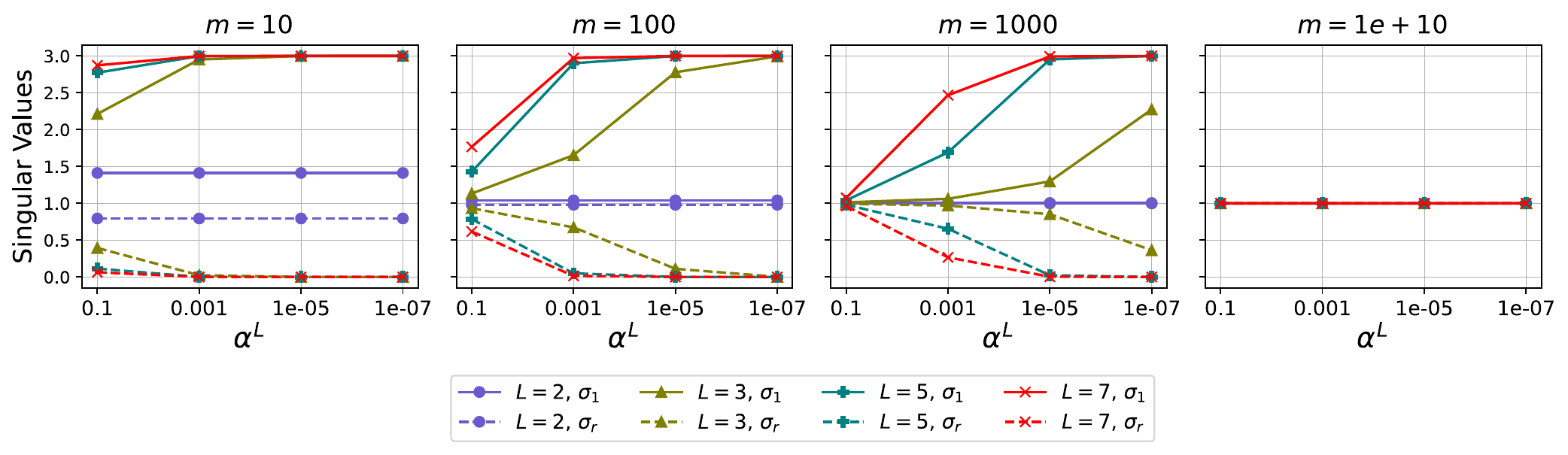}
    \caption{Gradient descent experiments conducted under conditions identical to those in Figure~\ref{fig: wstar1_d3}.}
    \label{fig: wstar1_d3_gd}
\end{figure}

\paragraph{Comparison with Gaussian Initialization.}
To validate that our initialization scheme~\eqref{eqn: alpha alpha m init} can achieve comparable outcomes to Gaussian initialization while offering more control, we conducted experiments on a $3 \times 3$ matrix completion task with diagonal observations (i.e., $w_{11}^*=w_{22}^*=w_{33}^*=1$). While our scheme allows initial rank properties to be adjusted via the parameter $m$, Gaussian initialization's inherent randomness precludes such direct control. Therefore, for comparison with Gaussian initialization, we ran 1000 independent seeds and sorted the converged solutions by their rank. A comparison of the results in Figure~\ref{fig: m vs gaussian} suggests that the behavioral trends may appear similar. In the depth-2 case, both initializations tend to converge to high-rank solutions. Moreover, for both initializations, a clear gap emerges between $L=2$ and $L=3$, with the depth-3 model exhibiting a stronger low-rank bias. For deeper networks ($L \geq 3$), the tendency to converge toward lower-rank solutions becomes increasingly pronounced as depth increases.

\begin{figure}[h!]
  \centering
  \begin{subfigure}{0.45\textwidth}
    \centering
    \adjustbox{valign=c}{%
      \includegraphics[width=\linewidth]{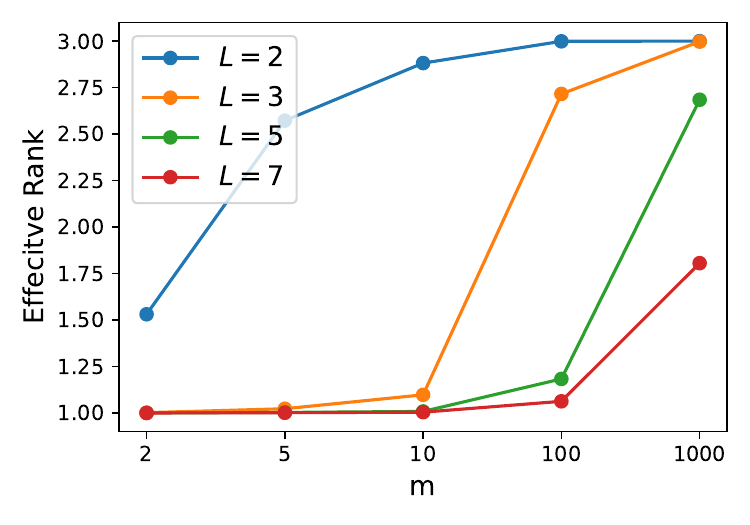}%
    }
    \caption{Results from Initialization using~\eqref{eqn: alpha alpha m init}.}
    \label{fig: intro }
  \end{subfigure}
  \hfill
  \begin{subfigure}{0.45\textwidth}
    \centering
    \adjustbox{valign=c}{%
      \includegraphics[width=\linewidth]{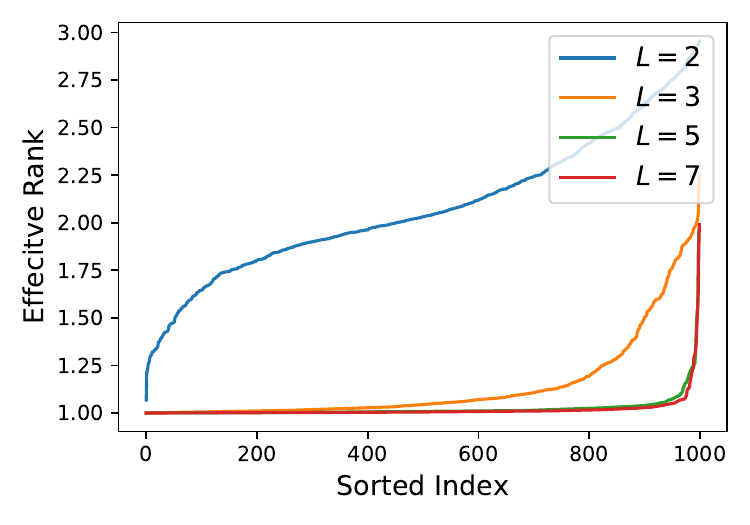}%
    }
    \caption{Results from Gaussian initialization.}
    \label{fig: d}
  \end{subfigure}
\caption{
\textbf{(a)} Effective rank for the initialization scheme in~\eqref{eqn: alpha alpha m init}. The x-axis denotes the parameter $m$, which controls the initial rank characteristics of the model, while the y-axis represents the corresponding effective rank after convergence.
\textbf{(b)} Effective rank distributions for Gaussian initialization. The results are from 1000 independent trials, sorted by their converged effective rank. The x-axis denotes the sorted trial index (from lowest to highest converged rank), and the y-axis represents the corresponding effective rank after convergence.
}
\vspace{-10pt}
  \label{fig: m vs gaussian}
\end{figure}

\textbf{Noisy Diagonal Experiments.} We also experimented with observing noisy diagonal entries using gradient descent. In particular, instead of fixing all ground truth diagonal entries to be equal, we perturbed them as $\left(\mW^*\right)_{ii} = w^* + \epsilon_i$, where $\epsilon_i \sim \gN(0, \sigma^2)$. We set ($w^* = 1$), dimension ($d = 5$), and used the initialization scheme~\eqref{eqn: alpha alpha m init} with $m=100$. For each configuration, we independently sampled 10 noise realizations and report the average behavior along with the standard deviations.

As shown in Figure~\ref{fig: noisy diagonal}, the qualitative trends are consistent with our theory. When $L = 2$, the model converges to a high-rank solution largely independently of the initialization scale, whereas for deeper networks the stable rank decreases as depth increases, indicating a stronger low-rank bias. We also observe that larger noise levels lead to more pronounced low-rank behavior. This is natural, since increasing the noise drives the ground truth further away from the identity. Moreover, the dependence on the noise magnitude appears continuous: in the small noise regime (leftmost panel), the change in stable rank is relatively mild, while in the larger noise regime (rightmost panel), the gap becomes more substantial. These experiments suggest that our depth-induced low-rank phenomenon is empirically robust to moderate perturbations of the diagonal entries.

\begin{figure}[ht]
    \centering
    \includegraphics[width=1\linewidth]{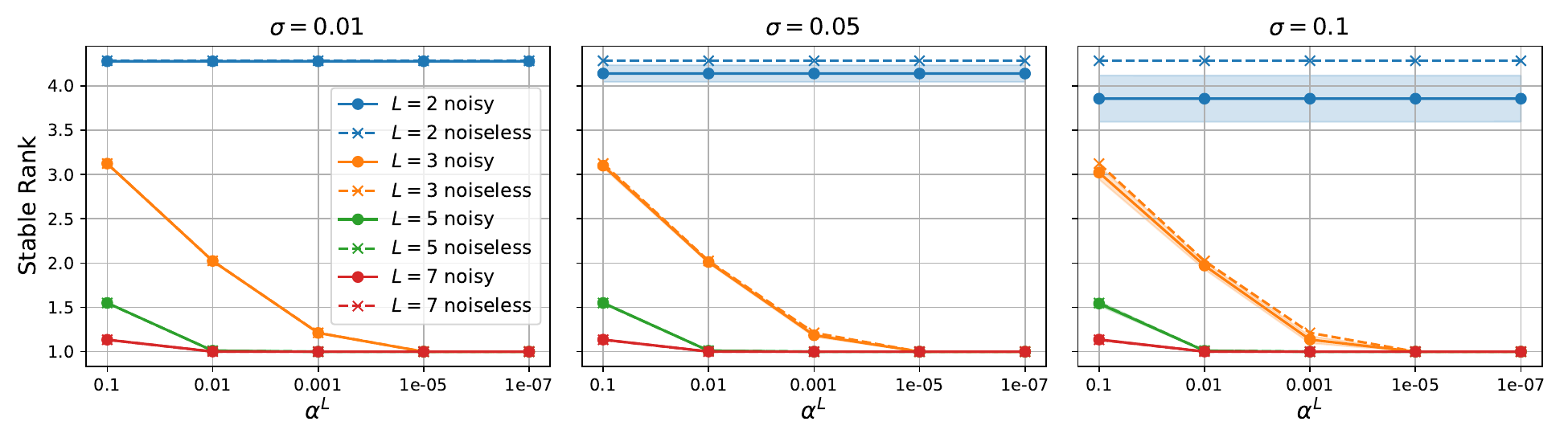}
    \caption{
    Limiting stable rank (y-axis) as a function of $\alpha^L$ (x-axis) under noisy diagonal observations. Dashed lines indicate the noiseless baseline, and solid lines indicate the noisy case. The noise standard deviation is set to $\sigma = 0.01$ (leftmost), $\sigma = 0.05$ (middle), and $\sigma = 0.1$ (rightmost). The depth dependent low-rank bias persists and follows a trend similar to the noiseless setting.
    }
    \vspace{-5pt}
    \label{fig: noisy diagonal}
\end{figure}

\textbf{Non-equal Diagonal Experiments.}
We also experimented with observing non-equal diagonal entries using gradient descent. In particular, instead of fixing all ground truth diagonal entries to be equal, we assigned different values to each diagonal entry. We set the dimension to $d = 5$ and take the diagonal entries of $\mW^*$ to be $0, 0.5, 1, 1.5, 2$, respectively, and used the initialization scheme~\eqref{eqn: alpha alpha m init}.

As shown in Figure~\ref{fig: nonequal diagonal}, the qualitative trends are consistent with our theory. When $L = 2$, the model converges to a high rank solution independently of the initialization scale, whereas for deeper networks the stable rank decreases as depth increases. For the case $m = \infty$ (rightmost plot), all models converge to high rank solutions regardless of depth, which is consistent with Theorem~\ref{thm: block-diagonal}.

\begin{figure}[ht]
    \centering
    \includegraphics[width=1\linewidth]{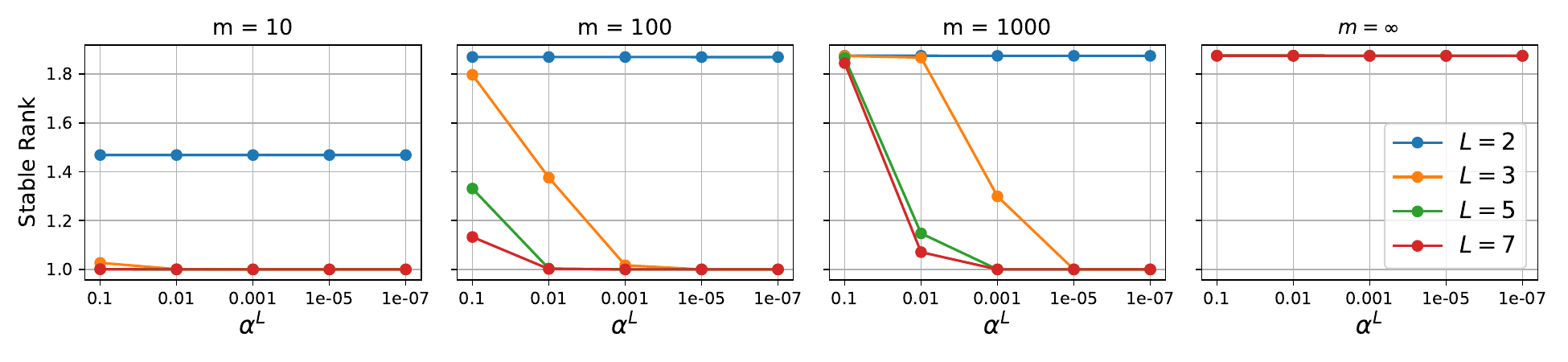}
    \caption{
    Limiting stable rank (y-axis) as a function of the initialization scale (x-axis) under non-equal diagonal observations. The low-rank bias induced by coupled training dynamics persists and closely matches the behavior in the equal-diagonal setting described in Theorem~\ref{thm: block-diagonal} and Figure~\ref{fig: implicit bias 10x10}.
}
    \label{fig: nonequal diagonal}
\end{figure}

\textbf{Additional Optimizer Ablations.}
We also experimented with other optimizers, including adaptive methods, such as stochastic gradient descent (SGD), gradient descent with momentum, Adam, RMSProp, and Adagrad. In this experiment, we fix the dimension to $d = 5$, use Gaussian initialization with diagonal observations ($s=1$) with $w^* = 1$, and run gradient based optimization with a sufficiently small step size over 10 random seeds. For each optimizer, we use the default hyperparameters from the PyTorch implementation, and for SGD we update the model using one observed entry per iteration.

The results in Figures~\ref{fig: MC_SGD}-\ref{fig: MC_adagrad} align well with our theory: for depth-2 (which induces decoupled dynamics), the model converges to high-rank solutions across initialization scales, whereas for depth $L \geq 3$ (which induces coupled dynamics) the solutions become increasingly low-rank as the initialization scale decreases and as depth increases.

\begin{figure}[h!]
    \centering
    \includegraphics[width=1\linewidth]{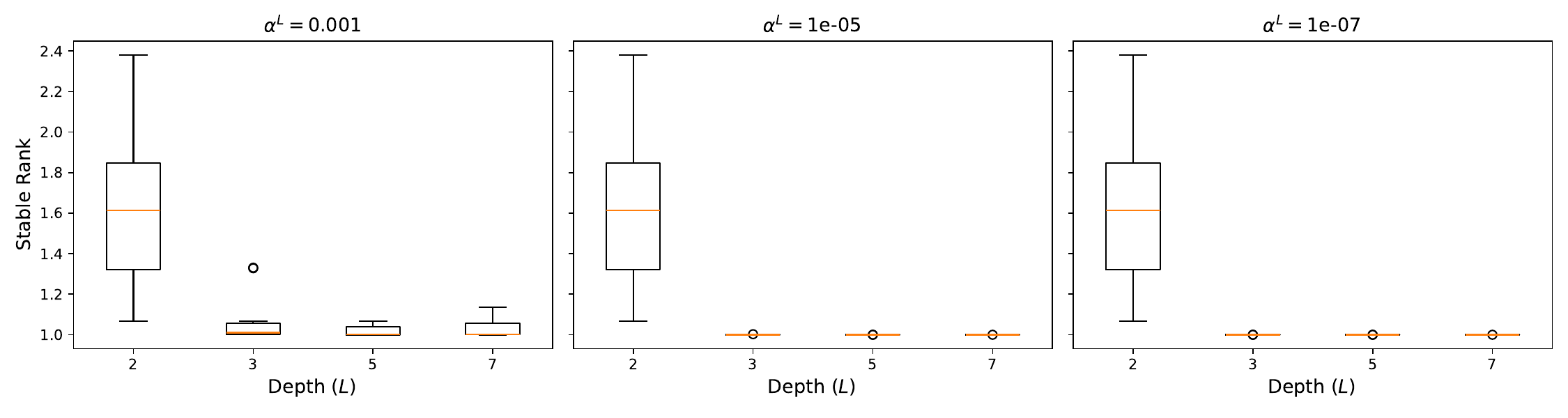}
    \caption{Final stable rank as a function of depth. Each panel corresponds to a different initialization scale. Results are obtained using SGD.}
    \label{fig: MC_SGD}
\end{figure}

\begin{figure}[h!]
    \centering
    \includegraphics[width=1\linewidth]{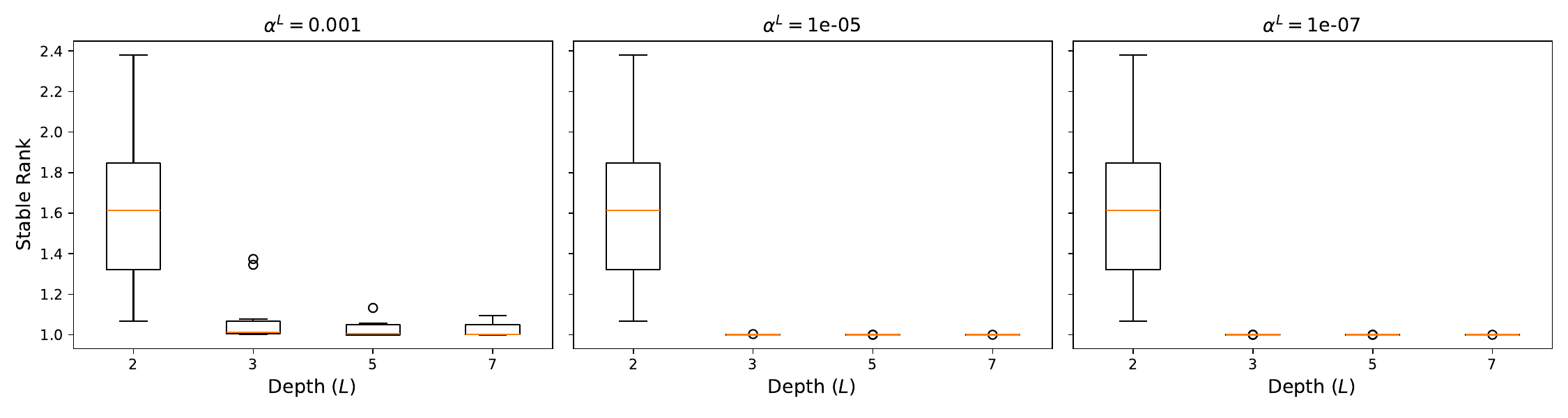}
    \caption{Final stable rank as a function of depth. Each panel corresponds to a different initialization scale. Results are obtained using GD with momentum.}
    \label{fig: MC_GDM}
\end{figure}

\begin{figure}[h!]
    \centering
    \includegraphics[width=1\linewidth]{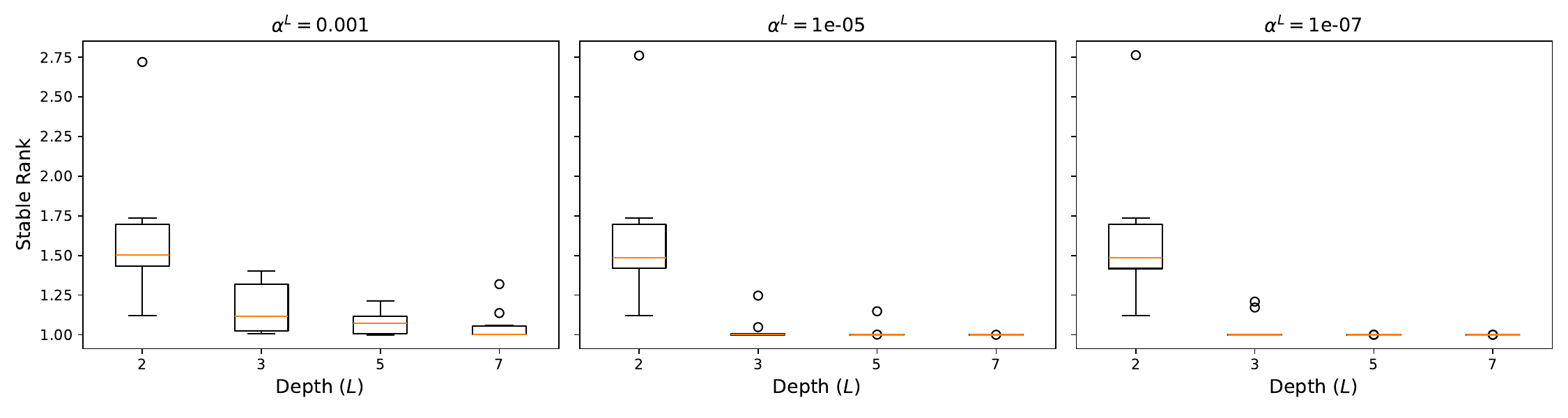}
    \caption{Final stable rank as a function of depth. Each panel corresponds to a different initialization scale. Results are obtained using Adam.}
    \label{fig: MC_adam}
\end{figure}

\begin{figure}[h!]
    \centering
    \includegraphics[width=1\linewidth]{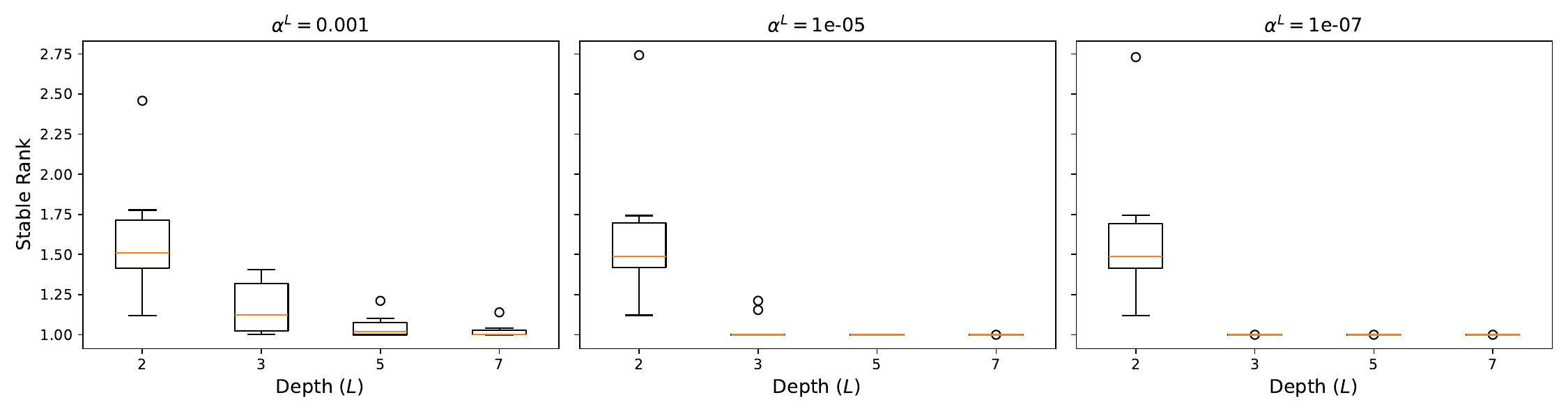}
    \caption{Final stable rank as a function of depth. Each panel corresponds to a different initialization scale. Results are obtained using RMSProp.}
    \label{fig: MC_rmsprop}
\end{figure}

\begin{figure}[h!]
    \centering
    \includegraphics[width=1\linewidth]{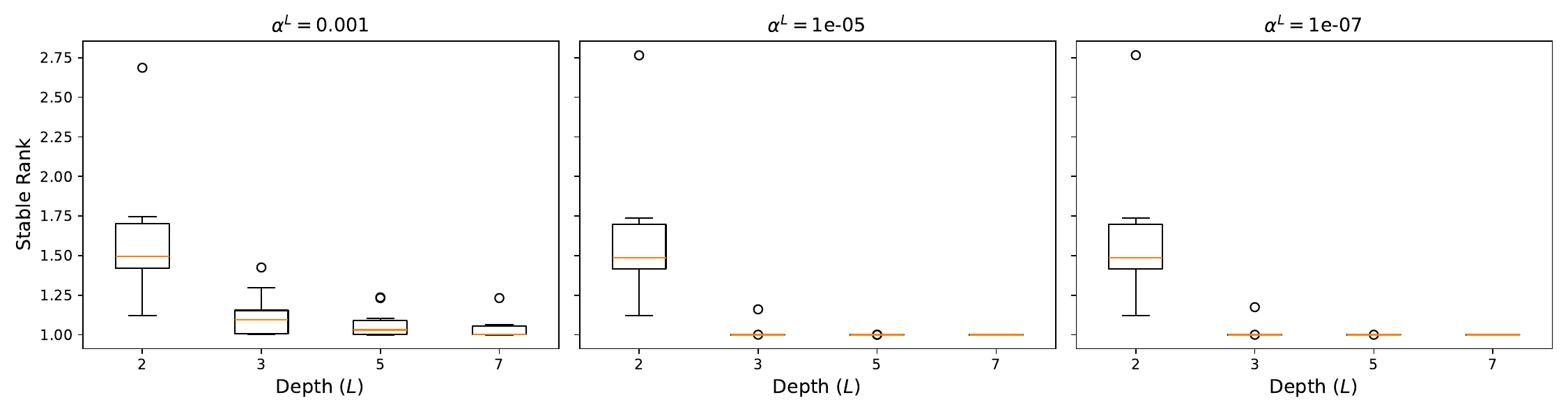}
    \caption{Final stable rank as a function of depth. Each panel corresponds to a different initialization scale. Results are obtained using Adagrad.}
    \label{fig: MC_adagrad}
\end{figure}

\newpage

\subsubsection{Experiments in Neural Networks}
\label{appendix: implicit bias experiments in NN}

To study how depth influences low rank bias in practice, we train ResNet and VGG models across varying depths. While \citet{huh2021low} show that deeper networks yield lower rank embeddings, their analysis does not address the weight matrices. Following \citet{galanti2023sgd}, we measure the effective rank of the weight matrices directly and find that deeper networks are biased toward low-rank solutions.

To be more specific, we train ResNet–18, 34, 50, and 101, as well as VGG–11, 13, 16, and 19, on CIFAR-10 and CIFAR-100 for 200 epochs with a batch size of 128. Training uses SGD with momentum, Adam, and RMSProp. The initial learning rates are 0.1 for SGD with momentum, and 0.001 for Adam and RMSProp. We apply weight decay of 0.0005 for SGD with momentum and 1e-05 for Adam and RMSProp. A cosine annealing scheduler is used together with standard data augmentation (horizontal flipping and random cropping).

We measure the effective rank across all layers except the final one and average them to obtain a single scalar. Following~\citet{galanti2023sgd}, each weight tensor $\mathbf{Z} \in \mathbb{R}^{c_{\rm in} \times c_{\rm out} \times k_1 \times k_2}$ of a convolutional layer, where $c_{\rm in}$ and $c_{\rm out}$ denote the numbers of input and output channels and $(k_1, k_2)$ is the kernel size, is reshaped into a matrix  $\mW \in \mathbb{R}^{c_{\rm in} \times (c_{\rm out} k_1 k_2)}$ to measure the layer’s effective rank. We report averages over five runs with 95\% confidence intervals.

The results in Figures~\ref{fig: resnet-cifar10} to \ref{fig: vgg-cifar100} for SGD with momentum, Figures~\ref{fig: resnet-cifar10-adam} to \ref{fig: vgg-cifar100-adam} for Adam, and Figures~\ref{fig: resnet-cifar10-rmsprop} to \ref{fig: vgg-cifar100-rmsprop} for RMSProp consistently show that the average effective rank decreases as depth increases. This trend is consistent with Theorem~\ref{thm: block-diagonal}, which establishes the depth induced low-rank bias in matrix completion settings.

\clearpage
\begin{figure}[ht]
    \centering
    \includegraphics[width=0.9\linewidth]{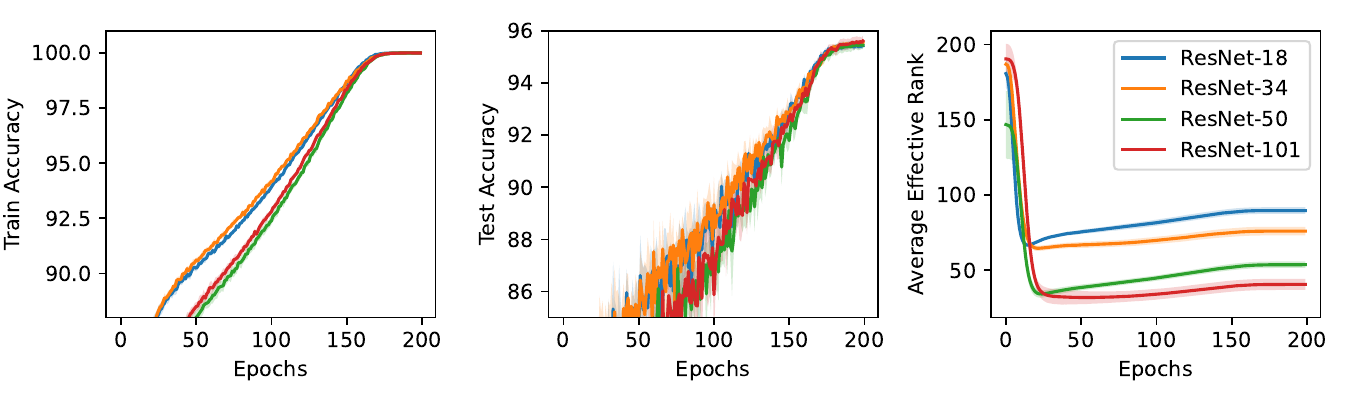}
    \caption{We train CIFAR-10 with ResNet models ranging from 18 to 101 layers using SGD with momentum, averaging results over five independent runs with 95\% confidence intervals. The leftmost plot reports the training accuracy, the middle plot the test accuracy, and the rightmost plot the average effective rank. As depth increases, the average effective rank decreases.}
    \label{fig: resnet-cifar10}
\end{figure}

\begin{figure}[ht]
    \centering
    \includegraphics[width=0.9\linewidth]{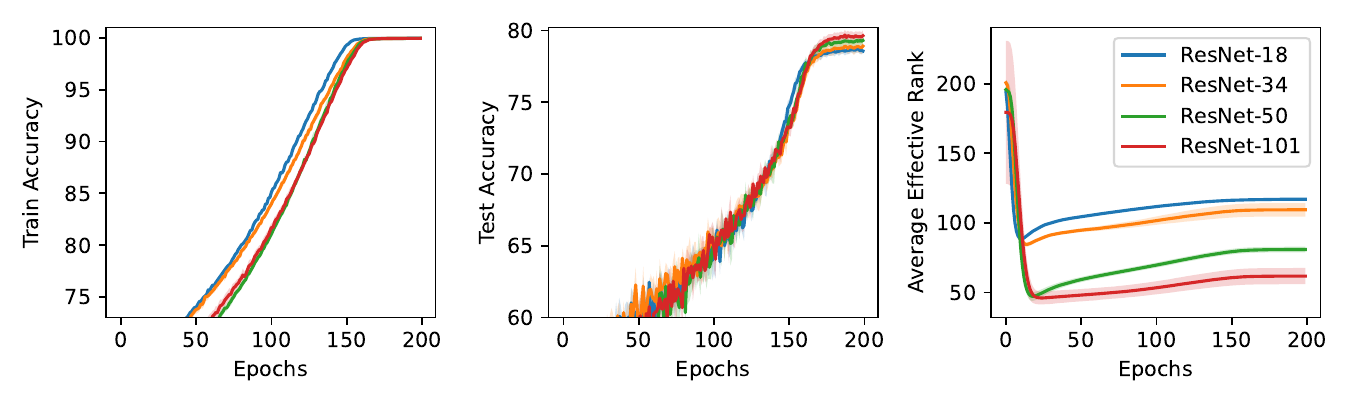}
    \caption{The results for CIFAR-100 with ResNet-18 to 101, under the same conditions as in Figure~\ref{fig: resnet-cifar10}.}
    \label{fig: resnet-cifar100}
\end{figure}

\begin{figure}[ht]
    \centering
    \includegraphics[width=0.9\linewidth]{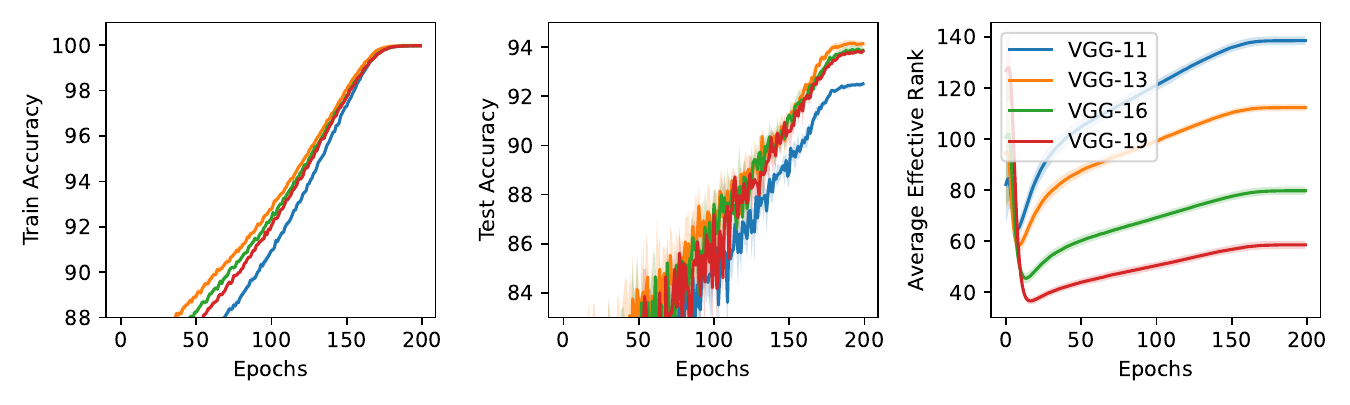}
    \caption{The results for CIFAR-10 with VGG-11 to 19, under the same conditions as in Figure~\ref{fig: resnet-cifar10}.}
    \label{fig: vgg-cifar10}
\end{figure}

\begin{figure}[h!]
    \centering
    \includegraphics[width=0.9\linewidth]{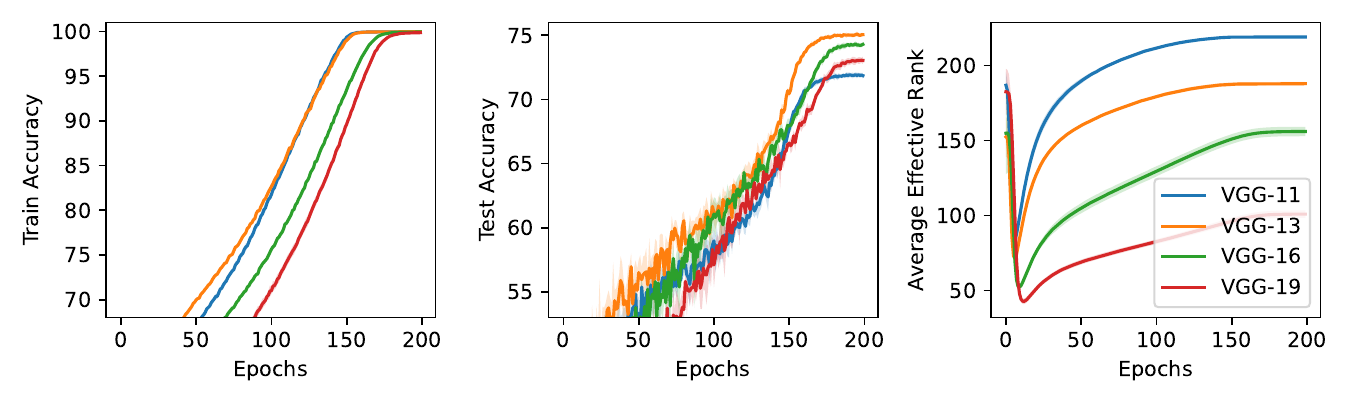}
    \caption{The results for CIFAR-100 with VGG-11 to 19, under the same conditions as in Figure~\ref{fig: resnet-cifar10}.}
    \label{fig: vgg-cifar100}
\end{figure}

\clearpage

\begin{figure}[ht]
    \centering
    \includegraphics[width=0.9\linewidth]{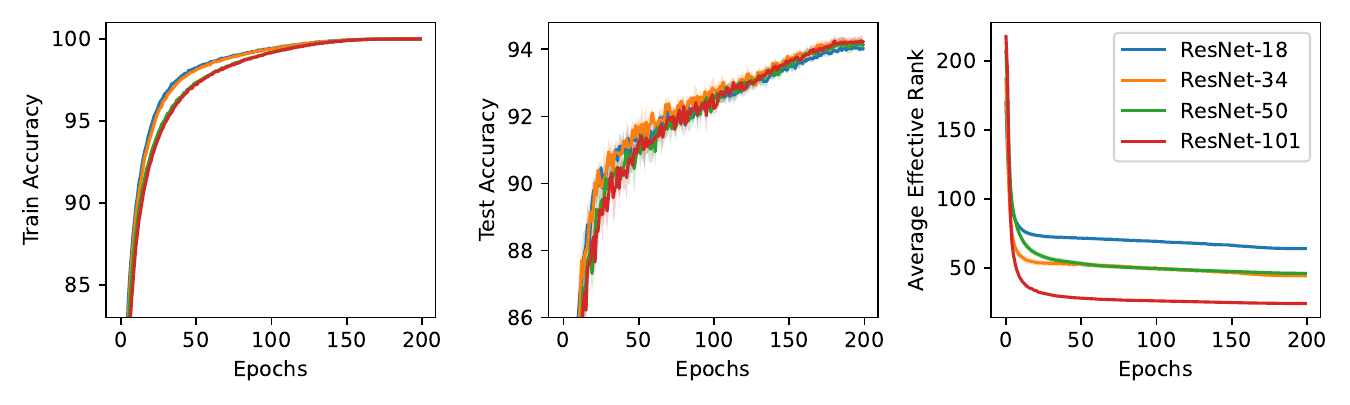}
    \caption{We train CIFAR-10 with ResNet models ranging from 18 to 101 layers using Adam, averaging results over five independent runs with 95\% confidence intervals. The leftmost plot reports the training accuracy, the middle plot the test accuracy, and the rightmost plot the average effective rank. As depth increases, the average effective rank decreases.}
    \label{fig: resnet-cifar10-adam}
\end{figure}

\begin{figure}[ht]
    \centering
    \includegraphics[width=0.9\linewidth]{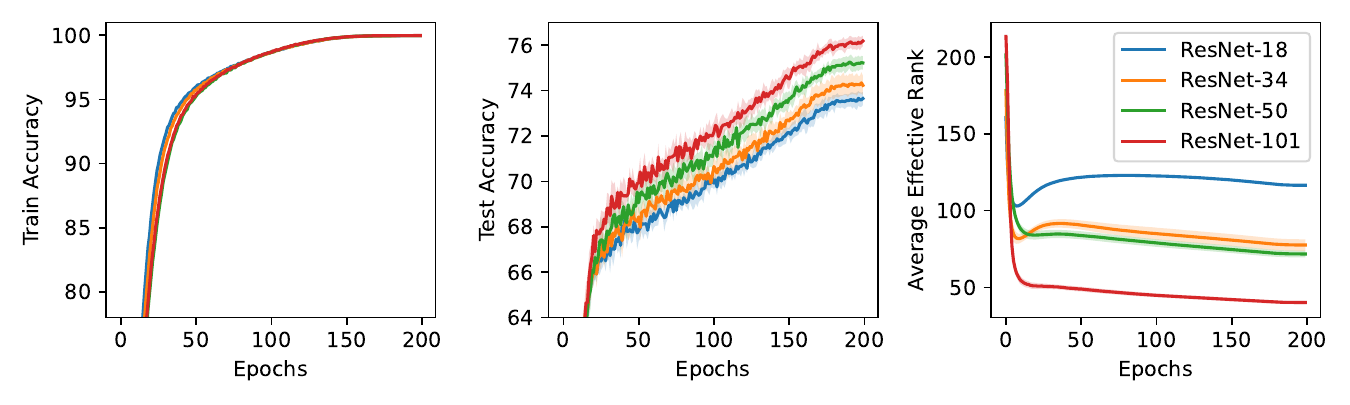}
    \caption{The results for CIFAR-100 with ResNet-18 to 101, under the same conditions as in Figure~\ref{fig: resnet-cifar10-adam}.}
    \label{fig: resnet-cifar100-adam}
\end{figure}

\begin{figure}[ht]
    \centering
    \includegraphics[width=0.9\linewidth]{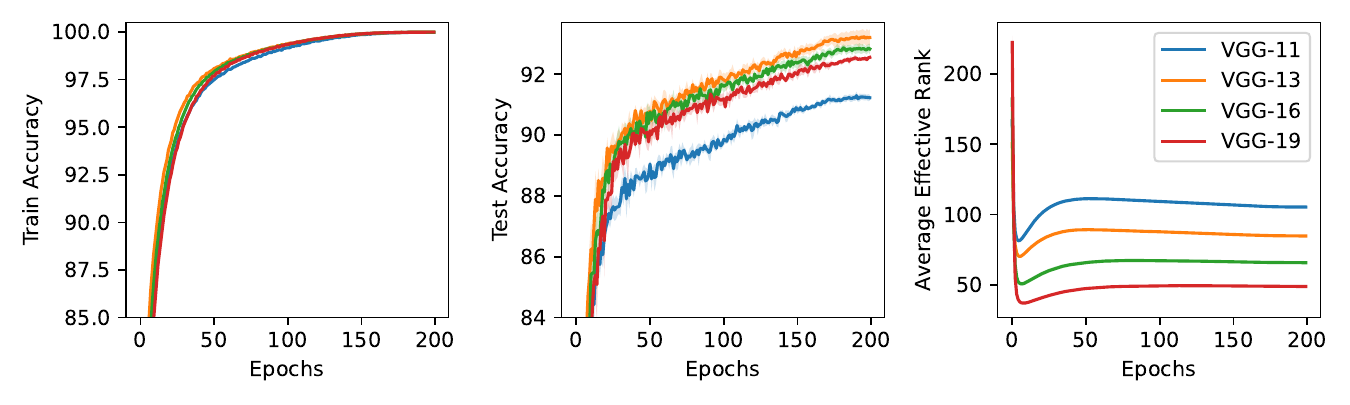}
    \caption{The results for CIFAR-10 with VGG-11 to 19, under the same conditions as in Figure~\ref{fig: resnet-cifar10-adam}.}
    \label{fig: vgg-cifar10-adam}
\end{figure}

\begin{figure}[h!]
    \centering
    \includegraphics[width=0.9\linewidth]{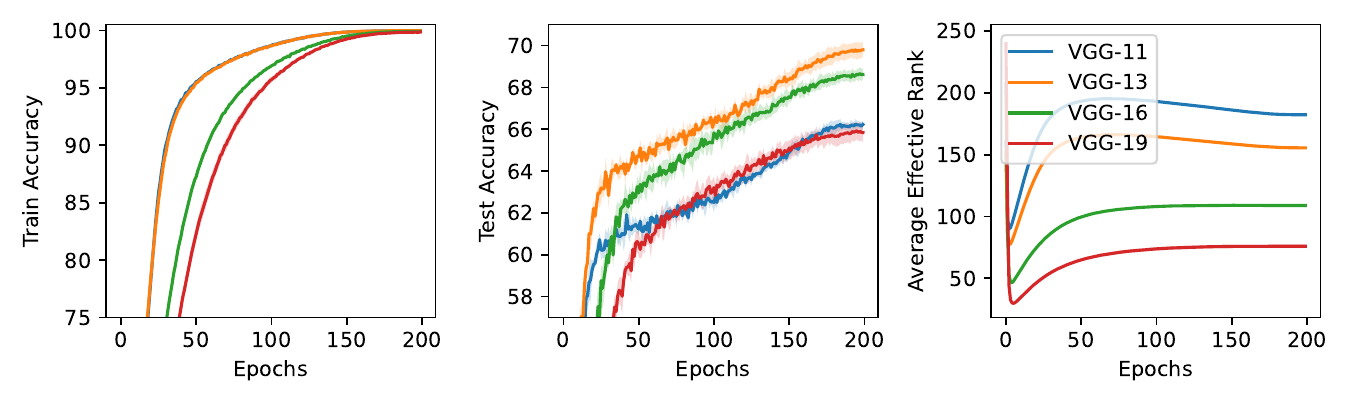}
    \caption{The results for CIFAR-100 with VGG-11 to 19, under the same conditions as in Figure~\ref{fig: resnet-cifar10-adam}.}
    \label{fig: vgg-cifar100-adam}
\end{figure}

\clearpage
\begin{figure}[ht]
    \centering
    \includegraphics[width=0.9\linewidth]{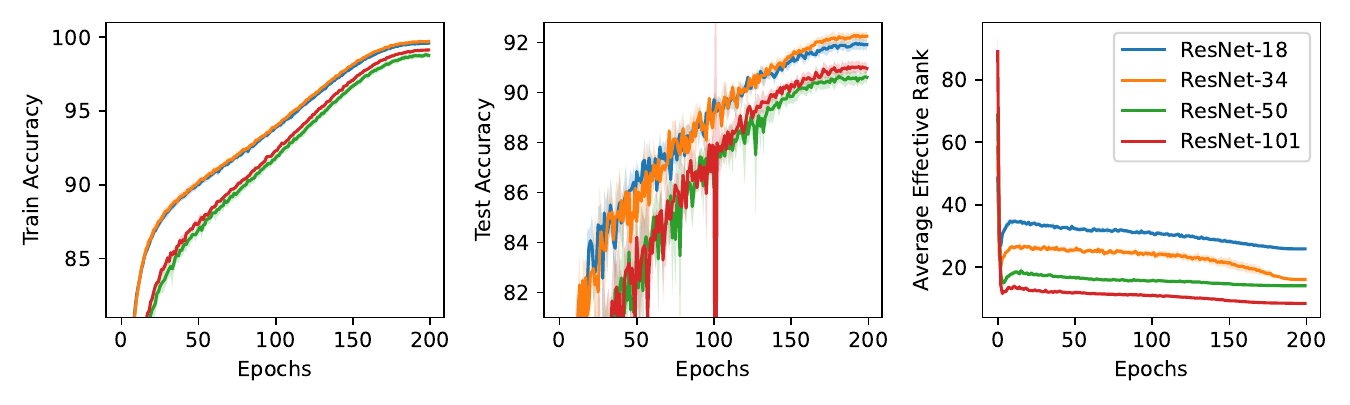}
    \caption{We train CIFAR-10 with ResNet models ranging from 18 to 101 layers using RMSProp, averaging results over five independent runs with 95\% confidence intervals. The leftmost plot reports the training accuracy, the middle plot the test accuracy, and the rightmost plot the average effective rank. As depth increases, the average effective rank decreases.}
    \label{fig: resnet-cifar10-rmsprop}
\end{figure}

\begin{figure}[ht]
    \centering
    \includegraphics[width=0.9\linewidth]{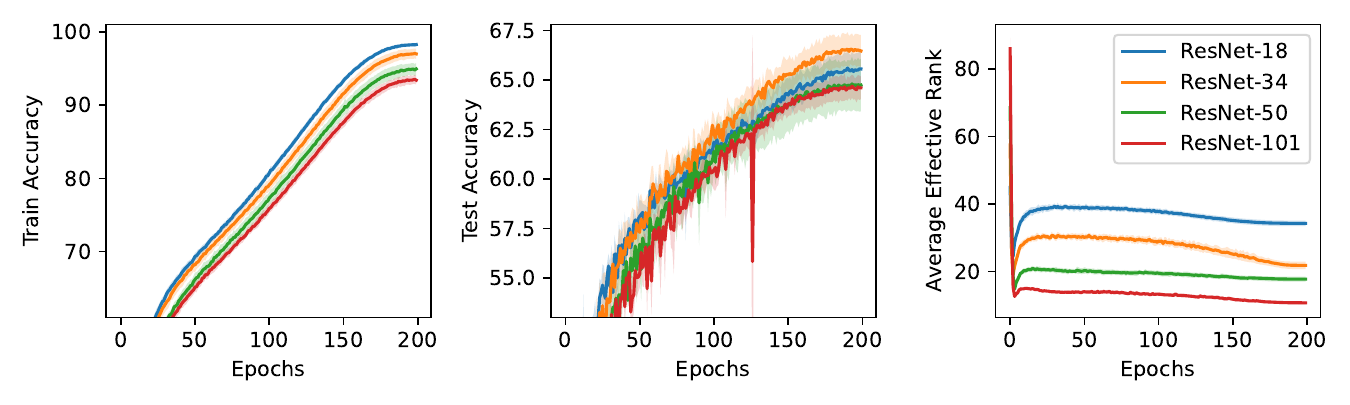}
    \caption{The results for CIFAR-100 with ResNet-18 to 101, under the same conditions as in Figure~\ref{fig: resnet-cifar10-rmsprop}.}
    \label{fig: resnet-cifar100-rmsprop}
\end{figure}

\begin{figure}[ht]
    \centering
    \includegraphics[width=0.9\linewidth]{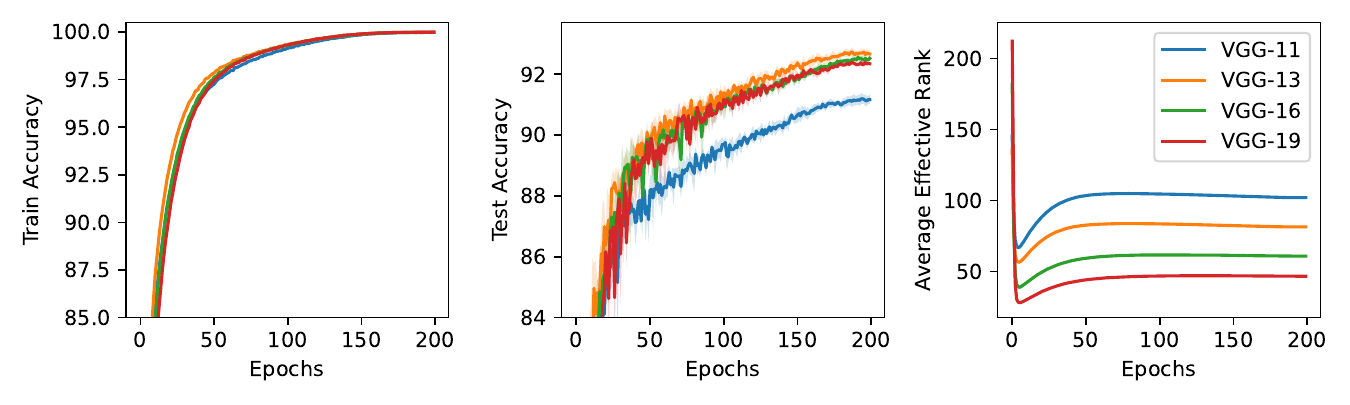}
    \caption{The results for CIFAR-10 with VGG-11 to 19, under the same conditions as in Figure~\ref{fig: resnet-cifar10-rmsprop}.}
    \label{fig: vgg-cifar10-rmsprop}
\end{figure}

\begin{figure}[h!]
    \centering
    \includegraphics[width=0.9\linewidth]{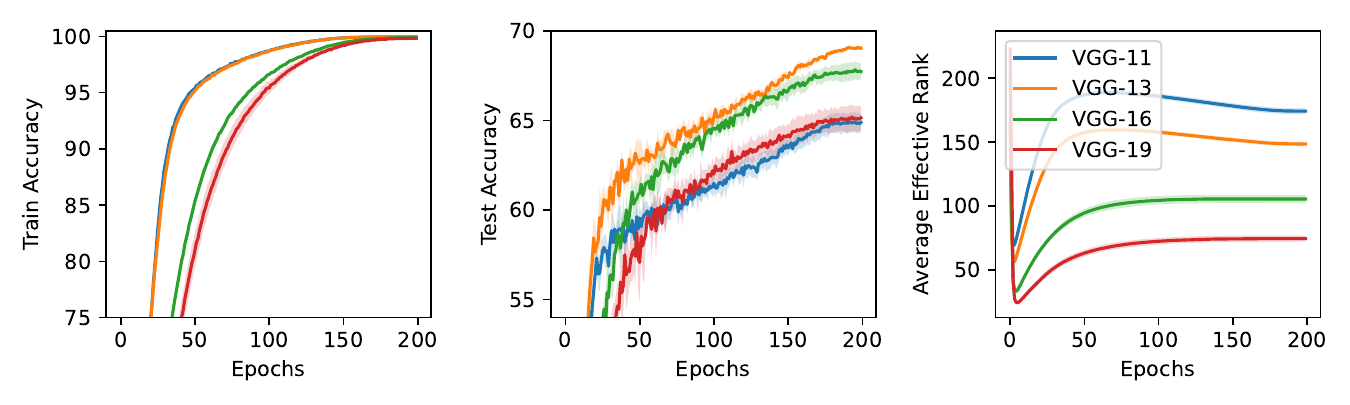}
    \caption{The results for CIFAR-100 with VGG-11 to 19, under the same conditions as in Figure~\ref{fig: resnet-cifar10-rmsprop}.}
    \label{fig: vgg-cifar100-rmsprop}
\end{figure}

\clearpage

\paragraph{Coupled vs. Decoupled Dynamics in NN.}
To examine whether coupled and decoupled training dynamics intensify low-rank bias in practical neural networks, we conducted an additional experiment with fully connected networks with ReLU activations, under both Gaussian and identity-based initializations, using the CIFAR-10 dataset. For the Gaussian initialization, all layers are initialized with i.i.d. Gaussian weights. For the identity-based initialization, all hidden layers are initialized as scaled identity matrices, while the first and last layers are initialized with Gaussian weights, since these layers are not square.

We train networks of depth $L \in \{2, 3, 5\}$ with a fixed hidden width of $512$ for $100$ epochs, using SGD with momentum and a constant learning rate of $0.01$. The results show that, even when both initializations successfully achieve low training loss, the low-rank bias is substantially stronger under Gaussian initialization compared to identity initialization, which indicates that low-rank bias is intensified under coupled training dynamics in a way that is consistent with our theoretical findings.

Furthermore, as depth increases, the stable rank of the weight matrices decreases under Gaussian initialization. In contrast, with identity-based initialization, deeper networks tend to converge to higher rank solutions. A plausible explanation is that, as depth grows, a larger fraction of the layers are initialized using identity (recall that the first and last layers are initialized under Gaussian), which makes the overall dynamics closer to a decoupled regime and therefore less biased toward low-rank solutions.

\begin{figure}[ht]
    \centering
    \includegraphics[width=0.9\linewidth]{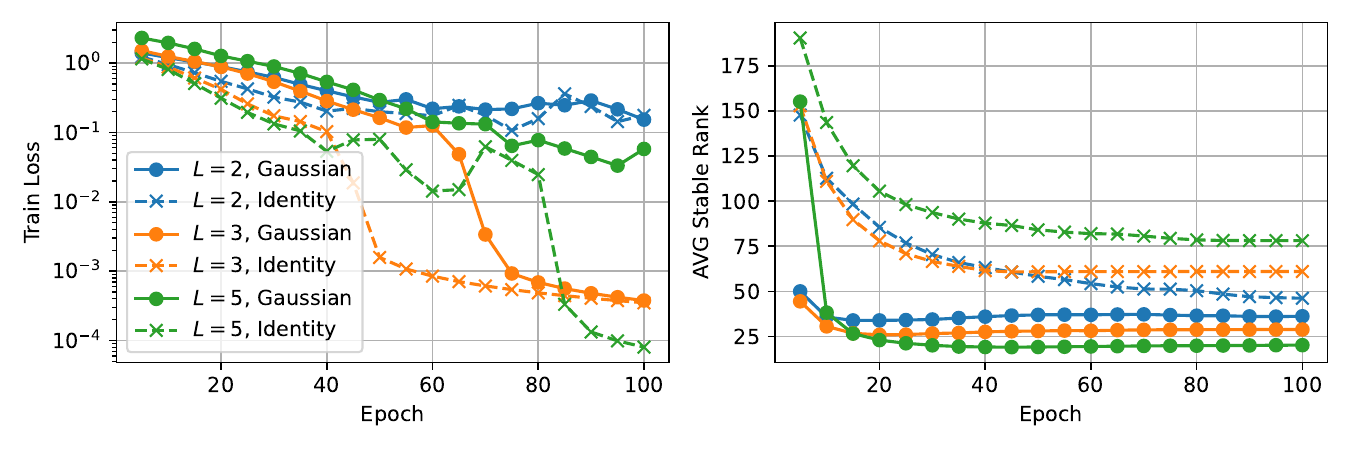}
    \caption{
    Left: training loss (log scale). Right: average stable rank across all layers except the last. Solid lines correspond to Gaussian initialization and dashed lines to identity-based initialization. Gaussian initialization (corresponding to coupled training dynamics) converges to noticeably lower rank than identity-based initialization (corresponding to more decoupled training dynamics).
    }
    \label{fig:placeholder}
\end{figure}

\subsection{Loss of Plasticity Experiments}
Section~\ref{sec: LoP 2 by 2} discusses a scenario where pre-training employs diagonal entries, after which an off-diagonal term (specifically, $w_{12}^*$) is introduced to restore connectivity, leading to coupled dynamics. Theorem~\ref{thm: LoP full theorem} establishes that, in this situation, the model indeed does not converge to a low-rank solution. To empirically validate this theoretical finding, we conducted experiments using the family of initializations~\eqref{eqn: alpha alpha m init} tailored to this specific scenario, with results detailed in Figures~\ref{fig: LoP 2x2 1 0.1} and~\ref{fig: LoP 2x2 1 10}. These experiments utilized a depth-2 model to reconstruct the ground-truth matrix, with an initialization scale set to $\alpha = 10^{-35}$. Notably, if the initialization scale $\alpha$ is set significantly lower, as the dynamics are coupled, a cold-started model can converge to solutions exhibiting a more pronounced low-rank structure.

For the case presented in Figure~\ref{fig: LoP 2x2 1 0.1}, where $w^* =1, w_{12}^*=0.1$, following Theorem~\ref{thm: LoP full theorem}, the theoretical lower bound on the stable rank for a warm-started model initialized diagonally ($m=\infty$) is approximately 1.45, while the empirically observed stable rank is approximately 1.8. Even in scenarios where substantial new information must be learned (e.g., by setting $w_{12}^*$ to a large value), loss of plasticity is empirically observed, primarily manifesting as high test error (i.e., a significant gap between the target $w_{21}^*$ and the converged $w_{21}$). While Theorem~\ref{thm: LoP full theorem}'s analysis via stable rank does not fully explain an accompanying low-rank bias (a point consistent with Figure~\ref{fig: LoP 2x2 1 10}), the theorem does predict that $w_{21}$ converges to a negative value, which implies a large test loss.

Furthermore, we performed additional experiments with different diagonal entry values to investigate whether this argument extends to other scenarios (results shown in Figure~\ref{fig: LoP 2x2 12 0.5}), although specific theoretical guarantees have not been established for these broader cases. We observe that even in these varied settings, both the effective rank and the stable rank of a warm-started model substantially exceed one, whereas cold-started models can converge to lower-rank solutions.

\begin{figure}[ht]
    \centering
    \includegraphics[width=0.99\linewidth]{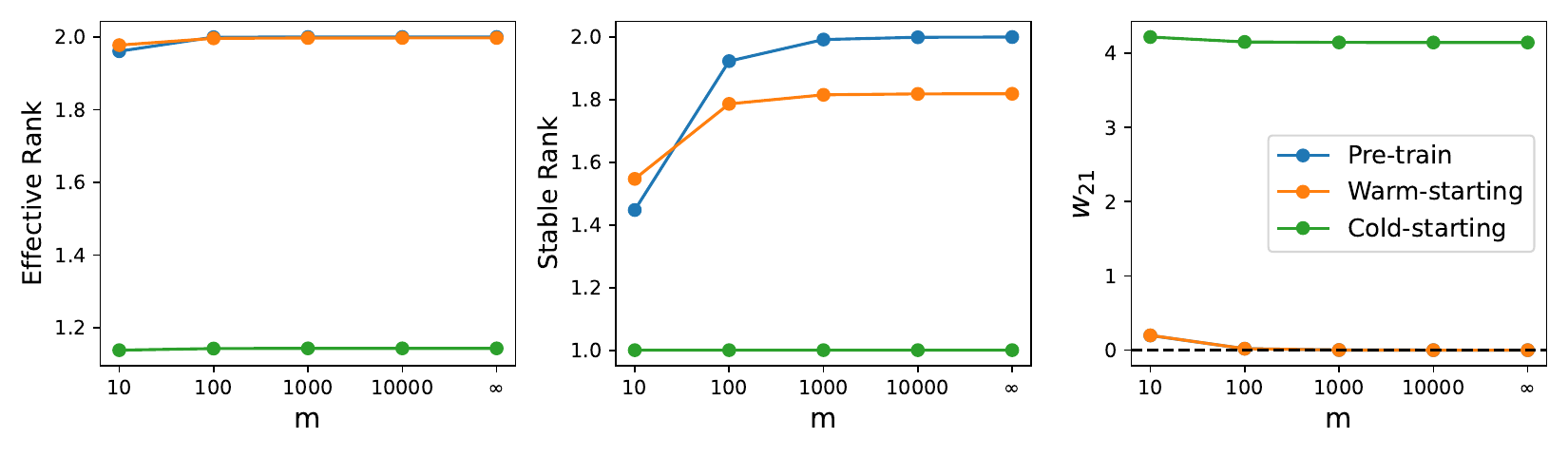}
    \caption{Experimental results for a $2 \times 2$ rank-1 ground-truth matrix $\mW^*$ with $w_{11}^*=w_{22}^*=1$ and $w_{12}^*=0.5$ (implying $w_{21}^*=2$ for rank-1 structure). Models, initialized according to~\eqref{eqn: alpha alpha m init}, are first pre-trained on diagonal entries. After achieving zero-loss convergence in pre-training, the off-diagonal element $w_{12}^*$ is introduced, and models are subsequently trained on combined diagonal and off-diagonal observations. The plots display: (Left and Middle) effective rank under different settings; (Right) converged value of $w_{21}(\infty)$. Key observations: (1) Warm-starting with a model that converged to a high-rank solution during pre-training tends to maintain this high rank, even when presented with the same subsequent observations as a cold-started model. (2) In the theoretically analyzed $m=\infty$ case, $w_{21}(\infty) < 0$ is observed, which correlates with the highest effective rank.}
    \label{fig: LoP 2x2 1 0.1}
\end{figure}

\begin{figure}[ht]
    \centering
    \includegraphics[width=0.99\linewidth]{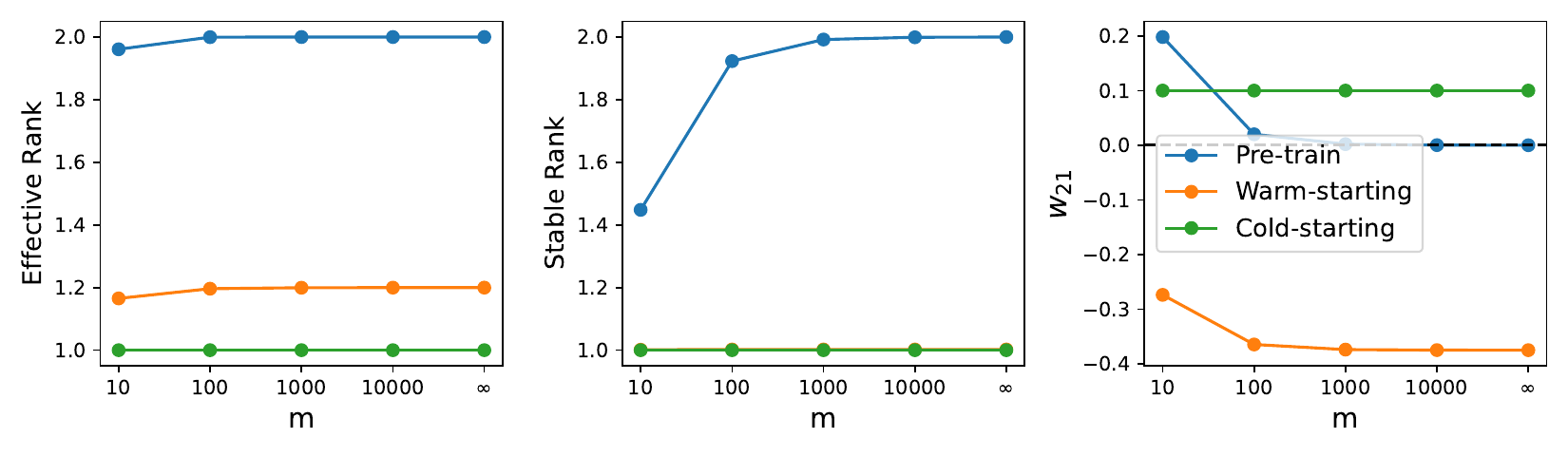}
    \caption{Experimental conditions identical to those in Figure~\ref{fig: LoP 2x2 1 0.1}, except with ground truth value $w_{12}^*=10$. The model have to predict $w_{21}^*$ as 0.1}
    \label{fig: LoP 2x2 1 10}
\end{figure}

\begin{figure}[ht]
    \centering
    \includegraphics[width=0.99\linewidth]{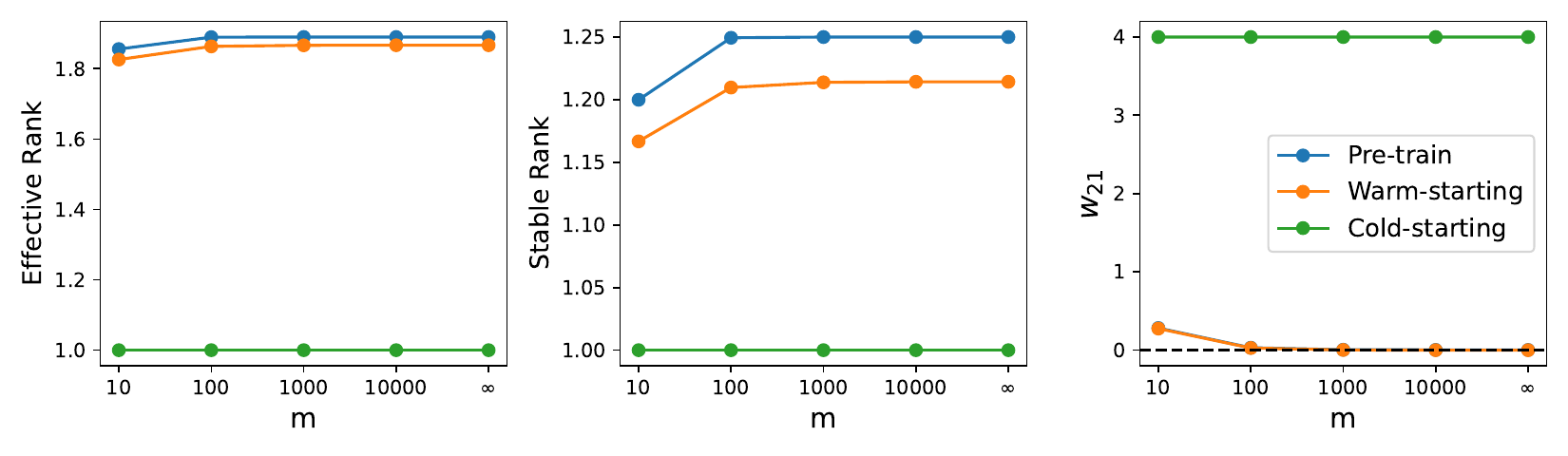}
    \caption{Experimental conditions identical to those in Figure~\ref{fig: LoP 2x2 1 0.1}, except with ground truth value $w_{11}^*=1, w_{22}^*=2$, and $w_{12}^*=0.5$. The model have to predict $w_{21}^*$ as 4.}
    \label{fig: LoP 2x2 12 0.5}
\end{figure}

\clearpage
\section{Proof for Section~\ref{sec: implicit bias}}
\label{appendix: proof section 3}
In this and the following sections, we prove the Propositions and Theorems presented in the main text. We begin with the proof of Theorem~\ref{thm: warmup coupled dynamics}.

\subsection{Proof for Theorem~\ref{thm: warmup coupled dynamics}}
\label{appendix: proof for theorem 3.1}
When convergence is guaranteed, we can define the reference vector $\vu^* \triangleq  \frac{\vb_1(\infty)}{\lVert \vb_1(\infty) \rVert} \in \mathbb{R}^{d_1}$, which is entirely determined by their initial values and the targets. Note that $\vu^*$ does not change with time, since it is defined at $t = \infty$. We decompose $\va_1(t)$, $\va_2(t)$, and $\vb_1(t)$ into two components: one parallel to $\vu^*$ and one perpendicular to $\vu^*$:
\begin{align*}
    \va_1(t) = {\va_1}_\parallel(t) + {\va_1}_\perp(t), \quad
    \va_2(t) = {\va_2}_\parallel(t) + {\va_2}_\perp(t), \quad
    \vb_1(t) = {\vb_1}_\parallel(t) + {\vb_1}_\perp(t).
\end{align*}
For any vector $\vu \in \mathbb{R}^{d_1}$, the parallel component is defined as $\vu_\parallel = (\vu{^*}^\top\vu) \vu^*$, and the perpendicular component as $\vu_\perp = \vu - \vu_\parallel$. 

We introduce notation to quantify the alignment of each vector with $\vu^*$:
\begin{align}
    \alpha_{\va_1}(t) = \vu{^*}^\top \va_1(t), \quad \alpha_{\va_2}(t) = \vu{^*}^\top \va_2(t), \quad \alpha_{\vb_1}(t) = \vu{^*}^\top \vb_1(t). \label{eqn: coupled alpha def}
\end{align}
Additionally, we define notation to measure the magnitude of the perpendicular components:
\begin{align}
    \beta_{\va_1}(t) = \lVert {\va_1}_\perp(t) \rVert_2^2, \quad \beta_{\va_2}(t) = \lVert {\va_2}_\perp(t) \rVert_2^2, \quad \beta_{\vb_1}(t) = \lVert {\vb_1}_\perp(t) \rVert_2^2. \label{eqn: coupled beta def}
\end{align}

Then, using equation~\eqref{eqn: coupled dynamics}, time evolution of each component in equation~\eqref{eqn: coupled alpha def} can be written as:
\begin{align}
    \dot{\alpha}_{\va_1}(t) &= \vu{^*}^\top \dot{\va_1}(t) \nonumber\\
    &= \underbrace{(w_{11}^* - {\va_1}^\top(t) \vb_1(t))}_{\triangleq r_1(t)}\vu{^*}^\top \vb_1(t) \nonumber\\
    &= r_{1}(t) \alpha_{\vb_1}(t). \label{eqn: alpha_a td}
\end{align}
Likewise, for $\alpha_{\va_2}(t)$, we derive:
\begin{align}
    \dot{\alpha}_{\va_2}(t) &= \vu{^*}^\top \dot{\va_2}(t) \nonumber \\
    &= \underbrace{(w_{21}^* - \va_2^\top(t) \vb_1(t))}_{\triangleq r_{2}(t)}\vu{^*}^\top \vb_1(t) \nonumber \\
    &= r_{2}(t) \alpha_{\vb_1}(t). \label{eqn: alpha_b td}
\end{align}
Finally, for $\alpha_{\vb_1}(t)$, we have:
\begin{align}
    \dot{\alpha}_{\vb_1}(t) &= \vu{^*}^\top \dot{\vb_1}(t) \nonumber \\
    &= (w_{11}^* - \va_1^\top(t) \vb_1(t)) {\vu^*}^{\top} \va_1(t) + (w_{21}^* - \va_2^\top(t) \vb_1(t))\vu{^*}^\top \va_2(t) \nonumber \\
    &= r_{1}(t) \alpha_{\va_1}(t) + r_{2}(t) \alpha_{\va_2}(t). \label{eqn: alpha_v td}
\end{align}

Also, for the perpendicular components, their time evolution can be derived as:
\begin{align*}
    \dot{\beta}_{\va_1}(t) &= 2{\va_1}_\perp(t) \cdot \dot{\va_1}_\perp(t) \\
    &= 2 {\va_1}_\perp(t) \cdot \frac{d}{dt}\left( \va_1(t) - \left({\vu^*}^\top \va_1(t)\right) {\vu^*} \right) \\
    &= 2 {\va_1}_\perp(t) \cdot \left( r_{1}(t) \vb_1(t) - r_{1}(t) \left({\vu^*}^\top \vb_1(t) \right) {\vu^*}\right).
\end{align*}
Noting that ${\va_1}_\perp(t)$ is perpendicular to $\vu^*$, the second term in the parenthesis is zero. Thus, we have
\begin{align*}
    \dot{\beta}_{\va_1}(t) = 2 r_{1}(t){\va_1}_\perp(t)^\top {\vb_1}_\perp(t).
\end{align*}
Likewise, for $\beta_{\va_2}(t)$ and $\beta_{\vb_1}(t)$, we can derive their time derivative as:
\begin{align*}
    \dot{\beta}_{\va_2}(t) = 2 r_{2}(t){\va_2}_\perp(t)^\top {\vb_1}_\perp(t), \quad \dot{\beta}_{\vb_1}(t) = \dot{\beta}_{\va_1}(t) + \dot{\beta}_{\va_2}(t).
\end{align*}

Note that by the definition of $\vu^*$, we have $\beta_{\vb_1}(\infty) = 0$. Integrating the identity $\dot{\beta}_{\vb_1}(t) = \dot{\beta}_{\va_1}(t) + \dot{\beta}_{\va_2}(t)$ from $t = 0$ to $\infty$ gives:
\begin{equation*}
    \beta_{\va_1}(\infty) + \beta_{\va_2}(\infty) = \underbrace{\beta_{\va_1}(0) + \beta_{\va_2}(0) - \beta_{\vb_1}(0)}_{\triangleq \beta_0 \geq 0}.
\end{equation*}
This equation shows that if the initial value $\beta_0$ is small, it constrains the total perpendicular magnitude at convergence. However, since we do not know $\vu^*$ in advance, one natural way to ensure small perpendicular components is to initialize the entire norms of $\va_1(0)$, $\va_2(0)$ to be sufficiently small.

To develop a more rigorous understanding, we analyze the parallel components. Under the assumption of convergence, we have:
\begin{equation*}
    \va_1(\infty)^\top\vb_1(\infty) = w_{11}^*, \quad \va_2(\infty)^\top \vb_1(\infty) = w_{21}^*.
\end{equation*}
Decomposing $\va_1(\infty)$ and $\va_2(\infty)$ leads to:
\begin{align}
    \va_1(\infty)^\top\vb_1(\infty) &= \left( {\va_1}_\perp(\infty) + {\vu^*}^\top \va_1(\infty) \vu^*\right)^\top\vb_1(\infty) \nonumber \\
    &= \alpha_{\va_1}(\infty) \alpha_{\vb_1}(\infty) = w_{11}^*, \label{eqn: coupled av=x}\\
    \va_2(\infty)^\top\vb_1(\infty) &= \left( {\va_2}_\perp(\infty) + {\vu^*}^\top \va_2(\infty) \vu^*\right)^\top\vb_1(\infty) \nonumber \\
    &= \alpha_{\va_2}(\infty) \alpha_{\vb_1}(\infty) = w_{21}^*. \label{eqn: coupled av=y}
\end{align}

Using equations~\eqref{eqn: alpha_a td}--\eqref{eqn: alpha_v td}, and noting that 
\begin{equation*}
    \frac d {dt}{\alpha_{\vb_1}^2}(t) = \frac d {dt}({\alpha_{\va_1}^2}(t) + {\alpha_{\va_2}^2}(t)),
\end{equation*}
we can integrate both sides of the equation over time from $0$ to $\infty$ to obtain:
\begin{align}
    \alpha_{\va_1}^2(\infty) + \alpha_{\va_2}^2(\infty) = \alpha_{\vb_1}^2(\infty) + \underbrace{\alpha_{\va_1}^2(0) + \alpha_{\va_2}^2(0) - \alpha_{\vb_1}^2(0)}_{\triangleq \alpha_0}. \label{eqn: coupled alpha conserved}
\end{align}
By solving equations~\eqref{eqn: coupled av=x}, \eqref{eqn: coupled av=y}, and \eqref{eqn: coupled alpha conserved}, we can obtain closed-form solutions of $\alpha_{\va_1}(\infty), \alpha_{\va_2}(\infty)$, and $\alpha_{\vb_1}(\infty)$ as follows:
\begin{align}
    \alpha_{\va_1}^2(\infty) &= \frac{2{w_{11}^*}^2}{\sqrt{\alpha_0^2 + 4{w_{11}^*}^2 + 4{w_{21}^*}^2}-\alpha_0 }, \quad \alpha_{\va_2}^2(\infty) = \frac{2{w_{21}^*}^2}{\sqrt{\alpha_0^2 + 4{w_{11}^*}^2 + 4{w_{21}^*}^2}-\alpha_0 }, \\
    &\phantom{0000000000}\alpha_{\vb_1}^2(\infty) = \frac{\sqrt{\alpha_0^2 + 4{w_{11}^*}^2 + 4{w_{21}^*}^2}-\alpha_0 }{2}. 
\end{align}

Thus, we can upper bound the proportion of the perpendicular component of $\va_1(\infty)$ and $\va_2(\infty)$ relative to its total magnitude as follows:
\begin{align*}
    \frac{\lVert {\va_1}_\perp(\infty) \rVert^2}{\lVert {\va_1}(\infty) \rVert^2} &= \frac{\beta_{\va_1}(\infty)}{\alpha_{\va_1}^2(\infty) + \beta_{\va_1}(\infty)}  \leq \frac{\beta_0 \left(\sqrt{\alpha_0^2 + 4{w_{11}^*}^2 + 4{w_{21}^*}^2} - \alpha_0\right)}{2{w_{11}^*}^2}, \\
    \frac{\lVert {\va_2}_\perp(\infty) \rVert^2}{\lVert \va_2(\infty) \rVert^2} &= \frac{\beta_{\va_2}(\infty)}{\alpha_{\va_2}^2(\infty) + \beta_{\va_2}(\infty)}  \leq \frac{\beta_0 \left(\sqrt{\alpha_0^2 + 4{w_{11}^*}^2 + 4{w_{21}^*}^2} - \alpha_0\right)}{2{w_{21}^*}^2}.
\end{align*}

To further refine these bounds, we analyze the terms $\beta_0$ and $S(\alpha_0) \triangleq \sqrt{\alpha_0^2 + 4{w_{11}^*}^2 + 4{w_{21}^*}^2} - \alpha_0$. By the definition of $\beta_0$, it is upper bounded by $\lVert \va_1(0) \rVert^2 + \lVert \va_2(0) \rVert^2=\lVert \mA(0) \rVert_F^2$. Also, by the definition of $\alpha_0$, we have:
\begin{equation*}
  -\lVert \vb_1(0) \rVert_2^2 \leq \alpha_0 \leq \lVert \mA(0) \rVert_F^2.
\end{equation*}
Noting that the function $f(x) = \sqrt{x^2 + C} - x$ (where $C > 0$) is non-negative and monotonically decreasing for all $x \in \mathbb{R}$, we can upper bound $S(\alpha_0)$ using the lower bound of $\alpha_0$:
\begin{align*}
  S(\alpha_0) &\leq S(-\lVert \vb_1(0) \rVert_2^2) \\
  &= \sqrt{(-\lVert \vb_1(0) \rVert_2^2)^2 + 4({w_{11}^*}^2 + {w_{21}^*}^2)} - (-\lVert \vb_1(0) \rVert_2^2) \\
  &= \sqrt{\lVert \vb_1(0) \rVert_2^4 + 4({w_{11}^*}^2 + {w_{21}^*}^2)} + \lVert \vb_1(0) \rVert_2^2.
\end{align*}
Substituting these bounds for $\beta_0$ and $S(\alpha_0)$ into the inequality $\frac{\lVert {\va_1}_\perp(\infty) \rVert^2}{\lVert \va_1(\infty) \rVert_2^2} \leq \frac{\beta_0 S(\alpha_0)}{2{w_{11}^*}^2}$, we obtain the final upper bound for the proportion of the perpendicular component of $\va_1(\infty)$:
\begin{align*}
  \frac{\lVert {\va_1}_\perp(\infty) \rVert^2 }{\lVert \va_1(\infty) \rVert_2^2} &\leq \frac{\lVert \mA(0) \rVert_F^2 \left( \sqrt{\lVert \vb_1(0) \rVert_2^4 + 4({w_{11}^*}^2 + {w_{21}^*}^2)} + \lVert \vb_1(0) \rVert_2^2 \right)}{2{w_{11}^*}^2} .
\end{align*}
A similar bound applies to $\frac{\lVert {\va_2}_\perp(\infty) \rVert^2}{\lVert \va_2(\infty) \rVert_2^2}$:
\begin{align*}
  \frac{\lVert {\va_2}_\perp(\infty) \rVert^2}{\lVert \va_2(\infty) \rVert_2^2} \leq \frac{\lVert \mA(0) \rVert_F^2 \left( \sqrt{\lVert \vb_1(0) \rVert_2^4 + 4({w_{11}^*}^2 + {w_{21}^*}^2)} + \lVert \vb_1(0) \rVert_2^2 \right)}{2{w_{21}^*}^2}.
\end{align*}

\newpage
\subsection{Proof for Proposition~\ref{prop:coupling_conditions_by_L_m}}
\label{appendix: proof for proposition 3.2}
According to the definition of coupled/decoupled dynamics presented in Definition~\ref{def: coupled/decoupled dynamics}, for the family of initializations defined in~\eqref{eqn: alpha alpha m init} along with the block-diagonal observations
\begin{equation*}
    \Omega_{\mathrm{block}}^{(s,n)} \triangleq \bigcup_{b=1}^{n} \Omega_b, \quad \Omega_b \triangleq \{ (i,j) : i,j \in \{(b-1)s+1,\dots, bs\}\},
\end{equation*}
so that $d = sn$ and each $\Omega_b$ corresponds to the index set of the $b$-th $s \times s$ diagonal block. We divide the cases to ensure that all possible scenarios for this family of initializations are covered. 

The derivative of a block-diagonal observed entry $w_{pq}(t) \in \Omega_{\rm block}^{(s,n)}$ with respect to $(\mW_l(t))_{ij}$ is:
\begin{align}
\label{eqn: partial derivative w_pq}
    \frac{\partial w_{pq}(t)}{\partial (\mW_l(t))_{ij}} = \left(\mW_L(t) \mW_{L-1}(t) \cdots \mW_{l+1}(t) \right)_{pi} \left(\mW_{l-1}(t) \mW_{l-2}(t) \cdots \mW_1(t) \right)_{jq},
\end{align}
where the first term is $(p, i)$-th element of the product $\mW_L(t) \mW_{L-1}(t) \cdots \mW_{l+1}(t)$, and the second term is $(j, q)$-th element of the product $\mW_{l-1}(t) \mW_{l-2}(t) \cdots \mW_1(t)$.
Then, we can express the gradient of $w_{ij}(t)$ with respect to $\bm \theta(t)$, which is the concatenation of all trainable parameters as follows:
\begin{equation}
\label{eqn: derivative w_pq}
    \nabla_{\bm \theta} w_{pq}(t) = \left(\frac{\partial w_{pq}(t)}{\partial (\mW_L(t))_{11}}, \frac{\partial w_{pq}(t)}{\partial (\mW_L(t))_{12}}, \dots \frac{\partial w_{pq}(t)}{\partial (\mW_1(t))_{dd}} \right).
\end{equation}

\subsubsection{Case for \texorpdfstring{$L=2$}{L=2}}
For $L=2$, each observed entry can be written as
\begin{equation*}
    w_{ij}(t) \triangleq \left(\mW_{\mA,\mB}(t)\right)_{ij} = \va_i(t)^\top \vb_j(t),
\end{equation*}
where $\va_i(t)^\top$ is the $i$-th row of $\mA(t)$ and $\vb_j(t)$ is the $j$-th column of $\mB(t)$.

Let $\bm{\theta}(t)$ be the vector obtained by stacking all entries of $\mA(t)$ and $\mB(t)$.
The gradient of $w_{ij}(t)$ with respect to $\boldsymbol{\theta}(t)$ is given by
\begin{equation*}
    \nabla_{\boldsymbol{\theta}} w_{ij}(t) = \left( \{\nabla_{\va_r} w_{ij}(t)\}_{r=1}^d, \{\nabla_{\vb_s} w_{ij}(t)\}_{s=1}^d \right),
\end{equation*}
where $\nabla_{\va_r}$ and $\nabla_{\vb_s}$ denote derivatives with respect to the row vector $\va_r(t)$ and column vector $\vb_s(t)$, respectively.
Since $w_{ij}(t) = \va_i(t)^\top \vb_j(t)$, we have
\begin{align*}
    \nabla_{\va_r} w_{ij}(t)
    &= 
    \begin{cases}
        \vb_j(t), & r = i,\\
        \mathbf{0}, & r \neq i,
    \end{cases}, \quad 
    \nabla_{\vb_s} w_{ij}(t) =  
    \begin{cases}
        \va_i(t), & s = j,\\
        \mathbf{0}, & s \neq j.
    \end{cases}
\end{align*}
Thus $\nabla_{\boldsymbol{\theta}} w_{ij}(t)$ has nonzero components only in the coordinates corresponding to $\va_i(t)$ and $\vb_j(t)$, and all other coordinates are identically zero.

Now fix two observed indices
$(i,j) \in \Omega_b$ and $(p,q) \in \Omega_{b'}$ with $b \neq b'$. By the definition of $\Omega_b$, we have
\begin{equation*}
    i,j \in \{(b-1)s+1,\dots,bs\}, \quad p,q \in \{(b'-1)s+1,\dots,b's\},
\end{equation*}
and these index sets are disjoint. Therefore the supports of the two gradient vectors are disjoint. Hence, for all $t \ge 0$,
\begin{equation*}
    \big\langle \nabla_{\boldsymbol{\theta}} w_{ij}(t), \nabla_{\boldsymbol{\theta}} w_{pq}(t) \big\rangle = 0.
\end{equation*}
Therefore, the gradient flow dynamics are \textbf{decoupled} with respect to the partition $\{\Omega_b\}_{b=1}^{n}$ in the sense of Definition~\ref{def: coupled/decoupled dynamics}.

\subsubsection{Case for \texorpdfstring{$L\geq 3$}{L geq 3} and \texorpdfstring{$1 < m < \infty$}{1 < m < infty}}

For the deeper matrix case ($L \geq 3$) with $1 < m < \infty$, every entry of each weight matrix $\mW_l(0)$ (for $l=1,\dots,L$) is initialized to be positive. Then, for any $(i,j) \in \Omega_{\rm block}^{(s,n)}$, the entry $w_{ij}(0)$ is a sum of products of these positive entries, so $w_{ij}(0) > 0$.

Evaluating \eqref{eqn: partial derivative w_pq} and \eqref{eqn: derivative w_pq} at $t=0$, the derivative of $w_{ij}(0)$ with respect to any parameter in $\bm{\theta}$ is given by the product of entries derived in \eqref{eqn: partial derivative w_pq}.
Since $L \geq 3$, there exists at least one intermediate layer $l \in \{2, \dots, L-1\}$. For these intermediate layers, both products in the derivative formula are products of matrices with strictly positive entries. Consequently, for $1 < l < L$, every coordinate of $\nabla_{\mW_l} w_{ij}(0)$ is strictly positive. For the boundary layers ($l=1$ and $l=L$), the derivatives are non-negative. 

Because the gradient contains a strictly positive sub-vector (corresponding to the intermediate layers) and is non-negative everywhere else, for any two distinct observed indices $(i,j)$ and $(p,q)$, their inner product satisfies:
\begin{equation*}
    \left\langle \nabla_{\bm{\theta}} w_{ij}(0), \nabla_{\bm{\theta}} w_{pq}(0) \right\rangle > 0.
\end{equation*}
This shows that there is no partition of $\Omega_{\rm block}^{(s,n)}$ for which the cross-block inner products in \eqref{eq:decoupling_condition} vanish at $t=0$, so by Definition~\ref{def: coupled/decoupled dynamics} the gradient flow dynamics are \textbf{coupled} for $L \geq 3$ and $1 < m < \infty$.

\subsubsection{Case for \texorpdfstring{$L\geq 3$}{L geq 3} and \texorpdfstring{$m=\infty$}{m=infty}}
\label{appendix: proposition 3.2 L geq 3 m=infty}
For $a,b \in \mathbb{R}$, define $\mD(a,b) \triangleq (a-b)\mI_s + b \mJ_s$.
Lemma~\ref{lem: M property} shows that the family
\begin{equation*}
    \gM \triangleq \left\{ \mI_n \otimes \mD(a,b) + (\mJ_n - \mI_n) \otimes c\,\mJ_s \right\}
\end{equation*}
is closed under scalar multiplication, addition, and matrix multiplication, and that any two matrices in $\gM$ commute. As a consequence, Lemma~\ref{lem: remain in M} implies that if the factor matrices $\mW_l(0)$ are initialized to lie in $\gM$, then under the gradient flow dynamics in \eqref{eqn: preliminaries gradient flow}, each $\mW_l(t)$ remains in the family $\gM$ for all $t \ge 0$ and all $l \in [L]$.

The case $c=0$ follows immediately as a special case. When $c=0$, the family reduces to
\begin{equation*}
    \gM_0 \triangleq \{ \mI_n \otimes \mD(a,b) \},
\end{equation*}
and all closure, commutativity, and invariance properties follow directly from Lemma~\ref{lem: I+J closed} together with the Kronecker product identity.

We now focus on the setting $m=\infty$. In this case, $\mW_l(t)$ lies in $\gM_0$ for all $t \ge 0$ and all $l \in [L]$. Since every matrix in $\gM_0$ is block-diagonal with identical $s\times s$ blocks, it follows that both products $\mW_L(t)\cdots\mW_{l+1}(t)$ and $\mW_{l-1}(t)\cdots\mW_1(t)$ inherit the same block-diagonal structure. As a result, for a prediction $w_{pq}(t)$ where $(p,q) \in \Omega_b$, the partial derivative in~\eqref{eqn: partial derivative w_pq} is nonzero only if the parameter indices $i$ and $j$ also belong to the same block $b$ (i.e., $(i,j) \in \Omega_b$).

Now fix two observed indices $(i,j) \in \Omega_b$ and $(p,q) \in \Omega_{b'}$ with $b \neq b'$. Since the corresponding index sets are disjoint, the supports of $\nabla_{\boldsymbol{\theta}} w_{ij}(t)$ and $\nabla_{\boldsymbol{\theta}} w_{pq}(t)$ are disjoint as well. Consequently, for all $t \ge 0$,
\begin{equation*}
    \big\langle \nabla_{\boldsymbol{\theta}} w_{ij}(t), \nabla_{\boldsymbol{\theta}} w_{pq}(t) \big\rangle = 0.
\end{equation*}
This verifies that the gradient flow dynamics are \textbf{decoupled} with respect to the partition $\{\Omega_b\}_{b=1}^{n}$ in the sense of Definition~\ref{def: coupled/decoupled dynamics}.

\newpage

\subsection{Proof for Theorem~\ref{thm: block-diagonal}}
\label{appendix: proof for theorem 3.3}

For $a,b,c \in \mathbb{R}$, define
\begin{align*}
    \mD(a,b) &= (a-b) \mI_s + b \mJ_s, \\
    \mO(c)   &= c \mJ_s,
\end{align*}
where $\mI_s$ is the $s\times s$ identity matrix and $\mJ_s$ is the $s\times s$ all-ones matrix. Consider the $d\times d$ block matrix
\begin{align*}
    \mM(a,b,c) &= \mI_n \otimes \mD(a,b) + (\mJ_n - \mI_n) \otimes \mO(c) \\
    &= \begin{bmatrix}
        \mD(a,b) & \mO(c) & \cdots & \mO(c) \\
        \mO(c) & \mD(a,b) & \cdots & \mO(c) \\
        \vdots & \vdots & \ddots & \vdots \\
        \mO(c) & \mO(c) & \cdots & \mD(a,b)
    \end{bmatrix}
    \in \mathbb{R}^{d\times d},
\end{align*}
which is an $n\times n$ block matrix with $s\times s$ blocks.
Define
\begin{equation*}
    \gM \triangleq \{ \mM(a,b,c) \mid a,b,c \in \mathbb{R} \}.
\end{equation*}

We now state a lemma that captures the key algebraic features of this family.

\begin{lemma}
Let $\mI_n$ denote the $n \times n$ identity matrix and $\mJ_n \triangleq \mathbbm{1}_n \mathbbm{1}_n^\top$ denote the $n \times n$ matrix with all entries equal to $1$. Then the set
\begin{align*}
\mathcal{S} = \{a \mI_n + b \mJ_n \mid a, b \in \mathbb{R}\}
\end{align*}
is closed under scalar multiplication, addition, and matrix multiplication. Also, any two matrices $\mA, \mB \in \gS$ commute.
\label{lem: I+J closed}
\end{lemma}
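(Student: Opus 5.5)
The proof is a direct computation using only the identity $\mJ_n^2 = n\mJ_n$, which holds because $\mJ_n = \mathbbm{1}_n\mathbbm{1}_n^\top$ and $\mathbbm{1}_n^\top\mathbbm{1}_n = n$. Take two arbitrary elements $\mA = a_1\mI_n + b_1\mJ_n$ and $\mB = a_2\mI_n + b_2\mJ_n$ of $\gS$. We check the three closure properties and then commutativity.

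Closure under scalar multiplication and addition is immediate from linearity. For $\lambda\in\mathbb{R}$,
\begin{equation*}
\lambda\mA = (\lambda a_1)\mI_n + (\lambda b_1)\mJ_n,
\end{equation*}
and
\begin{equation*}
\mA+\mB = (a_1+a_2)\mI_n + (b_1+b_2)\mJ_n.
\end{equation*}
So $\gS$ is the linear span of $\mI_n$ and $\mJ_n$.

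For the product, expand bilinearly and apply $\mJ_n^2 = n\mJ_n$:
\begin{equation*}
\mA\mB = a_1a_2\mI_n + (a_1b_2 + a_2b_1 + n b_1b_2)\mJ_n ,
\end{equation*}
which lies in $\gS$.

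This coefficient expression is symmetric under swapping the indices $1 \leftrightarrow 2$. Hence $\mB\mA$ expands to the same matrix, and $\mA\mB=\mB\mA$. Equivalently, $\gS$ is a commutative algebra generated by $\mI_n$ and $\mJ_n$, and these two matrices commute.

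The argument has no real obstacle; the only fact needed is $\mJ_n^2=n\mJ_n$. This lemma will later be combined with the mixed-product rule for Kronecker products to lift closure and commutativity to the block family $\gM$, but the lemma itself requires nothing beyond the computation above.
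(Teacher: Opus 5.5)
Your proof is correct and follows essentially the same route as the paper: you handle scalar multiplication and addition by linearity, expand the product bilinearly using $\mJ_n^2 = n\mJ_n$ to get $a_1a_2\mI_n + (a_1b_2+a_2b_1+nb_1b_2)\mJ_n$, and read off commutativity from the symmetry of these coefficients. The paper's argument is the same direct computation, so nothing is missing.
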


\begin{proof}
Let 
\begin{align*}
\mA = a \mI_n + b \mJ_n \quad \text{and} \quad \mB = c \mI_n + d \mJ_n,
\end{align*}
with $a, b, c, d \in \mathbb{R}$, and let $\lambda \in \mathbb{R}$ be an arbitrary scalar.

\textbf{Scalar Multiplication.}
\begin{align*}
\lambda \mA = \lambda (a \mI_n + b \mJ_n) = (\lambda a) \mI_n + (\lambda b) \mJ_n.
\end{align*}
Since $\lambda a, \lambda b \in \mathbb{R}$, it follows that $\lambda \mA \in \mathcal{S}$.

\textbf{Addition.}
\begin{align*}
\mA +\mB = (a \mI_n + b \mJ_n) + (c \mI_n + d \mJ_n) = (a+c) \mI_n + (b+d) \mJ_n.
\end{align*}
Since $a+c, \, b+d \in \mathbb{R}$, we have $\mA+\mB \in \mathcal{S}$.

\textbf{Matrix Multiplication.}
\begin{align*}
\mA\mB = (a \mI_n + b \mJ_n)(c \mI_n + d \mJ_n).
\end{align*}
Using the distributive property and the facts that 
\begin{align*}
\mI_n \mJ_n = \mJ_n \mI_n = \mJ_n \quad \text{and} \quad \mJ_n^2 = n \mJ_n,
\end{align*}
we expand:
\begin{align*}
\mA\mB &= a c\, \mI_n \mI_n + a d\, \mI_n \mJ_n + b c\, \mJ_n \mI_n + b d\, \mJ_n^2 \\
   &= ac\, \mI_n + ad\, \mJ_n + bc\, \mJ_n + bd\, (n\mJ_n) \\
   &= ac\, \mI_n + (ad + bc + nbd) \mJ_n.
\end{align*}
Thus, $\mA\mB$ is of the form $\alpha \mI_n + \beta \mJ_n$ with $\alpha = ac$ and $\beta = ad + bc + nbd$, and hence $\mA\mB \in \mathcal{S}$.

\textbf{Commutativity.}
By the same procedure as above, 
\begin{align*}
    \mA \mB &=(a \mI_n + b \mJ_n)(c \mI_n + d \mJ_n)  \\
    &=  ac \mI_n + (ad + bc + nbd) \mJ_n \\
    &= ca \mI_n + (cb + da + ndb) \mJ_n \\
    &= \mB\mA,
\end{align*}
which completes the proof.
\end{proof}

\begin{lemma}
\label{lem: M property}
    The set $\gM$ is closed under scalar multiplication and addition, and it is also closed under matrix multiplication. Moreover, for any $(a_1,b_1,c_1)$ and $(a_2,b_2,c_2)$, the matrices $\mM(a_1,b_1,c_1)$ and $\mM(a_2,b_2,c_2)$ commute.
\end{lemma}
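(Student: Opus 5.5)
The cleanest route is to rewrite every element of $\gM$ using Kronecker products of matrices from the family $\gS$ of Lemma~\ref{lem: I+J closed}. Since $\mD(a,b) = (a-b)\mI_s + b\mJ_s$ and $\mO(c) = c\mJ_s$, we have
\begin{equation*}
\mM(a,b,c) = (a-b)\,\mI_n\otimes\mI_s + b\,\mI_n\otimes\mJ_s + c\,(\mJ_n-\mI_n)\otimes\mJ_s .
\end{equation*}
Regrouping gives the more symmetric form
\begin{equation*}
\mM(a,b,c) = (a-b)\,\mI_d + (b-c)\,\mI_n\otimes\mJ_s + c\,\mJ_n\otimes\mJ_s .
\end{equation*}
Therefore $\gM$ is exactly the linear span of the three matrices $\mE_1 \triangleq \mI_d$, $\mE_2 \triangleq \mI_n\otimes\mJ_s$, and $\mE_3 \triangleq \mJ_n\otimes\mJ_s = \mJ_d$. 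The map $(a,b,c)\mapsto(a-b,\,b-c,\,c)$ is a linear bijection of $\mathbb{R}^3$, so every triple of coefficients $(x,y,z)$ is realized by some $(a,b,c)$. Closure under scalar multiplication and addition is then immediate, because $\gM$ is a linear subspace. It can also be checked directly, since $\mM$ is linear in $(a,b,c)$.

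For closure under multiplication, by bilinearity it suffices to show that each product $\mE_i\mE_j$ lies in $\mathrm{span}\{\mE_1,\mE_2,\mE_3\}$. I will use the mixed-product rule $(\mA\otimes\mB)(\mC\otimes\mD) = \mA\mC\otimes\mB\mD$ together with $\mJ_s^2 = s\mJ_s$ and $\mJ_n^2 = n\mJ_n$. This gives:
\begin{itemize}
\item $\mE_1$ is the identity, so any product with it stays in the span;
\item $\mE_2^2 = \mI_n\otimes s\mJ_s = s\mE_2$;
\item $\mE_2\mE_3 = \mE_3\mE_2 = \mJ_n\otimes s\mJ_s = s\mE_3$;
\item $\mE_3^2 = n\mJ_n\otimes s\mJ_s = d\,\mE_3$.
\end{itemize}
All products therefore lie in the span. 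Explicitly, if $\mM_k = x_k\mE_1 + y_k\mE_2 + z_k\mE_3$ for $k=1,2$, then
\begin{equation*}
\mM_1\mM_2 = x_1x_2\,\mE_1 + (x_1y_2+y_1x_2+s\,y_1y_2)\,\mE_2 + (x_1z_2+z_1x_2+s\,y_1z_2+s\,z_1y_2+d\,z_1z_2)\,\mE_3 .
\end{equation*}
Converting back through $c=z$, $b=y+z$, $a=x+y+z$ gives explicit parameters $(a,b,c)$ for the product, which is convenient for later lemmas such as Lemma~\ref{lem: remain in M}.

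Commutativity follows because the formula above is symmetric under swapping indices $1\leftrightarrow2$. Structurally, this is because the generators pairwise commute: $\mE_2\mE_3 = \mE_3\mE_2$ was shown above, and $\mE_1$ commutes with everything.

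There is no real obstacle here. The only care needed is the bookkeeping in the reparametrization $(a,b,c)\leftrightarrow(x,y,z)$ and confirming that it is a bijection, so that the span is exactly $\gM$ and not just a subset of it. Everything else reduces to Lemma~\ref{lem: I+J closed} applied to the $\mI_s$/$\mJ_s$ and $\mI_n$/$\mJ_n$ factors, combined with the Kronecker mixed-product identity.
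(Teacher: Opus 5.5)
Your proof is correct, and it takes a different route from the paper's. The paper works directly at the block level. It derives product rules for the $s\times s$ blocks ($\mD\mD$, $\mO\mO$, and $\mD\mO=\mO\mD$), then block-multiplies two $n\times n$ block matrices to get a diagonal block $\mT_1$ and an off-diagonal block $\mT_2$, and rewrites both back in $\mD(\cdot,\cdot)$ and $\mO(\cdot)$ form. Commutativity is read off from the symmetry of the resulting parameter formulas under $(a_1,b_1,c_1)\leftrightarrow(a_2,b_2,c_2)$.

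You instead change coordinates to $\gM=\mathrm{span}\{\mI_d,\ \mI_n\otimes\mJ_s,\ \mJ_d\}$ and only need three generator products via the mixed-product rule. This shows $\gM$ is a commutative matrix algebra, and commutativity follows from the generators commuting rather than from inspecting an explicit formula.

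Your decomposition is the same one the paper uses later in Lemma~\ref{lem: M eigenvalue} to compute the spectrum. Your route therefore gives closure, commutativity, and the simultaneous diagonalizability used in Lemma~\ref{lem: eigenvalue dynamics} from a single computation. The paper's version yields the product's parameters directly in the native $(a,b,c)$ coordinates, whereas yours needs the back-substitution $c=z$, $b=y+z$, $a=x+y+z$. Both arguments are fully elementary.

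One minor point: when $s=1$ your $\mE_1$ and $\mE_2$ coincide, so they form a spanning set rather than a basis. Since you only use surjectivity of $(a,b,c)\mapsto(x,y,z)$, nothing breaks.
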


\begin{proof}
Note that by Lemma~\ref{lem: I+J closed}, $\mD(a,b)$ is closed under scalar multiplication, addition, and matrix multiplication. Since $\mJ_s$ is also closed under these operations, the same holds for $\mO(c)$.

\textbf{Scalar multiplication.} 
    For any scalar $\lambda \in \mathbb{R}$,
    \begin{align*}
        \lambda\mM(a,b,c) &= \lambda \left[ \mI_n \otimes \mD(a,b) + (\mJ_n - \mI_n) \otimes \mO(c) \right] \\
        &= \mI_n \otimes (\lambda\mD(a,b)) + (\mJ_n - \mI_n) \otimes (\lambda\mO(c)) \\
        &= \mI_n \otimes \mD(\lambda a, \lambda b) + (\mJ_n - \mI_n) \otimes \mO(\lambda c) \\
        &= \mM(\lambda a, \lambda b, \lambda c) \in \gM.
    \end{align*}

    \textbf{Addition.}
    For any $(a_1,b_1,c_1)$ and $(a_2,b_2,c_2)$,
    \begin{align*}
        \mM(a_1,b_1,c_1) + \mM(a_2,b_2,c_2) &= \left[ \mI_n \otimes \mD(a_1,b_1) + (\mJ_n - \mI_n) \otimes \mO(c_1) \right]  \\
        &\phantom{=}+ \left[ \mI_n \otimes \mD(a_2,b_2) + (\mJ_n - \mI_n) \otimes \mO(c_2) \right] \\
        &= \mI_n \otimes (\mD(a_1,b_1) + \mD(a_2,b_2)) + (\mJ_n - \mI_n) \otimes (\mO(c_1) + \mO(c_2)) \\
        &= \mI_n \otimes \mD(a_1+a_2, b_1+b_2) + (\mJ_n - \mI_n) \otimes \mO(c_1+c_2) \\
        &= \mM(a_1 + a_2, b_1 + b_2, c_1 + c_2) \in \gM.
    \end{align*}
    
    \textbf{Matrix multiplication.}
    First observe that
    \begin{align*}
        \mD(a_1, b_1) \mD(a_2, b_2) &= \mD(a_1a_2 + (s-1)b_1b_2,\, a_1b_2 + a_2b_1 + (s-2)b_1b_2), \\
        \mO(c_1) \mO(c_2) &= \mO(sc_1c_2), \\
        \mD(a, b) \mO(c) &= \mO(c) \mD(a, b) = \mO(ac+ (s-1)bc).
    \end{align*}
    Multiplying $\mM(a_1, b_1, c_1)$ and $\mM(a_2, b_2, c_2)$ gives
    \begin{align*}
        \mM(a_1, b_1, c_1) \mM(a_2, b_2, c_2) &= 
        \begin{bmatrix}
            \mT_1 & \mT_2 & \cdots & \mT_2 \\
            \mT_2 & \mT_1 & \cdots & \mT_2 \\
            \vdots & \vdots & \ddots & \vdots \\
            \mT_2 & \mT_2 & \cdots & \mT_1
        \end{bmatrix},
    \end{align*}
    where
    \begin{align*}
        \mT_1 &= \mD(a_1,b_1) \mD(a_2, b_2) + (n-1)\mO(c_1)\mO(c_2), \\
        \mT_2 &= \mO(c_1) \mD(a_2,b_2) + \mD(a_1,b_1)\mO(c_2) + (n-2)\mO(c_1)\mO(c_2).
    \end{align*}
    Using the identities above, we can rewrite $\mT_1$ and $\mT_2$ as
    \begin{align*}
        \mT_1 
        &= \mD(a_1a_2+(s-1)b_1b_2,\, a_1b_2 + a_2b_1 + (s-2)b_1b_2) + \mO((n-1)s c_1c_2) \\
        &= \mD\big(a_1a_2+(s-1)b_1b_2+(n-1)sc_1c_2,\;
                 a_1b_2 + a_2b_1 + (s-2)b_1b_2+(n-1)sc_1c_2\big), \\
        \mT_2 
        &= \mO(a_2 c_1 + (s-1) b_2 c_1) + \mO(a_1 c_2 + (s-1) b_1 c_2) + \mO((n-2)s c_1 c_2) \\
        &= \mO\big(a_1c_2 + a_2c_1 + (s-1)b_1c_2 + (s-1)b_2c_1 + (n-2)sc_1c_2\big).
    \end{align*}
    Hence $\mM(a_1,b_1,c_1)\mM(a_2,b_2,c_2)$ again has the same block structure as $\mM(\cdot,\cdot,\cdot)$, so $\gM$ is closed under matrix multiplication.

    \textbf{Commutativity.}
    The expressions for $\mT_1$ and $\mT_2$ above are symmetric in $(a_1,b_1,c_1)$ and $(a_2,b_2,c_2)$. In particular, if we interchange $(a_1,b_1,c_1)$ and $(a_2,b_2,c_2)$ in the formulas for $\mT_1$ and $\mT_2$, we obtain the same matrices. Therefore
    \begin{equation*}
        \mM(a_1,b_1,c_1)\mM(a_2,b_2,c_2) = \mM(a_2,b_2,c_2)\mM(a_1,b_1,c_1),
    \end{equation*}
    and the matrices in $\gM$ commute pairwise.
\end{proof}

Using the above lemma, we show that if all factor matrices $\mW_l$ are initialized according to~\eqref{eqn: alpha alpha m init}, then $\mW_l(t)$ stays in $\gM$ for every $t \ge 0$.
\begin{lemma}
\label{lem: remain in M}
    Let $s, n \in \mathbb{N}$ and set $d = sn$. Consider a ground truth matrix $\mW^* \in \mathbb{R}^{d \times d}$ with observation set $\Omega_{\rm block}^{(s,n)}$ with all observed entries sharing the same positive value $w^*>0$.
    Consider the product matrix $\mW_{L:1}$, where the factor matrices $\mW_l \in \mathbb{R}^{d \times d}$ are initialized according to~\eqref{eqn: alpha alpha m init}. Under the gradient flow dynamics~\eqref{eqn: preliminaries gradient flow}, $\mW_l(t)$ remains in the family $\gM$ for all $t \geq 0$ and all $l \in [L]$.
\end{lemma}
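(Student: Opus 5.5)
The plan is to show that the gradient vector field, restricted to configurations where every factor lies in $\gM$, is itself tangent to $\gM$. Uniqueness of ODE solutions then pins the trajectory inside $\gM$.

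\textbf{Initial condition.} The initialization~\eqref{eqn: alpha alpha m init} equals $\mM(\alpha,\alpha/m,\alpha/m)$, so it lies in $\gM$. For $m=\infty$ it is $\mM(\alpha,0,0)$, which also lies in $\gM$.

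\textbf{Gradient formula.} From Lemma~\ref{lem: coupled dynamics condition},
\begin{equation*}
-\nabla_{\mW_l}\phi = \mW_{L:l+1}^\top\, \mR\, \mW_{l-1:1}^\top,
\end{equation*}
where $\mR \triangleq \mP_\Omega\odot(\mW^*-\mW_{L:1})$ is the masked residual.

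\textbf{Closure of each factor.} Suppose all $\mW_k\in\gM$.
\begin{itemize}
\item Each $\mM(a,b,c)$ is symmetric, because $\mD(a,b)$ and $\mO(c)$ are symmetric and the off-diagonal blocks all equal $\mO(c)$. Hence $\gM$ is closed under transposition.
\item By Lemma~\ref{lem: M property}, the products $\mW_{L:l+1}$ and $\mW_{l-1:1}$ lie in $\gM$, and so do their transposes.
\item $\mW_{L:1}\in\gM$ as well, say $\mW_{L:1}=\mM(a,b,c)$.
\item The mask $\mP_\Omega=\mI_n\otimes\mJ_s$ keeps exactly the diagonal blocks. The target restricted to $\Omega$ is $w^*\mI_n\otimes\mJ_s$. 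Therefore $\mR=\mI_n\otimes\mD(w^*-a,\,w^*-b)=\mM(w^*-a,w^*-b,0)\in\gM$.
\item Applying Lemma~\ref{lem: M property} once more, $-\nabla_{\mW_l}\phi\in\gM$.
\end{itemize}

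\textbf{Uniqueness argument.} Parametrize $\gM$ by $(a_l,b_l,c_l)_{l\in[L]}$. Since $\gM$ is a linear subspace containing the vector field, the reduced ODE on these $3L$ parameters is well defined. Its right-hand side is polynomial, so it has a unique local solution. Lifting this solution gives curves $\tilde\mW_l(t)\in\gM$ that satisfy the full gradient flow~\eqref{eqn: preliminaries gradient flow}. The full flow is also polynomial, hence locally Lipschitz, so its solution is unique and coincides with the lift. This holds on the maximal interval of existence, which covers all $t\ge0$ whenever the flow exists, as is implicit in the theorem.

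\textbf{Main obstacle.} The only delicate point is verifying that the masked residual lies in $\gM$. This requires that the observations be constant and equal to $w^*$ across blocks, and that the mask respect the block structure. Both hold in this setting.
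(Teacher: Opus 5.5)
Your proof is correct and follows the paper's argument. The initialization is $\mM(\alpha,\alpha/m,\alpha/m)$, $\gM$ is closed under products and (by symmetry) transposes, and the masked residual is $\mM(w^*-a,w^*-b,0)$, so the gradient field is tangent to $\gM$. The only difference is that you justify the invariance step with the restricted ODE plus uniqueness of the full polynomial flow, which the paper merely asserts. One small point: for $s=1$ or $n=1$ the map $(a,b,c)\mapsto\mM(a,b,c)$ is not injective, so take coordinates with respect to a basis of the linear subspace $\gM$ rather than the raw triple.
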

\begin{proof}
    First note that the initialization in \eqref{eqn: alpha alpha m init} belongs to the family $\gM$, since each factor is of the form
    \begin{equation*}
        \mW_l(0) = \mM(\alpha, \alpha/m, \alpha/m), \quad l \in [L].
    \end{equation*}
    We will show that $\gM$ is invariant under the gradient flow.

    Fix any time $t \ge 0$ and assume that $\mW_l(t) \in \gM$ for all $l \in [L]$. By Lemma~\ref{lem: M property}, $\gM$ is closed under matrix multiplication and every matrix in $\gM$ is symmetric by construction, so it is also closed under transpose. Hence the product matrix $\mW_{L:1}(t) = \mW_L(t)\cdots \mW_1(t)$
    lies in $\gM$. In particular, there exist scalars $A,B,C \in \mathbb{R}$ such that $\mW_{L:1}(t) = \mM(A,B,C)$.

    By the definition of the observation set $\Omega$ and the assumption that all observed entries share the same ground-truth value $w^*$, the loss has the form
    \begin{equation*}
        \ell(\mW_{L:1}) = \frac12 \sum_{(i,j)\in\Omega} \big( (\mW_{L:1})_{ij} - w^* \big)^2.
    \end{equation*}
    Since $\Omega$ contains exactly the entries inside each diagonal block, and $\mW_{L:1}(t) = \mM(A,B,C)$ has diagonal blocks with diagonal entries $A$ and off-diagonal entries $B$, a direct computation gives
    \begin{equation*}
        \nabla \ell(\mW_{L:1}(t)) = \mM(A-w^*, B-w^*, 0) \in \gM.
    \end{equation*}

    The gradient flow dynamics for each factor matrix are
    \begin{equation*}
        \dot{\mW}_l(t) = -\left(\prod_{i=l+1}^L \mW_i(t)^\top\right) \nabla \ell(\mW_{L:1}(t)) \left(\prod_{i=1}^{l-1} \mW_i(t)^\top\right),
        \quad l \in [L].
    \end{equation*}
    Each factor in the products on the right-hand side belongs to $\gM$, and by Lemma~\ref{lem: M property} the product of matrices in $\gM$ remains in $\gM$. Since $\nabla\ell(\mW_{L:1}(t)) \in \gM$ as well, it follows that
    \begin{equation*}
        \dot{\mW}_l(t) \in \gM \quad \text{for all } l \in [L].
    \end{equation*}
    Since the initial condition satisfies $\mW_l(0) \in \gM$ for all $l \in [L]$, we conclude that
    \begin{equation*}
        \mW_l(t) \in \gM \quad \text{for all } t \geq 0, l \in [L].
    \end{equation*}
\end{proof}

Beyond showing that every factor matrix remains in the family $\gM$, we further establish that all layers evolve identically with below lemma:
\begin{lemma}
\label{lem: same W}
    Under the setting of Lemma~\ref{lem: remain in M},
    \begin{equation*}
        \mW_{L}(t) = \mW_{L-1}(t) = \cdots = \mW_1(t)
    \end{equation*}
    holds for all $t \geq 0$. 
\end{lemma}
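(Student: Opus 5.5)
The plan is to combine Lemma~\ref{lem: remain in M} and Lemma~\ref{lem: M property} with uniqueness of solutions to the gradient flow ODE. By Lemma~\ref{lem: remain in M}, every $\mW_l(t)$ lies in $\gM$ for all $t\ge 0$. Every element of $\gM$ is symmetric, since $\mD(a,b)$ and $\mO(c)$ are symmetric and $\mI_n,\mJ_n-\mI_n$ are symmetric. By Lemma~\ref{lem: M property}, any two elements of $\gM$ commute. Along the trajectory we therefore have $\mW_i(t)^\top=\mW_i(t)$, and all the factors, together with $\nabla\ell(\mW_{L:1}(t))=\mM(A-w^*,B-w^*,0)\in\gM$, commute with one another.

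The first step is to rewrite the layer-wise dynamics used in the proof of Lemma~\ref{lem: remain in M}:
\begin{equation*}
\dot{\mW}_l(t) = -\Big(\prod_{i=l+1}^L \mW_i(t)\Big)\,\nabla\ell(\mW_{L:1}(t))\,\Big(\prod_{i=1}^{l-1}\mW_i(t)\Big).
\end{equation*}
Because everything commutes, this equals $-\nabla\ell(\mW_{L:1}(t))\prod_{i\ne l}\mW_i(t)$. This right-hand side is not yet manifestly independent of $l$, so I will not argue directly from it. Instead I will construct a candidate solution and invoke uniqueness.

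Consider the scalar-parametrized single-matrix ODE
\begin{equation*}
\dot{\mU}(t) = -\nabla\ell(\mU(t)^L)\,\mU(t)^{L-1}, \qquad \mU(0)=\mM(\alpha,\alpha/m,\alpha/m).
\end{equation*}
Its right-hand side is a polynomial, hence locally Lipschitz, so a unique local solution exists. Since $\gM$ is a linear subspace closed under products and $\nabla\ell$ maps $\gM$ into $\gM$, this solution stays in $\gM$. Now set $\mW_l(t)=\mU(t)$ for every $l$. Then $\mW_{L:1}=\mU^L$, and for each $l$ the right-hand side of the gradient flow becomes $-\mU^{L-l}\nabla\ell(\mU^L)\mU^{l-1}$. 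Because $\mU$ and $\nabla\ell(\mU^L)$ both lie in $\gM$, they commute, and this expression equals $\dot{\mU}$. So the tuple $(\mU,\dots,\mU)$ solves the full gradient flow system~\eqref{eqn: preliminaries gradient flow} with the prescribed initial condition, since all $\mW_l(0)$ coincide.

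The gradient flow vector field is polynomial in the entries of $(\mW_1,\dots,\mW_L)$, hence locally Lipschitz. By Picard--Lindel\"of, the solution is unique on its maximal interval of existence, and so it coincides with $(\mU,\dots,\mU)$ there. This gives $\mW_L(t)=\cdots=\mW_1(t)$ on the common existence interval.

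The only subtle point is global existence, i.e.\ that the interval is $[0,\infty)$. Here I would use the fact that the loss is nonincreasing along gradient flow. I would also use the conserved balancedness quantities $\mW_{l+1}^\top\mW_{l+1}-\mW_l\mW_l^\top$, which vanish identically here. Together these bound the norms of the factors, so there is no blow-up in finite time. If that bound is delicate, I would instead simply note that the statement is asserted wherever the flow is defined, which the paper implicitly assumes to be all $t\ge 0$. The remaining checks are that symmetry makes transposes harmless and that $\nabla\ell$ preserves $\gM$, both of which were already established in the proof of Lemma~\ref{lem: remain in M}.
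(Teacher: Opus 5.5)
Your proposal is correct and follows the same idea as the paper's proof: all layers stay in the commutative family $\gM$ of symmetric matrices, so the $L$ right-hand sides coincide whenever the layers coincide, and equal initialization plus ODE uniqueness propagates the equality. Your explicit construction of the candidate solution $(\mU,\dots,\mU)$, followed by Picard--Lindel\"of, is the rigorous form of the paper's ``the time derivatives coincide whenever the layers coincide'' step.

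One caveat concerns your global-existence remark. Loss monotonicity plus balancedness does not by itself control the unobserved entries in matrix completion, so that argument does not bound the factor norms as stated. The energy identity $\int_0^t\|\dot{\bm{\theta}}(u)\|^2\,du=\phi(\bm{\theta}(0))-\phi(\bm{\theta}(t))$ does work: by Cauchy--Schwarz it gives $\|\bm{\theta}(t)-\bm{\theta}(0)\|\le\sqrt{t\,\phi(\bm{\theta}(0))}$, which rules out finite-time blow-up. The paper itself simply assumes a global solution.
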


\begin{proof}
    By Lemma~\ref{lem: remain in M} and Lemma~\ref{lem: M property}, we know that
    for all $t \geq 0$ and all $l \in [L]$ we have $\mW_l(t) \in \gM$, and that
    matrices in $\gM$ are closed under matrix multiplication, transpose, and commute
    pairwise. Moreover, as shown in the proof of Lemma~\ref{lem: remain in M},
    the loss gradient $\nabla \ell(\mW_{L:1}(t))$ also lies in $\gM$.

    Fix any time $t$ and suppose that
    \begin{equation*}
        \mW_{L}(t) = \mW_{L-1}(t) = \cdots = \mW_1(t) =: \mU(t).
    \end{equation*}
    Then the product matrix satisfies $\mW_{L:1}(t) = \mU(t)^L$,
    and the gradient flow dynamics for each layer can be written as
    \begin{align*}
        \dot{\mW}_l(t)
        &= -\left(\prod_{i=l+1}^L \mW_i(t)^\top\right) \nabla \ell(\mW_{L:1}(t)) \left(\prod_{i=1}^{l-1} \mW_i(t)^\top\right) \\
        &= -\mU(t)^{L-l} \nabla \ell\left(\mU(t)^L\right) \mU(t)^{l-1}.
    \end{align*}
    Since $\mU(t)$ and $\nabla \ell(\mU(t)^L)$ both lie in $\gM$ and matrices in $\gM$
    commute pairwise, we can reorder the factors to obtain
    \begin{equation*}
        \dot{\mW}_l(t) = -\nabla \ell\left(\mU(t)^L\right) \mU(t)^{L-1} \quad \text{for all } l \in [L].
    \end{equation*}
    Thus, whenever $\mW_1(t)=\cdots=\mW_L(t)$ holds at some time $t$, the time
    derivatives of all layers coincide at that time:
    \begin{equation*}
        \dot{\mW}_L(t) = \dot{\mW}_{L-1}(t) = \cdots = \dot{\mW}_1(t).
    \end{equation*}
    
    By the initialization scheme~\eqref{eqn: alpha alpha m init} we have
    \begin{equation*}
        \mW_L(0) = \mW_{L-1}(0) = \cdots = \mW_1(0).
    \end{equation*}
    Since the gradient flow admits a unique solution for this initial condition, it follows that the equalities between the layers are preserved for all times $t \ge 0$, that is,
    \begin{equation*}
        \mW_L(t) = \mW_{L-1}(t) = \cdots = \mW_1(t) \quad \text{for all } t \ge 0.
    \end{equation*}
\end{proof}

Using the lemma above, we can parameterize every factor matrix as $\mW_l(t) = \mM(a(t), b(t), c(t))$ for all $l \in [L]$, where $(a(t), b(t), c(t))$ are shared coefficients. Likewise, we write the product matrix as $\mW_{L:1}(t) = \mM(A(t), B(t), C(t))$. We now derive the eigenvalues of each factor matrix.

\begin{lemma}
    \label{lem: M eigenvalue}
    Let $s, n \in \mathbb{N}$ and $d = sn$. For $a, b, c \in \mathbb{R}$, let $\mM(a,b,c) \in \mathbb{R}^{d \times d}$ be the block matrix defined by
    \begin{equation*}
        \mM(a,b,c) = \mI_n \otimes \mD(a,b) + (\mJ_n - \mI_n) \otimes \mO(c),
    \end{equation*}
    where $\mD(a,b) = (a-b)\mI_s + b\mJ_s$ and $\mO(c) = c\mJ_s$. The eigenvalues of $\mM(a,b,c)$ and their corresponding multiplicities are:
    \begin{align*}
        \lambda_1 &= a + (s-1)b + s(n-1)c \text{ with multiplicity }1, \\
        \lambda_2 &= a + (s-1)b - sc \text{ with multiplicity } n-1, \\
        \lambda_3 &= a - b \text{ with multiplicity } n(s-1).
    \end{align*}
\end{lemma}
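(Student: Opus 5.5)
The plan is to exploit the Kronecker structure $\mM(a,b,c) = \mI_n \otimes \mD(a,b) + (\mJ_n - \mI_n)\otimes \mO(c)$. Rewrite it as
\[
\mM(a,b,c) = \mI_n \otimes \big((a-b)\mI_s + (b-c)\mJ_s\big) + \mJ_n \otimes c\,\mJ_s ,
\]
so that $\mM$ is a linear combination of the three commuting symmetric matrices $\mI_n\otimes \mI_s$, $\mI_n\otimes \mJ_s$, and $\mJ_n\otimes\mJ_s$. Since $\mJ_k$ has eigenvalue $k$ on $\mathbbm{1}_k$ and eigenvalue $0$ on $\mathbbm{1}_k^\perp$, these three matrices are simultaneously diagonalized by a basis built from tensor products of eigenvectors of $\mJ_n$ and $\mJ_s$.

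We then produce the eigenvectors explicitly and read off the eigenvalues.
\begin{enumerate}
\item For $\vu = \mathbbm{1}_n\otimes\mathbbm{1}_s$ (the all-ones vector in $\mathbb{R}^d$), we have $\mI_n\otimes\mJ_s$ acting as $s$ and $\mJ_n\otimes\mJ_s$ acting as $ns$. This gives
\[
(a-b)+s(b-c)+nsc = a+(s-1)b+s(n-1)c = \lambda_1 ,
\]
with multiplicity one.
\item For $\vu = \vx\otimes\mathbbm{1}_s$ with $\vx\perp\mathbbm{1}_n$ (an $(n-1)$-dimensional space), $\mJ_n\otimes\mJ_s$ annihilates $\vu$ while $\mI_n\otimes\mJ_s$ still acts as $s$. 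This gives $(a-b)+s(b-c) = a+(s-1)b-sc = \lambda_2$, with multiplicity $n-1$.
\item For $\vu = \vy\otimes\vz$ with $\vy\in\mathbb{R}^n$ arbitrary and $\vz\perp\mathbbm{1}_s$ (a space of dimension $n(s-1)$), both $\mJ$-terms vanish. This gives $a-b = \lambda_3$.
\end{enumerate}

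To finish, we check that these subspaces are mutually orthogonal, using the orthogonality of the $\mathbbm{1}$ and $\mathbbm{1}^\perp$ components in each tensor factor. Their dimensions are $1+(n-1)+n(s-1) = ns = d$, so they span $\mathbb{R}^d$. Hence the list of eigenvalues with multiplicities is complete, as counted in the statement of Lemma~\ref{lem: M eigenvalue}, even when some of the $\lambda_k$ coincide.

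There is no real obstacle here. The only care needed is in the first step: the $(\mJ_n-\mI_n)\otimes\mO(c)$ term must be rewritten correctly, so that the $-\mI_n\otimes c\mJ_s$ part is absorbed into the block-diagonal term. A slip at that point would misplace the $-sc$ contribution in $\lambda_2$.
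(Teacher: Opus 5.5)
Your proposal is correct and follows the paper's proof essentially verbatim: the same rewrite $\mM = (a-b)(\mI_n\otimes\mI_s) + (b-c)(\mI_n\otimes\mJ_s) + c(\mJ_n\otimes\mJ_s)$ and the same three families of tensor-product eigenvectors, with the same eigenvalues and dimension counts. Your explicit check that the three eigenspaces are mutually orthogonal and have dimensions summing to $d$ is a small completeness step that the paper leaves implicit.
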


\begin{proof}
    First, we express $\mM(a,b,c)$ in terms of Kronecker products of identity matrices $\mI$ and all-ones matrices $\mJ$. Substituting the definitions of $\mD$ and $\mO$:
    \begin{align*}
        \mM &= \mI_n \otimes \left((a-b)\mI_s + b\mJ_s\right) + (\mJ_n - \mI_n) \otimes (c\mJ_s) \\
            &= (a-b)(\mI_n \otimes \mI_s) + b(\mI_n \otimes \mJ_s) + c(\mJ_n \otimes \mJ_s) - c(\mI_n \otimes \mJ_s) \\
            &= (a-b)(\mI_n \otimes \mI_s) + (b-c)(\mI_n \otimes \mJ_s) + c(\mJ_n \otimes \mJ_s).
    \end{align*}
    The matrix $\mJ_n$ has two distinct eigenvalues: $n$ (corresponding to eigenvector $\mathbbm{1}_n$) and $0$ (corresponding to the orthogonal complement $\mathbbm{1}_n^\perp$). We construct the eigenbasis of $\mM$ using tensor products of the eigenvectors of $\mJ_n$ and $\mJ_s$.
    
    \textbf{Case 1.} Consider the eigenvector $\vv_1 = \mathbbm{1}_n \otimes \mathbbm{1}_s$.
    Since $\mJ_n \mathbbm{1}_n = n \mathbbm{1}_n$ and $\mJ_s \mathbbm{1}_s = s \mathbbm{1}_s$, we have:
    \begin{align*}
        \mM \vv_1 &= \left((a-b) + (b-c)s + c(ns) \right) \vv_1 \\
                &= \left(a + (s-1)b + s(n-1)c \right) \vv_1.
    \end{align*}
    This subspace has dimension $1 \times 1 = 1$.

    \textbf{Case 2.} Consider eigenvectors $\vv_2 = \vu \otimes \mathbbm{1}_s$, where $\vu \in \mathbbm{1}_n^\perp \subset \mathbb{R}^n$.
    Here $\mJ_n \vu = \mathbf{0}$ and $\mJ_s \mathbbm{1}_s = s \mathbbm{1}_s$. Thus:
    \begin{align*}
        \mM \vv_2 &= \left((a-b) + (b-c)s + c(0 \cdot s) \right) \vv_2 \\
                &= \left(a + (s-1)b - sc \right) \vv_2.
    \end{align*}
    The dimension of $\mathbbm{1}_n^\perp$ is $n-1$, so the multiplicity is $(n-1) \times 1 = n-1$.

    \textbf{Case 3.} Consider eigenvectors $\vv_3 = \vw \otimes \vz$, where $\vw \in \mathbb{R}^n$ is arbitrary and $\vz \in \mathbbm{1}_s^\perp \subset \mathbb{R}^s$.
    Here $\mJ_s \vz = \mathbf{0}$. Consequently, any term containing $\mJ_s$ in the Kronecker product sends this vector to zero:
    \begin{equation*}
        (\mA \otimes \mJ_s)(\vw \otimes \vz) = \mA \vw \otimes \mJ_s \vz = \mA \vw \otimes \mathbf{0} = \mathbf{0}.
    \end{equation*}
    Therefore, only the identity term remains:
    \begin{align*}
        \mM \vv_3 &= (a-b)\mI_{ns} \vv_3 + \mathbf{0} + \mathbf{0} \\
                &= (a-b) \vv_3.
    \end{align*}
    The dimension of $\mathbb{R}^n$ is $n$ and the dimension of $\mathbbm{1}_s^\perp$ is $s-1$. Thus, the multiplicity is $n(s-1)$.
\end{proof}

\begin{lemma}
    \label{lem: eigenvalue dynamics}
    Let $\lambda_i(t)$ for $i \in \{1, 2, 3\}$ denote the eigenvalues of the factor matrix $\mW_l(t)$ from Lemma~\ref{lem: M eigenvalue}. Under gradient flow~\eqref{eqn: preliminaries gradient flow}, the evolution of these eigenvalues is governed by the following system of ODE:
    \begin{align*}
        \dot{\lambda}_1(t) &= -\left( \frac{\lambda_1^L(t) + (n-1)\lambda_2^L(t)}{n} - sw^* \right) \lambda_1^{L-1}(t), \\
        \dot{\lambda}_2(t) &= -\left( \frac{\lambda_1^L(t) + (n-1)\lambda_2^L(t)}{n} - sw^* \right) \lambda_2^{L-1}(t), \\
        \dot{\lambda}_3(t) &= -\lambda_3^{2L-1}(t).
    \end{align*}
\end{lemma}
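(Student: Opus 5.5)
By Lemma~\ref{lem: same W}, all factors coincide with a single matrix $\mU(t)=\mM(a(t),b(t),c(t))\in\gM$. By Lemma~\ref{lem: M property}, $\gM$ is a commutative algebra of symmetric matrices, and every element of $\gM$ is diagonalized by the same fixed orthonormal eigenbasis from Lemma~\ref{lem: M eigenvalue}. This basis consists of $\mathbbm{1}_n\otimes\mathbbm{1}_s$, vectors $\vu\otimes\mathbbm{1}_s$ with $\vu\perp\mathbbm{1}_n$, and vectors $\vw\otimes\vz$ with $\vz\perp\mathbbm{1}_s$. The plan is therefore to project the matrix ODE onto these three eigenspaces. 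Concretely, I will use the reduced dynamics from the proof of Lemma~\ref{lem: same W}:
\begin{equation*}
\dot{\mU}=-\nabla\ell(\mU^L)\,\mU^{L-1},
\end{equation*}
which holds for every layer. The eigenvalues of $\mU^L$ are $\lambda_i^L$ and those of $\mU^{L-1}$ are $\lambda_i^{L-1}$. Since all three matrices share the eigenbasis, $\dot\lambda_i=-\mu_i\lambda_i^{L-1}$, where $\mu_i$ is the eigenvalue of $\nabla\ell(\mU^L)$ on the $i$-th eigenspace. (Because the eigenvectors are fixed, differentiating $\lambda_i=\vv_i^\top\mU\vv_i$ is legitimate.)

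The main step is to compute the $\mu_i$. First write $\mU^L=\mM(A,B,C)$. From the proof of Lemma~\ref{lem: remain in M}, $\nabla\ell=\mM(A-w^*,B-w^*,0)$. Lemma~\ref{lem: M eigenvalue} then gives
\begin{equation*}
\mu_1=\mu_2=A+(s-1)B-sw^*,\qquad \mu_3=A-B.
\end{equation*}
Next I express $A$ and $B$ through the eigenvalues $\Lambda_i=\lambda_i^L$ of $\mU^L$, again using Lemma~\ref{lem: M eigenvalue}:
\begin{align*}
\Lambda_1&=A+(s-1)B+s(n-1)C,\\
\Lambda_2&=A+(s-1)B-sC,\\
\Lambda_3&=A-B.
\end{align*}
Eliminating $C$ gives $A+(s-1)B=(\Lambda_1+(n-1)\Lambda_2)/n$, so
\begin{equation*}
\mu_1=\mu_2=\frac{\lambda_1^L+(n-1)\lambda_2^L}{n}-sw^*,
\end{equation*}
which yields the first two ODEs. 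For the third, $\mu_3=A-B=\Lambda_3=\lambda_3^L$, which gives $\dot\lambda_3=-\lambda_3^{2L-1}$.

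The only delicate point is bookkeeping rather than analysis. I must check that the gradient really has $C$-coefficient $0$, since off-diagonal blocks are unobserved. This is what makes $\mu_1=\mu_2$ and keeps $C$ out of the formula after elimination. I also verify that the case $s=1$, where the third eigenspace is empty, is handled consistently. If needed, I will double-check $\mu_3$ by noting that the gradient's diagonal-block part restricted to $\mathbbm{1}_s^\perp$ acts as $(A-w^*)-(B-w^*)=A-B$, so the $w^*$ terms cancel.
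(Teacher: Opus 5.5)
Your proposal is correct and follows essentially the same route as the paper. You use the common layer $\mU(t)$ from Lemma~\ref{lem: same W}, the fixed shared eigenbasis of $\gM$, and the fact that $\nabla\ell=\mM(A-w^*,B-w^*,0)$ has equal eigenvalues on the first two eigenspaces and eigenvalue $A-B=\lambda_3^L$ on the third, which gives $\dot\lambda_i=-\gamma_i\lambda_i^{L-1}$. Your explicit appeal to the $(a,b,c)$-independent eigenvectors of Lemma~\ref{lem: M eigenvalue} is, if anything, a slightly cleaner justification of the time-independent diagonalizing matrix $\mP$ that the paper invokes.
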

\begin{proof}
    Given that the factor matrices $\mW_l(t)$ share the same form (Lemma~\ref{lem: same W}), let $\lambda_i(t)$ denote their eigenvalues.
    Consequently, the eigenvalues of the product matrix $\mW_{L:1}(t)$ are $\lambda_i^L(t)$. Using Lemma~\ref{lem: M eigenvalue} to invert the eigenvalue relations, we can express the parameters of $\mW_{L:1} = \mM(A, B, C)$ as follows:
    \begin{align*}
        A &= \frac{\lambda_1^L + (n-1)\lambda_2^L +n(s-1)\lambda_3^L }{sn}. \\
        B &= \frac{\lambda_1^L + (n-1)\lambda_2^L -n\lambda_3^L }{sn}, \\
        C &= \frac{\lambda_1^L - \lambda_2^L}{sn}.
    \end{align*}
    
    Recall from the proof of Lemma~\ref{lem: remain in M} that the gradient takes the form $\nabla \ell(\mW_{L:1}) = \mM(A-w^*, B-w^*, 0)$. Let $\gamma_i$ denote the eigenvalue of $\nabla \ell(\mW_{L:1})$ corresponding to the $i$-th eigenvalue class in Lemma~\ref{lem: M eigenvalue}.
    Note that for the gradient matrix, the off-diagonal block parameter is zero ($c=0$). Consequently, the eigenvalues for $\gamma_1$ and $\gamma_2$ coincide. Specifically:
    \begin{align*}
        \gamma_1 &= (A-w^*) + (s-1)(B-w^*) + s(n-1)(0) \\
                 &= (A-w^*) + (s-1)(B-w^*), \\
        \gamma_2 &= (A-w^*) + (s-1)(B-w^*) - s(0) \\
                 &= \gamma_1, \\
        \gamma_3 &= (A-w^*) - (B-w^*).
    \end{align*}
    Substituting the expressions for $A$ and $B$ into the equations above yields $\gamma_i$ in terms of $\lambda_i^L$:
    \begin{align*}
        \gamma_1 = \gamma_2 &= \left(\frac{\lambda_1^L + (n-1)\lambda_2^L +n(s-1)\lambda_3^L }{sn} - w^*\right) + (s-1)\left(  \frac{\lambda_1^L + (n-1)\lambda_2^L -n\lambda_3^L }{sn} - w^*\right) \\ 
        &= \frac{\lambda_1^L + (n-1)\lambda_2^L}{n} -sw^*,  \\
        \gamma_3 &= \left(\frac{\lambda_1^L + (n-1)\lambda_2^L +n(s-1)\lambda_3^L }{sn} - w^*\right) -\left(  \frac{\lambda_1^L + (n-1)\lambda_2^L -n\lambda_3^L }{sn} - w^*\right) \\
        &= \lambda_3^L.
    \end{align*}
    
    Finally, recall that the gradient flow dynamics for each layer are governed by
    \begin{align*}
        \dot{\mW}_l(t) &= -\left(\prod_{j=l+1}^L \mW_j(t)^\top\right) \nabla \ell(\mW_{L:1}(t)) \left(\prod_{j=1}^{l-1} \mW_j(t)^\top\right).
    \end{align*}
    Since the weight matrices $\mW_l(t)$ and the gradient matrix $\nabla \ell(\mW_{L:1}(t))$ belong to $\gM$, they are commutative and simultaneously diagonalizable.
    Let $\mP \in \mathbb{R}^{d \times d}$ be the time-independent common orthogonal matrix such that $\mW_l(t) = \mP \Lambda(t) \mP^\top$ and $\nabla \ell(\mW_{L:1}(t)) = \mP \Gamma(t) \mP^\top$, where $\Lambda(t)$ and $\Gamma(t)$ are diagonal matrices containing the eigenvalues $\lambda_i(t)$ and $\gamma_i(t)$, respectively.
    
    Projecting the gradient flow dynamics onto the eigenspace spanned by the $i$-th eigenvector, we obtain the evolution of the eigenvalues. Using the fact that $\mW_l(t)^\top = \mW_l(t)$ due to symmetry, the dynamics for the $l$-th layer become:
    \begin{align*}
        \dot{\mW}_l(t) = \mP \dot{\Lambda}(t) \mP^\top &= -\left(\mP \Lambda(t) \mP^\top\right)^{L-l} \left(\mP \Gamma (t) \mP^\top\right) \left(\mP \Lambda(t) \mP^\top\right)^{l-1} \\
        &= -\mP \left( \Lambda^{L-l}(t) \Gamma(t) \Lambda^{l-1}(t) \right) \mP^\top.
    \end{align*}
    Multiplying by $\mP^\top$ on the left and $\mP$ on the right yields the diagonal evolution:
    \begin{align*}
        \dot{\Lambda}(t) &= - \Gamma(t) \Lambda^{L-1}(t).
    \end{align*}
    For each distinct eigenvalue index $i \in \{1, 2, 3\}$, the scalar dynamics simplify to:
    \begin{align*}
        \dot{\lambda}_i(t) &= - \gamma_i(t)  \lambda_i^{L-1}(t).
    \end{align*}
    Substituting the values of $\gamma_i$ derived previously, we obtain the specific evolution equations for each eigenvalue:
    \begin{align*}
        \dot{\lambda}_1(t) &= -\gamma_1(t) \lambda_1^{L-1}(t) 
        = -\left(\frac{\lambda_1^L(t) + (n-1)\lambda_2^L(t)}{n} -sw^*\right) \lambda_1^{L-1}(t), \\
        \dot{\lambda}_2(t) &= -\gamma_2(t) \lambda_2^{L-1}(t) 
        = -\left(\frac{\lambda_1^L(t) + (n-1)\lambda_2^L(t)}{n} -sw^*\right) \lambda_2^{L-1}(t), \\
        \dot{\lambda}_3(t) &= -\gamma_3(t) \lambda_3^{L-1}(t) 
        = -\lambda_3^{2L-1}(t).
    \end{align*}
\end{proof}

Building on the lemma above, we can identify a conserved quantity that depends on the depth.
\begin{lemma}
\label{lem: block conserved quantity}
    Under the gradient flow dynamics defined in Lemma~\ref{lem: eigenvalue dynamics}, the eigenvalues $\lambda_1(t)$ and $\lambda_2(t)$ satisfy the following conservation laws for all $t \ge 0$:
    \begin{enumerate}
        \item If $L = 2$, the ratio of the eigenvalues is conserved:
        \begin{align*}
            \frac{\lambda_1(t)}{\lambda_2(t)} = \frac{\lambda_1(0)}{\lambda_2(0)}.
        \end{align*}
        \item If $L \ge 3$, the difference of the negated powers is conserved:
        \begin{align*}
            \lambda_1^{2-L}(t) - \lambda_2^{2-L}(t) = \lambda_1^{2-L}(0) - \lambda_2^{2-L}(0).
        \end{align*}
    \end{enumerate}
\end{lemma}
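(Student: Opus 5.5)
The plan is to exploit the fact that, by Lemma~\ref{lem: eigenvalue dynamics}, $\lambda_1$ and $\lambda_2$ are driven by the \emph{same} scalar residual
\[
g(t) \triangleq \frac{\lambda_1^L(t) + (n-1)\lambda_2^L(t)}{n} - sw^*,
\]
so that $\dot\lambda_i(t) = -g(t)\,\lambda_i^{L-1}(t)$ for $i\in\{1,2\}$. The idea is to divide each equation by $\lambda_i^{L-1}$ and compare. For this division to be legitimate we first need $\lambda_1(t),\lambda_2(t)\neq 0$ for all $t\ge 0$.

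\textbf{Step 1: positivity.} At initialization, Lemma~\ref{lem: M eigenvalue} with $a=\alpha$ and $b=c=\alpha/m$ gives
\[
\lambda_1(0)=\alpha+(d-1)\alpha/m>0, \qquad \lambda_2(0)=\alpha+(s-1)\alpha/m-s\alpha/m=\alpha(m-1)/m>0,
\]
using $m>1$. Each scalar ODE has the form $\dot\lambda=-g(t)\lambda^{L-1}$ with $g$ continuous along the (unique) trajectory. Hence $\lambda\equiv 0$ is a solution, and by uniqueness a trajectory starting at a positive value cannot reach $0$ in finite time. Therefore $\lambda_1,\lambda_2>0$ on the whole maximal interval of existence.

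\textbf{Step 2: the conserved quantity.} On that interval we have $\frac{d}{dt}\bigl(\cdot\bigr)$ identities for each case.

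For $L=2$, dividing by $\lambda_i$ gives $\frac{d}{dt}\log\lambda_i=-g(t)$ for $i=1,2$. Hence $\frac{d}{dt}\log(\lambda_1/\lambda_2)=0$, which is exactly the ratio conservation.

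For $L\ge3$, write
\[
\frac{d}{dt}\lambda_i^{2-L}=(2-L)\lambda_i^{1-L}\dot\lambda_i=(L-2)\,g(t),
\]
which is the same expression for $i=1$ and $i=2$. Subtracting and integrating from $0$ to $t$ yields
\[
\lambda_1^{2-L}(t)-\lambda_2^{2-L}(t)=\lambda_1^{2-L}(0)-\lambda_2^{2-L}(0).
\]

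\textbf{Main obstacle.} The only delicate point is justifying the division, i.e.\ the non-vanishing in Step~1, together with global existence of the flow so that the identities hold for all $t\ge0$. I would handle global existence either by appealing to Proposition~\ref{prop: loss convergence}, or by noting that the loss is nonincreasing. That monotonicity bounds $\lambda_1^L+(n-1)\lambda_2^L$, which with positivity keeps $\lambda_1,\lambda_2$ bounded, while $\lambda_3$ obeys $\dot\lambda_3=-\lambda_3^{2L-1}$ and is therefore trivially bounded. Consequently no blow-up occurs and the solution extends to $[0,\infty)$.
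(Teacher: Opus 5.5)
Your proof is correct and follows the same route as the paper. Both proofs eliminate the common residual factor, through the logarithmic derivative when $L=2$ and by differentiating $\lambda_i^{2-L}$ when $L\ge 3$. Your Step~1 (positivity via uniqueness, plus global existence) supplies the justification for dividing by $\lambda_i$ that the paper leaves implicit.
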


\begin{proof}
    From Lemma~\ref{lem: eigenvalue dynamics}, the scalar dynamics for the first two eigenvalues are given by:
    \begin{align*}
        \dot{\lambda}_i(t) &= -\gamma(t) \lambda_i^{L-1}(t) \quad \text{for } i \in \{1, 2\},
    \end{align*}
    where $\gamma(t) = \frac{\lambda_1(t)^L + (n-1)\lambda_2(t)^L}{n} - sw^*$. We consider the two cases based on the depth $L$.

    \textbf{Case 1: ($L = 2$).}
    In this case, the dynamics simplify to $\dot{\lambda}_i(t) = -\gamma(t)\lambda_i(t)$. Rearranging the terms to separate variables, we have:
    \begin{align*}
        \frac{\dot{\lambda}_1(t)}{\lambda_1(t)} = -\gamma(t), \quad \frac{\dot{\lambda}_2(t)}{\lambda_2(t)} = -\gamma(t).
    \end{align*}
    Subtracting the second equation from the first eliminates $\gamma(t)$:
    \begin{align*}
        \frac{d}{dt} \log |\lambda_1(t)| - \frac{d}{dt} \log |\lambda_2(t)| &= 0 \\
        \frac{d}{dt} \log \left| \frac{\lambda_1(t)}{\lambda_2(t)} \right| &= 0.
    \end{align*}
    This implies that the ratio $\lambda_1(t) / \lambda_2(t)$ is constant in time.
    
    \textbf{Case 2: ($L \ge 3$).}
    Consider the time derivative of the quantity $Q(t) = \lambda_1^{2-L}(t) - \lambda_2^{2-L}(t)$. Applying the chain rule:
    \begin{align*}
        \frac{d}{dt} \left( \lambda_1^{2-L}(t) \right) &= (2-L)\lambda_1^{1-L}(t) \cdot \dot{\lambda}_1(t) \\
        &= (2-L)\lambda_1^{1-L}(t) \cdot \left( -\gamma(t) \lambda_1^{L-1}(t) \right) \\
        &= -(2-L)\gamma(t).
    \end{align*}
    Similarly, for the second term:
    \begin{align*}
        \frac{d}{dt} \left( \lambda_2^{2-L}(t) \right) &= (2-L)\lambda_2^{1-L}(t) \cdot \left( -\gamma(t) \lambda_2^{L-1}(t) \right) \\
        &= -(2-L)\gamma(t).
    \end{align*}
    Subtracting the two derivatives yields:
    \begin{align*}
        \frac{d}{dt} \left( \lambda_1^{2-L}(t) - \lambda_2^{2-L}(t) \right) &= \left( -(2-L)\gamma(t) \right) - \left( -(2-L)\gamma(t) \right) = 0.
    \end{align*}
    Since the time derivative is zero, the quantity is conserved throughout the training, proving the statement.
\end{proof}

We are now ready to prove Theorem~\ref{thm: block-diagonal}.
\begin{proof}
    Using the inverse relations from Lemma~\ref{lem: M eigenvalue}, we express the parameters $A(t)$ and $B(t)$ in terms of the eigenvalues:
    \begin{align*}
        A(t) &= \frac{\lambda_1^L(t) + (n-1)\lambda_2^L(t) + n(s-1)\lambda_3^L(t)}{sn}, \\
        B(t) &= \frac{\lambda_1^L(t) + (n-1)\lambda_2^L(t) - n\lambda_3^L(t)}{sn}.
    \end{align*}
    Consider the difference between the parameters:
    \begin{equation*}
        A(t) - B(t) = \frac{ns \lambda_3^L(t)}{sn} = \lambda_3^L(t).
    \end{equation*}
    Since loss converges to zero (Proposition~\ref{prop: loss convergence}), implies global optimality, which requires $A(\infty) = B(\infty) = w^*$. Taking the limit $t \to \infty$, the difference vanishes, yielding:
    \begin{equation*}
        \lambda_3(\infty) = 0.
    \end{equation*}
    Next, substituting $\lambda_3(\infty)=0$ and $A(\infty)=w^*$ into the expression for $A(t)$, we obtain:
    \begin{equation*}
        w^* = \frac{\lambda_1^L(\infty) + (n-1)\lambda_2^L(\infty)}{sn}.
    \end{equation*}
    Multiplying by $sn=d$, we arrive at the first constraint:
    \begin{equation}
        \label{eqn: block eigenvalue conserve1}
        \lambda_1^L(\infty) + (n-1)\lambda_2^L(\infty) = dw^*.
    \end{equation}

    Let $\sigma_1 \geq \sigma_2 \geq \cdots \geq \sigma_d$ denote the singular values of the limiting product matrix $\mW_{L:1}(\infty)$. Under our initialization scheme and Lemma~\ref{lem: det A}, the factor matrices remain positive definite, implying that the singular values of the product matrix coincide with the $L$-th power of the eigenvalues. Based on the multiplicities derived in Lemma~\ref{lem: M eigenvalue}, we identify:
    \begin{equation*}
        \sigma_1 = \lambda_1^L(\infty), \; \sigma_i =\lambda_2^L(\infty)\; \text{ for } i \in \{2, \dots, n\}, \quad \sigma_j = \lambda_3^L(\infty)=0\; \text{ for } j > n.
    \end{equation*}
    We now solve for the non-zero singular values by considering two cases based on the depth $L$.

    \textbf{Case 1: ($L = 2$).}
    For $L=2$, Lemma~\ref{lem: block conserved quantity} states that the ratio of eigenvalues is preserved. Using the initialization values from~\eqref{eqn: alpha alpha m init} and Lemma~\ref{lem: M eigenvalue}, this ratio is given by:
    \begin{equation}
        \label{eqn: block eigenvalue conserve2 L=2} 
        \frac{\lambda_1(\infty)}{\lambda_2(\infty)} = \frac{\lambda_1(0)}{\lambda_2(0)}  = \frac{m+d-1}{m-1}.
    \end{equation}
    Substituting $\lambda_i^2(\infty) = \sigma_i$ into \eqref{eqn: block eigenvalue conserve1} and combining it with the squared ratio from \eqref{eqn: block eigenvalue conserve2 L=2}, we can solve for $\sigma_1$ and $\sigma_i$:
    \begin{align*}
        \sigma_1 &= \frac{w^* d (m+d-1)^2}{(m+d-1)^2 + (n-1)(m-1)^2}, \\
        \sigma_i &= \frac{w^* d(m-1)^2}{(m+d-1)^2 + (n-1)(m-1)^2}  \quad \text{for all } i \in \{2, \dots, n\}.
    \end{align*}

    \textbf{Case 2: ($L \geq 3$ and finite $m$).}
    For $L \ge 3$ with $1 < m < \infty$, Lemma~\ref{lem: block conserved quantity} ensures the conservation of the difference of negated powers:
    \begin{align*}
        \lambda_1^{2-L}(\infty) - \lambda_2^{2-L}(\infty) &= \lambda_1^{2-L}(0) - \lambda_2^{2-L}(0).
    \end{align*}
    Substituting the initial eigenvalues from~\eqref{eqn: alpha alpha m init}, the right-hand side becomes:
    \begin{align}
        \label{eqn: block eigenvalue conserve2}
        \lambda_1^{2-L}(\infty) - \lambda_2^{2-L}(\infty) = \left(\frac{\alpha}{m} \right)^{2-L}\left(\left(m+d-1\right)^{2-L} -\left( m-1\right)^{2-L} \right).
    \end{align}
    Finally, expressing the eigenvalues in terms of singular values via $\lambda_i(\infty) = \sigma_i^{1/L}$ (implying $\lambda_i^{2-L} = \sigma_i^{\frac{2-L}{L}}$) and combining \eqref{eqn: block eigenvalue conserve1} with \eqref{eqn: block eigenvalue conserve2}, we obtain the system of implicit equations:
    \begin{align*}
        \sigma_1^{\frac{2-L}{L}} - \left( \frac{w^*d - \sigma_1}{n-1}\right)^\frac{2-L}{L} &= C_{\alpha, m, L, d}, \\
        \left(w^*d - (n-1)\sigma_i\right)^{\frac{2-L}{L}} - \sigma_i^\frac{2-L}{L} &= C_{\alpha, m, L, d} \quad \text{for all } i \in \{2, \dots, n\},
    \end{align*}
    where $C_{\alpha, m, L, d} \triangleq \left(\frac{\alpha}{m} \right)^{2-L}\left(\left(m+d-1\right)^{2-L} -\left( m-1\right)^{2-L} \right).$

    \textbf{Case 3: ($L \geq 3$ and $m=\infty$).}
    In this case, the initial eigenvalues of the factor matrices become:
    \begin{align*}
        \lambda_1(0) &= \lim_{m \to \infty} \alpha \left( 1 + \frac{d-1}{m} \right) = \alpha, \\
        \lambda_2(0) &= \lim_{m \to \infty} \alpha \left( 1 - \frac{1}{m} \right) = \alpha.
    \end{align*}
    Since the initial eigenvalues are identical, i.e., $\lambda_1(0) = \lambda_2(0)$, the conserved quantities derived in Lemma~\ref{lem: block conserved quantity} dictate that the limiting values must also be identical.
    When $L \geq 3$, the conservation law states:
    \begin{equation*}
        \lambda_1^{2-L}(\infty) - \lambda_2^{2-L}(\infty) = \lambda_1^{2-L}(0) - \lambda_2^{2-L}(0) = \alpha^{2-L} - \alpha^{2-L} = 0.
    \end{equation*}
    This implies $\lambda_1(\infty) = \lambda_2(\infty)$. Consequently, the singular values of the product matrix satisfy $\sigma_1 = \sigma_i$ for all $i \in \{2, \dots, n\}$.
    
    In the case where $L = 2$, the conservation law states:
    \begin{equation*}
        \frac{\lambda_1(\infty)}{\lambda_2(\infty)} = \frac{\lambda_1(0)}{\lambda_2(0)} = \frac{\alpha}{\alpha} = 1.
    \end{equation*}
    This also implies $\lambda_1(\infty) = \lambda_2(\infty)$ and thus $\sigma_1 = \sigma_i$ for all $i \in \{2, \dots, n \}$. Consequently, by \eqref{eqn: block eigenvalue conserve1}, we have $\sigma_1 = \sigma_i = sw^*$.
\end{proof}

\newpage

\subsection{Loss Convergence}
\label{appendix: loss convergence}
We further establish loss convergence in the following proposition.
\begin{proposition}
    Under the setting of Theorem~\ref{thm: block-diagonal}, suppose the factor matrices are initialized according to~\eqref{eqn: alpha alpha m init}. Then, under the gradient flow dynamics~\eqref{eqn: preliminaries gradient flow}, the loss converges to zero: 
    \begin{equation*} 
        \lim_{t \to \infty} \ell(\mW_{L:1}(t); \Omega_{\rm block}^{(s,n)}) = 0. 
    \end{equation*} 
    \label{prop: loss convergence}
\end{proposition}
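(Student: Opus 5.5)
The plan is to reduce the loss to the scalar eigenvalue quantities of Lemma~\ref{lem: eigenvalue dynamics} and show that each one is driven to zero.

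\textbf{Step 1: express the loss in eigenvalue coordinates.} By Lemma~\ref{lem: remain in M}, $\mW_{L:1}(t)=\mM(A,B,C)$. The loss only sees the diagonal blocks, each equal to $\mD(A-w^*,B-w^*)$ after subtracting the target. This block has eigenvalue $(A-w^*)+(s-1)(B-w^*)$ on $\mathbbm{1}_s$ and eigenvalue $A-B$ on $\mathbbm{1}_s^\perp$. The proof of Lemma~\ref{lem: eigenvalue dynamics} identifies these two eigenvalues:
\begin{equation*}
(A-w^*)+(s-1)(B-w^*)=\gamma(t)\triangleq\frac{\lambda_1^L+(n-1)\lambda_2^L}{n}-sw^*, \qquad A-B=\lambda_3^L.
\end{equation*}
Since the Frobenius norm squared equals the sum of squared eigenvalues for a symmetric block, and there are $n$ identical blocks,
\begin{equation*}
\ell\big(\mW_{L:1}(t);\Omega_{\rm block}^{(s,n)}\big)=\frac n2\Big(\gamma(t)^2+(s-1)\lambda_3(t)^{2L}\Big).
\end{equation*}
It therefore suffices to show $\lambda_3(t)\to0$ and $\gamma(t)\to0$.

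\textbf{Step 2: the $\lambda_3$ part.} We have $\lambda_3(0)=\alpha(m-1)/m>0$ (equal to $\alpha$ when $m=\infty$), and $\dot\lambda_3=-\lambda_3^{2L-1}$. Integrating gives
\begin{equation*}
\lambda_3(t)^{2-2L}=\lambda_3(0)^{2-2L}+(2L-2)t,
\end{equation*}
so $\lambda_3(t)\to0$.

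\textbf{Step 3: the $\gamma$ part.}
\begin{itemize}
\item \emph{Positivity and ordering.} Initially $\lambda_1(0)=\alpha(m+d-1)/m\ge\lambda_2(0)=\alpha(m-1)/m>0$. Since $\lambda=0$ is an equilibrium of $\dot\lambda_i=-\gamma\lambda_i^{L-1}$, ODE uniqueness keeps $\lambda_1,\lambda_2>0$. The ordering $\lambda_1\ge\lambda_2$ is also preserved, either by the conserved quantities of Lemma~\ref{lem: block conserved quantity} (the ratio for $L=2$, or $\lambda_1^{2-L}-\lambda_2^{2-L}\le0$ for $L\ge3$) or by uniqueness on the diagonal $\lambda_1=\lambda_2$.
\item \emph{A scalar ODE for $\gamma$.} Differentiating,
\begin{equation*}
\dot\gamma=-\gamma\,K(t),\qquad K(t)\triangleq\frac Ln\Big(\lambda_1^{2L-2}+(n-1)\lambda_2^{2L-2}\Big)>0 .
\end{equation*}
Hence $\gamma$ never changes sign and $|\gamma|$ is nonincreasing. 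Moreover both $\lambda_i$ move monotonically in the direction of $-\operatorname{sign}(\gamma)$, so they stay bounded and the solution exists globally.
\item \emph{Case $\gamma(0)<0$.} Both $\lambda_i$ increase, so $K(t)\ge K(0)>0$. This gives $|\gamma(t)|\le|\gamma(0)|e^{-K(0)t}$.
\item \emph{Case $\gamma(0)>0$.} Then $\gamma(t)\ge0$ throughout, so $\lambda_1^L\ge\frac{\lambda_1^L+(n-1)\lambda_2^L}{n}\ge sw^*$. Consequently $K(t)\ge\frac Ln (sw^*)^{(2L-2)/L}>0$, and again $\gamma$ decays exponentially.
\item \emph{Case $\gamma(0)=0$.} Then $\gamma\equiv0$.
\end{itemize}
Combining Steps 1--3 yields $\ell\to0$.

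The main obstacle is obtaining a uniform positive lower bound on $K(t)$. The case split on the sign of $\gamma(0)$, together with the ordering $\lambda_1\ge\lambda_2$ (which feeds the bound $\lambda_1^L\ge sw^*$), is exactly what resolves this. Everything else is bookkeeping from Lemmas~\ref{lem: M eigenvalue}--\ref{lem: block conserved quantity}.
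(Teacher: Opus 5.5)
Your proof is correct and follows essentially the same route as the paper: you solve for $\lambda_3$ explicitly, derive the linear ODE $\dot\gamma=-K\gamma$, and split on the sign of $\gamma(0)$ to get a uniform positive lower bound on $K$. The differences are minor. In the $\gamma(0)>0$ case you bound $K$ using the ordering $\lambda_1\ge\lambda_2$ (so $\lambda_1^L\ge sw^*$), whereas the paper uses a Jensen step. You write the loss exactly as $\frac n2\big(\gamma^2+(s-1)\lambda_3^{2L}\big)$ instead of passing through $A,B\to w^*$. You also justify positivity of the $\lambda_i$ and global existence explicitly, which the paper leaves implicit.
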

    
\begin{proof}
    According to Lemma~\ref{lem: eigenvalue dynamics}, the dynamics of the third eigenvalue are governed by $\dot{\lambda}_3(t) = -\lambda_3^{2L-1}(t)$. For $L \geq 2$, this is a separable ODE whose explicit solution is given by
    \begin{equation*}
        \lambda_3(t) = \left( (2L-2)t + \lambda_3^{-(2L-2)}(0) \right)^{-\frac{1}{2L-2}}.
    \end{equation*}
    As $t \to \infty$, the term inside the parenthesis grows without bound, which directly implies $\lambda_3(\infty) = 0$.
    
    Next, we consider the global error term 
    \begin{equation}
        \gamma(t) \triangleq \frac{\lambda_1^L(t) + (n-1)\lambda_2^L(t)}{n} - sw^*. \label{eqn: gamma ODE}
    \end{equation} By substituting the dynamics from Lemma~\ref{lem: eigenvalue dynamics}, we obtain its time derivative:
    \begin{align*}
        \dot{\gamma}(t) &= \frac L n \left(\lambda_1^{L-1}(t)\dot{\lambda}_1(t) + (n-1) \lambda_2^{L-1}(t)\dot{\lambda}_2(t) \right) \\
        &= -\underbrace{\frac{L}{n}\left(\lambda_1^{2L-2}(t) + (n-1) \lambda_2^{2L-2}(t)\right)}_{\triangleq K(t)}\gamma(t).
    \end{align*}
    The solution to this linear ODE is $\gamma(t) = \gamma(0) \exp\left( -\int_{0}^t K(\tau) d\tau \right)$, which implies that $\gamma(t)$ preserves its initial sign for all $t \geq 0$. Since $L \geq 2$, the map $f(x) = x^{\frac{2L-2}{L}}$ is convex on $\mathbb{R}_+$. By Jensen's inequality, we can lower bound $K(t)$ for any fixed $t$:
    \begin{align}
        K(t) &= L\left(\frac{(\lambda_1^L(t))^{\frac{2L-2}{L}} + (n-1) (\lambda_2^L(t))^{\frac{2L-2}{L}}}{n} \right) \nonumber \\
        &\geq L\left(\frac{\lambda_1^L(t) + (n-1)\lambda_2^L(t)}{n} \right)^{\frac{2L-2}{L}}. \label{eqn: K(t) lowerbound_new}
    \end{align}
    
    \textbf{Case 1 ($\gamma(0) \leq 0$).}
    Suppose the initialization scale $\alpha$ satisfies
    \begin{equation*}
        0< \alpha^L \leq \frac{w^* d m^L}{(m+d-1)^L + (n-1)(m-1)^L}.
    \end{equation*}
    This ensures $\gamma(0) \leq 0$, and consequently $\gamma(t) \leq 0$ for all $t \geq 0$. From $\dot{\lambda}_1(t) = - \gamma(t) \lambda_1^{L-1}(t) \geq 0$, it follows that $\lambda_1(t) \geq \lambda_1(0) > 0$. Similarly, $\lambda_2(t) \geq \lambda_2(0) > 0$. Therefore, $K(t)$ is strictly lower bounded by its initial value:
    \begin{equation*}
        K(t) \geq K(0) = \frac{L}{n} \left( \lambda_1^{2L-2}(0) + (n-1) \lambda_2^{2L-2}(0) \right) > 0.
    \end{equation*}

    \textbf{Case 2 ($\gamma(0) > 0$).}
    Suppose the initialization scale $\alpha$ satisfies
    \begin{equation*}
        \alpha^L > \frac{w^* d m^L}{(m+d-1)^L + (n-1)(m-1)^L}
    \end{equation*}
    so that $\gamma(0) > 0$, which implies $\gamma(t) > 0$ for all $t \geq 0$. Using the lower bound from \eqref{eqn: K(t) lowerbound_new} and the fact that $\frac{\lambda_1^L(t) + (n-1)\lambda_2^L(t)}{n} = \gamma(t) + sw^*$, we have:
    \begin{align*}
        K(t) \geq L \left( \gamma(t) + sw^* \right)^{\frac{2L-2}{L}}.
    \end{align*}
    Since $\gamma(t) > 0$ for all $t \geq 0$, we can further lower bound $K(t)$ by a constant:
    \begin{align*}
        K(t) > L (sw^*)^{\frac{2L-2}{L}} > 0.
    \end{align*}

    In both cases, there exists a uniform lower bound $K_{\min} > 0$ such that $K(t) \geq K_{\min}$ for all $t \geq 0$. Substituting this into \eqref{eqn: gamma ODE}, we have
    \begin{equation*}
        |\gamma(t)| = |\gamma(0)| \exp\left( -\int_{0}^t K(\tau) d\tau \right) \leq |\gamma(0)| e^{-K_{\min} t}.
    \end{equation*}
    Taking the limit as $t \to \infty$, we obtain $\lim_{t \to \infty} \gamma(t) = 0$, which implies
    \begin{equation}
        \lim_{t \to \infty} \frac{\lambda_1^L(t) + (n-1)\lambda_2^L(t)}{n} = sw^*. \label{eqn: gamma limit}
    \end{equation}
    Recall from the inverse relations in Lemma~\ref{lem: M eigenvalue} and Lemma~\ref{lem: eigenvalue dynamics} that the parameters of $\mW_{L:1}(t) = \mM(A(t), B(t), C(t))$ are given by
    \begin{align*}
        A(t) &= \frac{\lambda_1^L(t) + (n-1)\lambda_2^L(t) + n(s-1)\lambda_3^L(t)}{sn}, \\
        B(t) &= \frac{\lambda_1^L(t) + (n-1)\lambda_2^L(t) - n\lambda_3^L(t)}{sn}.
    \end{align*}
    Combining the results $\lambda_3(\infty) = 0$ and \eqref{eqn: gamma limit}, the limits of these parameters are
    \begin{align*}
        \lim_{t \to \infty} A(t) &= \frac{nsw^* + n(s-1) \cdot 0}{sn} = w^*, \\
        \lim_{t \to \infty} B(t) &= \frac{nsw^* - n \cdot 0}{sn} = w^*.
    \end{align*}
    The loss function is defined by the error on the observed set $\Omega_{\rm block}^{(s,n)}$, which consists of the diagonal blocks. Specifically, for each block, the diagonal entries converge to $A(\infty) = w^*$ and the off-diagonal entries converge to $B(\infty) = w^*$. Since all observed entries in $\mW^*$ share the same value $w^*$, we conclude
    \begin{equation*}
        \lim_{t \to \infty} \ell\left(\mW_{L:1}(t);\Omega_{\rm block}^{(s,n)}\right) = \frac{1}{2} \sum_{(i,j) \in \Omega_{\rm block}^{(s,n)}} \left( w_{ij}(\infty) - w^* \right)^2 = 0,
    \end{equation*}
    which completes the proof.

\end{proof}

\newpage

\subsection{Uniqueness of The Limiting Singular Values}
\label{appendix: Uniqueness of The Limiting Singular Values}
\begin{proposition}
\label{prop: unique solution}
    Under the setting of Theorem~\ref{thm: block-diagonal}, 
    \begin{equation*}
        f_1(\sigma) \triangleq \sigma^{\frac{2-L}{L}}  - \left(\frac{w^* d - \sigma}{n-1}\right)^{\frac{2-L}{L}}
    \end{equation*}
    strictly decreases in $\sigma \in (0, w^* d)$.
    Also,
    \begin{equation*}
        f_2(\sigma) \triangleq (w^* d - (n-1)\sigma)^{\frac{2-L}{L}} - \sigma^{\frac{2-L}{L}}
    \end{equation*}
    strictly increases in $\sigma \in \left(0, \frac{w^* d}{n-1}\right)$. Therefore, 
    Equations~\eqref{eqn: implicit eq1} and~\eqref{eqn: implicit eq2} admit a unique solution $(\sigma_1, \sigma_i)$.
\end{proposition}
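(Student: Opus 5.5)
The plan is to differentiate $f_1$ and $f_2$ directly and read off the sign of each term. Throughout we take $L \ge 3$, since this is the only case in which the implicit equations~\eqref{eqn: implicit eq1}--\eqref{eqn: implicit eq2} appear. Set $p \triangleq \frac{2-L}{L}$, so that $p<0$ and the map $x \mapsto x^{p}$ is strictly decreasing on $(0,\infty)$. For $f_1$ on $(0, w^*d)$, both $\sigma$ and $\frac{w^*d-\sigma}{n-1}$ are positive, so each power is well defined and smooth. As $\sigma$ increases, the term $\sigma^{p}$ strictly decreases. At the same time $\frac{w^*d-\sigma}{n-1}$ strictly decreases, so its $p$-th power strictly increases, and subtracting it again contributes a decrease. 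Explicitly,
\[
f_1'(\sigma) = p\,\sigma^{p-1} + \frac{p}{n-1}\left(\frac{w^*d-\sigma}{n-1}\right)^{p-1},
\]
which is a sum of two strictly negative terms. The same reasoning applies to $f_2$ on $\bigl(0, \frac{w^*d}{n-1}\bigr)$:
\[
f_2'(\sigma) = -(n-1)p\,(w^*d-(n-1)\sigma)^{p-1} - p\,\sigma^{p-1} > 0.
\]
This gives the two monotonicity claims.

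For existence and uniqueness I would add limiting behavior to the monotonicity. As $\sigma \to 0^+$, the term $\sigma^{p}\to+\infty$, so $f_1 \to +\infty$ and $f_2 \to -\infty$. At the other endpoints, $f_1 \to -\infty$ as $\sigma\to w^*d$, and $f_2\to+\infty$ as $\sigma \to \frac{w^*d}{n-1}$. Each $f_k$ is continuous and strictly monotone, so each is a bijection onto $\mathbb{R}$. Hence for every value of the constant $C_{\alpha,m,L,d}$, each equation has exactly one root in its interval.

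It remains to justify that the actual limiting singular values lie in these intervals, so that the root we found is the one in question. From the proof of Theorem~\ref{thm: block-diagonal}, we have $\sigma_1 + (n-1)\sigma_i = w^*d$ with $\sigma_1=\lambda_1^L(\infty)$ and $\sigma_i = \lambda_2^L(\infty)$. The factor matrices stay positive definite, and $\lambda_1,\lambda_2$ are monotone in the direction fixed by the sign of $\gamma$. So it suffices to show that $\lambda_1(\infty)$ and $\lambda_2(\infty)$ are strictly positive, because then both $\sigma_1$ and $\sigma_i$ are positive and lie inside their intervals.

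This positivity is the one step I expect to need care: I must rule out $\lambda_2(t)\to 0$. I would argue from the conservation law in Lemma~\ref{lem: block conserved quantity}. The quantity $\lambda_1^{2-L}-\lambda_2^{2-L}$ is a finite constant. If $\lambda_2(t)\to 0$, then $\lambda_2^{2-L}\to\infty$, which would force $\lambda_1^{2-L}\to\infty$ as well, that is, $\lambda_1\to 0$. That contradicts $\lambda_1^L+(n-1)\lambda_2^L \to nsw^*>0$. The same argument shows $\lambda_1(\infty)>0$, which completes the plan.
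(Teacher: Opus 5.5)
Your proposal is correct and follows essentially the same route as the paper: the signs of $f_1'$ and $f_2'$ follow from $\frac{2-L}{L}<0$, and together with the endpoint limits this makes each $f_k$ a continuous bijection onto $\mathbb{R}$. The one difference is that the paper simply asserts $\sigma_1,\sigma_i>0$ because they are ``the nonzero singular values''. You instead justify it using the monotonicity of $\lambda_1,\lambda_2$ and the conservation law $\lambda_1^{2-L}-\lambda_2^{2-L}=\text{const}$, which rules out a zero limit, so your version fills a small gap the paper leaves.
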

\begin{proof}
Under Theorem~\ref{thm: block-diagonal} with $L \ge 3$ and $1 < m < \infty$, the limiting nonzero singular values satisfy
\begin{equation}
    \sigma_1 + (n-1)\sigma_i = w^* d,
    \label{eqn: linear-constraint-proof}
\end{equation}
where $i \in \{2,\dots,n\}$.
Since these are the nonzero singular values of the limiting product matrix, we have
\begin{equation*}
    \sigma_1 > 0, \qquad \sigma_i > 0.
\end{equation*}
Combining this with \eqref{eqn: linear-constraint-proof} yields
\begin{equation*}
    0 < \sigma_1 < w^* d,
    \qquad
    0 < \sigma_i < \frac{w^* d}{n-1}.
\end{equation*}
Hence the implicit equations are naturally defined on the intervals
$(0,w^*d)$ and $\left(0,\frac{w^*d}{n-1}\right)$. For brevity, set
\begin{equation*}
    a \triangleq \frac{2-L}{L} < 0.
\end{equation*}

\textbf{Uniqueness of $\sigma_1$.}
Define
\begin{equation*}
    f_1(\sigma) \triangleq \sigma^a - \left(\frac{w^* d - \sigma}{n-1}\right)^a, \qquad \sigma \in (0,w^*d).
\end{equation*}
Then
\begin{align*}
    f_1'(\sigma) &= a\sigma^{a-1} + \frac{a}{n-1}\left(\frac{w^* d - \sigma}{n-1}\right)^{a-1}.
\end{align*}
Since $a<0$ and both terms inside the powers are positive on $(0,w^*d)$, we obtain
\begin{equation*}
    f_1'(\sigma) < 0
    \qquad \text{for all } \sigma \in (0,w^*d).
\end{equation*}
Therefore $f_1$ is strictly decreasing on $(0,w^*d)$.

Moreover,
\begin{equation*}
    \lim_{\sigma\to 0^+} f_1(\sigma)=+\infty,
    \qquad
    \lim_{\sigma\to (w^*d)^-} f_1(\sigma)=-\infty.
\end{equation*}
Hence $f_1$ is a continuous bijection from $(0,w^*d)$ onto $\mathbb{R}$.
Therefore, for any constant $C_{\alpha,m,L,d}\in\mathbb{R}$, there exists a unique
$\sigma_1 \in (0,w^*d)$ such that
\begin{equation*}
    f_1(\sigma_1)=C_{\alpha,m,L,d}.
\end{equation*}

\textbf{Uniqueness of $\sigma_i$.}
Define
\begin{equation*}
    f_2(\sigma) \triangleq (w^* d - (n-1)\sigma)^a - \sigma^a,
    \qquad \sigma \in \left(0,\frac{w^*d}{n-1}\right).
\end{equation*}
Differentiating gives
\begin{align*}
    f_2'(\sigma)
    &= -a(n-1)(w^* d - (n-1)\sigma)^{a-1} - a\sigma^{a-1}.
\end{align*}
Since $-a>0$ and both bases are positive on $\left(0,\frac{w^*d}{n-1}\right)$, we have
\begin{equation*}
    f_2'(\sigma) > 0
    \qquad \text{for all } \sigma \in \left(0,\frac{w^*d}{n-1}\right).
\end{equation*}
Thus $f_2$ is strictly increasing on that interval.

Also,
\begin{equation*}
    \lim_{\sigma\to 0^+} f_2(\sigma)=-\infty,
    \qquad
    \lim_{\sigma\to (\frac{w^*d}{n-1})^-} f_2(\sigma)=+\infty.
\end{equation*}
Hence $f_2$ is a continuous bijection from
$\left(0,\frac{w^*d}{n-1}\right)$ onto $\mathbb{R}$.
Therefore, for any constant $C_{\alpha,m,L,d}\in\mathbb{R}$, there exists a unique
$\sigma_i \in \left(0,\frac{w^*d}{n-1}\right)$ such that
\begin{equation*}
    f_2(\sigma_i)=C_{\alpha,m,L,d}.
\end{equation*}

This proves that \eqref{eqn: implicit eq1} and~\eqref{eqn: implicit eq2} admit unique solutions.
\end{proof}

\newpage
\subsection{Proof for Corollary~\ref{cor: implicit equation}}
\label{appendix: proof for corollary 3.4}
\srank*

\begin{proof}
Fix $m > 1$, $n \ge 2$, $s \geq 1$, $w^* > 0$, and $L \ge 3$. Let
\begin{equation*}
    a \triangleq \frac{2-L}{L} < 0.    
\end{equation*}

First, we analyze the behavior of 
\begin{equation*}
    C_{\alpha,m,L,d} = \left(\frac{\alpha}{m}\right)^{2-L} \left( (m+d-1)^{2-L} - (m-1)^{2-L} \right)    
\end{equation*}
as $\alpha \to 0$. Since $L \ge 3$, we have $2-L<0$. The map $x \mapsto x^{2-L}$ is strictly decreasing on $(0,\infty)$, and because $m+d-1 > m-1 > 0$,
\begin{equation*}
    (m+d-1)^{2-L} - (m-1)^{2-L} < 0.    
\end{equation*}
Moreover, $\left(\frac\alpha m\right)^{2-L} \to +\infty$ as $\alpha \to 0$. Hence
\begin{equation*}
    C_{\alpha,m,L,d} \to -\infty  \quad \text{as } \alpha \to 0.
\end{equation*}

Next, consider the function from \eqref{eqn: implicit eq2}:
\begin{equation*}
    f_2(\sigma) = (w^* d - (n-1)\sigma)^a - \sigma^a, \quad \sigma \in \left(0, \frac{w^* d}{n-1}\right).
\end{equation*}
By Proposition~\ref{prop: unique solution}, we know that $f_2$ is a continuous, strictly increasing bijection from $\left(0,\tfrac{w^* d}{n-1}\right)$ onto $\mathbb{R}$, and for each $C \in \mathbb{R}$ there is a unique $\sigma(C)$ such that $f_2\big(\sigma(C)\big) = C$.

Now we apply this to $C_{\alpha,m,L,d}$. Since $C_{\alpha,m,L,d} \to -\infty$ as $\alpha \to 0$ and $f_2$ is strictly increasing with $\lim\limits_{\sigma\to 0^+}f_2(\sigma)=-\infty$, it follows that
\begin{equation*}
    \sigma_i(\alpha) \to 0 \quad \text{as } \alpha \to 0.
\end{equation*}
Using the linear constraint~\eqref{eqn: block eigenvalue conserve1}, we then obtain
\begin{equation*}
    \sigma_1(\alpha) = w^* d - (n-1)\sigma_i(\alpha) \to w^* d \quad \text{as } \alpha \to 0.    
\end{equation*}

The stable rank of $\mW_{L:1}(\infty)$ is
\begin{equation*}
    {\rm srank}\big(\mW_{L:1}(\infty)\big) = \frac{\sigma_1(\alpha)^2 + (n-1)\sigma_i(\alpha)^2 + n(s-1)\sigma_j^2}{\sigma_1(\alpha)^2} = 1 + (n-1)\left(\frac{\sigma_i(\alpha)}{\sigma_1(\alpha)}\right)^2.    
\end{equation*}
Since $\sigma_i(\alpha) \to 0$ and $\sigma_1(\alpha) \to w^* d > 0$, we have
\begin{equation*}
    \frac{\sigma_i(\alpha)}{\sigma_1(\alpha)} \to 0,
\end{equation*}
and therefore
\begin{equation*}
    {\rm srank}\big(\mW_{L:1}(\infty)\big) \to 1 \quad \text{as } \alpha \to 0.
\end{equation*}
\end{proof}

\newpage
\section{Proof for Section~\ref{sec: LoP}}
\label{appendix: proof section 4}
In this section, we provide the proofs for the propositions and theorems presented in Section~\ref{sec: LoP}. First, Subsection~\ref{appendix: proof section 4.1} presents the general form of Proposition~\ref{pro: LoP pre-train solution} along with its proof. Next, Subsection~\ref{appendix: proof section 4.2} details the proof of Theorem~\ref{thm: LoP full theorem}, focusing on the $2 \times 2$ matrix case. Lastly, Subsection~\ref{appendix: Formal Statement and Proof of Theorem 4.3} generalizes the core ideas of Theorem~\ref{thm: LoP full theorem} to $d \times d$ matrices and provides the formal statement and the proof of Theorem~\ref{thm: LoP dxd}.

\subsection{General Form and Proof of Proposition~\ref{pro: LoP pre-train solution}}
\label{appendix: proof section 4.1}
We first present the general form of Proposition~\ref{pro: LoP pre-train solution}. This proposition applies to any ``fully disconnected case'', a scenario that involves the diagonal entries introduced within this same proposition. 

For a $d \times d$ ground truth matrix $\mW^*$, the observed entries are given by $\Omega = \{(i_n, j_n)\}_{n=1}^d$. Since we consider the fully disconnected case, $i_n \neq i_m, j_n \neq j_m$ for all $n \neq m \in [d]$. We factorize the solution model at time $t$ as $\mW_{\mA, \mB}(t) = \mA(t) \mB(t)$, where $\mW_{\mA, \mB}(t), \mA(t), \mB(t) \in \mathbb{R}^{d \times d}$. 
We consider the gradient flow dynamics with the loss function defined as in~\eqref{eqn: preliminaries loss function}. 

For a given row index $k$, since there exists a unique entry $(k,j) \in \Omega$, we denote this unique column index by $j^{(k)}$. Thus, $w^*_{k, j^{(k)}}$ and $w_{k, j^{(k)}}(t)$ refer to the ground truth weight $w_{k,j}^*$ and the time-varying weight $w_{k,j}(t)$ respectively, where $j=j^{(k)}$. Similarly, for a given column index $l$, since there exists a unique entry $(i,l) \in \Omega$, we denote this unique row index by $i^{(l)}$. Thus $w^*_{i^{(l)}, l}$ and $w_{i^{(l)}, l}$ refer to the ground truth weight $w^*_{i,l}$ and the time-varying weight $w_{i,l}(t)$ respectively, where $i=i^{(l)}$. 
Defining the residuals as $r_{ij}(t) \coloneqq w^*_{ij} - w_{ij}(t)$, we adopt this compact notation for residuals as well. 
Then, we can derive a closed-form solution for \textit{arbitrary initialization} with below proposition.

\begin{proposition}
\label{pro: LoP general pre-train solution}
Consider a ground truth matrix $\mW^* \in \mathbb{R}^{d \times d}$ and a set of $d$ fully disconnected observations $\Omega = \{(i_n, j_n)\}_{n=1}^d$.
The model is factorized as $\mW_{\mA, \mB}(t) = \mA(t)\mB(t)$, where the factors $\mA(t), \mB(t) \in \mathbb{R}^{d \times d}$. For each observed pair $(i_n, j_n) \in \Omega$, define the constants $P_{i_n, j_n}$ and $Q_{i_n, j_n}$ based on the initial values $\mA(0)$ and $\mB(0)$:
\begin{align*}
    P_{i_n, j_n} &\triangleq \sum_{k=1}^d a_{i_n, k}(0) b_{k, j_n}(0) \quad \text{and} \quad Q_{i_n, j_n} \triangleq \sum_{k=1}^d \left(a_{i_n, k}(0)^2 + b_{k,j_n}(0)^2\right).
\end{align*}
Furthermore, for each such observed pair $(i_n, j_n)$, let the parameter $\bar{r}_{i_n, j_n}$ be determined from the ground truth entry $w_{i_n,j_n}^*$ and the constants defined above, as follows:
\begin{align*}
    \bar{r}_{i_n,j_n} \triangleq \frac{1}{2} \log \left( \frac{P_{i_n, j_n} + \frac{Q_{i_n, j_n}}{2}}{w_{i_n, j_n}^* + \sqrt{{w_{i_n, j_n}^*}^2 - P_{i_n, j_n}^2 + \left(\frac{Q_{i_n, j_n}}{2}\right)^2}} \right).
\end{align*}
Then, assuming convergence to a zero-loss solution (i.e., $w_{i_n,j_n}(\infty) = w_{i_n,j_n}^*$ for all $(i_n,j_n) \in \Omega$), any entry $a_{p,q}(\infty)$ of the converged matrix $\mA(\infty)$ and any entry $b_{p,q}(\infty)$ of the converged matrix $\mB(\infty)$ (for arbitrary indices $p,q \in [d]$) are explicitly given by:
\begin{align*}
    a_{p, q}(\infty) &= a_{p, q}(0) \cosh\left(\bar{r}_{p, j^{(p)}} \right) - b_{q, j^{(p)}}(0) \sinh \left( \bar{r}_{p, j^{(p)}} \right), \\
    b_{p, q}(\infty) &= b_{p, q}(0) \cosh\left(\bar{r}_{i^{(q)}, q} \right) - a_{i^{(q)}, p}(0) \sinh \left( \bar{r}_{i^{(q)}, q} \right).
\end{align*}
\end{proposition}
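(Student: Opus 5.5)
The plan is to exploit the fact that, in the fully disconnected case, the gradient flow splits into $d$ independent two-vector systems, each of which is linear once time is reparameterized by the accumulated residual. Write $\va_i(t)$ for the transpose of the $i$-th row of $\mA(t)$ and $\vb_j(t)$ for the $j$-th column of $\mB(t)$. Each row index $p$ appears in exactly one observation $(p,j^{(p)})$, and each column index $q$ appears in exactly one observation $(i^{(q)},q)$. Hence, exactly as in the decoupled $\mM_{\rm D}$ computation of Section~\ref{sec: warm-up}, the flow reduces to
\begin{equation*}
\dot{\va}_{i_n}(t) = r_{i_n j_n}(t)\,\vb_{j_n}(t), \qquad \dot{\vb}_{j_n}(t) = r_{i_n j_n}(t)\,\va_{i_n}(t), \qquad n\in[d],
\end{equation*}
where $r_{i_nj_n}(t) = w^*_{i_nj_n} - \va_{i_n}(t)^\top\vb_{j_n}(t)$. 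This is also Lemma~\ref{lem: coupled dynamics condition} with $L=2$. Since the $d$ pairs $(\va_{i_n},\vb_{j_n})$ exhaust all rows of $\mA$ and all columns of $\mB$, it suffices to solve one pair. I then read off the entries $a_{p,q}$ as the $q$-th coordinate of $\va_p$, and $b_{p,q}$ as the $p$-th coordinate of $\vb_q$.

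For a fixed pair, put $R(t)\triangleq\int_0^t r_{i_nj_n}(\tau)\,d\tau$, $\vu = \va_{i_n}+\vb_{j_n}$ and $\vv = \va_{i_n}-\vb_{j_n}$. Then $\dot{\vu} = r\vu$ and $\dot{\vv} = -r\vv$, which gives $\vu(t) = e^{R(t)}\vu(0)$ and $\vv(t) = e^{-R(t)}\vv(0)$. Consequently
\begin{equation*}
\va_{i_n}(t) = \cosh(R(t))\,\va_{i_n}(0) + \sinh(R(t))\,\vb_{j_n}(0),
\end{equation*}
and symmetrically for $\vb_{j_n}$. With $\bar r \triangleq -R(\infty)$ this is exactly the claimed formula: taking coordinate $q$ of $\va_p$ produces $a_{p,q}(0)$ and $b_{q,j^{(p)}}(0)$, and taking coordinate $p$ of $\vb_q$ produces $b_{p,q}(0)$ and $a_{i^{(q)},p}(0)$. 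So everything reduces to identifying $R(\infty)$.

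To identify $R(\infty)$, I use the polarization identity $\va^\top\vb = (\lVert\vu\rVert^2-\lVert\vv\rVert^2)/4$ together with $\lVert\vu(0)\rVert^2 = Q+2P$ and $\lVert\vv(0)\rVert^2 = Q-2P$. This gives
\begin{equation*}
w_{i_nj_n}(t) = \tfrac14\big((Q+2P)x - (Q-2P)x^{-1}\big), \qquad x = e^{2R(t)}.
\end{equation*}
The right-hand side is strictly increasing in $x>0$, provided $\vu(0)$ and $\vv(0)$ are not both zero. So $w_{i_nj_n}(t)\to w^*$ forces $R(t)$ to converge to the unique positive root of
\begin{equation*}
(Q+2P)x^2 - 4w^*x - (Q-2P) = 0 .
\end{equation*}
By Cauchy--Schwarz, $Q\ge 2|P|$, so the discriminant is nonnegative and the $+$ root is the positive one. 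Solving gives
\begin{equation*}
x = \frac{w^* + \sqrt{{w^*}^2 - P^2 + (Q/2)^2}}{P + Q/2},
\end{equation*}
and hence $\bar r = -\tfrac12\log x$ matches the stated $\bar r_{i_n,j_n}$.

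The main point requiring care is this last step, namely justifying that $R(t)$ actually has a finite limit equal to that root. The monotone map $x\mapsto w$ handles this, since it has a continuous inverse. Separately, the degenerate case $P+Q/2=0$ (that is, $\vu(0)=\mathbf{0}$) must be excluded, because there zero loss is unreachable for $w^*>0$; the zero-loss assumption rules it out implicitly.
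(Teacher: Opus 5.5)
Your proposal is correct and follows the same argument as the paper: you solve the flow in closed form as $\cosh/\sinh$ of the integrated residual, then determine $\bar r_{i_n,j_n}$ from $w_{i_n,j_n}(\infty)=w^*_{i_n,j_n}$ through the same quadratic (yours in $x=e^{2R(\infty)}$ is the paper's in $u=e^{2\bar r}=1/x$). The differences are only in execution: decoupling into the $d$ pairs $(\va_{i_n},\vb_{j_n})$ and diagonalizing each with $\va\pm\vb$ avoids the paper's assertion, true but stated without justification, that the $2d^2\times 2d^2$ coefficient matrices commute across time; and your monotonicity argument shows that $R(t)$ has a finite limit, which the paper takes for granted.
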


\begin{proof}
We can express their evolution in the following vector form using the vectorized parameter
$\bm{\theta}(t) \coloneqq \begin{bmatrix}
    \mathrm{vec}(\mA(t)) \\
    \mathrm{vec}(\mB(t))
\end{bmatrix} \in \mathbb{R}^{2d^2}$:

\begin{equation}
    \dot{\bm{\theta}}(t) = - 
    \begin{bmatrix}
        \mathbf{0}_{d^2,d^2} & \mR(t) \\
        \mR(t)^\top & \mathbf{0}_{d^2,d^2}
    \end{bmatrix}
    \bm{\theta}(t)
\end{equation}

where $\mR(t) \in \mathbb{R}^{d^2 \times d^2}$ is defined as:
\begin{equation}
    \mR(t)= \begin{bmatrix}
         r_{1,j^{(1)}}(t) \ve_{j^{(1)}}^\top \\ 
          r_{1,j^{(1)}}(t) \ve_{j^{(1)}+d}^\top \\ 
         \vdots  \\
         r_{1,j^{(1)}}(t) \ve_{j^{(1)}+(d-1)d}^\top \\ 
         r_{2,j^{(2)}}(t) \ve_{j^{(2)}}^\top \\ 
         r_{2,j^{(2)}}(t) \ve_{j^{(2)}+d}^\top \\ 
         \vdots  \\
         r_{d,j^{(d)}}(t) \ve_{j^{(d)}+(d-1)d}^\top \\ 
    \end{bmatrix}
\end{equation}
for $\ve_{i} \in \mathbb{R}^{d^2}$ form the standard basis.
Since 
$\begin{bmatrix}
    \mathbf{0}_{d^2,d^2} & \mR(t) \\
    \mR(t)^\top & \mathbf{0}_{d^2,d^2}
\end{bmatrix}$ commutes with any other $t$ values, the solution is given as:
\begin{align}
    \bm{\theta}(t) &= 
    \exp \left(
     -\int_0^{\tau}
    \begin{bmatrix}
        \mathbf{0}_{d^2,d^2} & \mR(t) \\
        \mR(t)^\top & \mathbf{0}_{d^2,d^2}
    \end{bmatrix}
    \mathrm{d} \tau
    \right) \cdot
    \bm{\theta}(0) \\
    &= \exp \left(-
    \begin{bmatrix}
        \mathbf{0}_{d^2,d^2} & \bar{\mR}(t) \\
        \bar{\mR}(t)^\top & \mathbf{0}_{d^2,d^2}
    \end{bmatrix}
    \mathrm{d} \tau
    \right) \cdot
    \bm{\theta}(0)
\end{align}
where $$\bar{\mR}(t) \coloneqq \int_0^t \mR(\tau) \mathrm{d}\tau = 
\begin{bmatrix}
         \bar{r}_{1,j^{(1)}}(t) \ve_{j^{(1)}}^\top \\ 
          \bar{r}_{1,j^{(1)}}(t) \ve_{j^{(1)}+d}^\top \\ 
         \vdots  \\
         \bar{r}_{1,j^{(1)}}(t) \ve_{j^{(1)}+(d-1)d}^\top \\ 
         \bar{r}_{2,j^{(2)}}(t) \ve_{j^{(2)}}^\top \\ 
         \bar{r}_{2,j^{(2)}}(t) \ve_{j^{(2)}+d}^\top \\ 
         \vdots  \\
         \bar{r}_{d,j^{(d)}}(t) \ve_{j^{(d)}+(d-1)d}^\top \\ 
    \end{bmatrix}$$
for $\bar{r}_{i,j}(t) = \int_0^t r_{i,j}(\tau) \rm{d}\tau$.
If we assume convergence, we get:
\begin{align}
    \bm{\theta}(\infty) &= \exp \left(-
    \begin{bmatrix}
        \mathbf{0}_{d^2,d^2} & \bar{\mR}(\infty) \\
        \bar{\mR}(\infty)^\top & \mathbf{0}_{d^2,d^2}
    \end{bmatrix}
    \mathrm{d} \tau
    \right) \cdot
    \bm{\theta}(0) \\ 
    &= \Bigg( 
    \begin{bmatrix}
        \mathbf{I}_{d^2} & \mathbf{0}_{d^2,d^2} \\
        \mathbf{0}_{d^2,d^2} & \mathbf{I}_{d^2}
    \end{bmatrix}
    - \begin{bmatrix}
        \mathbf{0}_{d^2,d^2} & \bar{\mR}(t) \\
        \bar{\mR}(t)^\top & \mathbf{0}_{d^2,d^2}
    \end{bmatrix}
    + \frac{1}{2} 
    \begin{bmatrix}
        \bar{\mR}(t) \bar{\mR}(t)^\top  & \mathbf{0}_{d^2,d^2} \\
        \mathbf{0}_{d^2,d^2} & \bar{\mR}(t)^\top \bar{\mR}(t)
    \end{bmatrix} \\
    &\phantom{=} - \frac{1}{6}
    \begin{bmatrix}
        \mathbf{0}_{d^2,d^2} \squeeze & \bar{\mR}(t) \bar{\mR}(t)^\top\bar{\mR}(t)\\
        \bar{\mR}(t)^\top \bar{\mR}(t) \bar{\mR}(t)^\top \squeeze & \mathbf{0}_{d^2,d^2}
    \end{bmatrix}  + \frac{1}{24}
    \begin{bmatrix}
        \left(\bar{\mR}(t) \bar{\mR}(t)^\top\right)^2  \squeeze & \mathbf{0}_{d^2,d^2} \\
        \mathbf{0}_{d^2,d^2} \squeeze & \left(\bar{\mR}(t)^\top \bar{\mR}(t)\right)^2
    \end{bmatrix} \\
    &\phantom{=} - \cdots
    \Bigg) \cdot \bm{\theta}(0),
\end{align}
which can be simplified as:
\begin{align}
    \bm{\theta}(\infty) &= \begin{bmatrix}
        \mC & \mD \\
        \mE & \mF
    \end{bmatrix} \bm{\theta}(0),
    \label{eqn: proof dxd depth-2 explicit form solution}
\end{align}
with $\mC, \mD, \mE$ and $\mF$ are defined as following:
\begin{align*}
   \mC &= \cosh\bigg( {\rm diag}\Big( \bar{r}_{1,j^{(1)}}, \ldots, \bar{r}_{1,j^{(1)}}, \bar{r}_{2,j^{(2)}}, \ldots, \bar{r}_{2,j^{(2)}}, \ldots, \bar{r}_{d,j^{(d)}}, \ldots, \bar{r}_{d,j^{(d)}} \Big)\bigg), \\
   \mF &= \cosh\bigg( {\rm diag}\Big( \bar{r}_{i^{(1)},1}, \bar{r}_{i^{(2)}, 2}, \ldots, \bar{r}_{i^{(d)},d}, \ldots,\bar{r}_{i^{(1)},1}, \bar{r}_{i^{(2)}, 2}, \ldots, \bar{r}_{i^{(d)},d} \Big)\bigg), \\
   \mD &= -\sinh{\bigg( 
   \Big[
       \bar{r}_{1,j^{(1)}}\ve^\top_{j^{(1)}},   \ldots,  \bar{r}_{1,j^{(1)}}\ve^\top_{j^{(1)}+(d-1)d},  \ldots,  \bar{r}_{d,j^{(d)}}\ve^\top_{j^{(d)}},  \ldots, \bar{r}_{d,j^{(d)}}\ve^\top_{j^{(d)}+(d-1)d}
    \Big]^\top \bigg)}, \\
   \mE &= -\sinh{ \bigg(
    \Big[
       \bar{r}_{1,j^{(1)}}\ve_{j^{(1)}}, \ldots,  \bar{r}_{1,j^{(1)}}\ve_{j^{(1)}+(d-1)d},  \ldots,  \bar{r}_{d,j^{(d)}}\ve_{j^{(d)}},  \ldots, \bar{r}_{d,j^{(d)}}\ve_{j^{(d)}+(d-1)d}
    \Big] \bigg)}.
\end{align*}
Here, for any matrix $\mP$, the operations $\cosh(\mP)$ and $\sinh(\mP)$ are performed elementwise. For a set of $d$ observed indices $\Omega$, there exists $d$ corresponding unknown variables, $\bar{r}_{i_k, j_k}$. If convergence is guaranteed, the model yields $d$ equations relating these variables to the $d$ ground truth values. This implies that the variables $\bar{r}_{i_k, j_k}$ can be characterized as a closed-form. To characterize more rigorously, we substitute $\mC, \mD, \mE$, and $\mF$ into ~\eqref{eqn: proof dxd depth-2 explicit form solution}:
\begin{align}
\bm{\theta}(\infty) = 
    \begin{bmatrix}
        a_{1,1}(\infty) \\ a_{1,2}(\infty) \\ \vdots \\a_{1,d}(\infty) \\ \hline a_{2,1}(\infty) \\ a_{2,2}(\infty) \\ \vdots \\ a_{2,d}(\infty) \\ \hline \vdots \\  \hline a_{d,1}(\infty) \\ \vdots \\ a_{d,d}(\infty) \\ \hline \hline  b_{1,1}(\infty) \\ b_{1,2}(\infty) \\ \vdots \\ b_{1,d}(\infty) \\  \hline b_{2,1}(\infty) \\ b_{2,2}(\infty) \\ \vdots \\  b_{2,d}(\infty) \\  \hline \vdots \\  \hline b_{d,1}(\infty) \\ \vdots \\ b_{d,d}(\infty) 
    \end{bmatrix} 
    &= \begin{bmatrix}
        a_{1,1}(0) \cosh(\bar{r}_{1, j^{(1)}}) - b_{1, j^{(1)}}(0) \sinh(\bar{r}_{1, j^{(1)}}) \\
        a_{1,2}(0) \cosh(\bar{r}_{1, j^{(1)}}) - b_{2, j^{(1)}}(0) \sinh(\bar{r}_{1, j^{(1)}}) \\
        \vdots \\
        a_{1,d}(0) \cosh(\bar{r}_{1, j^{(1)}}) - b_{d, j^{(1)}}(0) \sinh(\bar{r}_{1, j^{(1)}}) \\ \hline
        a_{2,1}(0) \cosh(\bar{r}_{2, j^{(2)}}) - b_{1, j^{(2)}}(0) \sinh(\bar{r}_{2, j^{(2)}}) \\
        a_{2,2}(0) \cosh(\bar{r}_{2, j^{(2)}}) - b_{2, j^{(2)}}(0) \sinh(\bar{r}_{2, j^{(2)}}) \\
        \vdots \\
        a_{2,d}(0) \cosh(\bar{r}_{2, j^{(2)}}) - b_{d, j^{(2)}}(0) \sinh(\bar{r}_{2, j^{(2)}}) \\ \hline
        \vdots \\ \hline
        a_{d,1}(0) \cosh(\bar{r}_{d, j^{(d)}}) - b_{1, j^{(d)}}(0) \sinh(\bar{r}_{d, j^{(d)}}) \\ 
        \vdots \\
        a_{d,d}(0) \cosh(\bar{r}_{d, j^{(d)}}) - b_{d, j^{(d)}}(0) \sinh(\bar{r}_{d, j^{(d)}}) \\ \hline\hline
        -a_{i^{(1)}, 1}(0) \sinh(\bar{r}_{i^{(1)}, 1}) + b_{1,1}(0) \cosh(\bar{r}_{i^{(1)}, 1}) \\
        -a_{i^{(2)}, 1}(0) \sinh(\bar{r}_{i^{(2)}, 2}) + b_{1,2}(0) \cosh(\bar{r}_{i^{(2)}, 2}) \\
        \vdots \\
        -a_{i^{(d)}, 1}(0) \sinh(\bar{r}_{i^{(d)}, d}) + b_{1,d}(0) \cosh(\bar{r}_{i^{(d)}, d}) \\ \hline
        -a_{i^{(1)}, 2}(0) \sinh(\bar{r}_{i^{(1)}, 1}) + b_{2,1}(0) \cosh(\bar{r}_{i^{(1)}, 1}) \\
        -a_{i^{(2)}, 2}(0) \sinh(\bar{r}_{i^{(2)}, 2}) + b_{2,2}(0) \cosh(\bar{r}_{i^{(2)}, 2}) \\ 
        \vdots \\
        -a_{i^{(d)}, 2}(0) \sinh(\bar{r}_{i^{(d)}, d}) + b_{2,d}(0) \cosh(\bar{r}_{i^{(d)}, d}) \\ \hline
        \vdots \\
        \hline 
        -a_{i^{(1)}, d}(0) \sinh(\bar{r}_{i^{(1)}, 1}) + b_{d,1}(0) \cosh(\bar{r}_{i^{(1)}, 1}) \\
        \vdots \\
        -a_{i^{(d)}, d}(0) \sinh(\bar{r}_{i^{(d)}, d}) + b_{d,d}(0) \cosh(\bar{r}_{i^{(d)}, d}) \\
    \end{bmatrix}.
    \label{eqn: proof for proposition4.1 theta infty}
\end{align}
Then, assuming convergence, for each observation $(i_n, j_n) \in \Omega$ (for $n=1, \ldots, d$), we obtain the equation:
\begin{align}
w^*_{i_n, j_n} = w_{i_n, j_n}(\infty) &= a_{i_n,1}(\infty) b_{1,j_n}(\infty) + \dotsb + a_{i_n,d}(\infty) b_{d,j_n}(\infty) \nonumber \\
&= \sum_{k=1}^{d} \Biggl[ \left(a_{i_n, k}(0) \cosh(\bar{r}_{i_n, j_n}) - b_{k, j^{(i_n)}}(0) \sinh(\bar{r}_{i_n, j_n}) \right) \nonumber \\
&\qquad \phantom{==} \cdot \left( b_{k,j_n}(0) \cosh(\bar{r}_{i_n, j_n}) - a_{i_n,k}(0)\sinh(\bar{r}_{i_n, j_n}) \right) \Biggr]. \nonumber
\end{align}
Let $C_n=\cosh(\bar{r}_{i_n, j_n})$ and $S_n = \sinh(\bar{r}_{i_n, j_n})$. Then we can rewrite the above equation as:
\begin{align}
    w^*_{i_n, j_n} &= \sum_{k=1}^d \left(a_{i_n, k}(0) b_{k, j_n}(0) C_n^2 - a_{i_n, k}(0)^2 C_n S_n - b_{k,j_n}(0)^2 C_n S_n + a_{i_n, k}(0) b_{k, j_n}(0) S_n^2 \right) \nonumber \\
    &= \left( \sum_{k=1}^d a_{i_n, k}(0) b_{k, j_n}(0) \right) \left(C_n^2 + S_n^2\right) - \left(\sum_{k=1}^d \left(a_{i_n, k}(0)^2 + b_{k,j_n}(0)^2  \right) \right) C_n S_n \nonumber \\
    &= P_{i_n, j_n}\cosh(2\bar{r}_{i_n,j_n}) - \frac{Q_{i_n, j_n}}{2}\sinh(2\bar{r}_{i_n, j_n}), \label{eqn: LoP pretrain wstar temp}
\end{align}
where $P_{i_n, j_n} = \sum_{k=1}^d a_{i_n, k}(0) b_{k, j_n}(0)$ and $Q_{i_n, j_n} =  \sum_{k=1}^d\left(a_{i_n, k}(0)^2 + b_{k,j_n}(0)^2  \right)$.

By solving \eqref{eqn: LoP pretrain wstar temp} with respect to $\bar{r}_{i_n,j_n}$, we can get:
\begin{align*}
    2w^*_{i_n, j_n} &= P_{i_n, j_n} \left(e^{2\bar{r}_{i_n, j_n}} + e^{-2\bar{r}_{i_n, j_n}}\right) - \frac{Q_{i_n, j_n}}{2} \left(e^{2\bar{r}_{i_n, j_n}} - e^{-2\bar{r}_{i_n, j_n}} \right) \\
    &= e^{2 \bar{r}_{i_n, j_n}}  \left(P_{i_n, j_n} - \frac{Q_{i_n, j_n}}{2} \right)  + e^{-2 \bar{r}_{i_n, j_n}}  \left(P_{i_n, j_n} + \frac{Q_{i_n, j_n}}{2} \right).
\end{align*}
Multiply by $e^{2\bar{r}_{i_n, j_n}}$ leads to:
\begin{align*}
    2w^*_{i_n, j_n} e^{2\bar{r}_{i_n, j_n}} &= e^{4 \bar{r}_{i_n, j_n}} \left(P_{i_n, j_n} - \frac{Q_{i_n, j_n}}{2} \right) + P_{i_n, j_n} + \frac{Q_{i_n, j_n}}{2}.
\end{align*}
Rearrange into a quadratic equation by setting $u=e^{2\bar{r}_{i_n, j_n}}$:
\begin{align*}
     \left(P_{i_n, j_n} - \frac{Q_{i_n, j_n}}{2} \right)u^2 - 2w^*_{i_n, j_n} u + P_{i_n, j_n} + \frac{Q_{i_n, j_n}}{2} = 0.
\end{align*}
By solving the above equation while noting that $P_{i_n, j_n} - \frac{Q_{i_n, j_n}}{2}\leq 0$ by the definition, we can get explicit solutions for $\bar{r}_{i_n,j_n}$:
\begin{align*}
    \bar{r}_{i_n,j_n} = \frac{1}{2} \log \left( \frac{P_{i_n, j_n} + \frac{Q_{i_n, j_n}}{2}}{w_{i_n, j_n}^* + \sqrt{{w_{i_n, j_n}^*}^2 - P_{i_n, j_n}^2 + \left(\frac{Q_{i_n, j_n}}{2}\right)^2}} \right).
\end{align*}
Note that each $\bar{r}_{i_n, j_n}$ is solely determined by the initial points $\boldsymbol{\theta}(0)$. With $\bar{r}_{i_n, j_n}$ determined for each observed entry, we have closed-form expressions characterizing the model's learned relationship for these observations. Consequently, by \eqref{eqn: proof for proposition4.1 theta infty}, we have:
\begin{align*}
    a_{p, q}(\infty) &= a_{p, q}(0) \cosh\left(\bar{r}_{p, j^{(p)}} \right) - b_{q, j^{(p)}}(0) \sinh \left( \bar{r}_{p, j^{(p)}} \right), \\
    b_{p, q}(\infty) &= b_{p, q}(0) \cosh\left(\bar{r}_{i^{(q)}, q} \right) - a_{i^{(q)}, p}(0) \sinh \left( \bar{r}_{i^{(q)}, q} \right).
\end{align*}
\end{proof}

\newpage

\subsection{Proof of Theorem~\ref{thm: LoP full theorem}}
\label{appendix: proof section 4.2}
In this section, we will provide the analysis of $2 \times 2$ matrix that starts from pre-trained weights with diagonal observations $w^* \triangleq w_{11}^* = w_{22}^*$, $\mW_{\mA, \mB}(t)$ cannot converge to a low-rank solution. Let $T_1 > t_1$ be the timestep that concludes the pre-train phase. For the sake of simplicity, we omit the $\epsilon$ term introduced in the pre-training phase. Then, we know from Proposition~\ref{pro: LoP general pre-train solution}, we have:
\begin{equation}
    \mA(T_1) = \mB(T_1) = \begin{pmatrix}
        \sqrt{w^*} & 0 \\
        0 & \sqrt{w^*}
    \end{pmatrix}.
\end{equation}

In the post-train phase, we introduce an additional observation in the off-diagonal entries, specifically $w^*_{12}$ or $w^*_{21}$. Without loss of generality, we assume $w^*_{12}>0$ is revealed while other observations remain the same, i.e., $\Omega_{\rm post} = \{(1,1), (1,2), (2,2)\}$.  Note that the gradient of the post-train loss is:
\begin{align*}
    \nabla \ell(\mW_{\mA, \mB}) &= 
    \begin{pmatrix}
        w_{11} - w^* & w_{12} - w_{12}^* \\
        0 & w_{22} - w^*
    \end{pmatrix} \\
    &=
    \begin{pmatrix}
        a_{11}b_{11} + a_{12}b_{21} - w^* & a_{11}b_{12} + a_{12}b_{22} - w_{12}^* \\
        0 & a_{21}b_{12} + a_{22}b_{22} - w^*
    \end{pmatrix}.
\end{align*}
For simplicity, we again omit the $\Omega$ term in the loss specification. We define the residuals for the relevant matrix elements as $r_{11} := w_{11} - w^*$, $r_{12} := w_{12} - w_{12}^*$, and $r_{22} := w_{22} - w^*$.

We begin by demonstrating a pairwise symmetry between the entries of $\mA(t)$ and $\mB(t)$, which simplifies subsequent analysis. To this end, we first provide the time derivatives for the elements of $\mA(t)$ and $\mB(t)$. Given the general gradient flow dynamics $\dot{\mA}(t) = -\nabla \ell (\mW_{\mA, \mB}(t)) \mB^\top(t)$ and $\dot{\mB}(t) = -\mA^\top(t) \nabla \ell (\mW_{\mA, \mB}(t))$, the component-wise updates are as follows. For $\mA(t)$:
\begin{equation} \label{eqn:group_dotA}
\begin{aligned}
    \dot{a}_{11}(t) &= b_{11}(t)(w^* - w_{11}(t)) + b_{12}(t)(w_{12}^* - w_{12}(t)), \\
    \dot{a}_{12}(t) &= b_{21}(t)(w^* - w_{11}(t)) + b_{22}(t)(w_{12}^* - w_{12}(t)), \\
    \dot{a}_{21}(t) &= b_{12}(t)(w^* - w_{22}(t)), \\
    \dot{a}_{22}(t) &= b_{22}(t)(w^* - w_{22}(t)),
\end{aligned}
\end{equation}
and for $\mB(t)$:
\begin{equation} \label{eqn:group_dotB}
\begin{aligned}
    \dot{b}_{11}(t) &= a_{11}(t)(w^* - w_{11}(t)), \\
    \dot{b}_{12}(t) &= a_{11}(t)(w_{12}^* - w_{12}(t)) + a_{21}(t)(w^* - w_{22}(t)), \\
    \dot{b}_{21}(t) &= a_{12}(t)(w^* - w_{11}(t)), \\
    \dot{b}_{22}(t) &= a_{12}(t)(w_{12}^* - w_{12}(t)) + a_{22}(t)(w^* - w_{22}(t)).
\end{aligned}
\end{equation}

Using the equations above, we first present a result showing that the $k$-th derivative of each element in $\mA(t)$ and $\mB(t)$ at initialization exhibits a pairwise symmetry:
\begin{lemma}
    Let $\mW_{\mA, \mB}(T_1) = \mA(T_1)\mB(T_1) \in \mathbb{R}^{2 \times 2}$ be a product matrix, where $\mA(T_1)$ and $\mB(T_1)$ are matrices that are obtained at the end of the pre-training phase. Suppose the ground truth matrix satisfies $w_{11}^* = w_{22}^*$. Then for every $k \in \mathbb{N} \cup \{0\}$,  the following identities hold:
    \begin{equation}
    \begin{split}
        a_{11}^{(k)}(T_1) &= b_{22}^{(k)}(T_1), \quad a_{12}^{(k)}(T_1) = b_{12}^{(k)}(T_1),\\
        a_{21}^{(k)}(T_1) &= b_{21}^{(k)}(T_1), \quad a_{22}^{(k)}(T_1) = b_{11}^{(k)}(T_1),
        \label{eqn: k-th derivative elements}
    \end{split}
    \end{equation}
    and consequently,
    \begin{align}
        w_{11}^{(k)}(T_1) = w_{22}^{(k)}(T_1).
        \label{eqn: k-th derivative w}
    \end{align}
    \label{lem: LoP full k-th derivative}
\end{lemma}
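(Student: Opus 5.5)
The plan is to exploit a symmetry of the post-training vector field rather than differentiate repeatedly. Define the involution $\Phi$ on parameter space by $\Phi(\mA,\mB) = (\tilde\mA,\tilde\mB)$ with
\begin{equation*}
\tilde a_{11}=b_{22},\ \tilde a_{12}=b_{12},\ \tilde a_{21}=b_{21},\ \tilde a_{22}=b_{11},\quad \tilde b_{11}=a_{22},\ \tilde b_{12}=a_{12},\ \tilde b_{21}=a_{21},\ \tilde b_{22}=a_{11}.
\end{equation*}
Equivalently, $\tilde\mA = \mK\mB^\top\mK$ and $\tilde\mB=\mK\mA^\top\mK$, where $\mK$ is the anti-diagonal permutation of $\mathbb{R}^{2\times 2}$. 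Then $\tilde\mA\tilde\mB = \mK(\mA\mB)^\top\mK$, so the product transforms by anti-transposition. This map swaps $w_{11}\leftrightarrow w_{22}$ and fixes $w_{12}$ and $w_{21}$.

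First, I would check that the loss on $\Omega_{\rm post}^{(2)}=\{(1,1),(1,2),(2,2)\}$ is invariant under $\Phi$ when $w_{11}^*=w_{22}^*$. Anti-transposition maps $(1,1)\to(2,2)$, $(2,2)\to(1,1)$, and $(1,2)\to(1,2)$, so it preserves the observation set. With equal diagonal targets, the targets are preserved as well. Since $\Phi$ is a linear permutation of coordinates, hence orthogonal, invariance of $\phi$ implies that the gradient-flow vector field $F=-\nabla\phi$ is $\Phi$-equivariant: $F(\Phi\bm\theta)=\Phi F(\bm\theta)$. Alternatively, this can be verified by hand from \eqref{eqn:group_dotA}--\eqref{eqn:group_dotB}. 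For example, with $w_{11}\leftrightarrow w_{22}$, the formula for $\dot b_{22}$ becomes that of $\dot a_{11}$.

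Second, I would observe that the initial state \eqref{eqn: LoP 2x2 pre-train solution}, namely $\mA(T_1)=\mB(T_1)=\sqrt{w^*}\mI_2$, is a fixed point of $\Phi$. The ODE is polynomial and therefore locally Lipschitz, so uniqueness of solutions gives $\Phi(\bm\theta(t))=\bm\theta(t)$ for all $t\ge T_1$. Differentiating $k$ times at $T_1$ then yields \eqref{eqn: k-th derivative elements} for every $k$.

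If one prefers to stay with derivatives, as the lemma's phrasing suggests, the same argument can be run by induction on $k$. All $k$-th derivatives are polynomials in the lower-order derivatives, and equivariance of each differentiated equation together with the base case $k=0$ closes the induction.

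Finally, \eqref{eqn: k-th derivative w} follows by writing $w_{22}=a_{21}b_{12}+a_{22}b_{22}$ and applying the identities $a_{21}=b_{21}$, $b_{12}=a_{12}$, $a_{22}=b_{11}$, $b_{22}=a_{11}$ along the trajectory. This gives $w_{22}=b_{21}a_{12}+b_{11}a_{11}=w_{11}$, and differentiating (by the Leibniz rule) gives equality of all derivatives at $T_1$.

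The main obstacle is the equivariance check, specifically confirming that the chosen permutation $\Phi$ maps the $(1,2)$ residual term correctly. I will handle it via the clean identity $\tilde\mA\tilde\mB=\mK(\mA\mB)^\top\mK$ rather than entrywise bookkeeping.
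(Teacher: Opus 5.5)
Your proposal is correct, and it takes a genuinely different route from the paper.

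The paper works at the level of Taylor coefficients. It writes each $a_{ij}^{(k)}$ and $b_{ij}^{(k)}$ via the Leibniz rule as sums of products of lower-order derivatives of entries and residuals. It then checks by induction on $k$ that the paired formulas agree at $T_1$, using $r_{11}^{(j)}(T_1)=r_{22}^{(j)}(T_1)$ for $j<k$. Only afterwards, in Lemma~\ref{lem: LoP full same element}, does it upgrade this to the identities for all $t\ge T_1$. That upgrade needs two extra inputs: the trajectory is real analytic (Lemma~\ref{lem: analyticity}), and analytic functions with identical derivatives at a point coincide (Lemma~\ref{lem: k-th}).

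You instead isolate the symmetry $\Phi(\mA,\mB)=(\mK\mB^\top\mK,\mK\mA^\top\mK)$. It anti-transposes the product, preserves $\Omega_{\rm post}^{(2)}$ and its targets when $w_{11}^*=w_{22}^*$, and is an orthogonal coordinate permutation, so $-\nabla\phi$ is $\Phi$-equivariant. Since $\sqrt{w^*}\mI_2$ is $\Phi$-fixed, ODE uniqueness gives $\Phi(\bm{\theta}(t))=\bm{\theta}(t)$ on the whole interval of existence. The derivative identities at $T_1$ then follow by differentiation.

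What your route buys:
\begin{itemize}
\item It inverts the logical order: the all-$t$ statement of Lemma~\ref{lem: LoP full same element} becomes primary, and the present lemma becomes a corollary.
\item It needs only local Lipschitz continuity of the vector field, not analyticity of the trajectory.
\item It replaces entrywise bookkeeping with the single identity $\tilde\mA\tilde\mB=\mK(\mA\mB)^\top\mK$.
\item It makes transparent that the conclusion holds for any $\Phi$-fixed starting point and any observation set and targets invariant under anti-transposition.
\end{itemize}

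The paper's computation is more hands-on and stays within the explicit component equations \eqref{eqn:group_dotA}--\eqref{eqn:group_dotB}. However, it must still appeal to analyticity to reach the all-$t$ statement that is actually used downstream.
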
 
\begin{proof}
    We prove the statement by induction on $k$. 
    When $k=0$, by the initialization assumption, we have
\begin{align*}
    a_{11}(T_1) = b_{22}(T_1), \quad a_{12}(T_1) = b_{12}(T_1), \quad a_{21}(T_1) = b_{21}(T_1), \quad a_{22}(T_1) = b_{11}(T_1),
\end{align*}
and therefore $ w_{11}(T_1) = w_{22}(T_1)$.

Assume that for all orders $ m < k$ (with $k \geq 1$) the identities
\begin{align*}
a_{11}^{(m)}(T_1)=b_{22}^{(m)}(T_1), \quad a_{12}^{(m)}(T_1)=b_{12}^{(m)}(T_1), \quad a_{21}^{(m)}(T_1)=b_{21}^{(m)}(T_1), \quad a_{22}^{(m)}(T_1)=b_{11}^{(m)}(T_1),
\end{align*}
hold, and hence also $w_{11}^{(m)}(T_1)= w_{22}^{(m)}(T_1)$.
By the Leibniz rule, each element of the $k$-th derivative can be written as a finite sum involving derivatives of orders strictly less than $k$. For $\mA(t)$:
\begin{equation*}
\begin{aligned}
    a^{(k)}_{11}(t) &=-\sum\limits_{j=0}^{k-1} \binom{k-1}{j}\left( b_{11}^{(k-1-j)}(t) r_{11}^{(j)}(t) + b_{12}^{(k-1-j)}(t) r_{12}^{(j)}(t)\right), \\
    a^{(k)}_{12}(t) &= -\sum\limits_{j=0}^{k-1} \binom{k-1}{j} \left(b_{21}^{(k-1-j)}(t) r_{11}^{(j)}(t) + b_{22}^{(k-1-j)}(t)r_{12}^{(j)}(t)\right), \\
    a^{(k)}_{21}(t) &= -\sum\limits_{j=0}^{k-1} \binom{k-1}{j} b_{12}^{(k-1-j)}(t) r_{22}^{(j)}(t), \\
    a^{(k)}_{22}(t) &= -\sum\limits_{j=0}^{k-1} \binom{k-1}{j} b_{22}^{(k-1-j)}(t) r_{22}^{(j)}(t),
\end{aligned}
\end{equation*}
and for $\mB(t)$:
\begin{equation*}
\begin{aligned}
    b^{(k)}_{11}(t) &= -\sum\limits_{j=0}^{k-1} \binom{k-1}{j} a_{11}^{(k-1-j)}(t) r_{11}^{(j)}(t), \\
    b^{(k)}_{12}(t) &= -\sum\limits_{j=0}^{k-1} \binom{k-1}{j} \left( a_{11}^{(k-1-j)}(t) r_{12}^{(j)}(t) + a_{21}^{(k-1-j)}(t) r_{22}^{(j)}(t) \right), \\
    b^{(k)}_{21}(t) &= -\sum\limits_{j=0}^{k-1} \binom{k-1}{j} a_{12}^{(k-1-j)}(t) r_{11}^{(j)}(t), \\
    b^{(k)}_{22}(t) &= -\sum\limits_{j=0}^{k-1} \binom{k-1}{j} \left( a_{12}^{(k-1-j)}(t) r_{12}^{(j)}(t) + a_{22}^{(k-1-j)}(t) r_{22}^{(j)}(t) \right).
\end{aligned}
\end{equation*}

By the inductive hypothesis, all derivatives of order less than $k$ satisfy the symmetric relations at $t=T_1$. Inserting these equalities into the expressions with $t=T_1$ above shows that the symmetry is maintained at the $k$-th order:
\begin{align*}
    a_{11}^{(k)}(T_1) &= b_{22}^{(k)}(T_1), \quad a_{12}^{(k)}(T_1) = b_{12}^{(k)}(T_1),
    \quad a_{21}^{(k)}(T_1) = b_{21}^{(k)}(T_1), \quad a_{22}^{(k)}(T_1) = b_{11}^{(k)}(T_1),
\end{align*}
proving equations \eqref{eqn: k-th derivative elements} and \eqref{eqn: k-th derivative w}.
\end{proof}

\begin{lemma}
Under the setting of Lemma~\ref{lem: LoP full k-th derivative}, below relationships hold for all $t \geq T_1$:
    \begin{equation}
    \begin{split}
        a_{11}(t) &= b_{22}(t), \quad a_{12}(t) = b_{12}(t),\\
        a_{21}(t) &= b_{21}(t), \quad a_{22}(t) = b_{11}(t),
        \label{eqn: LoP full same element}
    \end{split}
    \end{equation}
which further leads to $w_{11}(t) = w_{22}(t)$.
\label{lem: LoP full same element}
\end{lemma}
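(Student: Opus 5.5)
The cleanest route is a symmetry/uniqueness argument for the ODE, not power series. The previous lemma shows all derivatives agree at $T_1$, which would give the result only if the solution is real-analytic on $[T_1,\infty)$. That is true but needs an extra step, since analyticity at $T_1$ alone only yields agreement near $T_1$. So I would argue through an involution of the phase space.

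Define the map $\Phi:(\mA,\mB)\mapsto(\tilde\mA,\tilde\mB)$ by
\begin{equation*}
\tilde a_{11}=b_{22},\quad \tilde a_{12}=b_{12},\quad \tilde a_{21}=b_{21},\quad \tilde a_{22}=b_{11},\quad \tilde b_{11}=a_{22},\quad \tilde b_{12}=a_{12},\quad \tilde b_{21}=a_{21},\quad \tilde b_{22}=a_{11}.
\end{equation*}
In matrix form this is $\tilde\mA=\mK\mB^\top\mK$ and $\tilde\mB=\mK\mA^\top\mK$, where $\mK$ is the $2\times2$ anti-diagonal permutation. Under $\Phi$ the product becomes $\tilde\mA\tilde\mB=\mK(\mA\mB)^\top\mK$. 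This maps $w_{11}\leftrightarrow w_{22}$, fixes $w_{12}$, and sends $w_{21}\mapsto w_{21}$.

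The first key step is to verify that the post-training loss is invariant under $\Phi$. The loss is
\begin{equation*}
\tfrac12\big[(w_{11}-w^*)^2+(w_{12}-w_{12}^*)^2+(w_{22}-w^*)^2\big].
\end{equation*}
Here the hypothesis $w_{11}^*=w_{22}^*=w^*$ is exactly what makes the swap of $w_{11}$ and $w_{22}$ harmless. The observed index set $\{(1,1),(1,2),(2,2)\}$ is also mapped to itself, since $(1,2)$ is fixed by $(i,j)\mapsto(3-j,3-i)$.

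The second step uses that $\Phi$ is a linear coordinate permutation, hence an orthogonal map, and that the loss is invariant. Together these give that the gradient field is equivariant: $\nabla(\ell\circ\Phi)=\Phi^\top\nabla\ell\circ\Phi$. It follows that if $\theta(t)$ solves the gradient flow, so does $\Phi(\theta(t))$. One can also double-check this directly on \eqref{eqn:group_dotA}--\eqref{eqn:group_dotB}. For example, $\dot b_{22}=a_{12}(w_{12}^*-w_{12})+a_{22}(w^*-w_{22})$ should map to $\dot a_{11}=b_{12}(w_{12}^*-w_{12})+b_{11}(w^*-w_{11})$, which holds under the swaps together with $w_{11}\leftrightarrow w_{22}$.

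The third step is to note that $\Phi(\theta(T_1))=\theta(T_1)$, since $\mA(T_1)=\mB(T_1)=\sqrt{w^*}\mI_2$ (this is the $k=0$ case of Lemma~\ref{lem: LoP full k-th derivative}). The vector field is polynomial, hence locally Lipschitz, so the solution is unique. Therefore $\Phi(\theta(t))=\theta(t)$ for all $t\ge T_1$ in the interval of existence. The existence interval is all of $[T_1,\infty)$, because the loss is nonincreasing and, by the conserved balancedness $\mA^\top\mA-\mB\mB^\top$, the norms cannot blow up in finite time. These fixed-point equations are exactly \eqref{eqn: LoP full same element}. Finally, $w_{11}=a_{11}b_{11}+a_{12}b_{21}=b_{22}a_{22}+b_{12}a_{21}=w_{22}$.

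The main obstacle I expect is just the bookkeeping check that the vector field is $\Phi$-equivariant. If a reader prefers the paper's Taylor approach instead, the extra ingredient would be real-analyticity of the flow on all of $[T_1,\infty)$ together with connectedness (an identity-theorem argument). The uniqueness route avoids that.
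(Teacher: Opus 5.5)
Your proof is correct, and it takes a different route from the paper. The paper's proof is one line. It combines Lemma~\ref{lem: LoP full k-th derivative}, which shows by Leibniz-rule induction that all derivatives of the paired entries agree at $T_1$, with two imported facts. The first is that the trajectory is real-analytic on the whole half-line (Lemma~\ref{lem: analyticity}). The second is that two analytic functions with identical Taylor data at the left endpoint coincide on all of $[0,\infty)$ (Lemma~\ref{lem: k-th}). So your concern about analyticity beyond a neighborhood of $T_1$ is handled there by citation.

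You instead identify the symmetry that produces the identities. The coordinate permutation $\Phi$ induced by $\mW\mapsto\mK\mW^\top\mK$ fixes the observation set $\{(1,1),(1,2),(2,2)\}$, and, because $w_{11}^*=w_{22}^*$, it also fixes the loss. Orthogonality of $\Phi$ then gives equivariance of the gradient field. Uniqueness for a polynomial, hence locally Lipschitz, vector field then keeps the trajectory on the fixed-point set of $\Phi$.

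Your route needs only Picard--Lindel\"of plus a single equivariance check, which is exactly what the paper's inductive step re-verifies at every derivative order. It makes the source of the identities transparent, and it carries over verbatim to any observation pattern, targets, and initialization invariant under such an involution. The paper's route reuses analytic machinery it already imports, but it depends on an external global-analyticity result.

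One small repair to your global-existence remark. Bounded loss together with the conserved $\mA^\top\mA-\mB\mB^\top$ does not by itself bound the norms. The entry $w_{21}$ is unobserved, and every matrix admits a balanced factorization, so the zero-loss balanced set is unbounded. Use the generic gradient-flow estimate instead:
\begin{equation*}
\|\bm{\theta}(t)-\bm{\theta}(T_1)\|\le\int_{T_1}^{t}\|\dot{\bm{\theta}}(s)\|\,ds\le\sqrt{(t-T_1)\,\ell(T_1)},
\end{equation*}
which rules out finite-time blow-up. Alternatively, note that your fixed-point identity holds on the entire maximal existence interval, which the statement already presupposes to be $[T_1,\infty)$.
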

\begin{proof}
    By Lemmas~\ref{lem: k-th} and \ref{lem: LoP full k-th derivative}, we may conclude that for all $t \geq T_1$, equation \eqref{eqn: LoP full same element} holds, and therefore $w_{11}(t) = w_{22}(t)$.
\end{proof}
By Lemma~\ref{lem: LoP full same element}, all entries of $ \mB(t) $ can be expressed in terms of the entries of $ \mA(t) $ for all $ t \geq T_1 $. From this point onward, we will represent $ \mW_{\mA, \mB}(t) $ solely using the elements of $ \mA(t) $. We begin by simplifying the time derivative of $ \mA(t) $ as follows:

\begin{equation} \label{eqn: LoP full time derivative of A}
\begin{aligned}
    \dot{a}_{11}(t) &= a_{22}(t)(w^* - w_{11}(t)) + a_{12}(t)(w_{12}^* - w_{12}(t)), \\
    \dot{a}_{12}(t) &= a_{21}(t)(w^* - w_{11}(t)) + a_{11}(t)(w_{12}^* - w_{12}(t)), \\
    \dot{a}_{21}(t) &= a_{12}(t)(w^* - w_{22}(t)), \\
    \dot{a}_{22}(t) &= a_{11}(t)(w^* - w_{22}(t)).
\end{aligned}
\end{equation}

Rewriting $\mW_{\mA, \mB}(t)$ in terms of the elements of $\mA(t)$ yields:
\begin{align}
    \mW_{\mA, \mB}(t) &= \mA(t)\mB(t) \nonumber \\
    &= \begin{pmatrix}
        a_{11}(t) & a_{12}(t) \\
        a_{21}(t) & a_{22}(t)
       \end{pmatrix}
       \begin{pmatrix}
        a_{22}(t) & a_{12}(t) \\
        a_{21}(t) & a_{11}(t)
       \end{pmatrix} \nonumber \\
   &= \begin{pmatrix}
        a_{11}(t)a_{22}(t) + a_{12}(t)a_{21}(t) & 2a_{11}(t)a_{12}(t) \\
        2a_{21}(t)a_{22}(t) & a_{11}(t)a_{22}(t) + a_{12}(t)a_{21}(t)
       \end{pmatrix} \label{eqn: LoP full rewrite W}.
\end{align}
We can also simplify the time derivative of $\mW_{\mA,\mB}(t)$ as follows:
\begin{equation}
\begin{aligned}
    \dot{w}_{11}(t) &= \left(w^* - w_{11}(t)\right)\left(a_{11}^2(t) + a_{12}^2(t) + a_{21}^2(t) + a_{22}^2(t)\right) \\
    &\phantom{=}+ (w_{12}^* - w_{12}(t)) \left(a_{11}(t)a_{21}(t) + a_{12}(t)a_{22}(t)\right), \\
    \dot{w}_{12}(t) &= 2(w_{12}^* - w_{12}(t)) \left(a_{11}^2(t) + a_{12}^2(t)\right) \\
    &\phantom{=}+ 2(w^* - w_{11}(t))\left(a_{11}(t)a_{21}(t) + a_{12}(t)a_{22}(t)\right), \\
    \dot{w}_{21}(t) &= 2(w^* - w_{11}(t))(a_{11}(t)a_{21}(t) + a_{12}(t)a_{22}(t)), \\
    \dot{w}_{22}(t) &= \dot{w}_{11}(t).
\end{aligned}
\label{eqn: LoP full time derivative of W}
\end{equation}

Using~\eqref{eqn: LoP full rewrite W}, we state the basic conservation law from \citet{pmlr-v80-arora18a}: if the matrices are initialized in a balanced manner, this balancedness is preserved throughout the training process. That is,  
\begin{equation*}
\mA(T_1)^\top \mA(T_1) = \mB(T_1) \mB(T_1)^\top,
\end{equation*}
holds at initialization, this leads to
\begin{equation}
\phantom{, \;\; \forall t \geq T_1.} a_{11}^2(t) + a_{21}^2(t) = a_{12}^2(t) + a_{22}^2(t), \;\; \forall t \geq T_1.
\label{eqn: LoP full conservation law}
\end{equation}

Now, we are going to examine the time derivative of the loss:
\begin{align}
    \frac{d}{dt} \ell(\mW_{\mA, \mB}(t)) &= \left\langle \nabla \ell(\mW_{\mA, \mB}(t)), \dot{\mW}(t) \right\rangle \nonumber \\
    &= \left\langle \nabla \ell(\mW_{\mA, \mB}(t)), \dot{\mA}(t)\mB(t) + \mA(t)\dot{\mB}(t) \right\rangle \nonumber \\
    &= \Tr\left( \nabla \ell^\top(\mW_{\mA, \mB}(t)) \left( \dot{\mA}(t)\mB(t) + \mA(t)\dot{\mB}(t)\right) \right) \nonumber \\
    &= \Tr\left( \nabla \ell^\top(\mW_{\mA, \mB}(t)) \dot{\mA}(t)\mB(t) \right) + \Tr\left( \nabla \ell^\top(\mW_{\mA, \mB}(t))  \mA(t)\dot{\mB}(t)\right) \nonumber \\
    &= - \Tr\left( \nabla \ell^\top(\mW_{\mA, \mB}(t))\nabla \ell(\mW_{\mA, \mB}(t)) \mB^\top(t) \mB(t) \right) \nonumber \\
    &\phantom{ =\,\,} - \Tr\left( \nabla \ell^\top(\mW_{\mA,\mB})  \mA(t) \mA^\top(t) \nabla \ell(\mW_{\mA, \mB}(t)) \right) \nonumber \\    
    &= - \Tr\big(\underbrace{ \nabla \ell(\mW_{\mA, \mB}(t)) \mB^\top(t) \mB(t)  \nabla \ell(\mW_{\mA, \mB}^\top(t))}_{\mathrlap{:=\mL_1(t)}}\big) \nonumber  \\
    &\phantom{=\,\,} - \Tr\big(\underbrace{\nabla \ell(\mW_{\mA, \mB}^\top(t))  \mA(t) \mA^\top(t) \nabla \ell(\mW_{\mA, \mB}(t))}_{\mathrlap{:=\mL_2(t)}})\big) \label{eqn: LoP full trace of loss derivative}.
\end{align}
The third equality follows from the fact that for any two matrices $\mA$ and $\mB$ of the same size, $\left\langle \mA, \mB\right\rangle = \Tr(\mA^\top \mB)$. The last equation holds due to the cyclic property of the trace. 
Combining~\eqref{eqn: LoP full trace of loss derivative} with Lemma~\ref{lem: p.s.d.}, we can ensure $\mL_1(t)$ and $\mL_2(t)$ are both positive semidefinite, which implies the loss is monotonically non-increasing for all $t \geq T_1$.

With Lemma~\ref{lem: LoP full same element} and the monotonicity of the loss, we can guarantee positiveness of $a_{11}, a_{22}, w_{11}$, and $w_{22}$ after the pre-train phase:
\begin{lemma}
    For a product matrix $\mW_{\mA,\mB}(t) = \mA(t)\mB(t) \in \mathbb{R}^{2 \times 2}$, if $a_{11}(T_1), a_{22}(T_1), w_{11}(T_1),$ and $w_{22}(T_1)$ have all positive values, following inequalities hold for all $t \geq T_1$:
    \begin{align*}
        a_{11}(t), a_{22}(t) &> 0, \quad a_{12}(t) \geq 0.
    \end{align*}
    Furthermore, 
    \begin{align*}
        w_{11}(t)&, w_{22}(t) > 0
    \end{align*}
    holds for all $t \geq T_1$.
    \label{lem: LoP full a11 a22 w11 w22 positive}
\end{lemma}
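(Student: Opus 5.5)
The plan is a first-exit-time (continuity) argument for the system \eqref{eqn: LoP full time derivative of A}. It uses the reduction $\mB(t)=(a_{22},a_{12};a_{21},a_{11})$ from Lemma~\ref{lem: LoP full same element}, the conservation law \eqref{eqn: LoP full conservation law}, and the monotonicity of the loss just established. Write $r(t)\triangleq w^*-w_{11}(t)=w^*-w_{22}(t)$ and $s(t)\triangleq w_{12}^*-w_{12}(t)$. At $t=T_1$ we have $a_{11}=a_{22}=\sqrt{w^*}$, $a_{12}=a_{21}=0$, $r=0$ and $s=w_{12}^*>0$. Define $\tau\triangleq\inf\{t\ge T_1:\ a_{11}(t)\le 0 \text{ or } a_{22}(t)\le 0 \text{ or } a_{12}(t)<0 \text{ or } w_{11}(t)\le 0\}$ and suppose $\tau<\infty$. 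By continuity, all the strict inequalities hold on $[T_1,\tau)$ and at least one fails at $\tau$.

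The first step is to control $a_{12}$ near $T_1$ and up to $\tau$. Since $\dot a_{12}=a_{21}r+a_{11}s$, $\dot a_{21}=a_{12}r$, and $a_{12}(T_1)=0$, $\dot a_{12}(T_1)=\sqrt{w^*}\,w_{12}^*>0$, the function $a_{12}$ immediately becomes positive. To keep it nonnegative I would treat the pair $(a_{12},a_{21})$ as a linear system driven by the nonnegative forcing $a_{11}s$. I would also show $s\ge 0$ on $[T_1,\tau)$. For this, the formula $\dot w_{12}=2s(a_{11}^2+a_{12}^2)+2rM$ with $M=a_{11}a_{21}+a_{12}a_{22}$ shows that $w_{12}$ relaxes toward $w_{12}^*$, and the monotone loss bound $\tfrac12 s^2\le \ell(T_1)=\tfrac12{w_{12}^*}^2$ rules out overshoot unless the coupling term $rM$ dominates.

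The second step handles $a_{11},a_{22}$. Set $u=a_{11}+a_{22}$ and $v=a_{11}-a_{22}$, which satisfy $\dot u = r u + s a_{12}$ and $\dot v=-rv+s a_{12}$ with $u(T_1)=2\sqrt{w^*}$, $v(T_1)=0$. Because $sa_{12}\ge 0$ on $[T_1,\tau)$, the integrating-factor bounds give $u(t)\ge 2\sqrt{w^*}\exp(\int_{T_1}^t r)>0$ and $v\ge 0$. Hence $a_{11}\ge a_{22}$, so $a_{11}>0$ as long as $a_{22}>0$. For $a_{22}$ itself, the conservation law $a_{11}^2+a_{21}^2=a_{12}^2+a_{22}^2$ gives $a_{22}^2=a_{11}^2+a_{21}^2-a_{12}^2$. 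So $a_{22}$ can vanish only if $a_{12}^2\ge a_{11}^2+a_{21}^2$, which I would exclude using $w_{12}=2a_{11}a_{12}\le w_{12}^*+|s|$ together with the lower bound on $u$. The monotone loss also gives $|r|\le w_{12}^*/2$, which keeps these exponentials finite on bounded intervals.

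The third step is positivity of $w_{11}=w_{22}=a_{11}a_{22}+a_{12}a_{21}$. Here I would use $\det \mA=\det\mB$ together with $\det \mW_{\mA,\mB}=(\det\mA)^2=w_{11}^2-w_{12}w_{21}$. On $[T_1,\tau)$, $a_{21}$ has the sign of $-\int a_{12}r$. I would argue $r\ge 0$, so that $a_{21}\le 0$ and $w_{21}\le 0$. Then $w_{11}^2=(\det\mA)^2+w_{12}w_{21}$ is bounded below by $(\det \mA)^2$ minus a controlled term, and $\det\mA$ stays away from zero on $[T_1,\tau]$. This would contradict any of the inequalities failing at $\tau$, giving $\tau=\infty$.

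The main obstacle I expect is the sign bookkeeping for $r$ and $s$ (that $w_{11}\le w^*$ and $w_{12}\le w_{12}^*$ persist), because $\dot r$ and $\dot s$ contain the cross term $M$ of indefinite sign. My first attempt would be to enlarge the invariant region to include $r\ge 0$, $s\ge0$, $a_{21}\le 0$ and $M\le 0$, and check via \eqref{eqn: LoP full time derivative of W} that the vector field points inward on each face of this region. If that fails, I would fall back on the crude loss-monotonicity bounds $|r|\le w_{12}^*/2$ and $|s|\le w_{12}^*$, together with the exponential lower bound on $u$, to get the needed positivity directly.
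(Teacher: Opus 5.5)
There is a genuine gap: the sign invariants your plan rests on point the wrong way. At $T_1$ you have $r=M=0$ and $s=w_{12}^*$. From $\dot w_{11}=r(a_{11}^2+a_{12}^2+a_{21}^2+a_{22}^2)+sM$ and $\dot M=2r(a_{11}a_{12}+a_{21}a_{22})+s\,w_{11}$ you get $\dot w_{11}(T_1)=0$, $\dot M(T_1)=w^*w_{12}^*>0$ and $\ddot w_{11}(T_1)=w^*{w_{12}^*}^2>0$. So right after $T_1$ you have $M>0$ and $w_{11}>w^*$, i.e.\ $r<0$. The region with $r\ge 0$, $M\le 0$ is therefore left immediately; the paper in fact later proves $w_{11}\ge w^*$ (Lemma~\ref{lem: LoP full w_inequalities}).

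Your third step also has a sign slip. Since $\dot a_{21}=a_{12}r$, you have $a_{21}(t)=\int_{T_1}^t a_{12}r$, so $r\ge 0$ would give $a_{21}\ge 0$, not $a_{21}\le 0$.

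Moreover, $s\ge 0$ is never actually proved. Loss monotonicity only yields $0\le w_{12}\le 2w_{12}^*$, and in the paper $w_{12}\le w_{12}^*$ is derived afterwards \emph{from} this lemma. Without $s\ge 0$ your integrating-factor bounds on $u,v$ lose their sign. The fallback cannot replace them: with only $|r|\le C$, the bound $u(t)\ge 2\sqrt{w^*}\exp(\int_{T_1}^t r)$ decays exponentially in $t-T_1$. Nothing then prevents $a_{22}$ (or $w_{11}$) from vanishing at a large exit time $\tau$, whereas the lemma is claimed for all $t\ge T_1$.

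The missing idea is to use the \emph{sign} of $\det\mA$ at the would-be zeros, rather than as a quantitative bound on $w_{11}^2$. By balancedness, $\det\mW_{\mA,\mB}=(\det\mA)^2$ and the singular values of $\mW_{\mA,\mB}$ never vanish, so $\det\mA(t)>0$ for all $t\ge T_1$ (Lemma~\ref{lem: det A}). Since $w_{11}=w_{22}=a_{11}a_{22}+a_{12}a_{21}$ while $\det\mA=a_{11}a_{22}-a_{12}a_{21}$, any zero of $a_{11}$ or $a_{22}$ forces $w_{11}=w_{22}=-\det\mA<0$.

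At a zero of $a_{11}$ you also have $w_{12}=2a_{11}a_{12}=0$, so $\ell>\tfrac12{w_{12}^*}^2=\ell(T_1)$, contradicting monotonicity. At a first zero of $a_{22}$, $\dot a_{22}=a_{11}(w^*-w_{11})>0$, contradicting a downward crossing. Then $a_{12}\ge 0$ is immediate from $w_{12}=2a_{11}a_{12}\ge 0$, with no linear-system argument. At a first zero of $w_{11}$:
the conservation law~\eqref{eqn: LoP full conservation law} gives $(a_{12}^2+a_{22}^2)(a_{21}^2-a_{22}^2)=0$;
positivity of $\det\mW_{\mA,\mB}=-4a_{11}a_{12}a_{21}a_{22}$ then selects $a_{21}=-a_{22}$, hence $a_{11}=a_{12}$;
and then $\dot w_{11}=2w^*(a_{12}^2+a_{22}^2)>0$, a contradiction.

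None of this needs sign information on $r$, $s$ or $M$, which is why the paper can establish those afterwards. If you prefer a single bootstrap, the correct region is $\{a_{12}\ge 0,\ a_{21}\le 0,\ r\le 0,\ s\ge 0,\ M\ge 0\}$, where $w_{11}\ge w^*$ makes positivity automatic. But checking its faces (the $M=0$ face needs the conservation law) from the degenerate corner at $T_1$ is exactly the work of Lemmas~\ref{lem: LoP full a11a21+a12a22 positive} and~\ref{lem: LoP full w_inequalities}, so it is not a shortcut.
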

\begin{proof} We will prove the inequalities step by step.

\textbf{Positiveness of $\mathbf{a_{11}(t)}$.}
For the sake of contradiction, assume that there exists a timestep $\tau_1 > T_1$ where $a_{11}(\tau_1) = 0$ holds. From~\eqref{eqn: LoP full rewrite W} and Lemma~\ref{lem: det A}, we must have $\det(\mA(\tau_1)) > 0$, which implies that $a_{12}(\tau_1)a_{21}(\tau_1) < 0$.
Given the monotonicity of $\ell$, $\mW_{\mA,\mB}(t)$ must satisfy:
\begin{align}
    \ell(\mW_{\mA,\mB}(t)) &\leq \ell(\mW_{\mA,\mB}(T_1))  \label{eqn: monotonic loss warm}.
\end{align}
for all $t \geq T_1$. However, $\mW_{\mA,\mB}(\tau_1)$ cannot satisfy~\eqref{eqn: monotonic loss warm} because $w_{11}(\tau_1), w_{22}(\tau_1) < 0$ and $w_{12}(\tau_1) = 0$ for any $\tau_1 \geq 0$. This contradiction implies that such a $\tau_1$ cannot exist.

\textbf{Positiveness of $\mathbf{a_{22}(t)}$.}
Similarly, let's assume there exists a time $\tau_2 > T_1$ such that $a_{22}(\tau_2) = 0$ for the first time. We can express $\mW_{\mA,\mB}(\tau_2)$ as:
\begin{align*}
    \mW_{\mA,\mB}(\tau_2) &= 
    \begin{pmatrix}
        a_{12}(\tau_2)a_{21}(\tau_2) & 2a_{11}(\tau_2)a_{12}(\tau_2) \\
        0 & a_{12}(\tau_2)a_{21}(\tau_2)
    \end{pmatrix}.
\end{align*}
where the diagonal entries are negative due to the condition $\det(\mA(\tau_2)) > 0$. Therefore, the time derivative of $a_{22}$ at timestep $\tau_2$ is positive:
\begin{align*}
    \dot{a}_{22}(\tau_2) &= a_{11}(\tau_2)(w^* - w_{11}(\tau_2)) > 0.
\end{align*}
Since $a_{22}(t)$ is increasing at point $\tau_2$, there exists time $t' < \tau_2$ such that $a_{22}(t') < 0$ (since $a_{22}(t)$ is continuous and differentiable), which is contradictory. Consequently, there cannot exist a $\tau_2$ such that $a_{22}(\tau_2) = 0$.

\textbf{Positiveness of $\mathbf{a_{12}(t)}$.}
Given that $\ell$ is non-decreasing, we can state:
    \begin{align*}
        \ell(\mW_{\mA,\mB}(t)) &= \frac{1}{2}\left[(w^* - w_{11}(t))^2 + (w_{12}^* - w_{12}(t))^2 + (w^* - w_{22}(t))^2\right] \\
        &\leq \ell(\mW_{\mA,\mB}(T_1)) = \frac{1}{2}{w_{12}^*}^2,
    \end{align*}
    for all $t \geq T_1$. Since $(w^* - w_{11}(t))^2$ and $(w^* - w_{22}(t))^2$ are non-negative, $w_{12}(t)$ must be non-negative for all $t \geq T_1$.
From~\eqref{eqn: LoP full rewrite W}, we know $w_{12}(t) = 2a_{11}(t)a_{12}(t)$, which implies $a_{12}(t) \geq 0$ for all $t \geq T_1$ with the above conclusion which states $a_{11}(t) > 0$.

\textbf{Positiveness of $\mathbf{w_{11}(t), w_{22}(t)}$.}
Likewise, assume for the sake of contradiction that there exists a time $\tau_3 \geq T_1$ when $w_{11}(\tau_3) = 0$ is first satisfied. This directly implies that $a_{11}(\tau_3)a_{22}(\tau_3) = -a_{12}(\tau_3) a_{21}(\tau_3)$. Squaring both sides of the equation yields:
\begin{align*}
    a_{11}^2(\tau_3)a_{22}^2(\tau_3) = a_{12}^2(\tau_3)a_{21}^2(\tau_3).
\end{align*}

Subtracting $a_{12}^2(\tau_3)a_{22}^2(\tau_3)$ from both sides:
\begin{align*}
    a_{11}^2(\tau_3)a_{22}^2(\tau_3) - a_{12}^2(\tau_3)a_{22}^2(\tau_3)  &= a_{12}^2(\tau_3)a_{21}^2(\tau_3) - a_{12}^2(\tau_3)a_{22}^2(\tau_3).
\end{align*}

Factoring:
\begin{align*}
    a_{22}^2(\tau_3)\left(a_{11}^2(\tau_3) - a_{12}^2(\tau_3) \right) &= a_{12}^2(\tau_3)\left(a_{21}^2(\tau_3) - a_{22}^2(\tau_3) \right).
\end{align*}

By the conservation law in~\eqref{eqn: LoP full conservation law}, we have $a_{11}^2(\tau_3) + a_{21}^2(\tau_3) = a_{12}^2(\tau_3) + a_{22}^2(\tau_3)$, which leads to $a_{11}^2(\tau_3) - a_{12}^2(\tau_3) = a_{22}^2(\tau_3) - a_{21}^2(\tau_3)$. Replacing $a_{11}^2(\tau_3) - a_{12}^2(\tau_3)$ with $-(a_{21}^2(\tau_3) - a_{22}^2(\tau_3))$:
\begin{align*}
    -a_{22}^2(\tau_3)\left(a_{21}^2(\tau_3) - a_{22}^2(\tau_3) \right) &= a_{12}^2(\tau_3)\left(a_{21}^2(\tau_3) - a_{22}^2(\tau_3) \right).
\end{align*}

This gives us:
\begin{align*}
     \left(a_{12}^2(\tau_3) + a_{22}^2(\tau_3)\right)\left(a_{21}^2(\tau_3) - a_{22}^2(\tau_3) \right)
     &= 0.
\end{align*}

Since $a_{22}(\tau_3) > 0$ from the previous result, we can conclude that $a_{21}(\tau_3) = \pm a_{22}(\tau_3)$. To determine the sign of $a_{21}(\tau_3)$, recall that $\mW_{\mA,\mB}(\tau_3)$ is written as:
\begin{align*}
    \mW_{\mA,\mB}(\tau_3) = \begin{pmatrix}
        0 & 2a_{11}(\tau_3)a_{12}(\tau_3) \\
        2a_{21}(\tau_3)a_{22}(\tau_3) & 0
    \end{pmatrix}.
\end{align*}

Since $a_{11}(\tau_3) > 0, a_{12}(\tau_3) \geq 0$ from the previous result, $2a_{11}(\tau_3)a_{12}(\tau_3) \geq 0$ holds. Also, given that $\det(\mW_{\mA,\mB}(\tau_3)) > 0$, we can determine that $a_{21}(\tau_3)$ is negative, which implies $a_{21}(\tau_3) = -a_{22}(\tau_3)$. Additionally, by the conservation law, we have $a_{11}^2(\tau_3) = a_{12}^2(\tau_3)$, which leads to $a_{11}(\tau_3) = a_{12}(\tau_3) > 0$.

Finally, consider the time derivative of $w_{11}$ at timestep $\tau_3$, substituting $a_{11}(\tau_3)$ and $a_{21}(\tau_3)$ with $a_{12}(\tau_3)$ and $-a_{22}(\tau_3)$, respectively:
\begin{align*}
    \dot{w}_{11}(\tau_3) &= (w^* - w_{11}(\tau_3))(a_{11}^2(\tau_3) + a_{12}^2(\tau_3) + a_{21}^2(\tau_3) + a_{22}^2(\tau_3)) \\
    &\phantom{=}+ (w_{12}^* - w_{12}(\tau_3))(a_{11}(\tau_3)a_{21}(\tau_3) + a_{12}(\tau_3)a_{22}(\tau_3))\\
    &=2w^*(a_{12}^2(\tau_3) + a_{22}^2(\tau_3)) \\
    &>0,
\end{align*}
which contradicts our initial assumption.

\end{proof}

Given that the time derivative in the~\eqref{eqn: LoP full time derivative of W} includes the term $a_{11}(t)a_{21}(t) + a_{12}(t)a_{22}(t)$, we need to verify the sign of $a_{11}a_{21} + a_{12}a_{22}$ in order to proceed with the analysis. Below lemma shows that as long as $w_{12}(t) \leq w_{12}^*$ holds, $a_{11}(t)a_{21}(t) + a_{12}(t)a_{22}(t)$ is always lower bounded by zero.
\begin{lemma}
For a product matrix $\mW_{\mA,\mB}(t) = \mA(t)\mB(t) \in \mathbb{R}^{2 \times 2}$, if at any point $t \in [T_1, T_2]$ we have $w_{12}(t) \leq w_{12}^*$, then the following inequality holds throughout the entire interval $[T_1, T_2]$:
\begin{equation*}
    a_{11}(t)a_{21}(t) + a_{12}(t)a_{22}(t) \geq 0.
\end{equation*}
\label{lem: LoP full a11a21+a12a22 positive}
\end{lemma}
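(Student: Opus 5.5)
The plan is a barrier (first-exit) argument on the scalar $s(t) \triangleq a_{11}(t)a_{21}(t)+a_{12}(t)a_{22}(t)$, using the reduced dynamics~\eqref{eqn: LoP full time derivative of A} (valid by Lemma~\ref{lem: LoP full same element}), the conservation law~\eqref{eqn: LoP full conservation law}, and the sign information of Lemma~\ref{lem: LoP full a11 a22 w11 w22 positive}. I read the hypothesis as $w_{12}(t)\le w_{12}^*$ for all $t\in[T_1,T_2]$. Throughout, set $r(t)\triangleq w^*-w_{11}(t)=w^*-w_{22}(t)$ and $q(t)\triangleq w_{12}^*-w_{12}(t)$, so that $q\ge 0$ on $[T_1,T_2]$.

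\textbf{Step 1 (initial behaviour).} From~\eqref{eqn: LoP 2x2 pre-train solution} we have $s(T_1)=0$. Differentiating $s$ with~\eqref{eqn: LoP full time derivative of A} and collecting terms gives the identity
\begin{equation*}
\dot s = q\,(a_{11}a_{22}+a_{12}a_{21}) + r\,(2a_{11}a_{12}+2a_{21}a_{22}) = q\,w_{11} + r\,(w_{12}+w_{21}),
\end{equation*}
where the last equality uses~\eqref{eqn: LoP full rewrite W}. At $T_1$ we have $r=0$, $q=w_{12}^*>0$ and $w_{11}=w^*>0$, so $\dot s(T_1)>0$. Hence $s>0$ on some interval $(T_1,T_1+\delta)$.

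\textbf{Step 2 (structure on the set $\{s=0\}$).} Let $\tau\in(T_1,T_2]$ be a time with $s(\tau)=0$. By Lemma~\ref{lem: LoP full a11 a22 w11 w22 positive}, $a_{11},a_{22}>0$ and $a_{12}\ge0$, so the first row $(a_{11},a_{12})$ is nonzero. The second row is orthogonal to it, so it has the form $(a_{21},a_{22})=c(-a_{12},a_{11})$, and $c>0$ because $a_{22}>0$. Substituting into~\eqref{eqn: LoP full conservation law} yields $(a_{11}^2-a_{12}^2)(1-c^2)=0$.
\begin{itemize}
\item If $a_{11}=a_{12}$, then $w_{11}=c(a_{11}^2-a_{12}^2)=0$, which contradicts $w_{11}>0$ from Lemma~\ref{lem: LoP full a11 a22 w11 w22 positive}.
\item Therefore $c=1$, and then $w_{21}=2a_{21}a_{22}=-2a_{11}a_{12}=-w_{12}$.
\end{itemize}
Plugging this into the formula for $\dot s$ gives $\dot s(\tau)=q(\tau)w_{11}(\tau)\ge 0$. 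So whenever $s$ hits zero, the vector field never pushes it downward.

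\textbf{Step 3 (closing the barrier; main obstacle).} Step 2 only gives $\dot s\ge0$ on $\{s=0\}$. That is not strict when $q(\tau)=0$, so a naive first-crossing contradiction is not airtight. I would upgrade it to a Grönwall-type inequality $\dot s\ge -K|s|$ valid in a neighbourhood of $\{s=0\}$ along the trajectory. The natural route is to show that $w_{12}+w_{21}=2(a_{11}a_{12}+a_{21}a_{22})$ vanishes to first order in $s$ there: by Step 2 it is zero on the constraint surface $\{s=0\}\cap\{\text{conservation}\}$, and the trajectory stays in a compact set because the loss is monotone and the balanced parameters are bounded. A smooth function vanishing on that surface, with the surface regular because $(a_{11},a_{12})\neq 0$, should satisfy $|w_{12}+w_{21}|\le K'|s|$ locally. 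Consequently $\dot s\ge q\,w_{11}-|r|K'|s|\ge -K|s|$, which forbids $s$ from becoming negative after reaching $0$: compare with the solution of $\dot y=-K|y|$, $y(\tau)=0$. Making this local Lipschitz bound rigorous, or alternatively replacing it with a perturbation argument that shifts $w_{12}^*$ slightly to force $q>0$ and passes to the limit by continuous dependence, is the step I expect to be the main technical difficulty. Everything else reduces to the algebra above.
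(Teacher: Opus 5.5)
Your Steps 1 and 2 follow the paper's argument exactly. The paper computes the same derivative $\dot s = q\,w_{11} + r\,(w_{12}+w_{21})$ and observes that $s>0$ just after $T_1$. It then combines the conservation law with Lemma~\ref{lem: LoP full a11 a22 w11 w22 positive} to show that every zero of $s$ has $a_{22}=a_{11}$, $a_{21}=-a_{12}$, where $\dot s = q\,w_{11}\ge 0$. Its two cases are your $c=1$ case, split by whether $a_{12}=0$. The paper stops there, treating ``no zero with $\dot s<0$'' as sufficient. You are right that this does not exclude a tangential crossing at a time where $q=0$, so Step~3 is where you go beyond the paper.

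As sketched, however, Step~3 is not yet a proof, and its stated justification is wrong in two places.
\begin{itemize}
\item $w_{12}+w_{21}$ does \emph{not} vanish on all of $\{s=0\}\cap\{\text{conservation}\}$: on the branch $a_{11}=a_{12}$ it equals $2a_{11}^2(1-c^2)$.
\item That set is not regular where the two branches meet ($a_{11}=a_{12}$, $c=1$). There $\nabla s$ is parallel to the gradient of $a_{11}^2+a_{21}^2-a_{12}^2-a_{22}^2$.
\end{itemize}
The relevant nondegeneracy at a zero is $a_{11}^2-a_{12}^2=w_{11}>0$, not $(a_{11},a_{12})\neq 0$. Localizing with that condition would rescue the Hadamard-type bound, but there is a much shorter closure.

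Put $h\triangleq a_{11}a_{12}+a_{21}a_{22}=\tfrac12(w_{12}+w_{21})$. The reduced dynamics give
\begin{align*}
\dot s &= w_{11}\,q + 2r\,h,\\
\dot h &= 2r\,s + (a_{11}^2+a_{12}^2)\,q,\\
\dot q &= -2r\,s - 2(a_{11}^2+a_{12}^2)\,q.
\end{align*}
This is a linear homogeneous system in $(s,h,q)$ with continuous coefficients; the last line is the paper's formula for $\dot w_{12}$.
\begin{itemize}
\item Since $(s,h,q)(T_1)=(0,0,w_{12}^*)\neq 0$, uniqueness gives $(s,h,q)(t)\neq 0$ for every $t\in[T_1,T_2]$.
\item At any zero $\tau$ of $s$, your Step~2 gives $h(\tau)=0$. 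Hence $q(\tau)\neq 0$, so $q(\tau)>0$ by the hypothesis, and $\dot s(\tau)=q(\tau)\,w_{11}(\tau)>0$ strictly.
\item If $s(t_0)<0$ for some $t_0$, set $\tau\triangleq\sup\{t\in[T_1,t_0]: s(t)\ge 0\}$. Then $s(\tau)=0$ and $s<0$ on $(\tau,t_0]$, contradicting $\dot s(\tau)>0$.
\end{itemize}
This makes the barrier argument airtight with no Gr\"onwall, Hadamard, or perturbation step.
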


\begin{proof}
We first define $g(t) \triangleq a_{11}(t)a_{21}(t) + a_{12}(t)a_{22}(t)$.
Recall that at $T_1$, we have $a_{12}(T_1) = a_{21}(T_1) = 0$, which implies $g(T_1) = 0$ as well. Note that by~\eqref{eqn: LoP full time derivative of A}, at timestep $T_1$, we have 
\begin{equation*}
    \dot{a}_{12}(T_1) = a_{11}(T_1)(w_{12}^* - w_{12}(T_1)) + a_{21}(T_1)(w^* - w_{11}(T_1)) > 0,
\end{equation*} 
while other elements remain unchanged. This indicates that $g(t) > 0$ immediately after $T_1$.
We now show that if $g(\tau) > 0$ for any $\tau \in (T_1, T_2]$, then there is no $\tau' \in [\tau, T_2]$ which satisfies both $g(\tau') = 0$ and $\frac{d}{dt} g(t)\Big|_{t = \tau'} < 0$. This implies that $g(t)$ never becomes negative under the assumption of $w_{12}(t) \leq w_{12}^*$.

Suppose, for the sake of contradiction, that there exists a $\tau' \in [\tau, T_2]$ where $g(\tau') = 0$ and $\frac{d}{dt} g(t)\Big|_{t=\tau'} < 0$. Given $g(\tau')=0$ and the conservation law in~\eqref{eqn: LoP full conservation law}, and the inequalities from Lemma~\ref{lem: LoP full a11 a22 w11 w22 positive}, we can determine that there exist two combinations of the solution:

\begin{enumerate}
    \item $ a_{11}(\tau') = a_{22}(\tau'),\;\; a_{12}(\tau') = -a_{21}(\tau'),\;\; a_{11}(\tau') > a_{12}(\tau'). $
    \item $ a_{11}(\tau') = a_{22}(\tau'),\;\; a_{12}(\tau') = a_{21}(\tau') = 0.$
\end{enumerate}

We take the time derivative of $g(t)$ at timestep $\tau'$ and substitute the values from~\eqref{eqn: LoP full time derivative of A} as follows:
\begin{align}
    \frac{d}{dt} g(t)\Big|_{t=\tau'} &= \dot{a}_{11}(\tau')a_{21}(\tau') + a_{11}(\tau')\dot{a}_{21}(\tau') + \dot{a}_{12}(\tau')a_{22}(\tau') + a_{12}(\tau')\dot{a}_{22}(\tau') \nonumber \\
    &= 2(w^* - w_{11}(\tau'))(a_{11}(\tau')a_{12}(\tau') + a_{21}(\tau')a_{22}(\tau')) \nonumber \\
    &\phantom{=}+ (w_{12}^* - w_{12}(\tau'))(a_{11}(\tau')a_{22}(\tau') + a_{12}(\tau')a_{21}(\tau')). \label{eqn: LoP full g(t) time derivative}
\end{align}
For the first case, substituting equations $ a_{11}(\tau') = a_{22}(\tau')$ and $ a_{12}(\tau') = -a_{21}(\tau')$ to~\eqref{eqn: LoP full g(t) time derivative} leads to:
\begin{align*}
    \frac{d}{dt} g(t)\Big|_{t=\tau'} &= (w_{12}^* - w_{12}(\tau'))w_{11}(\tau').
\end{align*}
Since $w_{11}(t) > 0$ for all $t \geq T_1$, if $w_{12}(\tau') \leq w_{12}^*$ holds, then $g(t)$ cannot take negative values at time $\tau'$.

For the second case, substituting equations $ a_{11}(\tau') = a_{22}(\tau')$ and $ a_{12}(\tau') = a_{21}(\tau') = 0$ to~\eqref{eqn: LoP full g(t) time derivative} leads to:
\begin{align*}
    \frac{d}{dt} g(t)\Big|_{t=\tau'} &= (w_{12}^* - w_{12}(\tau')) a_{11}^2(\tau'),
\end{align*}
which is again a non-negative value if $w_{12}(\tau') \leq w_{12}^*$, leading to a contradiction.
\end{proof}

\begin{lemma}
For a product matrix $\mW_{\mA,\mB}(t) = \mA(t)\mB(t) \in \mathbb{R}^{2 \times 2}$, the following inequalities holds for all timestep $t \geq T_1$:
\begin{align*}
    &w_{12}(t) \leq w_{12}^*, \\
    &w_{11}(t),  w_{22}(t) \geq w^*, \\
    &w_{21}(t) \leq 0.
\end{align*}
\label{lem: LoP full w_inequalities}
\end{lemma}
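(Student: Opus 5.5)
The plan is a continuity (bootstrap) argument on the reduced dynamics \eqref{eqn: LoP full time derivative of W}, using the sign lemma for $g(t)\triangleq a_{11}(t)a_{21}(t)+a_{12}(t)a_{22}(t)$ (Lemma~\ref{lem: LoP full a11a21+a12a22 positive}). By Lemma~\ref{lem: LoP full same element} we have $w_{11}=w_{22}$, so it suffices to track two residuals:
\begin{equation*}
\rho(t)\triangleq w^*-w_{11}(t), \qquad e(t)\triangleq w_{12}^*-w_{12}(t).
\end{equation*}
Set $N(t)\triangleq a_{11}^2+a_{12}^2+a_{21}^2+a_{22}^2>0$ and $M(t)\triangleq 2(a_{11}^2+a_{12}^2)\ge 0$. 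Then \eqref{eqn: LoP full time derivative of W} becomes the linear system
\begin{equation*}
\dot\rho = -N\rho - e\,g,\qquad \dot e = -M e - 2\rho\, g,\qquad \dot w_{21} = 2\rho\, g.
\end{equation*}
At $T_1$ the pre-trained weights \eqref{eqn: LoP 2x2 pre-train solution} give $\rho(T_1)=0$, $e(T_1)=w_{12}^*>0$, $w_{21}(T_1)=0$ and $g(T_1)=0$. The claims to prove are $e\ge 0$, $\rho\le 0$ and $w_{21}\le 0$ for all $t\ge T_1$.

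\textbf{Step 1 (maximal good interval).} Let $T_2\triangleq\sup\{t\ge T_1: e(\tau)>0 \text{ for all } \tau\in[T_1,t]\}$. Since $e(T_1)>0$ and $e$ is continuous, $T_2>T_1$. On every compact $[T_1,T]\subset[T_1,T_2)$ we have $w_{12}\le w_{12}^*$ throughout, so Lemma~\ref{lem: LoP full a11a21+a12a22 positive} gives $g\ge 0$ there.

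\textbf{Step 2 (sign of $\rho$ and a lower bound on $e$).} On such an interval, variation of constants gives
\begin{equation*}
\rho(t)=-\int_{T_1}^t \exp\Big(-\int_\tau^t N\Big)\, e(\tau)g(\tau)\,d\tau\le 0,
\end{equation*}
because $e\ge0$ and $g\ge0$. Feeding this back in,
\begin{equation*}
e(t)=e(T_1)\exp\Big(-\int_{T_1}^t M\Big)+\int_{T_1}^t \exp\Big(-\int_\tau^t M\Big)\big(-2\rho g\big)\,d\tau\ \ge\ w_{12}^*\exp\Big(-\int_{T_1}^t M\Big)>0.
\end{equation*}
$M$ is continuous and hence bounded on compacts, since the flow exists globally with bounded loss by the monotonicity shown after \eqref{eqn: LoP full trace of loss derivative}. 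So if $T_2<\infty$, letting $t\uparrow T_2$ gives $e(T_2)>0$. Continuity then keeps $e>0$ slightly beyond $T_2$, contradicting maximality. Hence $T_2=\infty$, which yields $w_{12}\le w_{12}^*$ (indeed strictly), $g\ge0$ and $\rho\le0$, i.e.\ $w_{11}=w_{22}\ge w^*$, for all $t\ge T_1$.

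\textbf{Step 3 ($w_{21}$).} Integrating $\dot w_{21}=2\rho g\le 0$ from $w_{21}(T_1)=0$ gives $w_{21}(t)\le 0$.

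The main obstacle is the circularity between the three facts. The sign of $g$ (Lemma~\ref{lem: LoP full a11a21+a12a22 positive}) needs $w_{12}\le w_{12}^*$; the sign of $\rho$ needs $g\ge0$ and $e\ge0$; and $e\ge0$ needs $\rho\le0$. A naive ``first exit time'' argument also runs into tangential boundary behavior: at $\rho=0$ and $g=0$ simultaneously the derivatives vanish. Both issues are handled by working with the strict quantity $e>0$, which the variation-of-constants formula bounds away from zero on compacts. I will also check that the hypothesis of Lemma~\ref{lem: LoP full a11a21+a12a22 positive} is applied in its intended sense, namely with $w_{12}\le w_{12}^*$ holding on the whole interval $[T_1,T]$, which is exactly what Step~1 supplies.
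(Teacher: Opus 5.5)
Your proof is correct. It uses the same reduced ODEs and the same $g$-lemma as the paper, but it breaks the circularity among $g\ge 0$, $w_{12}\le w_{12}^*$ and $w_{11}\ge w^*$ in a different way.

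The paper argues by contradiction at hypothetical crossing times. It posits an $s$ with $w_{12}(s)=w_{12}^*$ and $\dot w_{12}(s)=2\rho(s)g(s)>0$. It then excludes $\rho(s)<0,\ g(s)<0$ via the $g$-lemma, and excludes $\rho(s)>0$ by a second crossing argument for $w_{11}$, where $\dot w_{11}=e\,g\ge 0$ whenever $\rho=0$. Its final step for $w_{21}$ is identical to your Step 3.

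You instead observe that, once $g\ge 0$, the pair $(-\rho,e)$ satisfies a linear system with nonnegative off-diagonal coupling. You read off the signs from the Duhamel formula and close the loop with a maximal-interval argument on the strict condition $e>0$.

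Your route is more robust. The paper's claim that a crossing time with strictly positive derivative exists does not follow from continuity alone: a level can be crossed with zero derivative, as $t^3$ crosses $0$. The same tacit transversality assumption appears in its $w_{11}$ step. Your integral representation does not care about tangential contact. It even gives the quantitative strict bound $w_{12}^*-w_{12}(t)\ge w_{12}^*\exp\bigl(-\int_{T_1}^t M\bigr)$.

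Keeping $e>0$ strict also makes the $g$-lemma airtight in the form you use it. At any zero of $g$ after $T_1$, balancedness and the positivity facts force $a_{22}=a_{11}$ and $a_{21}=-a_{12}$. Hence $\dot g=e\,w_{11}>0$ there, so $g$ cannot return to zero from above. The paper's version of that lemma only rules out zeros with $\dot g<0$.

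One small fix is needed. Global existence, which you use to bound $M$ on $[T_1,T_2]$, does not follow from bounded loss alone. Use the gradient-flow energy identity $\int_{T_1}^{T}\|\dot{\bm{\theta}}\|^2\,dt\le \ell(T_1)$. It gives $\|\bm{\theta}(T)-\bm{\theta}(T_1)\|\le\sqrt{(T-T_1)\,\ell(T_1)}$, which rules out finite-time blow-up of the polynomial vector field.
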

\begin{proof}
We will prove this lemma in several steps:

\textbf{Step 1:} $w_{12}(t) \leq w_{12}^*$ for all $t \geq T_1$.

We know $w_{12}(T_1) = 0 \leq w_{12}^*$.
Assume, for the sake of contradiction, that there exists a time $t' > T_1$ where $t'$ is the first timestep such that $w_{12}(t') > w_{12}^*$. If this were true, there must exist a time $s$ where $T_1 \leq s < t'$ such that:
\begin{align*}
    w_{12}(s) = w_{12}^*, \quad \dot{w}_{12}(s) > 0.
\end{align*}

For these conditions to be met, $w_{12}(s)$ must satisfy:
\begin{equation}
    \dot{w}_{12}(s) =2 (w^* - w_{11}(s))(a_{11}(s)a_{21}(s) + a_{12}(s)a_{22}(s)) > 0.\label{eq:dot_w12_pos}
\end{equation}

To satisfy~\eqref{eq:dot_w12_pos}, there are two possibilities:
\begin{align}
    (w^* - w_{11}(s)) &> 0 \quad \text{and} \quad (a_{11}(s)a_{21}(s) + a_{12}(s)a_{22}(s)) > 0 ,\label{eq:dot_w12_pos1} \\
    \text{or} \quad
    (w^* - w_{11}(s)) &< 0 \quad \text{and} \quad (a_{11}(s)a_{21}(s) + a_{12}(s)a_{22}(s)) < 0.\label{eq:dot_w12_pos2}
\end{align}

However, neither of these can be true:
\begin{enumerate}
    \item Equation \eqref{eq:dot_w12_pos2} contradicts Lemma \ref{lem: LoP full a11a21+a12a22 positive}, given that $s < t'$.
    \item Equation \eqref{eq:dot_w12_pos1} cannot be satisfied because there is no $s$ where $w^* > w_{11}(s)$. If there were, there would be a time $s'$ where $T_1 \leq s' < s$ both satisfying $w_{11}(s') = w^*$, and $\dot{w}_{11}(s') < 0$. But we find:
    \begin{equation*}
        \dot{w}_{11}(s') = (w_{12}^* - w_{12}(s'))(a_{11}(s')a_{21}(s') + a_{12}(s')a_{22}(s')) \geq 0.
    \end{equation*}
    
    This is because $w_{12}(s') < w_{12}^*$, and thus $a_{11}(s')a_{21}(s') + a_{12}(s')a_{22}(s') \geq 0$ by Lemma \ref{lem: LoP full a11a21+a12a22 positive}.
    Therefore, our initial assumption must be false, implying that $w_{12}(t) \leq w_{12}^*$ for all $t \geq T_1$.
\end{enumerate}

\textbf{Step 2:} Prove $w_{11}(t) \geq w_{11}^*$ and $w_{22}(t) \geq w_{22}^*$ for all $t \geq T_1$.

Given $w_{12}(t) \leq w_{12}^*$ for all $t \geq T_1$, Lemma \ref{lem: LoP full a11a21+a12a22 positive} implies $a_{11}(t)a_{21}(t) + a_{12}(t)a_{22}(t) \geq 0$ for all $t \geq T_1$.
The evolution of $w_{11}$ is given by:
\begin{equation*}
    \dot{w}_{11}(t) = (w^* - w_{11}(t))(a_{11}^2(t) + a_{12}^2(t) + a_{21}^2(t) + a_{22}^2(t)) + (w_{12}^* - w_{12}(t))(a_{11}(t)a_{21}(t) + a_{12}(t)a_{22}(t)).
\end{equation*}
By above equation, if there exists a time $t' \geq T_1$ where $w_{11}(t') = w^*$, we can conclude $\dot{w}_{11}(t') \geq 0$, and thus $w_{11}(t) \geq w^*$ for all $t \geq T_1$. 
By Lemma \ref{lem: LoP full same element}, $w_{22}$ has the same value as $w_{11}$, so $w_{22}(t) \geq w^*$ for all $t \geq T_1$.

\textbf{Step 3:} Prove $w_{21}(t) \leq 0$ for all $t \geq T_1$.

The evolution of $w_{21}$ is given by:
\begin{equation*}
    \dot{w}_{21}(t) = 2(w^* - w_{11}(t))(a_{11}(t)a_{21}(t) + a_{12}(t)a_{22}(t)).
\end{equation*}

Since $w_{11}(t) \geq w^*$ and $a_{11}(t)a_{21}(t) + a_{12}(t)a_{22}(t) \geq 0$ for all $t \geq T_1$, we can conclude $w_{21}(t) \leq 0$ for all $t \geq T_1$.

\end{proof}

\subsubsection{Proof of Loss Convergence}
Recall that the time derivative of the loss function is written as:
\begin{align*}
    \frac{d}{dt} \ell(\mW_{\mA,\mB}(t)) = - \Tr(\mL_1(t))  - \Tr(\mL_2(t)), 
\end{align*}
where $\mL_1(t)$ and $\mL_2(t)$ are defined in~\eqref{eqn: LoP full trace of loss derivative}.
To further our analysis, we can expand the time derivative of the loss by calculating the trace of $\mL_1(t)$ and $\mL_2(t)$. We omit the time index $t$ when clear from context.
\begin{align*}
    \mL_1  &= 
    \begin{pmatrix}
        r_{11} & r_{12} \\ 0 & r_{22}
    \end{pmatrix}
    \begin{pmatrix}
        a_{21}^2 + a_{22}^2 & a_{11}a_{21}  + a_{12}a_{22}\\
        a_{11}a_{21} + a_{12}a_{22} & a_{11}^2 +  a_{12}^2
    \end{pmatrix}
    \begin{pmatrix}
        r_{11} & 0 \\ r_{12} & r_{22}
    \end{pmatrix} \\
    &= 
    \begin{pmatrix}
        r_{11}^2(a_{21}^2 + a_{22}^2) + 2r_{11}r_{12}(a_{11}a_{21} + a_{12}a_{22}) + r_{12}^2(a_{11}^2 + a_{12}^2) & C_1 \\
        C_1 & r_{22}^2(a_{11}^2 + a_{12}^2)
    \end{pmatrix},
\end{align*}
for some time-dependent value $C_1$. Following a similar process, we calculate $\mL_2$:
\begin{align*}
    \mL_2 &=     
    \begin{pmatrix}
        r_{11} & 0 \\ r_{12} & r_{22}
    \end{pmatrix}
    \begin{pmatrix}
        a_{11}^2 + a_{12}^2 & a_{11}a_{21} + a_{12}a_{22} \\
        a_{11}a_{21} + a_{12}a_{22} & a_{21}^2 + a_{22}^2
    \end{pmatrix}
    \begin{pmatrix}
        r_{11} & r_{12} \\ 0 & r_{22}
    \end{pmatrix} \\
    &= 
    \begin{pmatrix}
        r_{11}^2(a_{11}^2 + a_{12}^2) & C_2 \\
        C_2 & r_{12}^2(a_{11}^2 + a_{12}^2) + 2r_{12}r_{22}(a_{11}a_{21} + a_{12}a_{22}) + r_{22}^2(a_{21}^2 + a_{22}^2) 
    \end{pmatrix},
\end{align*}
again for the time-dependent value $C_2$.
With these expressions for $\mL_1$ and $\mL_2$, we can now rewrite equation $\eqref{eqn: LoP full trace of loss derivative}$ in a more explicit form:
\begin{alignat}{2}
    \frac{d}{dt} \ell\left(\mW_{\mA,\mB}(t)\right) &= &&-\Tr\left(\mL_1(t)\right) -\Tr\left(\mL_2(t)\right) \nonumber\\
    &= &&-r_{11}^2(t)\left(a_{11}^2(t) + a_{12}^2(t) + a_{21}^2(t) + a_{22}^2(t) \right) \nonumber\\
    & &&-2r_{12}^2(t)\left(a_{11}^2(t) + a_{12}^2(t) \right)\nonumber \\
    & &&-r_{22}^2(t)\left(a_{11}^2(t) + a_{12}^2(t) + a_{21}^2(t) + a_{22}^2(t) \right) \nonumber\\
    & &&-2r_{12}(t)r_{22}(t)\left(a_{11}(t)a_{21}(t) + a_{12}(t)a_{22}(t) \right) \nonumber\\
    & &&-2r_{11}(t)r_{12}(t)\left(a_{11}(t)a_{21}(t) + a_{12}(t)a_{22}(t) \right). \label{eqn: LoP full loss each entry}
\end{alignat}
Note that the~$\eqref{eqn: LoP full loss each entry}$ is the non-positive term. Given that $\mL_1$ and $\mL_2$ are positive semi-definite, we can analyze each diagonal entry separately.
This leads us to the following inequalities:
\begin{align*}
    r_{11}^2(a_{21}^2 + a_{22}^2) + 2r_{11}r_{12}(a_{11}a_{21} + a_{12}a_{22}) + r_{12}^2(a_{11}^2 + b_{12}^2) &\geq 0, \\
    r_{12}^2(a_{11}^2 + a_{12}^2) + 2r_{12}r_{22}(a_{11}a_{21} + a_{12}a_{22}) + r_{22}^2(a_{21}^2 + a_{22}^2) &\geq 0.
\end{align*}
By rearranging the above inequalities, we obtain:
\begin{align*}
    -2r_{11}r_{12}(a_{11}a_{21} + a_{12}a_{22})  &\leq r_{11}^2(a_{21}^2 + a_{22}^2) + r_{12}^2(a_{11}^2 + a_{12}^2),  \\
    -2r_{12}r_{22}(a_{11}a_{21} + a_{12}a_{22})  &\leq r_{12}^2(a_{11}^2 + a_{12}^2) + r_{22}^2(a_{21}^2 + a_{22}^2).
\end{align*}
Substituting these inequalities into equation $\eqref{eqn: LoP full loss each entry}$, we derive:
\begin{align}
    \frac{d}{dt} \ell(\mW_{\mA,\mB}(t)) &\leq -r_{11}^2(t)\left(a_{11}^2(t) + a_{12}^2(t) \right) - r_{22}^2(t)\left( a_{11}^2(t) + a_{12}^2(t) \right). \label{eqn: LoP full time derivative loss upper bound}
\end{align}
This provides a tighter upper bound on the time derivative of the loss. However, it is still insufficient to guarantee convergence, as the bound does not depend on the term $r_{12}(t)$. As a result, even though the right-hand side converges to zero, this alone does not imply that the loss itself converges.

To further tighten the bound, we leverage the positive semidefiniteness of $\mL_1$ and $\mL_2$. Specifically, note that for both $\mQ\mK\mQ^\top$ and $\mQ^\top\mK\mQ$ to be positive semi-definite, the only necessary condition is $\mK \succcurlyeq 0$. Therefore, we modify $\mL_1(t)$ to $\widetilde{\mL_1}(t) \triangleq \nabla \ell(\mW_{\mA,\mB}(t)) \left(\mB^\top(t) \mB(t) - \mu(t) \cdot \ve_2 \ve_2^\top \right) \nabla \ell^\top(\mW_{\mA,\mB}(t))$, where $ \mu(t) $ is chosen to ensure that the matrix $ \mB^\top(t)\mB(t) - \mu(t) \cdot \ve_2 \ve_2^\top $ remains positive semidefinite. This guarantees that $ \widetilde{\mL_1}(t) \succcurlyeq 0 $. To ensure this condition, $ \mu(t) $ must satisfy:
\begin{align*}
    \left|\mB(t)^\top \mB(t) - \mu(t) \cdot \ve_2 \ve_2^\top\right)|  &= 
    \left |
    \begin{pmatrix}
        a_{21}^2(t) + a_{22}^2(t) \!\!\!\!&\!\!\!\! a_{11}(t)a_{21}(t)  + a_{12}(t)a_{22}(t)\\
        a_{11}(t)a_{21}(t) + a_{12}(t)a_{22}(t) \!\!\!&\!\!\! a_{11}^2(t) +  a_{12}^2(t) - \mu(t)
    \end{pmatrix} \right| \\
    &= -\left(a_{21}^2(t) + a_{22}^2(t)\right)\mu(t) + \left(a_{11}(t)a_{22}(t) - a_{12}(t)a_{21}(t)\right)^2 \\
    &\geq 0.
\end{align*}
Rearranging this inequality with respect to $\mu(t)$, we get:
\begin{align}
    \mu(t) &\leq \frac{\left(a_{11}(t)a_{22}(t) - a_{12}(t)a_{21}(t)\right)^2 }{a_{21}^2(t) + a_{22}^2(t)} \label{eq: mu(t)}\\
    &= \frac{\det(\mB(t))^2}{a_{21}^2(t) + a_{22}^2(t)} \nonumber.
\end{align}

Therefore, if we set $\mu(t)$ to satisfy the above inequality, we can guarantee $\widetilde{\mL_1}$ to be a positive semidefinite matrix. Now, $\widetilde{\mL_1}(t)$ can be calculated as:
\begin{align*}
    \widetilde{\mL_1}  &= 
    \begin{pmatrix}
        r_{11} & r_{12} \\ 0 & r_{22}
    \end{pmatrix}
    \begin{pmatrix}
        a_{21}^2 + a_{22}^2 & a_{11}a_{21}  + a_{12}a_{22}\\
        a_{11}a_{21} + a_{12}a_{22} & a_{11}^2 +  a_{12}^2- \mu
    \end{pmatrix}
    \begin{pmatrix}
        r_{11} & 0 \\ r_{12} & r_{22}
    \end{pmatrix} \\
    &= 
    \begin{pmatrix}
        r_{11}^2(a_{21}^2 + a_{22}^2) + 2r_{11}r_{12}(a_{11}a_{21} + a_{12}a_{22}) + r_{12}^2(a_{11}^2 + a_{12}^2 - \mu) \!\!\!&\!\!\! \tilde{C} \\
        \tilde{C} \!\!\!&\!\!\! r_{22}^2(a_{12}^2 + a_{22}^2 - \mu) 
    \end{pmatrix},
\end{align*}
for some $\tilde{C}$. Since the matrix $\mB^\top \mB - \mu \cdot \ve_2 \ve_2^\top$ is positive semi-definite, we can ensure $a_{12}^2 + a_{22}^2 - \mu \geq 0$. This leads to the following inequality from $\left(\widetilde{\mL_1}\right)_{11}$:
\begin{align*}
    -2r_{11}r_{12}(a_{11}a_{21} + a_{12}a_{22})  &\leq r_{11}^2(a_{21}^2 + a_{22}^2) + r_{12}^2(a_{11}^2 + a_{12}^2-\mu).
\end{align*}
Finally, substituting this inequality into~$\eqref{eqn: LoP full loss each entry}$, we arrive at:
\begin{align}
    \frac{d}{dt} \ell(\mW_{\mA,\mB}(t)) &\leq -\left( r_{11}^2(t) + r_{22}^2(t) \right)\left(a_{11}^2(t) + a_{12}^2(t) \right) - r_{12}^2(t) \mu(t) \label{eq: derivative loss}.
\end{align}
To prove the convergence of the loss, our main remaining goal is to establish a time-invariant lower bound for  
$$\min \left\{a_{11}^2(t) + a_{12}^2(t),  \; \mu(t) \right\}$$
to apply Gr\"onwall's inequality.

\begin{lemma}
For a solution matrix $\mW_{\mA,\mB}(t)$ initialized as $\mW_{\mA,\mB}(T_1)$, which represents the state of the matrix after pre-training up to time $T_1$, the inequality
\begin{equation*}
\det\left(\mW_{\mA,\mB}(t)\right) \geq {w^*}^2
\end{equation*} holds for all $t \geq T_1$.
\label{lem: LoP full determinant lower bound}
\end{lemma}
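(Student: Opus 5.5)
The plan is to reduce $\det(\mW_{\mA,\mB}(t))$ to a square of $\det(\mA(t))$, and then bound $\det(\mA(t))$ from below using only the sign information already established in Lemmas~\ref{lem: LoP full a11 a22 w11 w22 positive} and~\ref{lem: LoP full w_inequalities}. No new differential inequality should be needed.

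\textbf{Step 1 (reduction).} By Lemma~\ref{lem: LoP full same element}, for all $t\ge T_1$ we have
\begin{equation*}
\mB(t)=\begin{pmatrix} a_{22}(t) & a_{12}(t)\\ a_{21}(t) & a_{11}(t)\end{pmatrix}.
\end{equation*}
Hence $\det(\mB(t))=a_{11}(t)a_{22}(t)-a_{12}(t)a_{21}(t)=\det(\mA(t))$, and therefore
\begin{equation*}
\det(\mW_{\mA,\mB}(t))=\det(\mA(t))\det(\mB(t))=\det(\mA(t))^2.
\end{equation*}
It thus suffices to prove $\det(\mA(t))\ge w^*$ for all $t\ge T_1$, since $w^*>0$ then gives $\det(\mA(t))^2\ge {w^*}^2$.

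\textbf{Step 2 (signs of the off-diagonal entries).} From the representation~\eqref{eqn: LoP full rewrite W} we have $w_{12}=2a_{11}a_{12}$ and $w_{21}=2a_{21}a_{22}$. Lemma~\ref{lem: LoP full a11 a22 w11 w22 positive} gives $a_{22}(t)>0$ and $a_{12}(t)\ge 0$. Lemma~\ref{lem: LoP full w_inequalities} gives $w_{21}(t)\le 0$, so $a_{21}(t)=w_{21}(t)/(2a_{22}(t))\le 0$. Consequently $a_{12}(t)a_{21}(t)\le 0$ for all $t\ge T_1$.

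\textbf{Step 3 (the bound).} Again by~\eqref{eqn: LoP full rewrite W}, $w_{11}=a_{11}a_{22}+a_{12}a_{21}$. This gives
\begin{equation*}
\det(\mA(t)) = w_{11}(t) - 2a_{12}(t)a_{21}(t) \ge w_{11}(t).
\end{equation*}
Lemma~\ref{lem: LoP full w_inequalities} gives $w_{11}(t)\ge w^*$, so $\det(\mA(t))\ge w^*>0$. Squaring yields $\det(\mW_{\mA,\mB}(t))\ge {w^*}^2$, as claimed.

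\textbf{Main obstacle.} The only potentially delicate point is Step 2. We need $a_{22}(t)$ to be strictly positive so that the sign of $a_{21}$ can be read off from $w_{21}\le 0$, and this is exactly what Lemma~\ref{lem: LoP full a11 a22 w11 w22 positive} provides. Everything else is algebraic.

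If the sign route were unavailable, the fallback would be to differentiate $\det(\mA)$ directly using~\eqref{eqn: LoP full time derivative of A}. This gives
\begin{equation*}
\tfrac{d}{dt}\det(\mA) = (w^*-w_{11})\bigl(a_{11}^2+a_{22}^2-a_{12}^2-a_{21}^2\bigr) + (w_{12}^*-w_{12})\bigl(a_{12}a_{22}-a_{11}a_{21}\bigr).
\end{equation*}
One would then use the conservation law~\eqref{eqn: LoP full conservation law} to sign the right-hand side, which is messier. The direct sign argument above avoids this.
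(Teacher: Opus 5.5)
Your proof is correct. It differs from the paper's only in which determinant identity carries the bound.

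The paper never passes to the factors. It writes $\det(\mW_{\mA,\mB}) = w_{11}w_{22} - w_{12}w_{21}$ and uses $w_{11},w_{22}\ge w^*$ and $w_{21}\le 0$ from Lemma~\ref{lem: LoP full w_inequalities}. It adds $w_{12}\ge 0$, read directly off loss monotonicity via $\lvert w_{12}-w_{12}^*\rvert \le \sqrt{2\ell} \le w_{12}^*$.

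You instead use the symmetry of Lemma~\ref{lem: LoP full same element} to get $\det(\mW_{\mA,\mB}) = \det(\mA)^2$. You then bound $\det(\mA) = w_{11} - 2a_{12}a_{21} \ge w_{11}$, which needs the factor-level signs $a_{22}>0$ and $a_{12}\ge 0$ from Lemma~\ref{lem: LoP full a11 a22 w11 w22 positive}.

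The two are the same inequality written in different coordinates. Indeed $w_{12}w_{21} = 4a_{11}a_{12}a_{21}a_{22}$, and $w_{11}^2 - 4a_{11}a_{12}a_{21}a_{22} = (a_{11}a_{22}-a_{12}a_{21})^2$.

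The paper's version is marginally leaner, since it needs only entrywise facts about the product. Yours gives the slightly stronger conclusion $\det(\mA(t)) \ge w^* > 0$. It also obtains $\det(\mW_{\mA,\mB}) = \det(\mA)^2$ from the symmetry alone, without appealing to balancedness through Lemma~\ref{lem: sing square}. That is exactly the form the paper uses in the next step, Lemma~\ref{lem: LoP full lowerbound for convergence}.
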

\begin{proof}
    Since $w_{12}(t)$ must satisfy $|w_{12}(t) - w_{12}^*| \leq \sqrt{2\ell(\mW_{\mA,\mB}(t))} \leq w_{12}^*$ by the monotonicity of the loss, we can ensure that $w_{12}(t) \geq 0$ for all $t \geq T_1$. Also, by Lemma~\ref{lem: LoP full w_inequalities}, we have $w_{11}(t), w_{22}(t) \geq w^*$, and $w_{21}(t) \leq 0$ for all $t \geq T_1$. Under these conditions, $\det(\mW_{\mA,\mB}(t))$ can be lower bounded as:
    \begin{align*}
        \det(\mW_{\mA,\mB}(t)) &= w_{11}(t)w_{22}(t) - w_{12}(t)w_{21}(t) \geq {w^*}^2,
    \end{align*}
    for all timesteps $t\geq T_1$.
\end{proof}
\begin{lemma}
    For $\mu(t)$ defined to satisfy~\eqref{eq: mu(t)} and the entries in $\mA(t)$, the following inequality holds for all timesteps $t \geq T_1$:
    \begin{align*}
        \min\left\{a_{11}^2(t) + a_{12}^2(t), \; \mu(t)\right\} &\geq w^*.
    \end{align*}
    \label{lem: LoP full lowerbound for convergence}
\end{lemma}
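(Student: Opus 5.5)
The plan is to read the two quantities in the minimum off the geometry of the rows of $\mA(t)$ and to feed in the sign and size information already established. We write $\vr_1 = (a_{11}, a_{12})$ and $\vr_2 = (a_{21}, a_{22})$ for the rows of $\mA(t)$. By Lemma~\ref{lem: LoP full same element}, $\mB = \begin{pmatrix} a_{22} & a_{12} \\ a_{21} & a_{11}\end{pmatrix}$, so $\det(\mB) = a_{11}a_{22} - a_{12}a_{21} = \det(\mA)$. We take $\mu(t)$ equal to the largest value allowed by~\eqref{eq: mu(t)}, namely
\[
\mu(t) = \frac{\det(\mA(t))^2}{\lVert \vr_2(t)\rVert_2^2}.
\]
Then $\det(\mW_{\mA,\mB}) = \det(\mA)^2$, and Lemma~\ref{lem: LoP full determinant lower bound} gives $\lvert\det \mA(t)\rvert \ge w^*$.

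First step. By Cauchy--Schwarz, $\det\mA = \vr_1^\top (a_{22}, -a_{21}) \le \lVert\vr_1\rVert\,\lVert\vr_2\rVert$, so $\lVert \vr_1\rVert\,\lVert\vr_2\rVert \ge w^*$. Both target inequalities therefore reduce to comparing row norms:
\begin{itemize}
\item if $\lVert\vr_1\rVert \ge \lVert\vr_2\rVert$, then $\lVert\vr_1\rVert^2 \ge \lVert\vr_1\rVert\lVert\vr_2\rVert \ge w^*$;
\item $\mu \ge w^*$ is equivalent to $\det(\mA)^2 \ge w^* \lVert\vr_2\rVert^2$, which follows from $\lvert\det\mA\rvert \ge w^*$ provided $\lVert\vr_2\rVert^2 \le \lvert\det\mA\rvert$.
\end{itemize}

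Second step: establish $\lVert\vr_1\rVert \ge \lVert\vr_2\rVert$ and a matching upper bound on $\lVert\vr_2\rVert^2$. From the conservation law~\eqref{eqn: LoP full conservation law},
\[
\lVert\vr_1\rVert^2 - \lVert\vr_2\rVert^2 = 2\,(a_{12}^2 - a_{21}^2).
\]
So we need $a_{12}^2 \ge a_{21}^2$. The available sign information is:
\begin{itemize}
\item $a_{11}, a_{22} > 0$ and $a_{12} \ge 0$ (Lemma~\ref{lem: LoP full a11 a22 w11 w22 positive});
\item $a_{21} \le 0$, since $w_{21} = 2a_{21}a_{22} \le 0$ (Lemma~\ref{lem: LoP full w_inequalities});
\item $g = a_{11}a_{21} + a_{12}a_{22} \ge 0$ (Lemma~\ref{lem: LoP full a11a21+a12a22 positive} together with $w_{12} \le w_{12}^*$), i.e.\ $a_{12}a_{22} \ge a_{11}\lvert a_{21}\rvert$.
\end{itemize}
If $a_{11} \ge a_{22}$, the last inequality gives $a_{12} \ge \lvert a_{21}\rvert$ directly. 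Otherwise $a_{22} > a_{11}$, and balancedness forces $a_{12}^2 < a_{21}^2$. This case must be excluded by a continuity argument from $T_1$: at $T_1$ we have $a_{11} = a_{22} = \sqrt{w^*}$ and $a_{12} = a_{21} = 0$, and we would show that the set $\{a_{11} \ge a_{22}\}$ is forward invariant. Its boundary is $a_{11} = a_{22}$, where balance gives $a_{12} = \lvert a_{21}\rvert$. Using~\eqref{eqn: LoP full time derivative of A},
\[
\frac{d}{dt}(a_{11} - a_{22}) = a_{22}(w_{22} - w_{11}) + a_{12}(w_{12}^* - w_{12}) = a_{12}(w_{12}^* - w_{12}),
\]
since $w_{11} = w_{22}$. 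This is nonnegative by Lemma~\ref{lem: LoP full w_inequalities}, which gives the invariance.

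Third step: bound $\lVert\vr_2\rVert^2$ for $\mu$. Expanding $(\det\mA)^2 - w^*\lVert\vr_2\rVert^2$ directly is awkward, so instead we write
\[
\lVert\vr_2\rVert^2 = a_{21}^2 + a_{22}^2 \le a_{22}^2 + a_{12}a_{22} \cdot \frac{\lvert a_{21}\rvert}{a_{11}} ,
\]
and compare with $\det\mA = a_{11}a_{22} + a_{12}\lvert a_{21}\rvert$, using $a_{22} \le a_{11}$ from the second step to get $\lVert\vr_2\rVert^2 \le \det\mA$. Combined with $\lvert\det\mA\rvert \ge w^*$, this gives $\mu \ge w^*$.

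The main obstacle is the second step, specifically justifying the forward invariance of $a_{11} \ge a_{22}$ and checking that the boundary case $a_{12} = 0$ cannot lead to an exit. If that invariance fails, the fallback is to replace $\mu(t)$ by a smaller admissible choice, still satisfying~\eqref{eq: mu(t)}, whose lower bound by $w^*$ follows from $\det\mA \ge w^*$ together with $w_{11} \ge w^*$.
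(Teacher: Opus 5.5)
Your argument is correct, but it reaches the two bounds by a different route from the paper. The paper uses absolute, time-uniform bounds. It first uses $\sigma_r(\mW_{\mA,\mB})=\sigma_r(\mA)^2$ to write $\lVert\mW_{\mA,\mB}\rVert_F^2=\lVert\mA\rVert_F^4-2\det(\mA)^2$. Combining this with $w_{11},w_{22}\ge w^*$ and $\det\mW_{\mA,\mB}\ge {w^*}^2$ gives $\lVert\mA\rVert_F^2\ge 2w^*$. It then shows $\frac{d}{dt}(a_{21}^2+a_{22}^2)=2w_{11}(w^*-w_{11})\le 0$, so $a_{21}^2+a_{22}^2\le w^*$ for all $t\ge T_1$. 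Both claims follow from this, with the constant admissible choice $\mu\equiv w^*$. You instead compare the two rows of $\mA$ with each other. Your ingredients are Hadamard's inequality, the ordering $a_{11}\ge a_{22}$ (equivalently $\lvert a_{21}\rvert\le a_{12}$ under balancedness), and the pointwise bound $a_{21}^2+a_{22}^2\le a_{11}a_{22}+a_{12}\lvert a_{21}\rvert=\det\mA$. This avoids the singular-value identity and gives the slightly sharper $\min\{a_{11}^2+a_{12}^2,\mu\}\ge\det\mA=\sqrt{\det\mW_{\mA,\mB}}$. The cost is a forward-invariance argument, whereas the paper's monotone quantity needs none.

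The invariance step is the one place to tighten. A first-exit argument that uses only $\dot h\ge 0$ on $\{a_{11}=a_{22}\}$ is not conclusive when that derivative vanishes, which is exactly your $a_{12}=0$ worry. Instead, compute the derivative everywhere. With $h\triangleq a_{11}-a_{22}$ and $w_{11}=w_{22}$,
\[
\dot h=(w_{11}-w^*)\,h+a_{12}\,(w_{12}^*-w_{12}),\qquad t\ge T_1.
\]
This is a scalar linear equation with nonnegative forcing, since $a_{12}\ge 0$ and $w_{12}\le w_{12}^*$. Variation of constants from $h(T_1)=0$ gives $h(t)\ge 0$ for all $t\ge T_1$, whatever the sign of $w_{11}-w^*$. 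Your boundary formula is just this identity evaluated at $h=0$, and the fallback becomes unnecessary.

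Two minor simplifications are available. First, balancedness together with $a_{11},a_{22}>0$ already gives $a_{11}\ge a_{22}\iff a_{12}^2\ge a_{21}^2$, so the case split via $g\ge 0$ is redundant. Second, in your third step, $a_{21}^2\le a_{12}\lvert a_{21}\rvert$ and $a_{22}^2\le a_{11}a_{22}$ give $a_{21}^2+a_{22}^2\le\det\mA$ directly.
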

\begin{proof}

To prove the lower bound of $a_{11}^2(t) + a_{12}^2(t)$, Our goal is to demonstrate that $a_{11}^2(t) + a_{12}^2(t) \geq w^*$ for all timesteps $t$ after $T_1$.
By Lemma~\ref{lem: LoP full lowerbound for convergence}, we have $\left\lVert \mW_{\mA, \mB}(t) \right\rVert_F \geq \sqrt{2}w^*$, which leads to:
\begin{align}
    \sqrt{2}w^* &\leq \lVert\mW_{\mA,\mB}(t)\rVert_F \nonumber\\
    &= \sqrt{\sigma_1^2\left(\mW_{\mA,\mB}(t)\right) + \sigma_2^2\left(\mW_{\mA,\mB}(t)\right)}. \nonumber
\end{align}
By applying Lemma~\ref{lem: sing square}, we have:
\begin{align}
    \sqrt{\sigma_1^2(\mW_{\mA,\mB}(t)) + \sigma_2^2(\mW_{\mA,\mB}(t))} &= \sqrt{\sigma_1^4(\mA(t)) + \sigma_2^4(\mA(t))} \nonumber\\
    &= \sqrt{\left(\sigma_1^2\left(\mA(t)\right) + \sigma_2^2\left(\mA(t)\right)\right)^2 - 2 \sigma_1^2(\mA(t))\sigma_2^2(\mA(t))} \nonumber\\
    &= \sqrt{\lVert\mA(t)\rVert_F^4 - 2 \det(\mA(t))^2}.
    \label{eq: AB_fnorm}
\end{align}

Rewriting~\eqref{eq: AB_fnorm} while applying Lemmas~\ref{lem: sing square} and \ref{lem: LoP full determinant lower bound} leads to:
\begin{align*}
    \lVert\mA(t)\rVert_F^4  &\geq 2{w^*}^2 + 2 \det(\mA(t))^2 \\
    &=  2{w^*}^2 + 2 \det(\mW_{\mA,\mB}(t)) \\
    &\geq  4{w^*}^2.
\end{align*}
Thus, $\mA(t)$ have to satisfy $\lVert\mA(t)\rVert_F^2 \geq 2{w^*}$ for all timesteps $t \geq T_1$.
Now, assume that there exists a time $t' > T_1$ such that $a_{11}^2(t') + a_{12}^2(t') < w^*$. To satisfy inequality  $\lVert\mA(t')\rVert_F^2 \geq 2{w^*}$, we would need at least $a_{21}^2(t') + a_{22}^2(t') > w^*$ to hold. To verify the value of $a_{21}^2(t') + a_{22}^2(t')$, we take its time derivative using~\eqref{eqn: LoP full time derivative of A}:
\begin{align*}
    \frac{d}{dt}(a_{21}^2(t) + a_{22}^2(t)) &= 2a_{21}(t)\dot{a_{21}}(t) + 2a_{22}(t)\dot{a_{22}}(t) \\
    &= -2a_{12}(t)a_{21}(t)r_{22}(t) - 2a_{11}(t)a_{22}(t)r_{22}(t) \\
    &= -2r_{22}(t)( a_{11}(t)a_{22}(t) + a_{12}(t)a_{21}(t)) \\
    &= 2w_{11}(t)(w^* - w_{11}(t)).
\end{align*}

Since $w_{11}(t) \geq w^*$ holds by Lemma~\ref{lem: LoP full w_inequalities} for all $t \geq T_1$, we conclude $a_{21}^2(t) + a_{22}^2(t)$ is monotonically non-increasing from time $t \geq T_1$. Since $a_{12}^2(T_1) + a_{22}^2(T_1)$ is initialized as $w^*$, this implies that $a_{21}^2(t') + a_{22}^2(t') \leq w^*$. Consequently, there cannot exist a $t' > T_1$ such that $a_{11}^2(t') + a_{12}^2(t') < w^*$ holds, which leads to contradiction.

Next, we are now showing that the term $\frac{\det\left(\mB(t)\right)^2}{a_{21}^2(t) + a_{22}^2(t)}$ is lower bounded by $w^*$. Therefore, if we set $\mu(t)$ as $w^*$, we can guarantee the positive semidefiniteness of $\widetilde{\mL_1}(t)$.

By applying Lemma~\ref{lem: sing square} and the lower bound of $\det(\mW_{\mA,\mB}(t))$ by Lemma~\ref{lem: LoP full determinant lower bound},  we have
\begin{align*}
    \frac{\det\left(\mB(t)\right)^2 }{a_{21}^2(t) + a_{22}^2(t)} &= \frac{\det\left(\mW_{\mA,\mB}(t)\right)}{a_{21}^2(t) + a_{22}^2(t)} \geq \frac{{w^*}^2}{{a_{21}^2(t) + a_{22}^2(t)}}.
\end{align*}
Also, from the previous result, we have an upper bound on $a_{21}^2(t) + a_{22}^2(t)$, which is $a_{21}^2(t) + a_{22}^2(t) \leq w^*$. Combining these results, the following inequality holds:
\begin{align*}
    \frac{\det\left(\mW_{\mA,\mB}(t)\right)}{a_{21}^2(t) + a_{22}^2(t)} \geq w^*.
\end{align*}
Therefore, if we set $\mu(t)$ to be $w^*$, $\mu(t)$ can satisfy the positive semidefiniteness condition. By combining the results, we can finally guarantee:
    \begin{align*}
        \min\left\{a_{11}^2(t) + a_{12}^2(t), \; \mu(t)\right\} \geq w^*.
    \end{align*}
\end{proof}

Using the results of Lemma~\ref{lem: LoP full lowerbound for convergence}, we can rewrite~\eqref{eq: derivative loss} as follows:
\begin{align*}
    \frac{d}{dt} \ell(\mW_{\mA,\mB}(t)) &\leq -\left(r_{11}^2(t) + r_{22}^2(t) \right) \left(a_{11}^2(t) + a_{12}^2(t) \right) - r_{12}^2(t) \mu(t)\\
    &\leq -\left(r_{11}^2(t) + r_{12}^2(t) + r_{22}^2(t) \right)w^* \\
    &\leq -2w^* \ell(\mW_{\mA,\mB}(t)).
\end{align*}
Applying Gr\"onwall's inequality to our previous result, we can now demonstrate loss convergence where $t \geq T_1$:
\begin{align}
    \ell(\mW_{\mA,\mB}(t)) &\leq \ell(\mW_{\mA,\mB}(T_1))  e^{-2w^* (t-T_1)}  \nonumber\\
    &= \frac{1}{2} {w_{12}^*}^2  e^{-2w^* (t-T_1)}. \label{eqn: LoP 2x2 loss convergence}
\end{align}
This inequality allows us to conclude that $\ell(\mW_{\mA,\mB}(t))$ converges to zero exponentially.

\subsubsection{Proof of Stable Rank Bound}
\label{appendix: proof section 4.3}
From \eqref{eqn: LoP 2x2 loss convergence}, we know that at convergence, $w_{11}(\infty) = w_{22}(\infty) = w^*$ and $w_{12}(\infty) = w_{12}^*$. Although a closed-form expression for $w_{21}(\infty)$ is unavailable, Lemma~\ref{lem: LoP full w_inequalities} shows that $w_{21}(t) \leq 0$ for $t \geq T_1$, which implies $w_{21}(\infty) \leq 0$. This indicates that the test loss remains strictly positive, as the ground-truth value $w_{21}^* = \frac{{w^*}^2}{w_{12}^*}$ is assumed to be strictly positive.

In this section, we leverage the fast convergence rate detailed in \eqref{eqn: LoP 2x2 loss convergence} to establish bounds on the singular values of the converged matrix $\mW_{\mA,\mB}(\infty)$. Subsequently, these singular value bounds are used to further bound the stable rank of $\mW_{\mA,\mB}(\infty)$.

\begin{lemma}
    The singular values of $\mW_{\mA, \mB}(\infty)$ fulfill:
    \begin{align*}
        \sigma_1(\mW_{\mA, \mB}(\infty)) &\leq w^* \cdot \exp\left(2\frac{w_{12}^*}{w^*}\right),\\
        \sigma_2(\mW_{\mA, \mB}(\infty)) &\geq w^* \cdot \exp\left(-2\frac{w_{12}^*}{w^*}\right).
    \end{align*}
    \label{lem: LoP 2x2 singular value bound}
\end{lemma}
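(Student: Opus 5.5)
The plan is to transfer the loss decay \eqref{eqn: LoP 2x2 loss convergence} into control of how far the singular values of $\mA(t)$ can drift from their pre-trained value $\sqrt{w^*}$. Then I square, using Lemma~\ref{lem: sing square}, to get the singular values of $\mW_{\mA,\mB}$.

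At $t=T_1$ we have $\mA(T_1)=\mB(T_1)=\sqrt{w^*}\mI_2$, so both singular values of $\mA(T_1)$ equal $\sqrt{w^*}$. Balancedness $\mA^\top\mA=\mB\mB^\top$ holds at $T_1$ and is preserved by gradient flow (\citet{pmlr-v80-arora18a}). Hence we may write $\mA=\mU\Sigma\mV^\top$ and $\mB=\mV\Sigma\mW^\top$ for orthogonal $\mU,\mV,\mW$, which gives $\sigma_i(\mW_{\mA,\mB})=\sigma_i(\mA)^2$.

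\textbf{Drift of singular values.} For a simple singular value with singular vectors $(\vu_i,\vv_i)$, the derivative formula gives
\begin{equation*}
\dot\sigma_i(\mA)=\vu_i^\top\dot{\mA}\vv_i=-\vu_i^\top\nabla\ell\,\mB^\top\vv_i .
\end{equation*}
By the balanced SVD above, $\mB^\top\vv_i=\mW\Sigma\mV^\top\vv_i=\sigma_i(\mA)\,\vw_i$. Therefore
\begin{equation*}
|\dot\sigma_i(\mA)|\le\lVert\nabla\ell\rVert_2\,\sigma_i(\mA)\le\lVert\nabla\ell\rVert_F\,\sigma_i(\mA)=\sqrt{2\ell(t)}\,\sigma_i(\mA).
\end{equation*}
This yields $\bigl|\tfrac{d}{dt}\log\sigma_i(\mA(t))\bigr|\le\sqrt{2\ell(t)}\le w_{12}^*e^{-w^*(t-T_1)}$, where the last step uses \eqref{eqn: LoP 2x2 loss convergence}. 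Integrating over $[T_1,\infty)$ gives
\begin{equation*}
\bigl|\log\sigma_i(\mA(\infty))-\tfrac12\log w^*\bigr|\le \frac{w_{12}^*}{w^*}.
\end{equation*}
Squaring through Lemma~\ref{lem: sing square} then gives $\sigma_1(\mW_{\mA,\mB}(\infty))\le w^*e^{2w_{12}^*/w^*}$ and $\sigma_2(\mW_{\mA,\mB}(\infty))\ge w^*e^{-2w_{12}^*/w^*}$. The limits exist because the exponential bound makes $\mA(t)$ Cauchy: $\lVert\dot{\mA}\rVert_F\le\sqrt{2\ell}\,\sigma_1(\mA)$, and $\sigma_1(\mA)$ is bounded by the step above.

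\textbf{Main obstacle.} The delicate point is that singular values need not be differentiable where they cross. This matters here because at $T_1$ they coincide ($\sigma_1=\sigma_2$). To handle it, I will avoid differentiating individual singular values and work with variational characterizations, which are Lipschitz in $t$ and hence differentiable almost everywhere:
\begin{equation*}
\sigma_1(\mA)=\max_{\lVert\vx\rVert=\lVert\vy\rVert=1}\vx^\top\mA\vy, \qquad \sigma_2(\mA)=\frac{|\det\mA|}{\sigma_1(\mA)}.
\end{equation*}
A Danskin-type argument bounds the upper Dini derivative of $\log\sigma_1$ by $\sqrt{2\ell}$. For $\log|\det\mA|$, the identity $\tfrac{d}{dt}\log\det\mA=\Tr(\mA^{-1}\dot{\mA})=-\Tr(\mA^{-1}\nabla\ell\,\mB^\top)$ together with the balanced SVD gives $|\mathrm{tr}(\mU\Sigma^{-1}\ \cdot\ \Sigma\mW^\top)|$-type bounds, of size at most $\sqrt{2}\sqrt{2\ell}$. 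Here $\det\mA>0$ throughout by Lemma~\ref{lem: det A}. Comparing the Dini derivatives along the flow then recovers the same integrated bounds. As a fallback, a perturbation argument that starts the flow an $\epsilon$ later, where the singular values are typically separated, would also suffice.
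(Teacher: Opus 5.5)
Your argument is essentially the paper's: the paper applies the balanced depth-2 singular-value ODE $\dot\sigma_r(\mW)=-2\sigma_r\langle\nabla\ell,\vu_r\vv_r^\top\rangle$ (Lemma~\ref{lem: arora}), bounds the inner product by $\lVert\nabla\ell\rVert_F=\sqrt{2\ell}$, and integrates the exponential loss bound. That is exactly your computation for $\sigma_i(\mA)$ followed by squaring via Lemma~\ref{lem: sing square}, and it gives the same constants.

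The crossing $\sigma_1=\sigma_2$ at $T_1$ is handled in the paper by the analytic SVD behind Lemma~\ref{lem: arora}. If you instead go through $\sigma_2=\det\mA/\sigma_1$, do not bound $\tfrac{d}{dt}\log\det\mA$ and $D^+\log\sigma_1$ separately: that inflates the constant in the exponent by a factor $1+\sqrt2$. Use $\sigma_2(\mA)=\min_{\lVert\vy\rVert=1}\lVert\mA\vy\rVert$ instead, whose Danskin bound is again $\sigma_2\lVert\nabla\ell\rVert_2$.
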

\begin{proof}
    We denote the singular values of $\mW_{\mA, \mB}(t)$ as $\sigma_r(t)$ for simplicity. By Lemma~\ref{lem: arora}, we can get general solution of each singular value $\sigma_r(t)$ by solving linear differential equation:
    \begin{equation}
        \sigma_r(t) = \sigma_r(s) \cdot \exp\left(-2 \int_{t'=s}^{t} \langle \nabla \ell(\mW_{\mA, \mB}(t')), \vu_r(t') \vv_r^\top(t')\rangle dt' \right), \quad r=1, 2,
    \end{equation}
    where $\vu_r(t)$ and $\vv_r(t)$ denotes left and right singular vector of corresponding $r$-th singular value, respectively. Since $\vu_r(t)$ and $\vv_r(t)$ are both unit vectors, applying Cauchy-Schwartz inequality, we can bound $\left\langle \nabla \ell(\mW_{\mA, \mB}(t)), \vu_r(t)\vv_r^\top(t) \right\rangle$ by:
    \begin{align*}
        \left\lvert\left\langle \nabla \ell(\mW_{\mA, \mB}(t)), \vu_r(t)\vv_r^\top(t) \right\rangle\right\rvert
        &\leq \left\lVert\nabla \ell(\mW_{\mA, \mB}(t))\right\rVert_F \cdot \left\lVert \vu_r(t)\vv_r^\top(t)\right\rVert_F \\
        &= \lVert\nabla \ell(\mW_{\mA, \mB}(t))\rVert_F \\
        &= \sqrt{2 \ell(\mW_{\mA, \mB}(t))}.
    \end{align*}
    we can get bound $\sigma_r(t)$ as following:
    \begin{equation}
    \sigma_r(s) \cdot \exp\left(-2\sqrt{2} \int_{t'=s}^t \sqrt{\ell(\mW_{\mA, \mB}(t'))} dt'\right)
    \leq \sigma_r(t)
    \leq \sigma_r(s) \cdot \exp\left(2\sqrt{2} \int_{t'=s}^t \sqrt{\ell(\mW_{\mA, \mB}(t'))} dt'\right) \label{eq: sigma bound}
    \end{equation}
    With the setting above, in the pre-train section, after $T_1$ timesteps, we prove that $\sigma_1(T_1)= \sigma_2(T_1) = w^*$. Starting from $T_1$ with pre-trained weights, we can lower bound $\sigma_2(\mW_{\mA, \mB}(t))$ with equations~\eqref{eqn: LoP 2x2 loss convergence} and \eqref{eq: sigma bound} when $t \geq T_1$ as follows:
    \begin{align*}
        \sigma_2(t) &\geq \sigma_2(T_1) \cdot \exp\left(-2\sqrt{2} \int_{t'=T_1}^t \sqrt{\ell(\mW_{\mA, \mB}(t'))} dt'\right) \\
        &\geq w^* \cdot \exp\left(-2   w_{12}^* \int_{t'=T_1}^t e^{-w^* (t'-T_1)} dt'\right) \\
        &= w^* \cdot \exp\left(-\frac{2 w_{12}^*}{w^*}  \left(1-e^{-w^* (t-T_1)} \right) \right). \\
    \end{align*}
    and when $t \rightarrow \infty$, $\sigma_2(\infty)$ can be lower bounded by:
    \begin{align*}
        \sigma_2(\infty) \geq w^* \cdot e^{-2 \cdot \frac{w_{12}^*}{w^*}}.
    \end{align*}
    In the same way, we can upper bound $\sigma_1(\infty)$ by:
    \begin{align*}
        \sigma_1(\infty) \leq w^* \cdot e^{2 \cdot \frac{w_{12}^*}{w^*}}.
    \end{align*}
\end{proof}

By Lemma~\ref{lem: LoP 2x2 singular value bound}, we can now lower bound the stable rank of a matrix $\mW_{\mA, \mB}(\infty)$:
\begin{align*}
    \frac{\lVert \mW_{\mA, \mB}(\infty)\rVert_F^2}{\lVert \mW_{\mA, \mB}(\infty)\rVert_2^2} &= \frac{\sigma_1^2(\mW_{\mA, \mB}(\infty)) + \sigma_2^2(\mW_{\mA, \mB}(\infty))}{\sigma_1^2(\mW_{\mA, \mB}(\infty))} \\
    &= 1 + \frac{\sigma_2^2(\mW_{\mA, \mB}(\infty))}{\sigma_1^2(\mW_{\mA, \mB}(\infty))} \\
    &\geq 1 + \exp\left(-8 \frac{w_{12}^*}{w^*}\right),
\end{align*}
which concludes the proof of Theorem~\ref{thm: LoP full theorem}.

\newpage
\subsection{Formal Statement and Proof of Theorem~\ref{thm: LoP dxd}}
\label{appendix: Formal Statement and Proof of Theorem 4.3}
We now extend the preceding analysis to the general case involving a ground truth matrix $\mW^* \in \mathbb{R}^{d \times d}$. The solution matrix $\mW_{\mA, \mB} \in \mathbb{R}^{d \times d}$ is again factorized as $\mW_{\mA, \mB} = \mA\mB$, where both $\mA, \mB \in \mathbb{R}^{d \times d}$. In this section, our detailed presentation and proof of Theorem~\ref{thm: LoP dxd} (from the main text) are structured as follows: we first introduce and prove Theorem~\ref{thm: appendix LoP dxd}, which is then followed by its direct consequence, Corollary~\ref{cor: appendix corollary}.

We use the slightly modified loss function:
\begin{equation}  
\gL(\mA, \mB) = \frac 1 2 \sum_{n=1}^N \left(\langle \mA\mB, \mX_n \rangle - y_n \right)^2,  
\label{eqn: LoP dxd loss definition}
\end{equation}  
where the measurement matrix $ \mX_n = \ve_{i_n}\ve_{j_n}^\top $ represents a masking matrix, with the $n$-th observed entry set to one and all other entries set to zero, and $ y_n \in \mathbb{R} $ denotes the ground truth value of the $n$-th observation. Then, by defining 
$\bm{\Theta} = 
\begin{bmatrix}
    \mA \\ \mB^\top
\end{bmatrix} \in \mathbb{R}^{2d \times d}$ and $\bar{\mX}_n = \frac{1}{2}
\begin{bmatrix}
    \mathbf{0} & \mX_n \\
    \mX_n^\top & \mathbf{0}
\end{bmatrix} \in \mathbb{R}^{2d \times 2d}$, we can rewrite the~\eqref{eqn: LoP dxd loss definition} as:
\begin{align}  
\gL(\mA, \mB) = \tilde{\gL}(\bm{\Theta}) &= \frac 1 2  \sum_{n=1}^N \left(\langle \bm{\Theta}\bm{\Theta}^\top, \bar{\mX}_n \rangle - y_n \right)^2 \nonumber \\
&= \frac{1}{2} \lVert F(\bm{\Theta}) - \vy \rVert_2^2.
\end{align}  
Here, $F(\bm{\Theta})$ and $\vy$ represent vectors defined as:
\begin{equation}
F(\bm{\Theta}) \triangleq  
\begin{bmatrix}  
    \langle \bm{\Theta}\bm{\Theta}^\top, \bar{\mX}_1 \rangle \\  
    \langle \bm{\Theta}\bm{\Theta}^\top, \bar{\mX}_2 \rangle \\  
    \vdots \\  
    \langle \bm{\Theta}\bm{\Theta}^\top, \bar{\mX}_N \rangle  
\end{bmatrix} \in \mathbb{R}^{N} , 
\quad
\vy \triangleq  
\begin{bmatrix}  
    y_1 \\  
    y_2 \\  
    \vdots \\  
    y_N  
\end{bmatrix} \in \mathbb{R}^N  .
\end{equation}  
By reparameterizing $\mA$, $\mB$ to $\bm{\Theta}$, and $\mX_n$ to $\bar{\mX}_n$, we can reduce the parameter matrices into a single matrix $\bm{\Theta}$ while ensuring the symmetry of $\bm{\Theta} \bm{\Theta}^\top$. We train the model $\bm{\Theta}$ via gradient flow, where the loss evolution is given by:
\begin{align}
    \dot{\tilde{\gL}}(\bm{\Theta}(t)) &= \left( F(\bm{\Theta}(t)) - \vy \right)^\top \dot{F}(\bm{\Theta}(t)) \nonumber\\
    &=  \left( F(\bm{\Theta}(t)) - \vy \right)^\top 
    \begin{bmatrix}
        \frac{d}{dt} \langle \bm{\Theta}(t)\bm{\Theta}(t)^\top, \bar{\mX}_1 \rangle \\
        \frac{d}{dt} \langle \bm{\Theta}(t)\bm{\Theta}(t)^\top, \bar{\mX}_2 \rangle \\
        \vdots \\
        \frac{d}{dt} \langle \bm{\Theta}(t)\bm{\Theta}(t)^\top, \bar{\mX}_N \rangle
    \end{bmatrix} \nonumber\\
    &= 2\left( F(\bm{\Theta}(t)) - \vy \right)^\top 
    \begin{bmatrix}
        \langle \bar{\mX}_1 \bm{\Theta}(t), \dot{\bm{\Theta}}(t) \rangle \\
        \langle \bar{\mX}_2 \bm{\Theta}(t), \dot{\bm{\Theta}}(t) \rangle \\
        \vdots \\
        \langle \bar{\mX}_N \bm{\Theta}(t), \dot{\bm{\Theta}}(t) \rangle 
    \end{bmatrix}\nonumber \\
    &= 2\left( F(\bm{\Theta}(t)) - \vy \right)^\top  
    \begin{bmatrix}
        \mathrm{vec}\left(\bar{\mX}_1 \bm{\Theta}(t) \right)^\top \\
        \mathrm{vec}\left(\bar{\mX}_2 \bm{\Theta}(t) \right)^\top \\
        \vdots \\
        \mathrm{vec}\left(\bar{\mX}_N \bm{\Theta}(t) \right)^\top \\
    \end{bmatrix}
    \mathrm{vec}\left( \dot{\bm{\Theta}}(t)\right)
    \\
    &= \left( F(\bm{\Theta}(t)) - \vy \right)^\top J(\bm{\Theta}(t)) \, \mathrm{vec}\left(\dot{\bm{\Theta}}(t)\right)
    \label{eqn: dxd LoP loss derivative}.
\end{align}

Here, the Jacobian matrix $J(\bm{\Theta}(t))$ is defined as:
\begin{equation}
    J(\bm{\Theta}(t)) \triangleq \frac{\partial F(\bm{\Theta}(t))}{\partial {\rm vec}(\bm{\Theta}(t))}=
    \begin{bmatrix}
        \mathrm{vec}\left(\nabla_{\bm{\Theta}} \langle \bm{\Theta}(t)\bm{\Theta}(t)^\top, \bar{\mX}_1 \rangle\right)^\top \\
        \mathrm{vec}\left(\nabla_{\bm{\Theta}} \langle \bm{\Theta}(t)\bm{\Theta}(t)^\top, \bar{\mX}_2 \rangle\right)^\top \\
        \vdots \\
        \mathrm{vec}\left(\nabla_{\bm{\Theta}} \langle \bm{\Theta}(t)\bm{\Theta}(t)^\top, \bar{\mX}_N \rangle\right)^\top \\
    \end{bmatrix} = 
    2 \begin{bmatrix}
        \mathrm{vec}\left(\bar{\mX}_1 \bm{\Theta}(t) \right)^\top \\
        \mathrm{vec}\left(\bar{\mX}_2 \bm{\Theta}(t) \right)^\top \\
        \vdots \\
        \mathrm{vec}\left(\bar{\mX}_N \bm{\Theta}(t) \right)^\top \\
    \end{bmatrix}
    \in \mathbb{R}^{N \times 2d^2}. 
    \label{eqn: LoP dxd jacobian definition}
\end{equation}

With the notations defined above, we state the following theorem:
\begin{theorem}
    Let the combined weight matrix be $$\bm{\Theta} \triangleq \begin{bmatrix} \mA \\ \mB^\top \end{bmatrix} \in \mathbb{R}^{2d \times d},$$ and consider the loss function $\tilde{\gL}$ defined in~\eqref{eqn: LoP dxd loss definition}. Denote 
    \begin{equation*}
        \sigma_{\min} \triangleq \sigma_{\min}(J(\bm{\Theta}(0))), \quad \sigma_{\max} \triangleq \sigma_{\max}(J(\bm{\Theta}(0))).
    \end{equation*}
    If the initialization satisfies:
    \begin{equation*}
        \tilde{\gL}(\bm{\Theta}(0)) \leq \frac{\sigma_{\min}^6}{1152d\sigma_{\max}^2},
    \end{equation*}
    then for every $t \geq 0$ the following hold:
    \begin{align*}
    \tilde{\gL}(\bm{\Theta}(t)) &\leq \tilde{\gL}(\bm{\Theta}(0)) \exp\left(-\frac{1}{2} \sigma_{\min}^2 t \right),
    \\ \lVert {\bm{\Theta}}(t) - {\bm{\Theta}}(0) \rVert_F &\leq \frac{6\sqrt{2} \sigma_{\max}}{\sigma_{\min}^2} \sqrt{\tilde{\gL}(\bm{\Theta}(0))}.
    \end{align*}
    \label{thm: appendix LoP dxd}
\end{theorem}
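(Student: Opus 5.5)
This is a standard lazy-training (NTK-style) argument: a continuity/bootstrap on a neighbourhood of $\bm{\Theta}(0)$ in which the Jacobian stays well-conditioned.

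\textbf{Key structural facts.} Write $\vr(t)\triangleq F(\bm{\Theta}(t))-\vy$. Gradient flow gives $\mathrm{vec}(\dot{\bm{\Theta}})=-J(\bm{\Theta})^\top\vr$. By \eqref{eqn: dxd LoP loss derivative},
\[
\dot{\tilde{\gL}}=-\lVert J(\bm{\Theta})^\top\vr\rVert_2^2\le -\sigma_{\min}(J(\bm{\Theta}))^2\,\lVert\vr\rVert_2^2=-2\sigma_{\min}(J(\bm{\Theta}))^2\,\tilde{\gL},
\]
using that $J\in\mathbb{R}^{N\times 2d^2}$ has full row rank with $N\le 2d^2$. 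Since $J$ is linear in $\bm{\Theta}$, $J(\bm{\Theta})-J(\bm{\Theta}(0))=2[\mathrm{vec}(\bar{\mX}_n(\bm{\Theta}-\bm{\Theta}(0)))^\top]_n$. Each $\bar{\mX}_n$ has spectral norm $1/2$, so every row has norm at most $\lVert\bm{\Theta}-\bm{\Theta}(0)\rVert_F$. Hence $\lVert J(\bm{\Theta})-J(\bm{\Theta}(0))\rVert_2\le \sqrt{N}\,\lVert\bm{\Theta}-\bm{\Theta}(0)\rVert_F$. A sharper bound is available via row/column sparsity of the masks; I expect this is where the factor $d$ in $1152d$ comes from, e.g.\ bounding by $\sqrt{2d}\,\lVert\Delta\rVert_F$ or similar.

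\textbf{Bootstrap.} Set the radius $R\triangleq\sigma_{\min}/(2c\sqrt d)$ with the appropriate constant $c$, so that inside the ball $\lVert\bm{\Theta}-\bm{\Theta}(0)\rVert_F\le R$ we have $\sigma_{\min}(J)\ge\sigma_{\min}/2$ and $\sigma_{\max}(J)\le\sigma_{\max}+\sigma_{\min}/2\le\tfrac32\sigma_{\max}$. Let $T$ be the first exit time from the ball. For $t<T$:
\begin{itemize}
\item Loss decay: $\dot{\tilde{\gL}}\le-\tfrac12\sigma_{\min}^2\tilde{\gL}$, which by Grönwall gives the claimed exponential decay with rate $\tfrac12\sigma_{\min}^2$.
\item Movement: $\lVert\dot{\bm{\Theta}}\rVert_F\le\tfrac32\sigma_{\max}\sqrt{2\tilde{\gL}(0)}\,e^{-\sigma_{\min}^2t/4}$. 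Integrating,
\[
\lVert\bm{\Theta}(t)-\bm{\Theta}(0)\rVert_F\le \frac{6\sqrt2\,\sigma_{\max}}{\sigma_{\min}^2}\sqrt{\tilde{\gL}(0)},
\]
which is exactly the stated bound.
\end{itemize}

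\textbf{Closing the argument.} The hypothesis $\tilde{\gL}(0)\le\sigma_{\min}^6/(1152d\sigma_{\max}^2)$ makes this movement bound at most $\sigma_{\min}/(4\sqrt{2d})$. That is strictly smaller than $R$ once the Jacobian perturbation constant is pinned down (roughly $\sqrt{2d}$, hence $R=\sigma_{\min}/(2\sqrt{2d})$). So the trajectory never reaches the boundary, $T=\infty$ by continuity, and both conclusions hold for all $t\ge 0$.

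\textbf{Main obstacle.} The delicate step is the Lipschitz constant of $J$ in $\bm{\Theta}$ in terms of $d$, matching the constants 1152 and $\sigma_{\min}/(4\sqrt{2d})$. I would first attempt the row-sparsity bound: group observations by row index $i_n$ and column index $j_n$. Each row of $\Delta$ then contributes to at most $d$ observations, which should give $\lVert J(\Delta)\rVert_2^2\le 2d\,\lVert\Delta\rVert_F^2$.

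The stable-rank corollary then follows from $\lVert\mA(t)-\mA(T_1)\rVert\le\lVert\bm{\Theta}(t)-\bm{\Theta}(T_1)\rVert_F\le\sigma_{\min}/(4\sqrt{2d})$ together with the triangle inequality on the Frobenius and spectral norms.
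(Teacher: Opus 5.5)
Your proof follows the paper's. It uses a first-exit time from a ball around $\bm{\Theta}(0)$, a perturbation bound giving $\sigma_{\min}(J)\ge\sigma_{\min}/2$ and $\sigma_{\max}(J)\le\frac32\sigma_{\max}$ inside the ball, and Gr\"onwall for the loss. It then integrates $\lVert J^\top(F(\bm{\Theta})-\vy)\rVert$ for the movement (your constant $6\sqrt2\,\sigma_{\max}/\sigma_{\min}^2$ matches) and uses the hypothesis to show the ball is never left.

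\textbf{The step you leave open.} The one step you only state as an expectation, the Lipschitz constant of $J$, is exactly the lemma the paper proves, and it is not optional. Your proven bound $\sqrt N$ closes the bootstrap only when $N<4d$. That fails, e.g., for the upper-triangular $\Omega_{\rm post}^{(d)}$ in the paper's example once $d\ge 7$. Your sparsity idea is the right fix, and it gives $\sqrt d$ rather than $\sqrt{2d}$. Write $\Delta=\bm{\Theta}-\bm{\Theta}(0)=\begin{bmatrix}\Delta_1\\ \Delta_2\end{bmatrix}$. By linearity, $J(\bm{\Theta})-J(\bm{\Theta}(0))=J(\Delta)$, and the $n$-th row of $J(\Delta)$ has squared norm $\lVert\ve_{i_n}^\top\Delta_1\rVert^2+\lVert\ve_{j_n}^\top\Delta_2\rVert^2$. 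Each row index and each column index appears in at most $d$ observations, so
\[
\lVert J(\Delta)\rVert_2^2\le\lVert J(\Delta)\rVert_F^2\le d\bigl(\lVert\Delta_1\rVert_F^2+\lVert\Delta_2\rVert_F^2\bigr)=d\,\lVert\Delta\rVert_F^2 .
\]

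\textbf{Closing arithmetic.} Since $\sqrt{1152}=24\sqrt2$, the hypothesis bounds the movement by $\sigma_{\min}/(4\sqrt d)$, not $\sigma_{\min}/(4\sqrt{2d})$. This does not break the proof of this theorem. With Lipschitz constant $c\sqrt d$ for any $c<2$, $\sigma_{\min}/(4\sqrt d)$ lies strictly inside $R=\sigma_{\min}/(2c\sqrt d)$. Your $c=\sqrt2$ would work, and the true $c=1$ leaves a factor-2 margin, as in the paper.

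It does affect your final remark on the corollary. The bound $\lVert\mA(t)-\mA(T_1)\rVert_F\le\lVert\bm{\Theta}(t)-\bm{\Theta}(T_1)\rVert_F$ only yields $\sigma_{\min}/(4\sqrt d)$. The extra factor $1/\sqrt2$ in the main-text bound comes from balancedness, via $\sigma_i(\bm{\Theta}(t))=\sqrt2\,\sigma_i(\mA(t))$.
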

The above theorem tells us that, if the model is initialized with a sufficiently small loss, the model’s loss will converge to zero quickly, and the parameters will not move significantly from the initialization. 
With the above theorem, we can state the following corollary:

\begin{corollary}
    Suppose $\mA$ and $\mB$ are initialized as balanced, i.e.:
    \begin{equation*}
    \mA(0)^\top \mA(0) = \mB(0)\mB(0)^\top.
    \end{equation*}
    Under the conditions of Theorem~\ref{thm: appendix LoP dxd}, for every singular index $i \in [d]$ and all $t \geq 0$:
    \begin{align*}
        \sigma_i(\mA(t)) &= \sigma_i(\mB(t)) \quad \text{and} \quad 
        \left \lvert \sigma_i(\mA(t)) - \sigma_i(\mA(0)) \right\rvert \leq\frac{\sigma_{\min}}{4\sqrt{2d}}.
    \end{align*}
    Consequently, the stable rank of $\mA(t)$ remains bounded below by
    \begin{align*}
        \frac{\lVert \mA(t) \rVert_F^2}{\lVert \mA(t) \rVert_2^2} \geq \left( \frac{\lVert \mA(0) \rVert_F - \frac{\sigma_{\min}}{4\sqrt{2d}}}{\lVert \mA(0) \rVert_2 + \frac{\sigma_{\min}}{4\sqrt{2d}}} \right)^2.
    \end{align*}
    \label{cor: appendix corollary}
\end{corollary}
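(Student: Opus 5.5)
The plan is to derive Corollary~\ref{cor: appendix corollary} from Theorem~\ref{thm: appendix LoP dxd} in three steps: (i) propagate balancedness along the flow, (ii) turn the Frobenius bound on parameter movement into a perturbation bound on singular values, and (iii) convert that into the stable-rank bound.

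\textbf{Step (i): balancedness.} For gradient flow on $\mA\mB$ the quantity $\mA(t)^\top\mA(t)-\mB(t)\mB(t)^\top$ is conserved. Indeed, $\dot\mA=-\nabla\ell\,\mB^\top$ and $\dot\mB=-\mA^\top\nabla\ell$, which gives
\[
\tfrac{d}{dt}\mA^\top\mA=-\mA^\top\nabla\ell\,\mB^\top-\mB\nabla\ell^\top\mA=\tfrac{d}{dt}\mB\mB^\top .
\]
This is the conservation law of \citet{pmlr-v80-arora18a} already invoked in the $2\times2$ analysis. Balancedness at $t=0$ therefore persists, so $\mA(t)^\top\mA(t)=\mB(t)\mB(t)^\top$ for all $t$. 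Since $\mA^\top\mA$ and $\mB\mB^\top$ have eigenvalues $\sigma_i(\mA)^2$ and $\sigma_i(\mB)^2$, we get $\sigma_i(\mA(t))=\sigma_i(\mB(t))$.

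\textbf{Step (ii): singular-value perturbation.} By Weyl's inequality for singular values together with $\lVert\cdot\rVert_2\le\lVert\cdot\rVert_F$,
\[
|\sigma_i(\mA(t))-\sigma_i(\mA(0))|\le\lVert\mA(t)-\mA(0)\rVert_F\le\lVert\bm\Theta(t)-\bm\Theta(0)\rVert_F ,
\]
because $\mA$ is a sub-block of $\bm\Theta$. Theorem~\ref{thm: appendix LoP dxd} bounds the right-hand side by $\frac{6\sqrt2\sigma_{\max}}{\sigma_{\min}^2}\sqrt{\tilde\gL(\bm\Theta(0))}$. Substituting the hypothesis $\tilde\gL(\bm\Theta(0))\le\sigma_{\min}^6/(1152d\sigma_{\max}^2)$ gives $\sqrt{\tilde\gL(\bm\Theta(0))}\le\sigma_{\min}^3/(\sqrt{1152d}\,\sigma_{\max})$. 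Since $\sqrt{1152}=24\sqrt2$, the bound becomes $\frac{6\sqrt2\,\sigma_{\min}}{24\sqrt2\sqrt d}=\frac{\sigma_{\min}}{4\sqrt d}$.

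This is weaker than the claimed $\frac{\sigma_{\min}}{4\sqrt{2d}}$ by a factor $\sqrt2$, and recovering that factor is where I expect the main difficulty. My first attempt would be to use balancedness to split the movement evenly between the two blocks. The natural route is to show $\lVert\mA(t)-\mA(0)\rVert_F=\lVert\mB(t)-\mB(0)\rVert_F$, which would yield $\lVert\mA(t)-\mA(0)\rVert_F\le\lVert\bm\Theta(t)-\bm\Theta(0)\rVert_F/\sqrt2$. Balancedness alone does not obviously give this equality. If it fails, I would fall back on the Hoffman--Wielandt inequality for the vector of singular values, or simply note that the stated constant may only require this $\sqrt2$ refinement and proceed with whichever constant the argument actually supports.

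\textbf{Step (iii): stable rank.} Write $\delta=\sigma_{\min}/(4\sqrt{2d})$ for the per-index perturbation bound. The top singular value satisfies $\lVert\mA(t)\rVert_2\le\lVert\mA(0)\rVert_2+\delta$. For the Frobenius norm, Mirsky's inequality gives $\lVert\mA(t)\rVert_F\ge\lVert\mA(0)\rVert_F-\lVert\mA(t)-\mA(0)\rVert_F$. The natural route is the triangle inequality on the full Frobenius norm, which is consistent with a Frobenius movement bound of $\delta$. Squaring the ratio of these two bounds yields the stated lower bound on ${\rm srank}(\mA(t))$. The bound is meaningful provided $\lVert\mA(0)\rVert_F>\delta$; if that fails, the right-hand side is a square and the bound is trivially true.
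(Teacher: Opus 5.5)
\textbf{There is a genuine gap.} Your argument only establishes the constant $\sigma_{\min}/(4\sqrt d)$, and the factor $\sqrt2$ stays unresolved.

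\begin{itemize}
\item Your proposed route, showing $\lVert\mA(t)-\mA(0)\rVert_F=\lVert\mB(t)-\mB(0)\rVert_F$, does not follow from balancedness. Balancedness constrains the Gram matrices $\mA^\top\mA$ and $\mB\mB^\top$, not the displacements of the two blocks.
\item The Hoffman--Wielandt fallback applied to $\mA$ gives the same per-index bound as Weyl, so it recovers nothing.
\item The same loss reappears in Step (iii). The triangle inequality on the block $\mA$ alone yields only $\lVert\mA(t)\rVert_F\ge\lVert\mA(0)\rVert_F-\sigma_{\min}/(4\sqrt d)$. You have no Frobenius movement bound of size $\delta$ for $\mA$, so the claim that Step (iii) is "consistent with a Frobenius movement bound of $\delta$" is unsupported.
\end{itemize}

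\textbf{The missing idea.} Do not pass to the block $\mA$ before applying the perturbation bounds. Apply Weyl's inequality and the reverse triangle inequality to $\bm{\Theta}$ itself, then use balancedness to transfer the result to $\mA$. Your Step (i) already shows balancedness is conserved, so at every time, including $t=0$, we have $\bm{\Theta}^\top\bm{\Theta}=\mA^\top\mA+\mB\mB^\top=2\mA^\top\mA$. This gives $\sigma_i(\bm{\Theta}(t))=\sqrt2\,\sigma_i(\mA(t))$ and $\lVert\bm{\Theta}(t)\rVert_F=\sqrt2\,\lVert\mA(t)\rVert_F$. Therefore
\[
|\sigma_i(\mA(t))-\sigma_i(\mA(0))|=\tfrac{1}{\sqrt2}\,|\sigma_i(\bm{\Theta}(t))-\sigma_i(\bm{\Theta}(0))|\le\tfrac{1}{\sqrt2}\,\lVert\bm{\Theta}(t)-\bm{\Theta}(0)\rVert_F\le\frac{\sigma_{\min}}{4\sqrt{2d}}.
\]
In the same way, $\lVert\mA(t)\rVert_F=\lVert\bm{\Theta}(t)\rVert_F/\sqrt2\ge\lVert\mA(0)\rVert_F-\frac{\sigma_{\min}}{4\sqrt{2d}}$ and $\lVert\mA(t)\rVert_2\le\lVert\mA(0)\rVert_2+\frac{\sigma_{\min}}{4\sqrt{2d}}$. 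These are exactly the two inequalities the stable-rank bound needs. This is the paper's argument: the $\sqrt2$ comes from sharing $\bm{\Theta}$'s movement between two balanced blocks at the level of spectra, not of displacements.

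\textbf{A minor point.} When $\lVert\mA(0)\rVert_F<\delta$, the bound does hold trivially. The reason is that the right-hand side is then strictly below $1\le{\rm srank}(\mA(t))$, not that it is a square.
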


\subsubsection{Proof of Theorem~\ref{thm: appendix LoP dxd}}
We begin the proof of the theorem by noting that the Jacobian $J(\cdot)$ is a Lipschitz function, as stated in the following lemma:
\begin{lemma}
    The Jacobian matrix $J(\mW)$, as defined in~\eqref{eqn: LoP dxd jacobian definition}, is $\sqrt{d}$-Lipschitz. Specifically, for any matrices $ \mW, \mV \in \mathbb{R}^{2d \times d}$, the following inequality holds:
    \begin{equation}
        \lVert J(\mW) - J(\mV) \rVert \leq \sqrt{d} \lVert \mathrm{vec}(\mW) - \mathrm{vec}(\mV) \rVert.
    \end{equation}
\end{lemma}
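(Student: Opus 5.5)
The Jacobian is linear in its argument, so the plan is to compute $J(\mW)-J(\mV)=J(\mW-\mV)$ explicitly and bound its operator norm through its Frobenius norm. By \eqref{eqn: LoP dxd jacobian definition}, the $n$-th row of $J(\mW)$ is $2\,\mathrm{vec}(\bar{\mX}_n\mW)^\top$. Setting $\Delta\triangleq\mW-\mV\in\mathbb{R}^{2d\times d}$, it therefore suffices to show
\begin{equation*}
\lVert J(\Delta)\rVert_2\le\lVert J(\Delta)\rVert_F=2\Big(\sum_{n=1}^N\lVert\bar{\mX}_n\Delta\rVert_F^2\Big)^{1/2}\le\sqrt{d}\,\lVert\Delta\rVert_F .
\end{equation*}

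\textbf{Structure of each term.} Write $\Delta=\begin{bmatrix}\Delta_1\\ \Delta_2\end{bmatrix}$ with $\Delta_1,\Delta_2\in\mathbb{R}^{d\times d}$. Since $\bar{\mX}_n=\frac12\begin{bmatrix}\mathbf{0}&\ve_{i_n}\ve_{j_n}^\top\\ \ve_{j_n}\ve_{i_n}^\top&\mathbf{0}\end{bmatrix}$, we get
\begin{equation*}
\bar{\mX}_n\Delta=\tfrac12\begin{bmatrix}\ve_{i_n}(\ve_{j_n}^\top\Delta_2)\\ \ve_{j_n}(\ve_{i_n}^\top\Delta_1)\end{bmatrix}.
\end{equation*}
Hence
\begin{equation*}
4\lVert\bar{\mX}_n\Delta\rVert_F^2=\lVert\text{row}_{j_n}(\Delta_2)\rVert^2+\lVert\text{row}_{i_n}(\Delta_1)\rVert^2 .
\end{equation*}

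\textbf{Summing over observations.} Summing over $n$, each row $i$ of $\Delta_1$ is counted once per observation with $i_n=i$, which is at most $d$ times since there are at most $d$ columns. Likewise, each row $j$ of $\Delta_2$ is counted at most $d$ times. This gives
\begin{equation*}
\lVert J(\Delta)\rVert_F^2=\sum_n 4\lVert\bar{\mX}_n\Delta\rVert_F^2\le d\big(\lVert\Delta_1\rVert_F^2+\lVert\Delta_2\rVert_F^2\big)=d\lVert\Delta\rVert_F^2 ,
\end{equation*}
and hence the claimed bound $\lVert J(\mW)-J(\mV)\rVert\le\sqrt d\,\lVert\mathrm{vec}(\mW)-\mathrm{vec}(\mV)\rVert$. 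The factor $2$ in the Jacobian definition and the $\tfrac12$ in $\bar{\mX}_n$ cancel exactly.

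\textbf{Main obstacle.} The only delicate point is the counting step. The bound requires that no row index appears more than $d$ times among the observations. This holds because observations are distinct entries of a $d\times d$ matrix, so $\Omega$ has no repeated pairs. I would state this assumption explicitly. If repeats were allowed, the constant would instead scale with the maximum multiplicity.
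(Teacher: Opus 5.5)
Your proof is correct and follows essentially the same route as the paper. Both use linearity to write $J(\mW)-J(\mV)=J(\mW-\mV)$, bound the operator norm by the Frobenius norm, and show that each row contributes $\lVert \text{row}_{i_n}(\Delta_1)\rVert^2+\lVert \text{row}_{j_n}(\Delta_2)\rVert^2$, with each row counted at most $d$ times; your explicit remark that this counting needs the observed entries to be distinct is exactly what the paper uses implicitly when it bounds by the full-observation Jacobian.
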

\begin{proof}
    Note that for each $n$-th observation,
    \begin{align*}
    J_n(\bm{\Theta}) &= 2\mathrm{vec}\left(\bar{\mX}_n \bm{\Theta}\right)^\top \\
    &= \mathrm{vec}\left(\begin{pmatrix}
    0 & \mX_n \\
    \mX_n^\top & 0
    \end{pmatrix} \begin{pmatrix}
    \mA \\ \mB^\top
    \end{pmatrix}\right)^\top \\
    &= \mathrm{vec}  \left( \begin{pmatrix}
    \mX_n \mB^\top \\ \mX_n^\top \mA
    \end{pmatrix}\right)^\top \in \mathbb{R}^{2d^2}.
    \end{align*}
    Let $\mM_l$ denote the $l$-th row of a matrix $\mM$, and let $\mM_{\cdot, l}$ denote its $l$-th column. 
    We have
    \begin{align*}
    \lVert J_n(\bm{\Theta})\rVert_F^2 &= \lVert \mX_n^\top \mA \rVert_F^2 +  \lVert \mX_n\mB^\top \rVert_F^2\\
    &=  \lVert \ve_{j_n} \ve_{i_n}^\top \mA \rVert_F  + \lVert \ve_{i_n} \ve_{j_n}^\top \mB^\top \rVert_F \\
    &= \lVert \mA_{i_n} \rVert_2^2+ \lVert \mB_{\cdot, j_n} \rVert_2^2.
    \end{align*}
    
    Now, suppose we observe all entries, i.e., $N = d^2$. Then for any fixed $n$, $i_n = i_m$ can be satisfied for all $m \in [d]$, meaning each element of $\mA$ is observed $d$ times. Similarly, each element of $\mB$ is also observed $d$ times.

    Therefore, we can upper bound the Frobenius norm of the Jacobian matrix by the Frobenius norm of the Jacobian under full observation:
    \begin{align*}
    \lVert J(\bm{\Theta}) \rVert_F^2 &\leq \sum_{n=1}^{d^2} \left( \lVert \mX_n^\top \mA \rVert_F^2 + \lVert \mX_n \mB^\top \rVert_F^2 \right) \\
    &= d \left( \lVert \mA \rVert_F^2 + \lVert \mB \rVert_F^2 \right) \\
    &= d \lVert \bm{\Theta} \rVert_F^2.
    \end{align*}

    By upper-bounding the spectral norm of the difference between two Jacobian matrices and applying the inequality above, we obtain:
    \begin{align*}
    \lVert J(\mW) - J(\mV) \rVert^2 &= \lVert J(\mW - \mV) \rVert^2 \\
    &\leq \lVert J(\mW - \mV) \rVert_F^2 \\
    &\leq d \lVert \mW - \mV \rVert_F^2,
    \end{align*}
    which concludes the proof.
\end{proof}

Next, we borrow a lemma from \citet{telgarsky2021deep}, which states that for a Lipschitz function $J$, if we consider a sufficiently small neighborhood around the initialization $\bm{\Theta}(0)$, then the singular values of the Jacobian $J(\bm{\Theta})$ remain close to those at initialization:
\begin{lemma}[Lemma 8.3 in \citet{telgarsky2021deep}]
    \label{lem: LoP dxd telgarsky lem 8.3}
    If we suppose $\lVert \mathrm{vec}(\bm{\Theta}) - \mathrm{vec}(\bm{\Theta}(0)) \rVert \leq \frac{\sigma_{\min}}{2\sqrt{d}}$, we have the following:
    \begin{equation*}
        \sigma_{\min}(J(\bm{\Theta})) \geq \frac{\sigma_{\min}}{2}, \quad \sigma_{\max}(J(\bm{\Theta})) \leq \frac{3 \sigma_{\max}}{2},
    \end{equation*}
    where we denote $\sigma_{\min} \triangleq \sigma_{\min}(J(\bm{\Theta}(0))$, and $\sigma_{\max} \triangleq \sigma_{\max}(J(\bm{\Theta}(0))$.
\end{lemma}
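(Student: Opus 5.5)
The statement is Lemma~\ref{lem: LoP dxd telgarsky lem 8.3}. The plan is to combine the $\sqrt{d}$-Lipschitz property of the Jacobian, proved just above, with Weyl's inequality for singular values. Write $\bm{\Theta}$ for any point in the ball $\lVert \mathrm{vec}(\bm{\Theta}) - \mathrm{vec}(\bm{\Theta}(0)) \rVert \le \frac{\sigma_{\min}}{2\sqrt{d}}$. By the preceding lemma, the perturbation is controlled in operator norm:
\begin{equation*}
\lVert J(\bm{\Theta}) - J(\bm{\Theta}(0)) \rVert \le \sqrt{d}\,\lVert \mathrm{vec}(\bm{\Theta}) - \mathrm{vec}(\bm{\Theta}(0)) \rVert \le \frac{\sigma_{\min}}{2}.
\end{equation*}
The key structural fact is that $J$ is linear in $\bm{\Theta}$, since $J_n(\bm{\Theta}) = 2\,\mathrm{vec}(\bar{\mX}_n \bm{\Theta})^\top$. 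This is why $J(\mW)-J(\mV)=J(\mW-\mV)$ and why the Lipschitz bound holds globally, with no smallness condition needed.

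Next, apply Weyl's perturbation inequality to the $N\times 2d^2$ matrices $J(\bm{\Theta})$ and $J(\bm{\Theta}(0))$. It gives $|\sigma_k(J(\bm{\Theta})) - \sigma_k(J(\bm{\Theta}(0)))| \le \lVert J(\bm{\Theta}) - J(\bm{\Theta}(0))\rVert$ for every index $k$. Here $\sigma_{\min}$ is understood as the $N$-th (smallest nonzero, assuming full row rank $N\le 2d^2$) singular value. Taking $k=N$ yields
\begin{equation*}
\sigma_{\min}(J(\bm{\Theta})) \ge \sigma_{\min} - \frac{\sigma_{\min}}{2} = \frac{\sigma_{\min}}{2}.
\end{equation*}
Taking $k=1$ yields $\sigma_{\max}(J(\bm{\Theta})) \le \sigma_{\max} + \frac{\sigma_{\min}}{2} \le \frac{3\sigma_{\max}}{2}$, using $\sigma_{\min}\le\sigma_{\max}$.

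The main point requiring care is making sure the singular value indexed as $\sigma_{\min}$ is the one Weyl's inequality controls. Since $J$ is a wide matrix, we interpret $\sigma_{\min}$ as $\sigma_N$, equivalently $\sqrt{\lambda_{\min}(JJ^\top)}$, and Weyl applies to it index by index. We implicitly assume $\sigma_{\min}>0$, i.e. $J(\bm{\Theta}(0))$ has full row rank; otherwise the statement is vacuous.

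An alternative to Weyl avoids index bookkeeping. For a unit vector $\vv\in\mathbb{R}^N$, use $\lVert J(\bm{\Theta})^\top \vv\rVert \ge \lVert J(\bm{\Theta}(0))^\top \vv\rVert - \lVert (J(\bm{\Theta})-J(\bm{\Theta}(0)))^\top \vv\rVert \ge \sigma_{\min}-\frac{\sigma_{\min}}{2}$. The analogous triangle inequality gives the upper bound on $\sigma_{\max}(J(\bm{\Theta}))$.
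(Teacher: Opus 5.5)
Your proposal is correct and matches the intended argument. The paper does not reprove this lemma; it imports it from \citet{telgarsky2021deep}, having proved the $\sqrt{d}$-Lipschitz property of $J$ immediately beforehand for exactly this purpose. Combining that bound with Weyl's inequality, or equivalently with the triangle inequality on $\lVert J^\top \vv\rVert$, is precisely the standard proof, and your reading of $\sigma_{\min}$ as $\sigma_N$ for the wide $N \times 2d^2$ Jacobian is the right one.
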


For simplicity, we denote $\bm{\theta}$ as the vectorized version of $\bm{\Theta}$, i.e., $\bm{\theta} \triangleq \mathrm{vec}(\bm{\Theta})$. We define the time step $\tau$, which is the first time step when the trajectory of $\bm{\theta}(t)$ touches the boundary:
\begin{equation*}
    \tau \triangleq \inf_{t \geq 0} \left\{ t \mid \lVert \bm{\theta}(t) - \bm{\theta}(0) \rVert \geq \frac{\sigma_{\min}}{2\sqrt{d}} \right\}.
\end{equation*}

We now demonstrate the convergence of the loss when $t \in [0, \tau]$ using the following lemma.

\begin{lemma}
For all $t \in [0, \tau]$, the loss defined in~\eqref{eqn: LoP dxd loss definition} converges as follows:
\begin{equation*}
    \tilde{\gL}(\bm{\Theta}(t)) \leq \tilde{\gL}(\bm{\Theta}(0)) \exp\left(-\frac{1}{2} \sigma_{\min}^2 t \right),
\end{equation*}
where we define $\sigma_{\min} \triangleq \sigma_{\min}(J(\bm{\Theta}(0)))$.
\label{lem: LoP dxd loss convergence tau}
\end{lemma}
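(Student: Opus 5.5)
We compute the time derivative of the loss along the flow and lower bound it using the Jacobian's smallest singular value on $[0,\tau]$. Since the flow is $\dot{\bm{\theta}}(t) = -\nabla_{\bm\theta}\tilde{\gL} = -J(\bm{\Theta}(t))^\top (F(\bm{\Theta}(t)) - \vy)$, equation~\eqref{eqn: dxd LoP loss derivative} gives
\begin{equation*}
    \dot{\tilde{\gL}}(\bm{\Theta}(t)) = -\left\lVert J(\bm{\Theta}(t))^\top \left(F(\bm{\Theta}(t)) - \vy\right)\right\rVert_2^2 .
\end{equation*}
We must be careful here about the factor of $2$ and the $\frac12$ in $\bar{\mX}_n$. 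The gradient of $\tilde{\gL}$ with respect to ${\rm vec}(\bm{\Theta})$ is exactly $J^\top(F-\vy)$ by the chain rule, and $J$ is defined as $\partial F/\partial {\rm vec}(\bm{\Theta})$. The identity therefore holds without extra constants once we note that the last line of \eqref{eqn: dxd LoP loss derivative} already absorbed the factor $2$ into $J$.

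Next, for $t \in [0,\tau]$ the trajectory satisfies $\lVert \bm{\theta}(t) - \bm{\theta}(0)\rVert \le \frac{\sigma_{\min}}{2\sqrt d}$ by definition of $\tau$ and continuity. Lemma~\ref{lem: LoP dxd telgarsky lem 8.3} then gives $\sigma_{\min}(J(\bm{\Theta}(t))) \ge \sigma_{\min}/2$. The step needing some care is that $\sigma_{\min}$ here must be the smallest singular value of $J^\top$ acting on $\mathbb{R}^N$, i.e.\ we need $J$ to have full row rank $N$. We interpret $\sigma_{\min}$ as the $N$-th singular value (implicitly assumed positive in the theorem), so that $\lVert J^\top \vv\rVert_2 \ge \sigma_{\min}(J)\lVert \vv\rVert_2$ for all $\vv\in\mathbb{R}^N$. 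Applying this with $\vv = F - \vy$ gives
\begin{equation*}
    \dot{\tilde{\gL}}(\bm{\Theta}(t)) \le -\frac{\sigma_{\min}^2}{4}\lVert F(\bm{\Theta}(t))-\vy\rVert_2^2 = -\frac{\sigma_{\min}^2}{2}\tilde{\gL}(\bm{\Theta}(t)).
\end{equation*}

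Finally, Grönwall's inequality on $[0,\tau]$ (equivalently, $\frac{d}{dt}\bigl(e^{\sigma_{\min}^2 t/2}\tilde{\gL}\bigr)\le 0$) yields $\tilde{\gL}(\bm{\Theta}(t)) \le \tilde{\gL}(\bm{\Theta}(0))\exp(-\frac12\sigma_{\min}^2 t)$. If $\tau=\infty$ the same argument covers all $t\ge0$.
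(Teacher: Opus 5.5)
Your proposal is correct and follows essentially the same route as the paper. You write $\dot{\tilde{\gL}} = -(F-\vy)^\top J J^\top (F-\vy)$, use Lemma~\ref{lem: LoP dxd telgarsky lem 8.3} on $[0,\tau]$ to get $\lambda_{\min}(JJ^\top)\ge \sigma_{\min}^2/4$ and hence $\dot{\tilde{\gL}} \le -\frac12\sigma_{\min}^2\tilde{\gL}$, and then apply Gr\"onwall. Your remark that $\sigma_{\min}$ must be read as the $N$-th singular value of $J$ (so that $\lVert J^\top \vv\rVert \ge \sigma_{\min}(J)\lVert\vv\rVert$) is exactly the interpretation the paper uses implicitly through $\lambda_{\min}(JJ^\top)$.
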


\begin{proof}
Recall that the time derivative of the loss can be written as follows, according to~\eqref{eqn: dxd LoP loss derivative}:
\begin{align*}
    \dot{\tilde{\gL}}(\bm{\Theta}(t)) &= -\left( F(\bm{\Theta}(t)) - \vy \right)^\top J(\bm{\Theta}(t)) \, \dot{\bm{\theta}}(t) \\
    &= -\left( F(\bm{\Theta}(t)) - \vy \right)^\top J(\bm{\Theta}(t)) J(\bm{\Theta}(t))^\top \left( F(\bm{\Theta}(t)) - \vy \right),
\end{align*}
noting that 
\begin{equation*}
    \dot{\bm{\theta}}(t) = -\nabla_{\bm{\theta}(t)} \tilde{\gL}(\bm{\Theta}(t)) = -J(\bm{\Theta}(t))^\top  (F(\bm{\Theta}(t)) - \vy).
\end{equation*}

By Lemma~\ref{lem: LoP dxd telgarsky lem 8.3}, for any $t \in [0, \tau]$, we can upper bound the above term as follows: 
\begin{align*}
    \dot{\tilde{\gL}}(\bm{\Theta}(t)) &\leq -\lambda_{\min}\left( J(\bm{\Theta}(t)) J(\bm{\Theta}(t))^\top \right) \lVert F(\bm{\Theta}(t)) - \vy \rVert^2 \\
    &\leq -\frac{1}{2} \sigma_{\min}^2 \tilde{\gL}(\bm{\Theta}(t)).
\end{align*}
Applying Gr\"{o}nwall’s inequality gives:
\begin{equation*}
    \tilde{\gL}(\bm{\Theta}(t)) \leq \tilde{\gL}(\bm{\Theta}(0)) \exp\left(-\frac{1}{2} \sigma_{\min}^2 t \right) \quad \text{for} \, t \in [0, \tau].
\end{equation*}
\end{proof}

The above lemma shows that the loss decays rapidly to zero if $\bm{\theta}(t)$ stays within a small neighborhood around the initialization. We now show that if the loss converges quickly near initialization, then $\bm{\theta}(t)$ does not move far from its initial value:

\begin{lemma}
Let $\sigma_{\min} \triangleq \sigma_{\min}(J(\bm{\Theta}(0)))$ and $\sigma_{\max} \triangleq \sigma_{\max}(J(\bm{\Theta}(0)))$. For all $t \in [0, \tau]$, the distance between the weight vector at time $t$ and the initial weight vector is bounded by:
\begin{equation*}
    \lVert \bm{\theta}(t) - \bm{\theta}(0) \rVert \leq \frac{6\sqrt{2} \sigma_{\max}}{\sigma_{\min}^2} \sqrt{\tilde{\gL}(\bm{\Theta}(0))}.
\end{equation*}
\label{lem: LoP dxd theta(t) - theta(0) upper bounded}
\end{lemma}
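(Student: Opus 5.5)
We bound the displacement by integrating the speed of the gradient flow and using the exponential loss decay from Lemma~\ref{lem: LoP dxd loss convergence tau}. For any $t \in [0,\tau]$ we write
\begin{equation*}
\lVert \bm{\theta}(t) - \bm{\theta}(0) \rVert \leq \int_0^t \lVert \dot{\bm{\theta}}(s) \rVert \, ds = \int_0^t \big\lVert J(\bm{\Theta}(s))^\top \left(F(\bm{\Theta}(s)) - \vy\right) \big\rVert \, ds.
\end{equation*}
The first step bounds the integrand pointwise. For $s \le \tau$ the trajectory lies in the ball of radius $\frac{\sigma_{\min}}{2\sqrt d}$ around $\bm{\theta}(0)$ (this holds up to and including $\tau$ by continuity). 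There, Lemma~\ref{lem: LoP dxd telgarsky lem 8.3} gives $\sigma_{\max}(J(\bm{\Theta}(s))) \le \frac{3\sigma_{\max}}{2}$. Hence
\begin{equation*}
\lVert \dot{\bm{\theta}}(s) \rVert \le \tfrac{3\sigma_{\max}}{2}\lVert F(\bm{\Theta}(s)) - \vy\rVert = \tfrac{3\sigma_{\max}}{2}\sqrt{2\tilde{\gL}(\bm{\Theta}(s))}.
\end{equation*}

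The second step inserts Lemma~\ref{lem: LoP dxd loss convergence tau}, which is valid on the same interval $[0,\tau]$:
\begin{equation*}
\sqrt{\tilde{\gL}(\bm{\Theta}(s))} \le \sqrt{\tilde{\gL}(\bm{\Theta}(0))}\, e^{-\sigma_{\min}^2 s/4}.
\end{equation*}
Integrating from $0$ to $t$ and extending the upper limit to $\infty$ gives $\int_0^\infty e^{-\sigma_{\min}^2 s/4}\,ds = 4/\sigma_{\min}^2$. The resulting bound is
\begin{equation*}
\lVert \bm{\theta}(t)-\bm{\theta}(0)\rVert \le \tfrac{3\sigma_{\max}}{2}\cdot\sqrt2\cdot\tfrac{4}{\sigma_{\min}^2}\sqrt{\tilde{\gL}(\bm{\Theta}(0))} = \tfrac{6\sqrt2\,\sigma_{\max}}{\sigma_{\min}^2}\sqrt{\tilde{\gL}(\bm{\Theta}(0))},
\end{equation*}
which is exactly the claimed constant.

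There is no serious obstacle; the only care needed is to check that both auxiliary lemmas apply on the whole closed interval $[0,\tau]$, including the endpoint. If $\tau=\infty$ the argument holds for all $t$ directly. The companion step is to show that this bound is strictly less than $\frac{\sigma_{\min}}{2\sqrt d}$ under the hypothesis $\tilde{\gL}(\bm{\Theta}(0)) \le \frac{\sigma_{\min}^6}{1152 d \sigma_{\max}^2}$. Indeed, the hypothesis gives $\frac{6\sqrt2\sigma_{\max}}{\sigma_{\min}^2}\sqrt{\tilde{\gL}(\bm{\Theta}(0))} \le \frac{\sigma_{\min}}{4\sqrt d}$. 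That step forces $\tau=\infty$ and yields Theorem~\ref{thm: appendix LoP dxd}, but it belongs to the subsequent argument rather than to this lemma.
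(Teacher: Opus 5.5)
Your proof is correct and follows the paper's argument step for step. You bound $\lVert \dot{\bm{\theta}}(s)\rVert$ by $\tfrac{3}{2}\sigma_{\max}\lVert F(\bm{\Theta}(s)) - \vy\rVert$ via Lemma~\ref{lem: LoP dxd telgarsky lem 8.3}, insert the $e^{-\sigma_{\min}^2 s/4}$ decay from Lemma~\ref{lem: LoP dxd loss convergence tau}, and integrate to pick up the factor $4/\sigma_{\min}^2$. Your version is also slightly more careful than the paper's: you write $\lVert \int_0^t \dot{\bm{\theta}}\,ds\rVert \le \int_0^t \lVert \dot{\bm{\theta}}\rVert\,ds$ where the paper writes an equality, and you explicitly check that both auxiliary lemmas hold at the endpoint $t=\tau$.
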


\begin{proof}
We start by evaluating the distance between $\bm{\theta}(t)$ and $\bm{\theta}(0)$ using Lemma~\ref{lem: LoP dxd telgarsky lem 8.3}:
\begin{align*}
    \lVert \bm{\theta}(t) - \bm{\theta}(0) \rVert &= \left\lVert \int_0^t \dot{\bm{\theta}}(s) \, \mathrm{d}s \right\rVert \\
    &= \int_0^t \left\lVert J(\bm{\Theta}(s))^\top \left( F(\bm{\Theta}(s)) - \vy \right) \right\rVert \, \mathrm{d}s \\
    &\leq \int_0^t \sigma_{\max}(J(\bm{\Theta}(s))) \left\lVert F(\bm{\Theta}(s)) - \vy \right\rVert \, \mathrm{d}s \\
    &\leq \frac{3}{2} \sigma_{\max} \int_0^t \left\lVert F(\bm{\Theta}(s)) - \vy \right\rVert \, \mathrm{d}s.
\end{align*}

By Lemma~\ref{lem: LoP dxd loss convergence tau}, we know that the objective function $\tilde{\gL}(\bm{\Theta})$ satisfies:
\begin{equation*}
    \lVert F(\bm{\Theta}(t)) - \vy \rVert^2 \leq \lVert F(\bm{\Theta}(0)) - \vy \rVert^2 \exp\left(-\frac{1}{2} \sigma_{\min}^2 t \right).
\end{equation*}
Taking the square root of both sides, we obtain:
\begin{equation*}
    \lVert F(\bm{\Theta}(t)) - \vy \rVert \leq \lVert F(\bm{\Theta}(0)) - \vy \rVert \exp\left(-\frac{1}{4} \sigma_{\min}^2 t \right).
\end{equation*}
Substituting this into the previous inequality:
\begin{align*}
    \lVert \bm{\theta}(t) - \bm{\theta}(0) \rVert &\leq \frac{3}{2} \sigma_{\max} \lVert F(\bm{\Theta}(0)) - \vy \rVert \int_0^t \exp\left( -\frac{1}{4} \sigma_{\min}^2 s \right) \, \mathrm{d}s \\
    &\leq \frac{6 \sigma_{\max}}{\sigma_{\min}^2} \lVert F(\bm{\Theta}(0)) - \vy \rVert,
\end{align*}
where we used the fact that:
\begin{equation*}
    \int_0^t \exp(-Cs) \, \mathrm{d}s \leq \frac{1}{C}, \quad \text{for} \, C > 0.
\end{equation*}
\end{proof}

By combining Lemmas~\ref{lem: LoP dxd loss convergence tau} and~\ref{lem: LoP dxd theta(t) - theta(0) upper bounded}, we obtain the following results:
\begin{align}
    \tilde{\gL}(\bm{\Theta}(t)) &\leq \tilde{\gL}(\bm{\Theta}(0)) \exp\left(-\frac{1}{2} \sigma_{\min}^2 t \right),
    \\ \lVert \bm{\theta}(t) - \bm{\theta}(0) \rVert &\leq \frac{6\sqrt{2} \sigma_{\max}}{\sigma_{\min}^2} \sqrt{\tilde{\gL}(\bm{\Theta}(0))}, \label{eqn: LoP dxd theta upper bound temp}
\end{align}
which hold for $t \in [0, \tau]$. If we can demonstrate that $\tau = \infty$, the proof is complete.

Actually, if we initialize $ \bm{\Theta}(0)$ to satisfy the condition:
\begin{equation*}
    \tilde{\gL}(\bm{\Theta}(0)) \leq \frac{\sigma_{\min}^6}{1152d\sigma_{\max}^2},
\end{equation*}
and substitute this condition into~\eqref{eqn: LoP dxd theta upper bound temp}, we obtain an upper bound for $ \lVert \bm{\theta}(t) - \bm{\theta}(0) \rVert$:
\begin{equation*}
    \lVert \bm{\theta}(t) - \bm{\theta}(0) \rVert \leq \frac{6\sqrt{2} \sigma_{\max}}{\sigma_{\min}^2} \frac{\sigma_{\min}^3}{\sqrt{1152d}\sigma_{\max}} = \frac{\sigma_{\min}}{4\sqrt{d}}.
\end{equation*}

Recall the definition of $ \tau$, which is the first time when $ \bm{\theta}(t)$ touches the boundary of the small ball around the initialization:
\begin{equation*}
    \tau \triangleq \inf_{t \geq 0} \left\{ t \mid \lVert \bm{\theta}(t) - \bm{\theta}(0) \rVert \geq \frac{\sigma_{\min}}{2\sqrt{d}} \right\}.
\end{equation*}

However, with the condition $ \tilde{\gL}(\bm{\Theta}(0)) \leq \frac{\sigma_{\min}^6}{1152d\sigma_{\max}^2}$, $ \bm{\theta}(t)$ cannot ever touch the boundary. This is because $ \lVert \bm{\theta}(t) - \bm{\theta}(0) \rVert$ is bounded above by $ \frac{\sigma_{\min}}{4\sqrt{d}}$, which is strictly less than $ \frac{\sigma_{\min}}{2\sqrt{d}}$. Therefore, the parameter will remain inside the ball indefinitely, meaning $ \tau = \infty$. This completes the proof of the theorem.

\subsubsection{Proof of Corollary~\ref{cor: appendix corollary}}
First, we establish the equality $\sigma_i(\mA(t)) = \sigma_i(\mB(t))$ for all $i \in [d]$. Corollary~\ref{cor: appendix corollary} assumes that $\mA(0)$ and $\mB(0)$ are initialized as ``balanced'', satisfying $\mA(0)^\top \mA(0) = \mB(0)\mB(0)^\top$. By Lemma~\ref{lem: sing square}, this balanced condition ensures that the singular values of $\mA(t)$ and $\mB(t)$ remain identical for all $t \geq 0$:
\begin{equation*}
    \sigma_i(\mA(t)) = \sigma_i(\mB(t)).
\end{equation*}

Second, we address the change in the singular values of a combined parameter matrix $\bm{\Theta}(t)$ (related to $\mA(t)$ and $\mB(t)$). Theorem~\ref{thm: appendix LoP dxd} states that under a specified condition on the initial loss, $\tilde{\gL}(\bm{\Theta}(0)) \leq \frac{\sigma_{\min}^6}{1152d\sigma_{\max}^2}$, the deviation of $\bm{\Theta}(t)$ from its initialization $\bm{\Theta}(0)$ is bounded for all $t \geq 0$ by:
\begin{equation*}
    \lVert \bm{\Theta}(t) - \bm{\Theta}(0) \rVert_F \leq \frac{\sigma_{\min}}{4\sqrt{d}}.
\end{equation*}
Let $K = \frac{\sigma_{\min}}{4\sqrt{d}}$. By Weyl's inequality, $|\sigma_i(\mX) - \sigma_i(\mY)| \leq \lVert\mX-\mY\rVert_2$, and noting that $\lVert \cdot \rVert_2 \leq \lVert \cdot \rVert_F$, we have for all $i \in [d]$:
\begin{align*}
    \left \lvert \sigma_i(\bm{\Theta}(t)) - \sigma_i(\bm{\Theta}(0)) \right\rvert &\leq \lVert \bm{\Theta}(t) - \bm{\Theta}(0) \rVert_2 \\
    &\leq \lVert \bm{\Theta}(t) - \bm{\Theta}(0) \rVert_F \\
    &\leq K.
\end{align*}
This inequality allows us to establish bounds for $\lVert \bm{\Theta}(t)\rVert_F$ (using reverse triangle inequality) and its largest singular value $\sigma_1(\bm{\Theta}(t)) = \lVert \bm{\Theta}(t)\rVert_2$:
\begin{align*}
    \lVert \bm{\Theta}(t) \rVert_F &\ge \lVert \bm{\Theta}(0) \rVert_F - K, \\
    \sigma_1(\bm{\Theta}(t)) &\le \sigma_1(\bm{\Theta}(0)) + K.
\end{align*}
This yields the following lower bound on the stable rank of $\bm{\Theta}(t)$:
\begin{equation*}
    \frac{\lVert \bm{\Theta}(t) \rVert_F^2}{\lVert \bm{\Theta}(t) \rVert_2^2} \geq \left( \frac{ \lVert \bm{\Theta}(0) \rVert_F - K}{\sigma_1(\bm{\Theta}(0)) + K} \right)^2 = \left( \frac{ \lVert \bm{\Theta}(0) \rVert_F - \frac{\sigma_{\min}}{4\sqrt{d}}}{\lVert \bm{\Theta}(0) \rVert_2 + \frac{\sigma_{\min}}{4\sqrt{d}}} \right)^2.
\end{equation*}

Furthermore, the balancedness condition implies $\mA(t)^\top \mA(t) = \mB(t)\mB(t)^\top$. By the definition of $\bm{\Theta}(t)$, $\bm{\Theta}(t)^\top \bm{\Theta}(t) = \mA(t)^\top \mA(t) + \mB(t) \mB(t)^\top$, this leads to $\bm{\Theta}(t)^\top \bm{\Theta}(t) = 2\mA(t)^\top \mA(t)$. This relationship implies $\sigma_i(\bm{\Theta}(t)) = \sqrt{2}\sigma_i(\mA(t))$ for all $i$.
Substituting this into the bounds for $\bm{\Theta}(t)$, we have 
\begin{align*}
    \lVert {\mA(t)} \rVert_F &\geq \lVert \mA(0) \rVert_F - K/\sqrt{2}, \\
    \lVert \mA(t)\rVert_2 &\leq \lVert \mA(0) \rVert_2 + K/\sqrt{2}.
\end{align*}

This leads to the final lower bound on the stable rank of $\mA(t)$ (which, by balancedness, is equal to that of $\mB(t)$):
\begin{equation*}
    \frac{\lVert \mA(t) \rVert_F^2}{\lVert \mA(t) \rVert_2^2} \geq \left( \frac{\lVert \mA(0) \rVert_F - K/\sqrt{2}}{\lVert \mA(0) \rVert_2 + K/\sqrt{2}} \right)^2 = \left( \frac{ \lVert \mA(0) \rVert_F - \frac{\sigma_{\min}}{4\sqrt{2d}}}{\lVert \mA(0) \rVert_2 + \frac{\sigma_{\min}}{4\sqrt{2d}}} \right)^2.
\end{equation*}

\newpage
\section{Useful Lemmas}
\begin{lemma}[Adaptation of Lemma 1 and Theorem 3 in \cite{arora2019implicit}] For any time $t$,
    the product matrix $\mW(t) \in \mathbb{R}^{d,d}$ can be decomposed into its singular value decomposition:
    \begin{equation*}
       \mW(t) = \sum_{r=1}^d \sigma_r(t) \vu_r(t) \vv_r(t)^\top
    \end{equation*}
    where $\sigma_r(t)$ are the singular values of $\mW(t)$, and $\vu_r(t)$, $\vv_r(t)$ are the corresponding left and right singular vectors, respectively. Moreover, if $\mA, \mB$ are balanced at initialization, i.e., 
    \begin{equation*}
        \mA^\top(0)\mA(0) = \mB(0)\mB^\top(0),
    \end{equation*}
    the time evolution of the singular values $\sigma_r(t)$ is represented as:
    \begin{equation}
       \dot{\sigma_r}(t) = -2 \cdot \sigma_r(t) \cdot \left\langle \nabla \ell(\mW(t)), \vu_r(t) \vv_r(t)^\top \right\rangle , \quad r=1, \ldots , d
    \end{equation}
    \label{lem: arora}
\end{lemma}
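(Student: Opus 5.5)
The plan is to derive the end-to-end dynamics of the product $\mW = \mA\mB$ under gradient flow, use balancedness to rewrite that dynamics purely in terms of $\mW$, and then project onto each singular pair, as in Lemma~1 and Theorem~3 of \citet{arora2019implicit}.

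\textbf{Step 1: balancedness is conserved.} Gradient flow gives $\dot{\mA} = -\nabla\ell(\mW)\mB^\top$ and $\dot{\mB} = -\mA^\top\nabla\ell(\mW)$. Differentiating, we get
\begin{equation*}
\tfrac{d}{dt}\big(\mA^\top\mA - \mB\mB^\top\big) = -\mA^\top\nabla\ell\,\mB^\top - \mB\nabla\ell^\top\mA + \mA^\top\nabla\ell\,\mB^\top + \mB\nabla\ell^\top\mA = \mathbf{0}.
\end{equation*}
Hence the balanced condition $\mA^\top(t)\mA(t) = \mB(t)\mB^\top(t)$ holds for all $t$; this is the conservation law of \citet{pmlr-v80-arora18a}.

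\textbf{Step 2: the end-to-end ODE.} Differentiating the product gives
\begin{equation*}
\dot{\mW} = \dot{\mA}\mB + \mA\dot{\mB} = -\nabla\ell(\mW)\,\mB^\top\mB - \mA\mA^\top\nabla\ell(\mW).
\end{equation*}
Balancedness yields $\mW\mW^\top = \mA\mB\mB^\top\mA^\top = \mA\mA^\top\mA\mA^\top = (\mA\mA^\top)^2$. Symmetrically, $\mW^\top\mW = \mB^\top\mA^\top\mA\mB = (\mB^\top\mB)^2$. Taking positive semidefinite square roots, $\mA\mA^\top = (\mW\mW^\top)^{1/2}$ and $\mB^\top\mB = (\mW^\top\mW)^{1/2}$. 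Therefore
\begin{equation*}
\dot{\mW} = -(\mW\mW^\top)^{1/2}\nabla\ell(\mW) - \nabla\ell(\mW)(\mW^\top\mW)^{1/2}.
\end{equation*}

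\textbf{Step 3: projection onto singular pairs.} Assume, for now, a differentiable SVD $\mW(t) = \sum_r \sigma_r(t)\vu_r(t)\vv_r(t)^\top$. Because $\vu_r$ and $\vv_r$ are unit vectors and $\mW\vv_r = \sigma_r\vu_r$, the derivative terms $\dot{\vu}_r^\top\mW\vv_r = \sigma_r\,\dot{\vu}_r^\top\vu_r$ and $\vu_r^\top\mW\dot{\vv}_r = \sigma_r\,\vv_r^\top\dot{\vv}_r$ vanish. This leaves $\dot\sigma_r = \vu_r^\top\dot{\mW}\vv_r$. Next we use the eigen-relations of the square roots:
\begin{equation*}
\vu_r^\top(\mW\mW^\top)^{1/2} = \sigma_r\vu_r^\top, \qquad (\mW^\top\mW)^{1/2}\vv_r = \sigma_r\vv_r.
\end{equation*}
Substituting these into the end-to-end ODE gives $\dot\sigma_r = -2\sigma_r\,\vu_r^\top\nabla\ell(\mW)\vv_r = -2\sigma_r\langle\nabla\ell(\mW),\vu_r\vv_r^\top\rangle$, which is the claim. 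This is the $N=2$ case of the general formula $-N\sigma_r^{2-2/N}\langle\nabla\ell,\vu_r\vv_r^\top\rangle$.

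\textbf{Main obstacle: justifying the differentiable SVD.} The plan follows Lemma~1 of \citet{arora2019implicit}. The loss is polynomial in the parameters, so the gradient flow trajectory is real-analytic in $t$, and hence so is $\mW(t)$. An analytic matrix path admits an analytic SVD, with $\sigma_r(t)$ possibly signed and possibly crossing (Bunse-Gerstner et al.). The computation in Step 3 then holds for this analytic decomposition at every $t$. Crossings or sign flips only permute or re-sign the pairs, which leaves $\sigma_r\vu_r\vv_r^\top$ unchanged, so the formula holds for the ordered nonnegative singular values wherever they are differentiable.
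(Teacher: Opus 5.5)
Your proposal is correct and matches the argument the paper relies on. The paper states this lemma without proof and defers to Lemma~1 and Theorem~3 of \citet{arora2019implicit}, whose proof is your chain: conserved balancedness, the end-to-end dynamics $\dot{\mW} = -(\mW\mW^\top)^{1/2}\nabla\ell(\mW) - \nabla\ell(\mW)(\mW^\top\mW)^{1/2}$, and the projection $\dot\sigma_r = \vu_r^\top\dot{\mW}\vv_r$ along an analytic SVD.

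One small precision in your last paragraph. For the signed analytic SVD, the square-root eigen-relations give $|\sigma_r|$ rather than $\sigma_r$, so each branch satisfies $\dot\sigma_r = -2|\sigma_r|\langle\nabla\ell(\mW),\vu_r\vv_r^\top\rangle$; this is the $N=2$ case of the factor $(\sigma_r^2)^{1-1/N}$ in \citet{arora2019implicit}. It is this form, not the one with $\sigma_r$, that makes your re-signing step give the stated formula for the nonnegative singular values. Using the $\sigma_r$ form on a negative branch and then re-signing would flip the sign.
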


\begin{lemma}
For any real-valued square matrix $\mA \in \mathbb{R}^{d \times d}$, the absolute value of its determinant equals the product of its singular values:
\begin{align*}
\left|\det(\mA)\right| = \prod_{r=1}^d \sigma_r
\end{align*}
where $\sigma_r$ are the singular values of $\mA$.
\label{lem: det singval}
\end{lemma}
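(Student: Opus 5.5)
We prove $|\det(\mA)|=\prod_{r=1}^d\sigma_r$ directly from the singular value decomposition, combined with multiplicativity of the determinant and the fact that orthogonal matrices have determinant $\pm1$.

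Write $\mA=\mU\bm{\Sigma}\mV^\top$, where $\mU,\mV\in\mathbb{R}^{d\times d}$ are orthogonal and $\bm{\Sigma}={\rm diag}(\sigma_1,\dots,\sigma_d)$ with $\sigma_r\ge 0$. By multiplicativity of the determinant,
\[
\det(\mA)=\det(\mU)\,\det(\bm{\Sigma})\,\det(\mV^\top)=\det(\mU)\det(\mV)\prod_{r=1}^d\sigma_r ,
\]
using $\det(\mV^\top)=\det(\mV)$.

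For any orthogonal $\mQ$ we have $\mQ^\top\mQ=\mI_d$, so $\det(\mQ)^2=\det(\mQ^\top)\det(\mQ)=\det(\mI_d)=1$, hence $|\det(\mQ)|=1$. Applying this to $\mU$ and $\mV$ and taking absolute values gives $|\det(\mA)|=\prod_r|\sigma_r|=\prod_r\sigma_r$, since each $\sigma_r$ is nonnegative.

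A second route avoids the SVD: $\det(\mA)^2=\det(\mA^\top\mA)$, and $\mA^\top\mA$ is symmetric positive semidefinite with eigenvalues $\sigma_r^2$, so its determinant is $\prod_r\sigma_r^2$. Taking nonnegative square roots gives the same identity.

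The only step that needs any care is $|\det(\mQ)|=1$ for orthogonal $\mQ$. It follows immediately from $\mQ^\top\mQ=\mI_d$, so this is a routine identity with no real obstacle. Singular matrices need no separate treatment: in that case both sides are zero.
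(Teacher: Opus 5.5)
Your proposal is correct and takes the same route as the paper's proof: SVD $\mA=\mU\bm{\Sigma}\mV^\top$, multiplicativity of the determinant, and $|\det|=1$ for the orthogonal factors. You also derive $|\det(\mQ)|=1$ from $\mQ^\top\mQ=\mI_d$, which the paper only asserts, and your alternative argument via $\det(\mA)^2=\det(\mA^\top\mA)$ is valid as well.
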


\begin{proof}
We express $\mA$ using SVD: $\mA = \mU\mathbf{\Sigma}\mV^\top$. Applying the determinant to both sides, we get:
\begin{align*}
\det(\mA) &= \det(\mU\mathbf{\Sigma}\mV^\top) \\
                 &= \det(\mU) \det(\mathbf{\Sigma}) \det(\mV^\top)
\end{align*}
Here, $\mU$ and $\mV$ have orthonormal columns, and $\mathbf{\Sigma}$ is diagonal with singular values along its main diagonal. 
Since the determinant of an orthonormal matrix is either $\pm 1$,
\begin{align*}
\left|\det(\mA)\right| = \det(\mathbf{\Sigma}) = \prod_{r=1}^d \sigma_r.
\end{align*}
\end{proof}

\begin{lemma}[Determinant of $\mA(t)$]
    Consider a matrix $\mA(t) \in \mathbb{R}^{d,d}$ initialized as $\det(\mA(0)) > 0$. Then, $\det(\mA(t)) > 0$ for all $t \geq 0$.
    \label{lem: det A}
\end{lemma}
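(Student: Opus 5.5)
The plan is to reduce the statement to a scalar ODE via Jacobi's formula for the derivative of a determinant. Since $\det$ is differentiable with $\frac{d}{dt}\det(\mA(t)) = \det(\mA(t))\,\Tr\!\left(\mA(t)^{-1}\dot{\mA}(t)\right)$ whenever $\mA(t)$ is invertible, I would instead use the adjugate form $\frac{d}{dt}\det(\mA(t)) = \Tr\!\left(\mathrm{adj}(\mA(t))\dot{\mA}(t)\right)$, which holds everywhere. In the gradient-flow settings where the lemma is invoked, $\dot{\mA}(t)$ always has a factored form. For depth 2, $\dot{\mA}=-\nabla\ell\,\mB^\top$. For the deep block setting, $\dot{\mW}_l = -(\prod_{i>l}\mW_i^\top)\nabla\ell(\prod_{i<l}\mW_i^\top)$, which by Lemma~\ref{lem: same W} becomes $-\nabla\ell\,\mU^{L-1}$, a product with a right factor involving $\mA$ itself. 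So on the open set where $\det\mA\neq 0$ we get $\dot{(\det \mA)} = g(t)\det\mA$ with $g(t) = \Tr(\mA^{-1}\dot{\mA})$.

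The key steps are as follows. First, I would argue by contradiction: let $\tau=\inf\{t\geq 0:\det\mA(t)\le 0\}$, and suppose $\tau<\infty$. By continuity, $\det\mA(\tau)=0$ and $\det\mA>0$ on $[0,\tau)$. Second, on $[0,\tau)$ I write $\det\mA(t)=\det\mA(0)\exp\!\left(\int_0^t g(s)\,ds\right)$, so everything hinges on showing $g$ stays bounded (or at least integrable) as $t\uparrow\tau$. In the deep commuting case this is clean: $\mA^{-1}\dot{\mA}=-\nabla\ell\,\mU^{L-2}$ involves no inverse, and the right-hand side is continuous on $[0,\tau]$. Hence $\int_0^\tau g$ is finite and $\det\mA(\tau)>0$, a contradiction. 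Equivalently, the eigenvalue ODEs of Lemma~\ref{lem: eigenvalue dynamics} have the form $\dot\lambda_i=-\gamma_i\lambda_i^{L-1}$, and $\lambda_i\equiv 0$ is an invariant solution, so by uniqueness a positive eigenvalue can never reach $0$.

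Third, for the depth-2 post-training case, I would use balancedness, which is preserved under the flow: $\mA^\top\mA=\mB\mB^\top$, so $\mB^\top = \mQ\mA$-type relations let $\Tr(\mA^{-1}\nabla\ell\,\mB^\top)$ be rewritten via $\mB^\top\mA^{-1}$. Balancedness gives $|\det\mB|=|\det\mA|$, and in the symmetric $2\times 2$ structure of Lemma~\ref{lem: LoP full same element}, $\mB$ is $\mA$ with its rows and columns permuted, so $\det\mB=\det\mA$. Then $\frac{d}{dt}\det\mA=-\Tr(\mathrm{adj}(\mA)\nabla\ell\,\mB^\top)$, and because $\mathrm{adj}(\mA)$ and $\mB^\top$ both vanish to matching order, I would show directly that $\frac{d}{dt}\det\mA = h(t)\det\mA$ for a continuous $h$. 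This follows from the polynomial identity $\mB^\top\mathrm{adj}(\mA)\propto \det(\mA)\cdot(\text{polynomial})$, which is valid in the $2\times2$ symmetric case. The same Grönwall/uniqueness argument then rules out hitting zero.

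The main obstacle is the last point: producing a factorization $\dot{(\det\mA)}=h\det\mA$ with $h$ continuous without dividing by $\det\mA$. This is exactly where the structure of the flow must be exploited, since a generic $\dot{\mA}$ can drive the determinant through zero. My first attempt would be the general identity for layer $l$. Writing $\dot{\mW}_l$ with right factor $\mW_{l-1:1}^\top$ does not directly involve $\mW_l$. Instead, I would use the balancedness conservation $\mW_{l+1}^\top\mW_{l+1}-\mW_l\mW_l^\top=\text{const}$ together with the depth-2 identity $\frac{d}{dt}\det\mW_l = -\det(\mW_l)\Tr(\mW_l^{-1}\nabla_{\mW_l}\phi)$. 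If that route fails, I would restrict the lemma to the invariant families actually used in the paper, namely the commuting family $\gM$ and the symmetric $2\times2$ family, and verify the factorization in each by hand.
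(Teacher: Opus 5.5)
Your argument works and takes a genuinely different route from the paper, but one intermediate claim is false and needs the repair described below.

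\textbf{Comparison with the paper.} The paper's proof is one line through Lemma~\ref{lem: arora}. For a balanced depth-2 flow, $\dot\sigma_r=-2\sigma_r\langle\nabla\ell,\vu_r\vv_r^\top\rangle$ is linear in $\sigma_r$, so no singular value can reach zero. Hence $\lvert\det\rvert=\prod_r\sigma_r$ never vanishes (Lemma~\ref{lem: det singval}), and continuity preserves the sign. You bypass the analytic-SVD machinery and work with $\det\mA$ directly through Jacobi's formula, showing its derivative is a locally bounded multiple of itself.

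In the commuting family this is fully rigorous and even explicit. By cyclicity, $\Tr(\mathrm{adj}(\mU)\nabla\ell\,\mU^{L-1})=\det(\mU)\Tr(\mU^{L-2}\nabla\ell)$, so $\det\mU(t)=\det\mU(0)\exp\bigl(-\int_0^t\Tr(\mU^{L-2}\nabla\ell)\,ds\bigr)$ with no inverse ever needed. Your eigenvalue version also gives positivity of each $\lambda_i$. That is the positive definiteness the proof of Theorem~\ref{thm: block-diagonal} actually invokes, which the depth-2 form of Lemma~\ref{lem: arora} does not directly address.

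The paper's route, in exchange, covers balanced depth-2 flows for arbitrary $d$ at once and gives invertibility of $\mW$, $\mA$, $\mB$ simultaneously. You are also right that the lemma as literally stated is false without the gradient-flow structure; the paper tacitly assumes balanced gradient flow.

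\textbf{The step that fails.} The ``polynomial identity'' in the symmetric $2\times 2$ case is false as written. There $\mB^\top\mathrm{adj}(\mA)=\begin{pmatrix}p&q\\-q&p\end{pmatrix}$ with $p=a_{11}^2-a_{12}^2$ (balancedness makes the two diagonal entries equal) and $q=a_{11}a_{21}-a_{12}a_{22}$. On the balanced set one has $p^2+q^2=\det(\mA)^2$. However, $p/\det(\mA)$ is a non-constant, scale-invariant function, namely $(a_{11}a_{22}+a_{12}a_{21})/(a_{11}^2+a_{21}^2)$. It equals $1$ at $\mA=\mI_2$ and $0$ at $\mA=\begin{pmatrix}1&-1\\1&1\end{pmatrix}$, so it cannot be a polynomial.

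\textbf{The fix.} Use the remark you already made. Balancedness $\mA^\top\mA=(\mB^\top)^\top\mB^\top$ yields $\mB^\top=\mQ\mA$ with $\mQ$ orthogonal. Hence $\frac{d}{dt}\det\mA=-\det(\mA)\Tr(\nabla\ell\,\mQ)$, with $\lvert\Tr(\nabla\ell\,\mQ)\rvert\le\sqrt{d}\,\lVert\nabla\ell\rVert_F$. The Gr\"onwall step needs only boundedness of this factor, not continuity. This version works for every $d$, so the speculative layer-$l$ route is unnecessary.
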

\begin{proof}
    This follows directly from Lemma~\ref{lem: arora} and \ref{lem: det singval}. Since the singular values are initialized as positive, and their evolution is continuous according to the given differential equation, they cannot become zero or negative. Therefore, $\mA(t)$ maintains its sign of the determinant at initialization throughout the optimization process.
\end{proof}

\begin{lemma}[Adaptation of Lemma 8 in \cite{razin2020implicit}]
Consider a product matrix $\mW(t) = \mA(t)\mB(t) \in \mathbb{R}^{d \times d}$, where $\mA(t)$ and $\mB(t)$ are of equal size and balanced at initialization. Under these conditions, the following equality holds for all $t \geq 0$ and all singular values:
\begin{align*}
    \sigma_r\left(\mW(t)\right) = \sigma_r\left(\mA(t)\right)^2 = \sigma_r\left(\mB(t)\right)^2
\end{align*}
where $\sigma_r(\cdot)$ denotes the $r$-th singular value of the respective matrix where $r \in [d]$.
    Moreover, if $\det\left(\mA(0)\right)$ and $\det\left(\mB(0)\right)$ are both positive, then by Lemma~\ref{lem: det A}, we can guarantee that for all $t \geq 0$:
\begin{align*}
    \det\left(\mW(t)\right) = \det\left(\mA(t)\right)^2 = \det\left(\mB(t)\right)^2
\end{align*}
\label{lem: sing square}
\end{lemma}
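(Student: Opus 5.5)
The plan follows the template of \citet{razin2020implicit} via the balancedness conservation law of \citet{pmlr-v80-arora18a}. Under gradient flow on $\phi$ with $\dot{\mA} = -\nabla\ell(\mW)\mB^\top$ and $\dot{\mB} = -\mA^\top\nabla\ell(\mW)$, a direct computation gives
\[
\tfrac{d}{dt}\big(\mA^\top\mA\big) = -\mA^\top\nabla\ell\,\mB^\top - \mB\,\nabla\ell^\top\mA = \tfrac{d}{dt}\big(\mB\mB^\top\big).
\]
Hence $\mA(t)^\top\mA(t) - \mB(t)\mB(t)^\top$ is constant, and balancedness at $t=0$ persists for all $t\ge 0$.

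Next I relate the spectra of $\mA$, $\mB$ and $\mW$. Take the SVD $\mA = \mU_A\Sigma_A\mV_A^\top$. Balancedness gives $\mB\mB^\top = \mV_A\Sigma_A^2\mV_A^\top$, so the singular values of $\mB$ coincide with those of $\mA$. Its left singular vectors can be taken as $\mV_A$, so $\mB = \mV_A\Sigma_A\mQ^\top$ for some $\mQ$ with orthonormal columns. When $\Sigma_A$ is invertible this follows by setting $\mQ^\top = \Sigma_A^{-1}\mV_A^\top\mB$ and checking $\mQ^\top\mQ = \mI$ from the balancedness identity. In the singular case I handle it by a polar-decomposition argument restricted to the range.

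With this form,
\[
\mW = \mA\mB = \mU_A\Sigma_A\mV_A^\top\mV_A\Sigma_A\mQ^\top = \mU_A\Sigma_A^2\mQ^\top,
\]
which is an SVD of $\mW$. This yields $\sigma_r(\mW) = \sigma_r(\mA)^2 = \sigma_r(\mB)^2$ for every $r\in[d]$, at every time $t$.

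For the determinant claim, the preceding display gives $|\det\mW| = \prod_r\sigma_r(\mA)^2 = \det(\mA)^2$ by Lemma~\ref{lem: det singval}. By Lemma~\ref{lem: det A}, $\det\mA(t)>0$ and $\det\mB(t)>0$ for all $t$, so $\det\mW = \det\mA\det\mB > 0$, and the absolute value can be dropped. Finally, $\det(\mB)^2 = \prod_r\sigma_r(\mB)^2 = \prod_r\sigma_r(\mA)^2 = \det(\mA)^2$.

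I expect the main obstacle to be the rank-deficient case in the construction of $\mQ$, where $\Sigma_A^{-1}$ does not exist. There I would restrict to the column space of $\mV_A$ corresponding to nonzero singular values and complete $\mQ$ arbitrarily on the orthogonal complement. The determinant statement does not need this care, since it assumes positive determinants and so $\Sigma_A$ is invertible there.
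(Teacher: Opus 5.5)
Your proof is correct and follows the route the paper intends. The paper gives no separate argument: it cites Lemma~8 of \citet{razin2020implicit} for the spectral identity, whose standard proof is your conservation of $\mA^\top\mA-\mB\mB^\top$ plus the aligned factorization $\mA=\mU_A\Sigma_A\mV_A^\top$, $\mB=\mV_A\Sigma_A\mQ^\top$, and it invokes Lemma~\ref{lem: det A} for the sign, exactly as you do with $\det\mW=\det\mA\det\mB>0$. In the rank-deficient case, the one fact worth stating explicitly is $\mV_2^\top\mB=0$ for the zero-singular-value block $\mV_2$ of $\mV_A$, which follows from $\mV_2^\top\mB\mB^\top\mV_2=0$ and is what makes $\mV_A\Sigma_A\mQ^\top=\mB$ after completing $\mQ$.
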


\begin{lemma}[Adaptation of Theorem 1 in \cite{arora2019implicit}]
    Consider a product matrix $\mW(t) = \mA(t)\mB(t) \in \mathbb{R}^{d \times d}$. We can guarantee $\mA(t)$ and $\mB(t)$ are analytic functions of $t$. As a result, $\mW(t)$ is also an analytic function of $t$.
    \label{lem: analyticity}
\end{lemma}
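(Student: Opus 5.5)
The plan is to read the gradient flow as an autonomous ODE $\dot{\bm\theta} = G(\bm\theta)$ on $\bm\theta = (\mathrm{vec}(\mA),\mathrm{vec}(\mB))$. We then combine two facts: solutions of ODEs with real-analytic vector fields are analytic, and our trajectories exist for all $t\ge 0$.

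\textbf{Step 1: the vector field is polynomial.} With the squared loss \eqref{eqn: preliminaries loss function}, the gradients are $\dot{\mA} = -\nabla\ell(\mA\mB)\mB^\top$ and $\dot{\mB} = -\mA^\top\nabla\ell(\mA\mB)$. Here $\nabla\ell(\mW)$ equals $\mW - \mW^*$ masked to $\Omega$, which is affine in the entries of $\mW=\mA\mB$. Hence every coordinate of $G$ is a polynomial (of degree $3$) in the entries of $\bm\theta$, and in particular $G$ is real-analytic on all of $\mathbb{R}^{2d^2}$.

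\textbf{Step 2: local analyticity.} I will invoke the analytic version of the Cauchy--Lipschitz theorem (Cauchy's majorant method): for an analytic vector field, the unique solution through any point is real-analytic in $t$ on a neighborhood of the initial time. Applying this at every point of the trajectory and using uniqueness shows that $\bm\theta(t)$ is analytic on its whole maximal interval of existence $[0,T_{\max})$.

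\textbf{Step 3: global existence (the main obstacle).} It remains to rule out finite-time blow-up, i.e.\ to show $T_{\max}=\infty$. The loss is nonincreasing along the flow, so $\lVert\nabla\ell(\mW(t))\rVert_F = \sqrt{2\ell(\mW(t))}\le \sqrt{2\ell(\mW(0))} =: c$. Then
\begin{equation*}
\frac{d}{dt}\left(\lVert\mA\rVert_F^2+\lVert\mB\rVert_F^2\right) = -4\langle \nabla\ell(\mW), \mA\mB\rangle \le 4c\lVert\mA\rVert_F\lVert\mB\rVert_F \le 2c\left(\lVert\mA\rVert_F^2+\lVert\mB\rVert_F^2\right).
\end{equation*}
Gr\"onwall's inequality gives $\lVert\bm\theta(t)\rVert^2 \le \lVert\bm\theta(0)\rVert^2 e^{2ct}$, which is finite for every finite $t$. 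By the standard blow-up criterion for ODEs, the solution therefore extends to all $t\ge 0$. The same argument applies verbatim when the flow is started at $T_1$ in the post-training phase.

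\textbf{Step 4: conclusion.} By Steps 2 and 3, $\mA(t)$ and $\mB(t)$ are entrywise analytic on $[0,\infty)$. Each entry of $\mW(t)=\mA(t)\mB(t)$ is a finite sum of products of analytic functions, so $\mW(t)$ is analytic as well.
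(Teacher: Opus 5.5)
Your proof is correct. The paper does not prove this lemma; it only cites Arora et al.\ (2019), whose underlying justification is exactly your Steps 1--2: an analytic (here polynomial) vector field gives a trajectory that is real-analytic in $t$ on its interval of existence. What your write-up adds is Step 3. When the paper applies the identity-theorem Lemma~\ref{lem: k-th} inside Lemma~\ref{lem: LoP full same element}, it tacitly assumes the flow exists on all of $[0,\infty)$, and on $[T_1,\infty)$ after warm-starting; your Gr\"onwall bound certifies this. There is an even shorter no-blow-up argument, valid for any gradient flow of a nonnegative objective: $\lVert \bm\theta(t)-\bm\theta(0)\rVert \le \int_0^t \lVert\dot{\bm\theta}\rVert \le \sqrt{t\,\phi(\bm\theta(0))}$, by Cauchy--Schwarz together with $\int_0^t\lVert\dot{\bm\theta}\rVert^2 = \phi(\bm\theta(0))-\phi(\bm\theta(t))$. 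One small correction: the maximal interval of existence is open, not $[0,T_{\max})$, and it contains a neighborhood of the initial time. That is what makes the trajectory analytic at $t=0$ (or $T_1$) itself, which the derivative-matching argument of Lemma~\ref{lem: k-th} needs.
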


\begin{lemma}[Lemma 10 in \citet{razin2020implicit}]
    Let $f, g : [0, \infty] \to \mathbb{R}$ be real analytic functions such that $f^{(k)}(0) = g^{k}(0)$ for all $k \in \mathbb{N} \cup \{0\}$. Then, $f(t) = g(t)$ for all $t \geq 0$.
    \label{lem: k-th}
\end{lemma}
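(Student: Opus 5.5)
The statement to prove is Lemma~\ref{lem: k-th}: two real analytic functions on $[0,\infty)$ with all derivatives agreeing at $0$ coincide everywhere. I would use the identity theorem for real analytic functions together with a connectedness argument, rather than a single power series. A single series only reaches the radius of convergence at $0$, and that radius may be finite, so the argument has to propagate along the half-line.

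Let $h \triangleq f - g$. It is real analytic on $[0,\infty)$, meaning it is analytic on an open set containing $[0,\infty)$, or at least one-sided analytic at $0$. By hypothesis $h^{(k)}(0)=0$ for every $k \ge 0$. Define
\begin{equation*}
S \triangleq \{\, t \in [0,\infty) : h^{(k)}(t) = 0 \text{ for all } k \ge 0 \,\}.
\end{equation*}
Then $0 \in S$, so $S$ is nonempty.

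\textbf{$S$ is closed.} Each $h^{(k)}$ is continuous, so each zero set $\{h^{(k)}=0\}$ is closed. $S$ is the intersection of these countably many closed sets, hence closed in $[0,\infty)$.

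\textbf{$S$ is open in $[0,\infty)$.} Take $t_0 \in S$. Analyticity gives a radius $r>0$ such that $h(t)=\sum_k h^{(k)}(t_0)(t-t_0)^k/k!$ for all $t\in[0,\infty)$ with $|t-t_0|<r$; at $t_0=0$ this is the one-sided neighbourhood. Every Taylor coefficient vanishes, so $h\equiv 0$ on that neighbourhood. All derivatives of $h$ therefore vanish there too, and the neighbourhood lies in $S$.

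Since $[0,\infty)$ is connected, a nonempty clopen subset must be the whole space, so $S=[0,\infty)$. In particular $h(t)=0$, i.e.\ $f(t)=g(t)$ for all $t\ge 0$.

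The only delicate step is the openness step at the endpoint $0$. I would handle it by taking ``real analytic on $[0,\infty)$'' to mean $h$ extends analytically to an open interval around $0$. This is the case in our application: by Lemma~\ref{lem: analyticity}, the gradient-flow trajectories are solutions of a polynomial ODE, hence analytic on an open time interval containing the initial time.

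In the application to Lemma~\ref{lem: LoP full same element}, the time origin is shifted to $T_1$. One applies the lemma to $a_{11}(t)$ and $b_{22}(t)$, and to the other three pairs, with all derivatives at $T_1$ matched by Lemma~\ref{lem: LoP full k-th derivative}.
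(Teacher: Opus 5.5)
Your argument is correct. It is the standard identity-theorem proof: the set where all derivatives of $h=f-g$ vanish is nonempty, closed, and relatively open in the connected set $[0,\infty)$, and you handle the one-sided endpoint at $0$ properly. The paper gives no proof of its own and imports the result as Lemma~10 of \citet{razin2020implicit}, which rests on this same reasoning, so your write-up simply makes the cited argument explicit.
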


\begin{lemma}[Positive Semidefiniteness of $\mA \mB \mA^\top$]
    For matrices $\mA, \mB \in \mathbb{R}^{d, d}$, if $\mB$ is positive semi-definite, then both $\mA \mB \mA^\top$ and $\mA^\top \mB \mA$ are positive semi-definite.
\label{lem: p.s.d.}
\end{lemma}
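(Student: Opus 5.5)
The plan is to verify the two defining properties of positive semidefiniteness directly: symmetry and nonnegativity of the quadratic form. Throughout, we take ``positive semi-definite'' for $\mB$ to include symmetry, $\mB = \mB^\top$. This is the convention in force wherever the lemma is invoked, e.g.\ for $\mB^\top\mB$ and $\mA\mA^\top$ in~\eqref{eqn: LoP full trace of loss derivative}, and for $\mB^\top\mB - \mu\,\ve_2\ve_2^\top$ in the construction of $\widetilde{\mL_1}$.

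\textbf{Symmetry.} Transpose the product and use $\mB^\top = \mB$:
\begin{equation*}
    (\mA\mB\mA^\top)^\top = \mA\mB^\top\mA^\top = \mA\mB\mA^\top .
\end{equation*}
The same computation gives $(\mA^\top\mB\mA)^\top = \mA^\top\mB\mA$.

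\textbf{Nonnegativity of the quadratic form.} Fix an arbitrary $\vx \in \mathbb{R}^d$ and set $\vy \triangleq \mA^\top\vx$. Regrouping the product gives
\begin{equation*}
    \vx^\top \mA\mB\mA^\top \vx = (\mA^\top\vx)^\top \mB (\mA^\top\vx) = \vy^\top\mB\vy \ge 0 ,
\end{equation*}
where the inequality is exactly the positive semidefiniteness of $\mB$ applied to the vector $\vy$. For $\mA^\top\mB\mA$ we repeat the argument with $\vy \triangleq \mA\vx$ in place of $\mA^\top\vx$.

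Since $\vx$ is arbitrary and both matrices are symmetric, both are positive semidefinite. There is no genuine obstacle here. The only point requiring care is the symmetry convention for $\mB$ stated above: if $\mB$ were only assumed to have a nonnegative quadratic form, the products would still have nonnegative quadratic forms but need not be symmetric. In the paper's applications $\mB$ is always a symmetric Gram-type matrix, so the convention holds.
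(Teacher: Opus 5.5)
Your proof is correct and follows the paper's own argument: rewrite the quadratic form as $(\mA^\top\vx)^\top\mB(\mA^\top\vx)\ge 0$, and use $\mA\vx$ in place of $\mA^\top\vx$ for $\mA^\top\mB\mA$. Your explicit symmetry check, which relies on taking $\mB$ symmetric, is a small addition that the paper leaves implicit.
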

\begin{proof}
    For any vector $\vx \in \mathbb{R}^d$:
    \begin{equation*}
        \vx^\top \mA \mB \mA^\top \vx = (\mA^\top \vx)^\top \mB (\mA^\top \vx) \geq 0
    \end{equation*}
    since $\mB$ is a positive semi-definite matrix.
    In the same way, for any vector $\vx \in \mathbb{R}^d$ we have:
    \begin{equation*}
        \vx^\top \mA^\top \mB \mA \vx = (\mA \vx)^\top \mB (\mA \vx) \geq 0
    \end{equation*}
    which concludes the proof.
\end{proof}

\end{document}